\documentclass[twoside,11pt]{article}

\usepackage[abbrvbib, preprint]{jmlr2e}

\date{}
\usepackage[utf8]{inputenc}\usepackage{macros}
\usepackage{graphicx}

              \DeclareMathOperator{\sgn}{sign}

\usepackage{chngcntr}
\usepackage{apptools}
\AtAppendix{\counterwithin{lemma}{section}}

\usepackage{lastpage}
\jmlrheading{?}{?}{1-\pageref{LastPage}}{2/26; Revised ?/?}{?/?}{?}{Yingjie Wang, Mokhtar Z. Alaya, Salim Bouzebda and Xinsheng Liu}

\ShortHeadings{Learning High-Dimensional Heavy-Tailed Locally Stationary Time Series}{Wang, Alaya, Bouzebda, and Liu}
\firstpageno{1}

\begin{document}

\title{Sparsified-Learning for High-Dimensional Heavy-Tailed Locally Stationary Time Series, Concentration and Oracle Inequalities}

\author{\name Yingjie Wang \email wyj202007@nuaa.edu.cn \\
       \addr Faculty of Mathematics and Physics, Huaiyin Institute of Technology, Huai'an, China
       \AND
       \name Mokhtar Z. Alaya \email alayaelm@utc.fr \\
       \addr Université de Technologie de Compiègne, LMAC (Laboratoire de Mathématiques Appliquées de Compiègne), CS 60 319 - 60 203 Compiègne Cedex, France
       \AND
        \name Salim Bouzebda \email salim.bouzebda@utc.fr \\
       \addr Université de Technologie de Compiègne, LMAC (Laboratoire de Mathématiques Appliquées de Compiègne), CS 60 319 - 60 203 Compiègne Cedex, France
       \AND
       \name Xinsheng Liu \email xsliu@nuaa.edu.cn \\
       \addr Faculty of Mathematics and Physics, Huaiyin Institute of Technology, Huai'an, China
       }

\editor{My editor}

\maketitle

\begin{abstract}Sparse learning is ubiquitous in many machine learning tasks. It aims to regularize the goodness-of-fit objective by adding a penalty term to encode structural constraints on the model parameters. In this paper, we develop a flexible sparse learning  framework tailored to high-dimensional heavy-tailed locally stationary time series (LSTS). The data-generating mechanism incorporates a regression function that changes smoothly over time and is observed under noise belonging to the class of sub-Weibull and regularly varying distributions. We introduce a sparsity-inducing penalized estimation procedure that combines additive modeling with kernel smoothing and define an additive kernel-smoothing hypothesis class. In the presence of locally stationary dynamics, we assume exponentially decaying $\beta$-mixing coefficients to derive concentration inequalities for kernel-weighted sums of locally stationary processes with heavy-tailed noise. We further establish nonasymptotic prediction-error bounds, yielding both slow and fast convergence rates under different sparsity structures, including Lasso and total variation penalization with the least-squares loss. To support our theoretical results, we conduct numerical experiments on simulated LSTS with sub-Weibull and Pareto noise,  highlighting how tail behavior affects prediction error across different covariate-dimensions as the sample size increases.
\end{abstract}

\begin{keywords}
 Locally stationary time series; Heavy-tailed distribution; Mixing condition; Sparsity; Concentration/Oracle inequalities; Proximal methods
\end{keywords}

\section{Introduction} \label{sec:introduction}

Sparsified learning is an innovative approach that combines the principles of sparse learning and adaptive modeling to address the challenges posed by high-dimensional and complex datasets, for instance, see~\citep{Tibshirani1996, Yuan2006, Zou2005} among many others. It aims to capture the essential patterns and relationships within the data while promoting sparsity, interpretability, and computational efficiency~\citep{Fan2001, Koltchinskii2011, Negahban2011, Klopp2017}. 
The key idea behind sparsified learning is to identify and select a sparse subset of relevant features or variables that significantly impact the target variable. 
The resulting model becomes simpler and more interpretable by emphasizing sparsity while reducing overfitting and improving generalization performance. However, many times series exhibit non-stationary behaviors, which exist in many application fields, including finance \citep{Tanaka2017}, economics \citep{VOGT2012}, and environmental science \citep{Matsuda2018, Petronio2020}.

Locally stationary time series  (LSTS) are a class of stochastic processes that exhibit variation over time while maintaining relative stability within short time intervals \citep{DAHLHAUS2012}. This characteristic makes them valuable in time series analysis, particularly in tasks such as modeling and forecasting. LSTS offer a more precise framework for modeling time series data than stationary processes. They excel at capturing time-varying phenomena and can be estimated with greater efficiency. 

Heavy-tailed time series refers to time series data that exhibit extreme values or outliers that occur more frequently than expected under a normal distribution \citep{MR4174389}. The heavy-tailed behavior can be linked to a range of real-world factors, including financial market crashes, natural disasters, and specific social phenomena characterized by rare yet significant events. 
Data with heavy tails have been collected in many application fields, including economics \citep{Malevergne2006}, environment \citep{Reiss2001}, biology \citep{James2015}, and so on. Heavy-tailed time series affects the prediction accuracy, mainly because extreme events or outliers are predicted more frequently than under normal distributions \citep{Robert1998}.
This work aims to develop a new method to solve the challenges posed by heavy-tailed and locally stationary behavior in time series data. Using sparsity techniques to deal with heavy-tailed behaviors, this paper aims to significantly improve the efficiency and accuracy of modeling these complex data structures, thereby advancing the latest techniques in the specific field of time series analysis. 

\paragraph{Related works.} 
Sparse learning methods are used in regression models to handle high-dimensional data with many features, where most of the features are irrelevant or redundant.
The Lasso~\citep{Tibshirani1996} estimator is a regression technique that induces sparsity in the model by adding an $\ell_{1}$ penalty to the loss function. It is a convex relaxation of best subset selection, and it can be used to perform variable selection and regularization to enhance the prediction accuracy and interpretability of the resulting statistical model \citep{Xia2014,Norouzirad2018}. 
Total variation (TV) penalization can also be employed for sparse learning in stationary time series. TV penalization is a type of $\ell_{1}$-penalization that encourages the sparsity of the gradient of the signal \citep{Eickenberg2015}. By minimizing the total variation of the signal, TV penalization encourages the sparsity of the gradients, which in turn promotes sparsity in the solution \citep{Belilovsky2015, Li2020}. \cite{Baraud2001} studied the problem of estimating the unknown regression function in a $\beta$-mixing dependent framework. They build a penalized least squares estimator on a data-driven selected model with a nonnegative penalty function. 
Although the aforementioned studies have demonstrated encouraging outcomes, the techniques proposed in these studies require stringent assumptions about the stochastic process, specifically, assuming it to be a stationary process and linear regression. 

LSTS help analyze and forecast time series data that exhibit changing statistical properties. It provides a more flexible and realistic representation of the data than assuming global stationarity.  
The nonparametric models with a time-varying regression function and locally stationary covariates proposed by \cite{VOGT2012} and the asymptotic theory of nonparametric regression for a locally stationary functional time series studied in \cite{Daisuke2022}. In \cite{Dahlhaus2019}, some general theory is presented for locally stationary processes
based on the stationary approximation and the stationary derivative. A two-step estimation method that borrows the strengths of spline smoothing and the local polynomial smoothing method is developed by \cite{Hu2019} for a locally stationary process. The aforementioned works rely on the assumption of strictly independent and identically distributed (i.i.d.) tail behavior for their analysis. This significantly restricts the practical applicability of the developed theoretical results. In reality, many practical learning scenarios involve heavy-tailed data that occur naturally.

Various studies dealt with statistical learning with samples drawn from some heavy-tailed data. \cite{Wong2020} studied the (strict) stationarity to establish lasso guarantees for heavy-tailed time series. \cite{Abhishek2019} we establish risk bounds for the empirical risk minimization (ERM) applicable to data-generating processes that are both dependent and heavy-tailed. \cite{Sagnik2022} studied the optimal sparse estimation of high-dimensional heavy-tailed
time series. \cite{Takeyuki2022} considered sparse estimation of linear regression coefficients when covariates and noises are sampled from heavy-tailed distributions. 

\paragraph{Contributions.}
We propose a sparse learning framework for high-dimensional LSTS with {heavy-tailed} innovations, allowing both sub-Weibull and regularly varying (Pareto-type) tails under $\beta$-mixing dependence. The framework incorporates $\ell_1$ (Lasso) and weighted total variation penalties, enabling the flexible modeling of standard sparsity, structured sparsity, and block sparsity in non-stationary regression settings. We establish non-asymptotic concentration inequalities for kernel-weighted sums of locally stationary $\beta$-mixing processes with heavy-tailed noise, explicitly capturing localization effects, dependence decay, and tail behavior. Using the proposed concentration results, we derive slow-rate oracle inequalities for both penalties without restricted eigenvalue assumptions, quantifying the impact of local stationarity on convergence rates. Under suitable restricted eigenvalue assumptions, we further obtain fast-rate oracle inequalities for estimation and prediction errors in high-dimensional settings with heavy-tailed and dependent data. Proximal algorithms are used for efficient computation, and numerical experiments demonstrate the finite-sample performance of the proposed estimators across different tail regimes and sparsity patterns.
Together, these results provide a theoretical justification for sparse estimation in locally stationary models beyond the sub-Gaussian or strictly stationary regimes.

\paragraph{Layout of the paper.}

The structure of the paper is as follows. Section \ref{sec:preliminaries} presents the preliminaries of locally stationary processes and heavy-tailed distributions. In Section \ref{sub:estimation_procedure_minimization_of_penalized_empirical_risk}, we develop a sparse penalized estimation procedure. Section \ref{sub:least_square_loss} proposes concentration inequalities for locally stationary $\beta$-mixing heavy-tailed random variables.
Section \ref{sec.Oracle inequality} provides non-asymptotic oracle inequalities for different types of sparsity.
Section \ref{sec:numerical_experiments} illustrates the numerical experiments by proximal algorithms and experimental protocol. Finally, Section \ref{sec:conclusion} concludes the paper, highlighting the contributions of our work and discussing potential directions for future research.

\paragraph{Notation.}
The set $\R_+$ denotes the non-negative real numbers. 
For every $q>0$, we denote by $\norm{x}_q$ the usual $\ell_q$ norm of a vector $x\in \R^d$, namely $\norm{x}_q = (\sum_{j=1}^d|x_j|^q)^{1/q}$, and $\norm{x}_\infty = \max_{1\leq j\leq d}|x_j|$.
We also denote $\norm{x}_0 = |\{j: x_j \neq 0\}$, where $|A|$ stands for the cardinality of a finite set $A$.  We denote $A^\complement$ for the complement of a set $A$. 
For any $u \in \mathbb{R}^{d}$ and any $L \subset \{1, \ldots, d\}$, we denote $u_L$ as the vector in $\mathbb{R}^d$ satisfying $(u_L)_k = u_k$ for $k \in L$ and $(u_L)_k = 0$ for $k \in L^{\complement} = \{1, \ldots, d\} \setminus L$.
We write $\mathbf{1}$ (resp. $\mathbf{0}$) the vector having all coordinates equal to one (resp. zero).
We denote $\ind{}(\cdot)$ the indicator function taking the value $1$ if the condition in $(\cdot)$ is satisfied and $0$ otherwise.
For a real-valued random variable $S$, we use the notation $S^\tau$ to denote the truncated version of the random variable $S$, i.e., $S^\tau =S \ind{(S \leq \tau)}$. Finally, we denote by $\sgn(x)$ the set of sub-differentials of the function $x \mapsto |x|$, namely $\sgn(x) = \{ 1\}$ if $x > 0$, $\sgn(x) = \{ -1 \}$ if $x < 0$ and $\sgn(0) = [-1, 1]$.

\section{Background on heavy-tailed LSTS} \label{sec:preliminaries} 
Let $\{Z_{t,T}\}_{t=-\infty}^{\infty}$ be a stochastic time series with time index $t$ such that ${Z_{t, T}=(X_{t, T}^\top,Y_{t, T}})$, where the variable $X_{t, T}=(X_{t, T}^{1}, \ldots, X_{t, T}^{d})^\top$ is a $d$-vector $(d\geq1)$ of covariates and takes values in the compact input space $ \mathcal{X} \subseteq \mathbb{R}^{d}$ and $Y_{t, T}$ belongs to the output space $\mathcal{Y} \subseteq \mathbb R$. Define $\mathcal{F}$ be the set of measurable functions mapping from $[0,1]\times \mathcal{X}$ to $\mathcal{Y}$. We consider the nonparametric model
\begin{equation}
\label{reg_model}
 Y_{t, T}=m^{\star}\big(\frac{t}{T}, X_{t, T}\big)+\varepsilon_{t, T},\quad \text{ for } t=1, \ldots,T,
\end{equation} 
where $m^\star(\cdot, \cdot) \in \mathcal{F}$ stands for the conditional mean regression function of $Y_{t,T}|X_{t, T}$, that depends on the time and space directions. The model variables are assumed to be locally stationary processes (see Definition~\ref{def:locallystseries}). 

As usual in the literature on locally stationary processes, the regression function $m^\star$ does not depend on real-time $t$ but rather on a rescaled time $u = \frac tT$ (see~\cite{VOGT2012,DAHLHAUS2012}).
In the following, we recall some background information on locally stationary processes. LSTS are a fundamental concept in time series analysis and statistical modeling, providing a framework for understanding how the statistical properties of time series vary over time or across data segments. This concept is essential when dealing with time series data that exhibit non-stationary behavior.

\subsection{Locally stationary time series} \label{sub:background_on_locally_stationary_process}
We consider non-stationary processes with dynamics that change slowly over time and may thus behave as stationary at a local level.
For example, consider a continuous function $m:[0,1] \rightarrow$ $\mathbb{R}$ and a sequence of i.i.d. random variables $(\varepsilon_t)_{t\in \mathbb{N}}$. The stochastic process $X_{t, T}=m(t / T)+\varepsilon_t$, $t \in\{1, \ldots, T\}, T\in \mathbb{N}$ can be expected to behave "almost" stationary for $t \in$ $\{1, \ldots, T\}$ close to $t^*$, for some $t^* \in\{1, \ldots, T\}$, as in this case $m(t^* / T) \approx m(t / T)$, but this process is not weakly stationary. A more realistic concept that allows this kind of change is called local stationarity and was first introduced by \cite{MR1429916}, who approximated the spectral representation of the underlying stochastic process locally.
\begin{definition} [Locally stationary time series, see~\cite{VOGT2012}]
\label{def:locallystseries}
The time series $\{X_{t, T}\}_{t=1}^{T}$ is locally stationary if for each rescaled time point $u \in [0, 1]$ there exists an associated process $\{X_{t}(u)\}_{t \in \mathbb{Z}}$ with the following two properties:
\begin{itemize}
\setlength\itemsep{-0.1cm}
\item [(i)] $\{X_{t}(u)\}_{t \in \mathbb{Z}}$ is strictly stationary;
\item [(ii)] It holds that
\begin{equation}
\norm{X_{t, T}-X_{t}(u)} \leq\big(\big|\frac{t}{T}-u\big|+\frac{1}{T}\big) U_{t, T}(u) \quad  a.s., \label{equation2.2}
\end{equation}
where $U_{t, T}(u)$ is a process of positive variables satisfying 
$ \mathbb{E}[(U_{t, T}(u))^{\rho}]< C$
for some $ \rho > 0$ and $ C < \infty$ independent of $u, t $, and $T.$ Here, $\norm{\cdot}$denotes an arbitrary norm on $\mathbb{R}^{d}.$
\end{itemize}
\end{definition}

\begin{remark}\label{remark.U_{t,T}}
Since $\rho$th moments of the variables $U_{t, T}(u)$ are uniformly bounded, it holds that $U_{t, T}(u)=\bigO_{\mathbb{P}}(1)$, then we have
\begin{equation*}
\norm{X_{t, T}-X_{t}(u)} \leq \bigO_{\mathbb{P}}\big(\big|\frac{t}{T}-u\big|+\frac{1}{T}\big).
\end{equation*}   
\end{remark}

\begin{example}
A example of Locally stationary process is a time-varying autoregressive process, denoted as $tvAR(d)$ \citep{DAHLHAUS2012} and defined by
\begin{equation*}
Y_{t, T}=X_{t, T}+\sum_{j=1}^{d} m_{j}\big(\frac{t}{T}\big) X_{t-j, T}-\varepsilon_{t,T}, \quad t \in \mathbb{Z},
\end{equation*}
where $m_{j}\big(\frac{t}{T}\big) $ follows the curves $ m_{j}(\cdot):[0,1] \rightarrow(-1,1) $, $m_{j}(u)=m_{j}(0)$ for $u<0$ and $m_{j}(u)=m_{j}(1)$ for $u>1$. In a certain neighborhood, there exists a stationary process denoted as $ X_{t}(u_{0}) $ with a fixed time point $ u_{0}=t_{0} / n $ that satisfies the equation (\ref{equation2.2}). The stationary process $ X_{t}(u_{0})$ defined by 
\begin{equation*}
y_{t}(u_{0})=X_{t}(u_{0})+\sum_{j=1}^{d} m_{j}(u_{0}) X_{t-j}(u_{0})-\varepsilon_{t,T}, \quad t \in \mathbb{Z}.
\end{equation*}
\end{example}

Dealing with LSTS needs more conditions to get the theoretical guarantee. One of the most popular is the mixing condition, including $\alpha$-mixing and $\beta$-mixing, which are important concepts in statistics, particularly in the context of time series analysis and stochastic processes. Mixing conditions are used to characterize the dependence structure of random variables. It describes how quickly the dependence between observations decays with increasing time intervals~\citep{Rosenblatt1956}.

\begin{definition}[Mixing condition, see~\cite{bradley2005}]
\label{def:mixingcondition}
Let $ (\Omega, \mathcal{F}, P) $ be a probability space, and let $\mathcal{A}$, $\mathcal{B} $ be subfields of $ \mathcal{F} $. Let $\beta(\mathcal{A}, \mathcal{B})=\mathbb{E} \sup_{B \in \mathcal{B}}|\mathbb{P}(B)-\mathbb{P}(B \mid \mathcal{A})|,$
for an array $ \{Z_{t, T}:1\leq t \leq T\} $, define the coefficients
\begin{equation*}
\beta(k)=\sup_{t, T:1\leq t \leq T-k} \beta\big(\sigma(Z_{s, T},1\leq s \leq t), \sigma(Z_{s, T}, t+k \leq s \leq T)\big), 
\end{equation*}
where $ \sigma(Z) $ is the $\sigma$-field generated by $ Z $. The array $\{Z_{t, T}\} $ is said to be $ \beta $-mixing if $ \beta(k) \rightarrow 0$ as $ k \rightarrow \infty $.
\end{definition}

The coefficient $\beta(k)$ quantifies the level of dependence among events taking place within a span of $k$ time units. It introduces a temporal dependence structure that diminishes over time \citep{Wong2020, Anatolyev2020,bradley2005}. The $\beta $-mixing condition is a valuable tool in analyzing non-stationary time series data within the fields of statistics and machine learning \citep{Kuznetsov2018}.

\subsection{Heavy-tailed distribution} \label{sub:backgrounf_on_heavy_tailed_disitrbution}

A distribution is considered to be heavy-tailed if it has a heavier tail than any exponential distribution~\citep{Nair2022}. In this section, we present two types of tail distribution functions. Let us start with the definition of the tail-capturing distribution.
\begin{definition}[Tail-capturing distribution]
\label{def:tailcaptures}
Let $\textsc{I}: \R \rightarrow \R_+$ denote an increasing and continuous function with the property $\textsc{I}(\nu)=\bigO(\nu)$ as $\nu \to\infty $. We say $\textsc{I}$ captures the tail of random variable $H$ if 
\begin{equation*}
\mathbb{P}[|H| >\nu] \leq \exp (-\textsc{I}(\nu)), \quad \textrm{\rm for all}\quad \nu >0.
\end{equation*}
\end{definition}
Note that $\textsc{I}(\nu)$ can be a generic function. Clearly, $\textsc{I}_{br}(\nu) = -\log(\mathbb{P}[H > \nu])$ and $\textsc{I}_{bl}(\nu) = -\log(\mathbb{P}[H < -\nu])$ capture, respectively, the right tail and the left tail for any random variable $H$, and they are called the basic rate capturing function. 
It is simple to remark that if $H$ is a right heavy-tailed random variable, then $-H$ is left heavy-tailed. 
It is convenient to approximate the basic tail-capturing function $\textsc{I}$~\citep{Bakhshizadeh2023}. We detail an example of $\textsc{I}$ that are popular in application areas.

\begin{figure}[htbp]
\centering
\begin{minipage}{0.5\textwidth}
\centering
\includegraphics[width=0.9\textwidth]{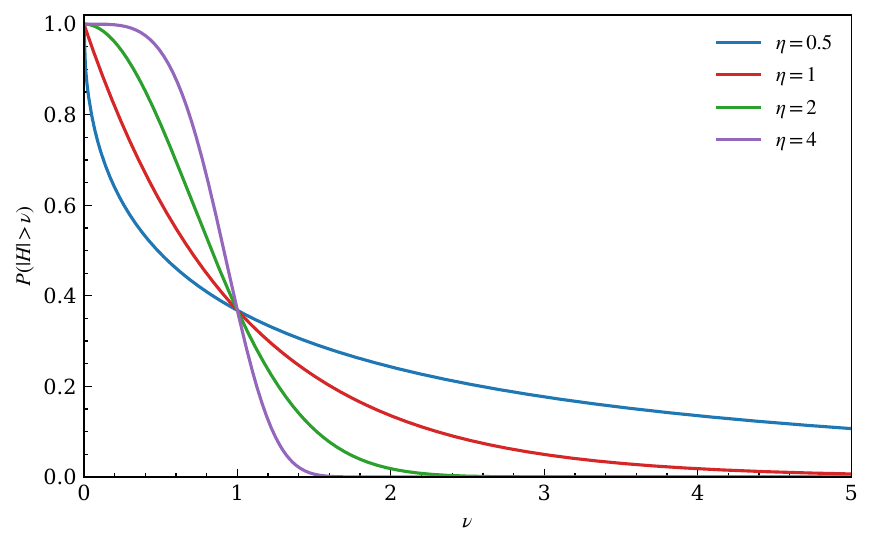}
\captionsetup{font=scriptsize}
\caption{Sub-Weibull distributions with $C=1$.}
\end{minipage}\hfill
\begin{minipage}{0.5\textwidth}
\centering
\includegraphics[width=0.9\textwidth]{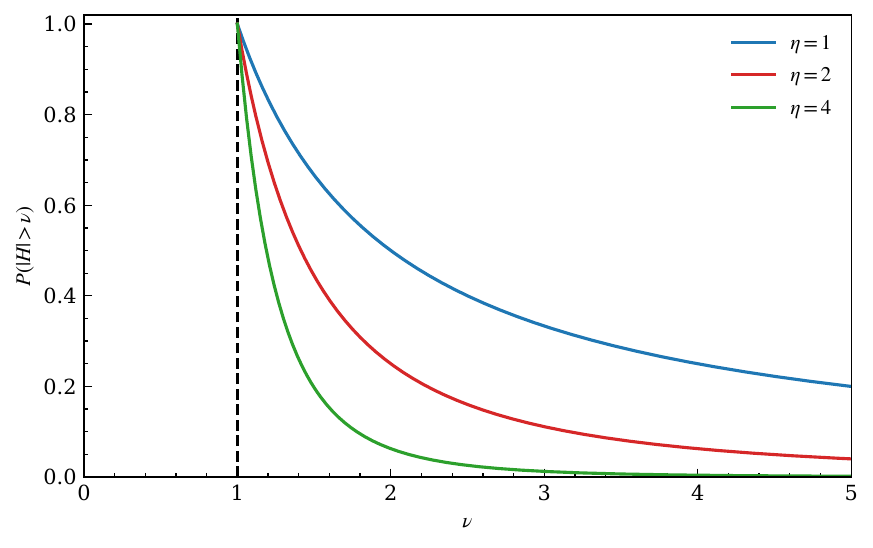}
\captionsetup{font=scriptsize}
\caption{Pareto distributions with $u=1$.}
\end{minipage}
\end{figure}

\begin{example}[Sub-Weibull distribution]\label{example3-(i)}
If $\textsc{I}(\nu) =(\frac{\nu}{C})^{\eta}$ for some $\eta >0$ and $C$ is a constant depending only on $\eta$, $H$ follows {\it sub-Weibull} distribution with the constant $(\eta,C)$, i.e., $\mathbb{P}[|H| >\nu] \leq \exp (-\big( \nu/C \big)^{\eta})$, for all $\nu \geq 0$. 
\end{example}

The tail decay of the sub-Weibull distribution is exponential, and the rate of decay of the sub-Weibull is controlled by the parameter $\eta$, making it possible to describe the tail behavior between the light tail and the extremely heavy tail. 
Sub-Gaussian and sub-exponential distributions are special cases of the sub-Weibull distribution with $\eta=2$ and $\eta=1$, respectively. 
An illustration of sub-Weibull distributions is represented in Figure 1 for different
values of the tail parameter $\eta$. We can see that the smaller $\eta$ the heavier the tail. Sub-Weibull distribution has important applications in many high-dimensional statistics and machine learning fields, especially when dealing with data with heavy-tailed rows.

Next, we introduce another class of heavy-tailed distributions, regularly varying heavy-tailed distributions, which are characterized by a power decay of the tail, that is, their tail probability decays more slowly, much slower than exponential distributions.

\begin{definition} [Regularly varying tail distribution]
\label{def:Regularvaryingtail}
The distribution function of the random variable H has a regularly varying tail with index $\eta>0$, if  
\begin{align*}
\mathbb{P}[|H|>\nu] = \nu^{-\eta} \textsc{L}(\nu),
\end{align*}
where $\textsc{L}(\nu)$ is a slowly varying function at $\infty$, i.e., for all $t > 0$, $L(t\nu) \sim L(\nu)$.
\end{definition}

The distribution function of $ H $ has a regularly varying right tail if $\mathbb{P}[H>\nu] = \nu^{-\eta} \textsc{L}(\nu)$. Similarly, the distribution function $H$ has a regularly varying left tail of index $ -\eta $, if $\mathbb{P}[H<-\nu] = \nu^{-\eta} \textsc{L}(\nu)$. 
\begin{example}[Pareto distribution] \label{example4-(ii)}
If the distribution of $H$ has regularly varying tail with index $-\eta$ and $\textsc{L}(\nu)=u^{\eta}$, for some $u >0$ then $H$ follows the Pareto distribution with tail index
\begin{align*}
\mathbb P[|H|> \nu ]=
\begin{cases}
(u / \nu)^{\eta } & \text { for } \quad \nu \geq u, \\ 0 & \text { for } \quad \nu<u.
\end{cases}
\end{align*}
\end{example}

Figure 2 illustrates the Pareto distribution with different parameter values $\eta$. It's worth noting that the parameter $\eta$ plays a pivotal role in determining the tail behavior of the distribution. Higher values of $\eta$ correspond to distributions with lighter tails, while lower values of $\eta$ yield distributions with heavier tails.

\section{Sparse penalized estimation procedure} \label{sub:estimation_procedure_minimization_of_penalized_empirical_risk}

We set $\mathcal{M}$ to be the  additive data-dependent hypothesis space defined by
\begin{equation*}
\mathcal{M} =\Big\{ m_{\bm{\theta}}(u, x)= \sum_{r=1}^{T}\sum_{j=1}^{d} \theta_{r, j} K_{h,1}\big(u-\frac{r}{T}\big)K_{h,2}(x^{j}-X_{r, T}^{j})  \Big\},
\end{equation*}
where 
\begin{align*}
\bm{\theta} = (\theta_{1\bullet}^\top, \ldots, \theta_{T\bullet}^\top)^\top = 
\big((\theta_{1,1},\ldots,\theta_{1,d}),(\theta_{2,1},\ldots,\theta_{2,d}),\ldots,(\theta_{T,1},\ldots,\theta_{T,d})\big)^{T} \in \mathbb{R}^{T d}.
\end{align*}
Here, $K_{h,i}(\cdot)$ is a scaled kernel function with a bandwidth $h>0$, and $K_{h,i}(v)=K_{i}(\frac{v}{h})$ with basic kernel $K_{i}(\cdot)$ for $i=1,2$. The parameter space $\mathcal{M}$ is a given subset of $\mathcal{F}$ and represents a data-dependent hypothesis space used for statistical modeling, especially in the framework of additive models. This space combines elements from both additive models and kernel methods, and it's linear in the parameter vector $\bm{\theta}$. The presence of two kernels with respect to the time ($K_{h,1}$) and space ($K_{h,2}$) directions is convenient to estimate the ground truth conditional mean function $m^\star$. Since the process is locally stationary, we give much attention to its information in a local bandwidth $h$ depending on the sample size $T,$ namely $h= h(T)$. For that reason, we shall appropriately choose the two kernels. 
Note that each candidate estimator $m_{\bm{\theta}} \in \mathcal{M}$ can be  expressed as \begin{align*}
m_{\bm{\theta}}(u, x)=\sum_{j=1}^{d}m_{\bm{\theta}}^{j}(u, x) \text{ where } m_{\bm{\theta}}^{j}(u, x)=\sum_{r=1}^{T}\theta_{r, j} K_{h,1}\big(u-\frac{r}{T}\big)K_{h,2}(x^{j}-X_{r, T}^{j}).  
\end{align*}
We can then have an additive estimator structure. The additive models for LSTS can effectively capture the dynamic feature of the regression function~\citep{Hu2019,Wang2022HuberAM}. In this work, we consider the penalized empirical risk with a square loss function $\ell$ defined on $[0,1] \times \mathcal{X}\times \mathcal{Y}$ and penalty regularization $\Omega:\R^{T d} \mapsto \R_+$ on $\bm{\theta}$. 
We first define the empirical risk as 
\begin{equation*}
R_{\rm emp}(m_{\bm{\theta}}) = \frac{1}{T} \sum_{t=1}^{T} \ell \big(m_{\bm{\theta}}\big(\frac{t}{T}, X_{t, T}\big), Y_{t, T} \big).
\end{equation*}

We now give the following definition.
\begin{definition}
\label{def.6}
The penalized empirical risk minimization of $m^\star$ writes as $\hat{m} = m_{\hat{\bm{\theta}}}$, where 
\begin{equation*}
\hat{\bm{\theta}} =  (\hat \theta_{1\bullet}^\top, \ldots, \hat\theta_{T\bullet}^\top)^\top \in \argmin_{\bm{\theta} \in \mathbb{R}^{T \times d}}\big\{R_{\rm emp}(m_{\bm{\theta}}) + \Omega_\lambda(\bm{\theta})\big\}.
\end{equation*}
\end{definition}
The hyper-parameter $\lambda>0$ controls the trade-off between the goodness-of-fit $R_{\rm emp}$ and the constraints on the learned parameter $\bm{\theta}$ through the penalization $\Omega(\bm{\theta})$, which leads to incorporating sparsity structure on $\bm{\theta}$. For Lasso penalization
\begin{align*}
\Omega_\lambda (\bm{\theta}) = \lambda \norm{\bm{\theta}}_{1}= \sum_{r=1}^{T}\norm{\theta_{r\bullet}}_{1} = \sum_{r=1}^{T}\sum_{j=1}^{d}|\theta_{r, j}|
\end{align*}
and weighted total variation, for $\lambda=(\lambda_{1},\cdots,\lambda_{d})\in \R^{d}_{+}$,
\begin{align*}
\Omega_{\lambda}(\bm{\theta})=\norm{\bm{\theta}}_{\TV,\lambda}=\sum_{r=1}^{T}\norm{\theta_{r\bullet}}_{\TV,\lambda}=\sum_{r=1}^{T}\sum_{j=2}^{d}\lambda_{j}|\theta_{r, j}-\theta_{r, (j-1)}|.
\end{align*}
\paragraph{Square loss.} 
Hereafter, we consider the squared loss function, $\ell(z)=z^{2},$ and thus define the penalized empirical risk minimizer of $m^\star$ writes as $\hat{m} = m_{\hat{\bm{\theta}}}$, where 
\begin{equation}
\label{explicit_estimator}
\hat{\bm{\theta}} \in \mathop{\arg\min}\limits_{\bm{\theta} \in \mathbb{R}^{T d}}\frac{1}{T} \sum_{t=1}^{T} \Big(Y_{t, T}- \sum_{r=1}^{T}\sum_{j=1}^{d} \theta_{r, j} K_{h,1}\big(\frac{t}{T}-\frac{r}{T}\big) K_{h,2}(X_{t,T}^{j}-X_{r, T}^{j})\Big)^{2} +\Omega_\lambda (\bm{\theta}).
\end{equation}
Let $\bY = (Y_{1,T}, \ldots, Y_{T,T})^\top \in \R^T$ and $\bK$ be the $T \times (T \times d)$ matrix such that for $t \in \{1,\ldots,T\}$ and $(b,j) \in \{1,\ldots,T\} \times \{1,\ldots,d\}$, $\bK=(K_{1\bullet}, \dots,K_{T\bullet})^\top$ and the $\{t,b,j\}$-element of $\bK$ is 
\begin{equation*}
  K_{t,b,j} = K_{h,1}\big(\frac{t}{T}-\frac{b}{T}\big)K_{h,2}(X_{t,T}^{j}-X_{b, T}^{j}).
\end{equation*}
Setting $\bM^\star =  \big(m^{\star}\big(\frac 1T, X_{1, T}\big), \cdots, m^{\star}\big(1, X_{T, T}\big)\big)^\top \in \mathbb{R}^T$ and $\beps = (\varepsilon_{1,T}, \ldots, \varepsilon_{T,T})^\top \in \mathbb{R}^T$, we have $\bY=\bM^{\star}+\beps$.
Let the empirical risk $R_{\rm emp}(m_{\bm{\theta}})=R_{T}(\cdot)$ defined for all $\bm{\theta} \in \mathbb{R}^{Td}$, such that $R_{T}(\bm{\theta})= \frac{1}{T} \norm{\bY -\bK\bm{\theta}}_{2}^{2}.$
Then problem~(\ref{explicit_estimator}) can be written as follows 
\begin{equation}
\label{estimator}
\hat{\bm{\theta}} = \mathop{\arg\min}\limits_{\bm{\theta} \in \mathbb{R}^{Td}} \big\{R_{T}(\bm{\theta})+ \Omega_\lambda (\bm{\theta})\big\}.    
\end{equation}
We provide bounds for the generalization error 
\begin{align*}
R(\hat m, m^\star) = \mathbb{E}\Big[\frac 1T \sum_{t=1}^T \big(\hat m\big(\frac tT, X_{t,T}\big) - m^\star\big(\frac tT, X_{t,T}\big)\big)^2\Big].
\end{align*}

\paragraph{Block sparsity.}
For all $\bm{\theta} \in \R^{Td}$, let $J(\bm{\theta})=\{J_{1},\ldots, J_{T}\}$ be the concatenation of the support sets, for the Lasso penalization, we define, for $ r \in \{1,\cdots, T\}$,
\begin{equation*}\label{equ_Lasso_ridge_spares}
J_r=J_r(\theta_{r\bullet}) = \{ j\in\{1, \ldots, d\}: \theta_{r, j} \neq 0\}, 
\end{equation*}
and for TV penalization,
\begin{equation*}\label{equ_TV_spares}
J_r=J_r(\theta_{r\bullet}) = \{ j\in\{2, \ldots, d\}: \theta_{r, j} \neq \theta_{r, (j-1)}\}.
\end{equation*}
Similarly, we set $J^{\complement}(\bm{\theta})=\{J^{\complement}_{1},\ldots,J^{\complement}_{T}\}$ be the complementary of $J(\bm{\theta})$. The cardinality of $J_r$, $|J_r|$, characterizes the sparsity of the vector $\theta_{r\bullet}$, namely the small $|J_r|$, the "sparser" $\theta_{r\bullet}$.
The value $ |J(\bm{\theta})| $ characterizes the sparsity of the vector $ \bm{\theta} $, given by $|J(\bm{\theta})| = \sum_{r=1}^{T} |J_r|$.
It counts the number of non-equal consecutive values of $ \bm{\theta} $. If $ \bm{\theta} $ is block-sparse, namely whenever $ | \mathcal{J}(\bm{\theta}) | \ll Td $ where $ \mathcal{J}(\bm{\theta}) = \{ r = 1, \ldots, T: \theta_{r,\bullet} \neq \mathbf{0}_{d} \} $ (meaning that few raw features are useful for prediction), then $ |J(\bm{\theta})| \leq | \mathcal{J}(\bm{\theta}) | \max_{r \in J(\bm{\theta})} |J_r| $, which means that $ |J(\bm{\theta})| $ is controlled by the block sparsity $ | \mathcal{J}(\bm{\theta}) | $.

\subsection{Assumptions} \label{sub:assumptions}

The necessary assumptions to ensure the results are listed below. To begin, we establish the essential condition for the data sequence, specifically focusing on the exponentially $\beta$-mixing condition. This condition is a key tool for describing the complex interdependence between data points.

\begin{assumption}
\label{Ass:1}
The process $\{X_{t,T}\}_{t \in z}$ is locally stationary in the sence of Definition~\ref{def:locallystseries}.
\end{assumption}
\begin{assumption}
\label{Ass:2}
The array $\{X_{t,T}, \varepsilon_{t,T}\}_{t \in z}$ is $ \beta $-mixing sequence with mixing coefficients $ \beta(k) \leq \exp (-\varphi k^{\eta_{1}}) $, for some $ \varphi>0, \eta_{1}>1$.
\end{assumption}

The exponentially $\beta$-mixing data has been employed as an underlying 
assumption in statistical learning (\cite{Xie2017}, \cite{Abhishek2019} and \cite{MR4102690}), to prove the consistency theorems for the lasso estimators of sparse linear regression models and establish risk bounds for the empirical risk minimization with both dependent and heavy-tailed data-generating processes. Additionally, $\beta$-mixing heavy-tailed time series refers to a stationary sequence of non-negative random variables with heavy tails and $\beta$-mixing dependence \citep{Miao2023}, it appears in some statistical and data analysis scenarios, especially when one is faced with the task of modeling or analyzing data sets characterized by extreme values and complex dependency patterns \citep{Wong2020}. 

\begin{assumption}
\label{Ass:KB1} \label{Ass:KB2}
The basic kernel $ K_{i},i=1,2$ is symmetric around zero, bounded by $C_{K_{i}}(i=1,2)$ for some $ C_{K_{i}}<\infty $ and has compact support, that is, $ K_{i}(v)=0$ for all $ |v|>1 $. Moreover, $ K_{i} $ is Lipschitz continuous, that is, $ |K_{i}(v)-K_{i}(v')| \leq L_{K_{i}}|v-v'| $ for some $ L_{K_{i}}<\infty,i=1,2$ and all $ v, v' \in \mathbb{R} $.
\end{assumption}

Note that throughout the paper the bandwidth $h$ of the kernel function is assumed to converge to zero at least at the polynomial rate, that is, there exists a small $0<\xi < 1$ such that $h = \bigO(T^{-\xi})$. The Assumption~\ref{Ass:KB1}, regarding kernel functions $K_{1}(\cdot)$ and $K_{2}(\cdot)$, is standard in the literature and satisfied by popular kernel functions, such as the (asymmetric) triangle and quadratic kernels \citep{Silverman1986,Vapnik2000}.

\section{Concentration inequalities for heavy-tailed LSTS} \label{sub:least_square_loss}
We propose concentration inequalities for locally stationary $\beta$-mixing sub-Weibull random variables and regularly varying random variables.
For the noise $\{\varepsilon_{t,T}\}_{t=1}^{T}$ and the kernel function $K_{h,i},i=1,2$, we define, for fixed $j \in \{1, \cdots, d\}$ and $r \in \{1, \cdots, T\}$, the sequence $W_{t,r,T}^{j}$ is
\begin{equation}\label{equation-W_t,T}
W_{t,r,T}^{j}=K_{h,1}\big(\frac{t}{T}-\frac{r}{T}\big) K_{h,2}(X_{t,T}^{j}-X_{r, T}^{j}) \varepsilon_{t,T}, \text{ for } t=1,\ldots,T.
\end{equation}
We firstly focus on the sub-Weibull distribution shown in Example~\ref{example3-(i)}.
\begin{proposition}[Locally stationary sub-Weibull distribution]
\label{proposition.Locally-stationary-sub-Weibull-distribution}
Let $\{\varepsilon_{t,T}\}_{t=1}^{T}$ follows the sub-Weibull distribution with constant $(\eta_{2},C_{\varepsilon})$, the sequence $\{W_{t,r,T}^{j}\}_{t}$ defined in (\ref{equation-W_t,T}). Assumption~\ref{Ass:1}-\ref{Ass:KB2} are satisfied. Let $h=\bigO(T^{-\xi})$ with $0<\xi<\frac{1}{2}$, for any $\gamma> 2C_{K}\sqrt{\log T / T } $ and $T>4$, we have
\begin{align*}
\mathbb  P\Big[\frac{1}{T}\big|&\sum_{t=1}^{T} W_{t,r,T}^{j}\big|\geq \gamma \Big]\\ 
&\leq \exp\Big(-(\frac{T\log T}{C_{\varepsilon}})^{\eta_{2}}\Big)\\
& \quad + T \exp \Big( -\frac{(\gamma T)^{\eta}}{(4C_{K}C_{\varepsilon})^{\eta}C_{1}} \Big) + \exp \Big( -\frac{\gamma^{2}T}{(4C_{K}C_{\varepsilon})^{2}C_{2}} \Big)\\
& \quad + T \exp \Big( -\frac{(\gamma T^{2}h)^{\eta}}{(4C_{K,L}(2T+1)C_{\varepsilon})^{\eta}C_{1}} \Big) + \exp \Big( -\frac{(\gamma h)^{2}T^{3}}{(4C_{K,L}(2T+1)C_{\varepsilon})^{2}C_{2}} \Big).
\end{align*} 
where $1/\eta = 1/\eta_{1} + 1/\eta_{2}$, $ \eta < 1$, $C_{K}=C_{K_{1}}C_{K_{2}}$, the constant $C_{K,L}$ depends on kernel bound and Lipschiz constant, the constants $C_{1}$, $C_{2}$ depend only on $\eta_{1}$, $\eta_{2}$ and $\varphi$.  
\end{proposition}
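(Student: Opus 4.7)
The plan is to combine three classical techniques: a local-stationarity approximation, a truncation of the sub-Weibull noise, and a Bernstein-type concentration inequality for $\beta$-mixing sequences. Write $u=r/T$ and split each summand as
\[W_{t,r,T}^{j} = \widetilde W_{t,r}^{j}(u) + R_{t,r,T}^{j}(u),\]
where $\widetilde W_{t,r}^{j}(u)$ is obtained by replacing $X_{t,T}^{j}$ and $X_{r,T}^{j}$ by the stationary approximations $X_{t}(u)^{j}$ and $X_{r}(u)^{j}$ of Definition~\ref{def:locallystseries}. The event $\{|\tfrac{1}{T}\sum_{t}W_{t,r,T}^{j}|\ge\gamma\}$ is contained in the union of the analogous events at threshold $\gamma/2$ for the stationary sum and the remainder sum, which accounts for the grouping of the last four Bernstein-type terms into two pairs in the stated bound.

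The first of the five terms is produced by a truncation argument on the noise. I would truncate $\varepsilon_{t,T}$ at a deterministic level $\tau_{T}$ of order $C_{\varepsilon}(T\log T)^{1/\eta_{2}}$; the sub-Weibull tail of Example~\ref{example3-(i)} together with a union bound over $t\in\{1,\dots,T\}$ gives $\mathbb{P}(\max_{t}|\varepsilon_{t,T}|>\tau_{T})\le\exp(-(T\log T/C_{\varepsilon})^{\eta_{2}})$, since the $\log T$ multiplicity is absorbed by the polynomial dominance of $(T\log T)^{\eta_{2}}$ over $\log T$. On the complementary event the truncated noise is pointwise bounded and the truncated summand $\widetilde W^{j,\tau_{T}}_{t,r}(u)$ inherits the $\beta$-mixing structure of the underlying array as a measurable function of it; the centering $\mathbb{E}[\widetilde W^{j,\tau_{T}}_{t,r}(u)]$ reduces (by $\mathbb{E}[\varepsilon_{t,T}\mid X_{t,T}]=0$) to a tail correction controlled by the same sub-Weibull bound.

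I would then apply the Merlev\`ede--Peligrad--Rio-type Bernstein inequality for bounded exponentially $\beta$-mixing sequences to $\tfrac{1}{T}\sum_{t}(\widetilde W^{j,\tau_{T}}_{t,r}(u)-\mathbb{E}[\widetilde W^{j,\tau_{T}}_{t,r}(u)])$. Under Assumption~\ref{Ass:2} with $\beta(k)\le\exp(-\varphi k^{\eta_{1}})$ and the pointwise kernel bound $C_{K}$, this yields a Gaussian tail $\exp(-\gamma^{2}T/((4C_{K}C_{\varepsilon})^{2}C_{2}))$ and a sub-Weibull peak term $T\exp(-(\gamma T)^{\eta}/((4C_{K}C_{\varepsilon})^{\eta}C_{1}))$ with $1/\eta=1/\eta_{1}+1/\eta_{2}$, where the harmonic combination of exponents arises from standard $L^{p}$-norm estimates on sums of mixing heavy-tailed variables and $C_{1},C_{2}$ depend only on $\varphi,\eta_{1},\eta_{2}$. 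This gives the second and third terms of the bound. For the remainder $\tfrac{1}{T}\sum_{t}R_{t,r,T}^{j}(u)$, the Lipschitz assumption in Assumption~\ref{Ass:KB1} introduces a factor $L_{K_{2}}/h$, and Definition~\ref{def:locallystseries}(ii) combined with the compact support $|t-r|\le Th$ of $K_{h,1}$ contributes $(|t/T-u|+1/T)U_{t,T}(u)$, giving the effective per-summand scale $C_{K,L}(2T+1)C_{\varepsilon}/(T^{2}h)$ that appears in the denominators of the last two terms; Remark~\ref{remark.U_{t,T}} absorbs $U_{t,T}(u)$ into an $\bigO_{\mathbb P}(1)$ prefactor. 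Reapplying the same Bernstein inequality with this rescaled bound produces the last pair of terms, and a final union bound over the noise-truncation event and the four concentration events yields the proposition.

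The main obstacle will be the rigorous application of the Bernstein inequality after sub-Weibull truncation: controlling the variance proxy of the truncated kernel-weighted sum (exploiting the kernel localization to $|t-r|\le Th$), tracking the constants $C_{K}$ and $C_{K,L}$ in the effective boundedness levels, and verifying that the interaction of the $\eta_{1}$-exponential $\beta$-mixing with the $\eta_{2}$-sub-Weibull tail produces exactly the harmonic exponent $1/\eta=1/\eta_{1}+1/\eta_{2}$ in the peak term; the remaining bookkeeping, including the polynomial absorption of $\log T$ factors and the collapse of the $(2T+1)/T^{2}$ remainder scaling, is routine.
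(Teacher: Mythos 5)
Your outline follows essentially the same route as the paper's proof: truncate the noise at a level $\tau_{T}$ to produce the first term, approximate the locally stationary sum by its stationary counterpart with a remainder of per-summand size $C_{K,L}(2T+1)/(Th)$ via the Lipschitz and local-stationarity bounds, and apply a concentration inequality for stationary exponentially $\beta$-mixing sub-Weibull sums to each of the two pieces, which is exactly the paper's decomposition $\Phi=\Phi_{1}+\Phi_{2}$, $\Phi_{1}\le Q_{T}+\tilde{Q}_{T}$.

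Two details in your plan would not reproduce the stated bound exactly. First, the paper truncates at $\tau_{T}=T\log T$ (not at $C_{\varepsilon}(T\log T)^{1/\eta_{2}}$), which is what yields the first term $\exp(-(T\log T/C_{\varepsilon})^{\eta_{2}})$; with your truncation level the tail event has probability of order $T\exp(-T\log T)$, which for $\eta_{2}>1$ is \emph{larger} than the stated first term, so the proposition as written would not follow. Second, after truncation the paper does not use a bounded-variable Bernstein inequality: it bounds the kernel factors by $C_{K}$ (resp.\ $C_{K,L}(2T+1)/(Th)$) and applies Lemma 13 of \cite{Wong2020} directly to the sub-Weibull $\beta$-mixing noise, which is what delivers the harmonic exponent $1/\eta=1/\eta_{1}+1/\eta_{2}$ and the clean dependence on $C_{\varepsilon}$; a Bernstein inequality for variables bounded by $\tau_{T}$ would instead give a peak term scaling with $\tau_{T}$. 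You flag this second point yourself as the main obstacle, and substituting the Wong--Li--Tewari lemma for your generic Bernstein step resolves it.
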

To deal with regularly varying heavy-tailed interactions, we prove the concentration inequality for the sums of locally stationary $\beta$-mixing regularly varying random variables. The following proposition is useful for the concentration inequality for the sums of locally stationary regularly varying heavy-tailed and is similar to Lemma 2.2 of \cite{Abhishek2019}.

\begin{proposition}[Stationary regularly varying heavy-tailed]\label{proposition.Regularly varying heavy-tailed}
Let $\{Z_{t, T}\}_{t=1}^{T}$ be a strictly stationary $\beta$-mixing sequence of zero mean real value random variables, follow the regularly varying heavy-tailed distributions with index $\eta_{2}>0$ and bounded slowly varying function $L(\cdot)$, see Definition~\ref{def:Regularvaryingtail}. The $\beta$-mixing coefficients satisfy $ \beta(k) \leq \exp (-\varphi k^{\eta_{1}}) $ with $ \varphi, \eta_{1}>1$. 
Let $0<\vartheta<\frac{(\eta_{1}-1)(\eta_{2}-1)}{1+(2\eta_{1}-1)\eta_{2}}$.
Then for $\varrho>1/T^{\vartheta}$, we have
\begin{align*}
\mathbb{P}\Big[\frac{1}{T}\Big|\sum_{t=1}^{T} Z_{t, T} \Big| \geq \varrho \Big] &\leq
\frac{12T^{(d_{1}-1/\eta_{1})(1-\eta_{2})} \varrho^{(d_{1}-1/\eta_{1})(1-\eta_{2})-1}}{ 2^{1-\eta_{2}}}L\Big(\frac{(\varrho T)^{d_{1}-1/\eta_{1}}}{2}\Big)\\
&\quad + \frac{6 \exp(-\varphi \varrho T)}{\varrho} +2\exp\Big({-\frac{1}{9T^{2d_{1}-1/\eta_{1}-1}\varrho^{2d_{1}-1/\eta_{1}-2}}}\Big).
\end{align*} 
where $\frac{\vartheta}{(1-\vartheta)(\eta_{2}-1)}+\frac{1}{\eta_{1}} < d_{1}< \frac{1-2\vartheta}{2(1-\vartheta)}+\frac{1}{2\eta_{1}}$. 
\end{proposition}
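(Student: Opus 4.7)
The plan is to follow the classical truncation $+$ blocking scheme for sums of dependent regularly varying variables, calibrating a truncation level $M$ and a block length $q$ so that three separate error sources line up with the three terms in the statement. Fix $M = (\varrho T)^{d_1 - 1/\eta_1}/2$ and decompose $Z_{t,T} = Z_{t,T}^M + \widetilde{Z}_{t,T}^M$ with $Z_{t,T}^M = Z_{t,T}\,\ind{}(|Z_{t,T}| \leq M)$. Since $\mathbb{E}[Z_{t,T}] = 0$, centering each summand separately and applying a union bound reduces the probability of interest to the sum of a ``tail'' sub-probability (involving $\widetilde{Z}_{t,T}^M - \mathbb{E}[\widetilde{Z}^M]$) and a ``bulk'' sub-probability (involving the bounded, centered $Z_{t,T}^M - \mathbb{E}[Z^M]$).

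For the tail sub-probability I would apply Markov's inequality together with stationarity, obtaining the upper bound $4\,\mathbb{E}|Z\,\ind{}(|Z|>M)|/\varrho$. Definition~\ref{def:Regularvaryingtail} and integration by parts give $\mathbb{E}|Z\,\ind{}(|Z|>M)| = \bigO(M^{1-\eta_2} L(M))$ (using $\eta_2 > 1$, which is forced by the admissible range of $\vartheta$), and substituting $M$ produces Term 1 of the stated bound. For the bulk sub-probability I would partition $\{1,\ldots,T\}$ into $2\mu$ alternating blocks of size $q$, group the block sums by parity, and apply Berbee's coupling lemma twice to obtain independent copies at total cost at most $2\mu\beta(q) \leq (T/q)\exp(-\varphi q^{\eta_1})$. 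Choosing $q\asymp (\varrho T)^{1/\eta_1}$ makes $\beta(q) \asymp \exp(-\varphi \varrho T)$ and, after absorbing the prefactor into $1/\varrho$, yields Term 2. On the coupling event, each independent block sum is bounded by $qM$ with variance controlled by $q^2\sigma_M^2$, where $\sigma_M^2 = \mathbb{E}[(Z^M)^2] = \bigO(M^{2-\eta_2} L(M))$ (again by integration by parts, for $1 < \eta_2 < 2$; the case $\eta_2 \geq 2$ is easier). Bernstein's inequality for independent, bounded, centered variables then produces a deviation bound of the form $\exp(-c(\varrho T)^2/(\mu q^2 \sigma_M^2 + qMT\varrho))$, and substituting the chosen values of $M$ and $q$ gives Term 3.

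\textbf{Main obstacle.} The technical crux is the joint calibration of $M$ and $q$: adjusting either reallocates mass among the three error sources, and the regularly varying (rather than exponential) tail precludes a direct Chernoff step, forcing the initial truncation. The lower bound on $d_1$ in the statement reflects the requirement that Term 1 decay fast enough relative to the admissible range $\varrho > T^{-\vartheta}$, while the upper bound on $d_1$ enforces $2d_1 - 1/\eta_1 - 1 < 0$ so that the Bernstein exponent in Term 3 remains usefully negative; the hypothesis on $\vartheta$ is precisely the compatibility condition that keeps the admissible interval for $d_1$ non-empty. Careful exponent arithmetic through the variance scaling $\sigma_M^2 \sim M^{2-\eta_2} L(M)$ is what dictates the particular powers of $T$ and $\varrho$ appearing in Term 3.
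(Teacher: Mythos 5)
Your architecture coincides with the paper's: truncate at level $N=(\varrho T)^{d_1-1/\eta_1}/2$, bound the truncation error by Markov plus Karamata's theorem, split $\{1,\dots,T\}$ into $2\mu$ alternating blocks of length $B\asymp(\varrho T)^{1/\eta_1}$, replace block sums by independent copies, and apply an exponential inequality to the resulting bounded independent sums; your three error sources match the paper's three terms one-to-one, and your reading of why the constraints on $d_1$ and $\vartheta$ are needed agrees with the verification at the end of the paper's proof. Two of your tool choices, however, do not literally reproduce the stated bound. First, Berbee's coupling with a union bound costs $2\mu\,\beta(B)\asymp (T/B)\exp(-\varphi\varrho T)$, and since $T/B=T(\varrho T)^{-1/\eta_1}$ this exceeds the stated $6\exp(-\varphi\varrho T)/\varrho$ by the unbounded factor $(\varrho T)^{1-1/\eta_1}$; the ``absorb the prefactor into $1/\varrho$'' step is not a valid inequality. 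The paper instead uses the Dedecker--Prieur $L^1$-coupling for $\tau$-dependent sequences together with Markov's inequality, whose cost $\mu B\,\tau(B)/(\varrho T)\le \exp(-\varphi\varrho T)/(2\varrho)$ lands exactly on Term 2. This is harmless qualitatively (the exponential crushes any polynomial) but changes the stated term. Second, Term 3 as stated is a pure Hoeffding bound built from the crude range bound $|\Sigma^{N*}(I)|\le BN$, namely $\exp\bigl(-(\varrho T)^2/(2\mu B^2N^2)\bigr)$ up to constants; your Bernstein bound with the variance refinement $\sigma_M^2=\bigO(M^{2-\eta_2}L(M))$ has the extra linear term $BN\varrho T/3$ in the denominator, and it implies the stated quadratic-exponent form only when $\varrho\lesssim N$, failing to do so uniformly over all admissible $\varrho>T^{-\vartheta}$. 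So ``substituting $M$ and $q$ gives Term 3'' needs either a switch to Hoeffding or an explicit case analysis; with Hoeffding the exponent arithmetic you describe does produce exactly $(\varrho T)^{2+1/\eta_1-2d_1}/(9T)$, which is the stated Term 3.
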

Next, we give the concentration inequality for the sums of locally stationary
$\beta$-mixing regularly varying random variables. 

\begin{proposition}[Locally Stationary regularly varying heavy-tailed]\label{proposition.Locally-Stationary-regularly-varying-heavy-tailed}
Let $\{\varepsilon_{t,T}\}_{t=1}^{T}$ follows regularly varying heavy-tailed with index $\eta_{2}>0$ and bounded slowly varying function $L(\cdot)$. The sequence $\{W_{t,r,T}^{j}\}_{t}$ be defined in (\ref{equation-W_t,T}) and Assumptions~\ref{Ass:1}-\ref{Ass:KB2} are satisfied. Let $h=\bigO(T^{-\xi})$ with $0<\xi<1$ and $0<\vartheta<\frac{(\eta_{1}-1)(\eta_{2}-1)}{1+(2\eta_{1}-1)\eta_{2}}$. Then for $\gamma>\frac{2C_{K,L}(2T+1)}{T^{1+\vartheta}h}$, we have
\begin{align*}
\mathbb P\Big[\frac{1}{T}\big|&\sum_{t=1}^{T} W_{t,r,T}^{j}\big|\geq \gamma \Big] \\
&\leq (T\log T)^{-\eta_{2}}L(T\log T)\\
& \quad +\frac{12T^{(d_{1}-1/\eta_{1})(1-\eta_{2})} \gamma^{(d_{1}-1/\eta_{1})(1-\eta_{2})-1}}{ 2^{1-\eta_{2}}(4C_{K})^{(d_{1}-1/\eta_{1})(1-\eta_{2})-1}}L\Big(\frac{(\gamma T/4C_{K})^{d_{1}-1/\eta_{1}}}{2}\Big)\\
&\quad + \frac{24C_{K} \exp(-\varphi \gamma T/4C_{K})}{\gamma } +2\exp\Big({-\frac{1}{9T^{2d_{1}-1/\eta_{1}-1}(\gamma/4C_{K})^{2d_{1}-1/\eta_{1}-2}}}\Big)\\
& \quad + \frac{12T^{2(d_{1}-1/\eta_{1})(1-\eta_{2})-1} (\gamma h)^{(d_{1}-1/\eta_{1})(1-\eta_{2})-1}}{ 2^{1-\eta_{2}}(4C_{K,L}(2T+1))^{(d_{1}-1/\eta_{1})(1-\eta_{2})-1}}L\Big(\frac{( \gamma T^{2}h)^{d_{1}-1/\eta_{1}}}{2(4C_{K,L}(2T+1))^{d_{1}-1/\eta_{1}}}\Big)\\
&\quad+ \frac{24C_{K,L}(2T+1)\exp(-\varphi (\frac{\gamma T^{2}h}{4 C_{K,L}(2T+1)}))}{\gamma Th}+2\exp\Big({-\frac{(4C_{K,L}(2T+1))^{2d_{1}-1/\eta_{1}-2}}{9T^{4d_{1}-2/\eta_{1}-3}(\gamma h)^{2d_{1}-1/\eta_{1}-2}}}\Big),
\end{align*}
where $C_{K}=C_{K_{1}}C_{K_{2}}$, $\varphi>0, \eta_{1}>1$, $\frac{\vartheta}{(1-\vartheta)(\eta_{2}-1)}+\frac{1}{\eta_{1}} < d_{1}< \frac{1-2\vartheta}{2(1-\vartheta)}+\frac{1}{2\eta_{1}}$ and the constant $C_{K,L}$ depend on kernel bound and Lipschiz constant. 
\end{proposition}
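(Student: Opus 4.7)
The plan is to mirror the three-step scheme used in the proof of Proposition~\ref{proposition.Locally-stationary-sub-Weibull-distribution}, but to replace the sub-Weibull Bernstein-type backbone with the regularly varying stationary concentration inequality of Proposition~\ref{proposition.Regularly varying heavy-tailed}. The three ingredients are (i) truncation of the heavy-tailed noise so that the summands are effectively controlled, (ii) approximation of the locally stationary covariates by their stationary companions via the coupling of Definition~\ref{def:locallystseries}, and (iii) two invocations of Proposition~\ref{proposition.Regularly varying heavy-tailed}, one on the stationary part and one on the Lipschitz residual.

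First I would introduce the truncated noise $\varepsilon_{t,T}^{\tau}=\varepsilon_{t,T}\ind{(|\varepsilon_{t,T}|\leq \tau)}$ and the associated truncated summand $W_{t,r,T}^{j,\tau}$, after recentering if needed (since Proposition~\ref{proposition.Regularly varying heavy-tailed} requires zero mean). By the regularly varying tail hypothesis, $\mathbb P[|\varepsilon_{t,T}|>\tau]\leq \tau^{-\eta_{2}} L(\tau)$, so a union bound over $t\leq T$ shows that off an event of probability $T\tau^{-\eta_{2}}L(\tau)$ the truncated and untruncated sums agree. Calibrating $\tau$ so that this probability matches $(T\log T)^{-\eta_{2}}L(T\log T)$ produces the first term in the stated bound.

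Next I would decompose
\[
W_{t,r,T}^{j,\tau}=\widetilde W_{t,r,T}^{j,\tau}+\big(W_{t,r,T}^{j,\tau}-\widetilde W_{t,r,T}^{j,\tau}\big),
\]
where $\widetilde W_{t,r,T}^{j,\tau}$ is obtained by replacing $X_{t,T}^{j}$ and $X_{r,T}^{j}$ with their stationary approximations $X_{t}^{j}(r/T)$ and $X_{r}^{j}(r/T)$ from Definition~\ref{def:locallystseries}. The bound $|\widetilde W_{t,r,T}^{j,\tau}|\leq C_{K_{1}}C_{K_{2}}|\varepsilon_{t,T}^{\tau}|=C_{K}|\varepsilon_{t,T}^{\tau}|$, together with the strict stationarity of $\{X_{t}(r/T)\}_{t}$ guaranteed by Assumption~\ref{Ass:1} and the $\beta$-mixing decay inherited from Assumption~\ref{Ass:2}, lets me apply Proposition~\ref{proposition.Regularly varying heavy-tailed} to the rescaled sequence $\widetilde W_{t,r,T}^{j,\tau}/(4C_{K})$ at threshold $\gamma/(4C_{K})$; this yields the middle block of terms carrying the $4C_{K}$ prefactor. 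For the Lipschitz residual, the Lipschitz property of $K_{h,2}$ (with constant $L_{K_{2}}/h$ by Assumption~\ref{Ass:KB2}) combined with $|X_{t,T}^{j}-X_{t}^{j}(r/T)|\leq (|t/T-r/T|+1/T)U_{t,T}(r/T)$ from Definition~\ref{def:locallystseries} bounds each residual summand by a constant multiple of $h^{-1}(|t/T-r/T|+1/T)|\varepsilon_{t,T}^{\tau}|$. Since $\sum_{t\leq T}(|t/T-r/T|+1/T)\leq 2T+1$, after normalising by $4C_{K,L}(2T+1)$ a second invocation of Proposition~\ref{proposition.Regularly varying heavy-tailed} produces the final block of terms involving the factor $T^{2}h$ in the numerators.

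The main obstacle is to orchestrate the three ingredients consistently. The truncation level $\tau$ must be high enough that the truncated sequence still satisfies a regularly varying tail bound amenable to Proposition~\ref{proposition.Regularly varying heavy-tailed} (with the parameters $\vartheta$ and $d_{1}$ in the stated ranges), yet low enough that the residual probability from truncation matches $(T\log T)^{-\eta_{2}}L(T\log T)$. The hypothesis $\gamma>2C_{K,L}(2T+1)/(T^{1+\vartheta}h)$ in the statement arises precisely from forcing the rescaled threshold in the second invocation to satisfy the lower bound $\varrho>T^{-\vartheta}$ required by Proposition~\ref{proposition.Regularly varying heavy-tailed}. Further care is needed because the slowly varying function $L$ must be tracked through the rescalings by $4C_{K}$ and by $4C_{K,L}(2T+1)/h$, and because the $\beta$-mixing property must be verified for the stationary approximation $\{X_{t}(r/T)\}_{t}$ with the same coefficients $\beta(k)\leq \exp(-\varphi k^{\eta_{1}})$.
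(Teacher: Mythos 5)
Your proposal follows essentially the same route as the paper's proof: truncate the noise at $\tau_T=T\log T$ (yielding the $(T\log T)^{-\eta_2}L(T\log T)$ term via the regularly varying tail), replace the covariates by their stationary companions using Definition~\ref{def:locallystseries} and Assumption~\ref{Ass:KB2} to split off a Lipschitz residual of size $C_{K,L}(2T+1)/(Th)$ per summand, and then invoke Proposition~\ref{proposition.Regularly varying heavy-tailed} twice at thresholds rescaled by $4C_K$ and $4C_{K,L}(2T+1)/(Th)$ respectively, which is exactly how the paper obtains the two blocks of terms. You also correctly identify that the hypothesis $\gamma>2C_{K,L}(2T+1)/(T^{1+\vartheta}h)$ comes from the requirement $\varrho>T^{-\vartheta}$ in the stationary concentration inequality, so the plan matches the paper's argument in all essentials.
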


The following is the concentration inequality for the sums of locally stationary Pareto distributions, which is an example of a regularly varying heavy-tailed distribution.

\begin{proposition}[Locally stationary Pareto distribution]
\label{proposition.LS.pareto}
Let $\{\varepsilon_{t,T}\}_{t=1}^{T}$ follows the Pareto distribution with $\eta_{2}=4$ and $L(v)=u^{4}$, the constant $u > 0$. The sequence $\{W_{t,r,T}^{j}\}_{t}$ be defined in (\ref{equation-W_t,T}) and Assumptions~\ref{Ass:1}-\ref{Ass:KB2} are satisfied with $\eta_{1}=4$. Let $h=\bigO(T^{-\xi})$ with $0<\xi<1$ and $0<\vartheta<9/29$. Then for any $\gamma>\frac{2 C_{K,L}(2T+1)}{T^{1+\vartheta}h}$, we have
\begin{align*}
\mathbb P\Big[\frac{1}{T}\big|&\sum_{t=1}^{T} W_{t,r,T}^{j}\big|\geq \gamma \Big] \\
&\leq (T\log T)^{-4}u^{4}
+\frac{96(4C_{K})^{3d_{1}+1/4}}{ T^{3d_{1}-3/4}\gamma^{3d_{1}+1/4}}u^{4} + \frac{24C_{K} \exp(-\varphi \gamma T/4C_{K})}{\gamma}\\ 
& \quad  +2\exp\Big(-\frac{T^{5/4-2d_{1}}(\gamma/4C_{K})^{9/4-2d_{1}}}{9}\Big)
+ \frac{96 (4C_{K,L}(2T+1))^{3d_{1}+1/4}}{T^{6d_{1}-1/2}(\gamma h)^{3d_{1}+1/4}}u^{4}\\
&\quad+ \frac{24C_{K,L}(2T+1)\exp(-\varphi (\frac{\gamma T^{2}h}{4 C_{K,L}(2T+1)}))}{\gamma Th}
+2\exp\Big(-\frac{T^{7/2-4d_{1}}(\gamma h)^{9/4-2d_{1}}}{9 (4C_{K,L}(2T+1))^{9/4-2d_{1}}}\Big),
\end{align*}
where $d_{1} \in (\frac{\vartheta}{3(1-\vartheta)}+\frac{1}{4},\frac{1-2\vartheta}{2(1-\vartheta)}+\frac{1}{8})$, $C_{K}=C_{K_{1}}C_{K_{2}}$, $\varphi>0$ and the constant $C_{K,L}$ depend on kernel bound and Lipschiz constant.
\end{proposition}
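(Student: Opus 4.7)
The plan is to deduce Proposition~\ref{proposition.LS.pareto} as a direct specialization of Proposition~\ref{proposition.Locally-Stationary-regularly-varying-heavy-tailed}. First I would observe that a Pareto random variable with scale $u>0$ and tail index $\eta_{2}=4$ satisfies
\begin{equation*}
\mathbb{P}[|\varepsilon_{t,T}|>\nu]=\Big(\frac{u}{\nu}\Big)^{4}\mathbf{1}(\nu\geq u)\leq u^{4}\nu^{-4},
\end{equation*}
so it falls under Definition~\ref{def:Regularvaryingtail} with tail index $\eta_{2}=4$ and slowly varying function $L(\nu)=u^{4}$, a constant being trivially slowly varying. Combined with Assumptions~\ref{Ass:1}--\ref{Ass:KB2} and the choice $\eta_{1}=4$ in the $\beta$-mixing rate, this places $\{\varepsilon_{t,T}\}$ inside the hypotheses of Proposition~\ref{proposition.Locally-Stationary-regularly-varying-heavy-tailed}.

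Next I would verify that the parameter ranges in Proposition~\ref{proposition.LS.pareto} are the correct images of those in Proposition~\ref{proposition.Locally-Stationary-regularly-varying-heavy-tailed} under $\eta_{1}=\eta_{2}=4$. Substituting into
\begin{equation*}
0<\vartheta<\frac{(\eta_{1}-1)(\eta_{2}-1)}{1+(2\eta_{1}-1)\eta_{2}}
\end{equation*}
yields $(3\cdot 3)/(1+7\cdot 4)=9/29$, which matches the constraint $0<\vartheta<9/29$. Likewise, plugging $\eta_{1}=\eta_{2}=4$ into
\begin{equation*}
\frac{\vartheta}{(1-\vartheta)(\eta_{2}-1)}+\frac{1}{\eta_{1}}<d_{1}<\frac{1-2\vartheta}{2(1-\vartheta)}+\frac{1}{2\eta_{1}}
\end{equation*}
gives $\vartheta/(3(1-\vartheta))+1/4<d_{1}<(1-2\vartheta)/(2(1-\vartheta))+1/8$, which is the range appearing here. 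The lower threshold on $\gamma$ is identical in both propositions, so the admissible $(\vartheta,d_{1},\gamma)$ carry over without change.

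Finally, I would evaluate each of the six terms in the bound of Proposition~\ref{proposition.Locally-Stationary-regularly-varying-heavy-tailed} with $\eta_{1}=\eta_{2}=4$ and $L(\cdot)\equiv u^{4}$. The key arithmetic reductions are $(d_{1}-1/\eta_{1})(1-\eta_{2})=-3d_{1}+3/4$, $(d_{1}-1/\eta_{1})(1-\eta_{2})-1=-3d_{1}-1/4$, $2d_{1}-1/\eta_{1}-1=2d_{1}-5/4$, $2d_{1}-1/\eta_{1}-2=2d_{1}-9/4$, together with $2^{1-\eta_{2}}=1/8$. Propagating these through the prefactors collapses $12/2^{1-\eta_{2}}$ to $96$, converts the two subgaussian-type tails into $\exp(-T^{5/4-2d_{1}}(\gamma/4C_{K})^{9/4-2d_{1}}/9)$ and $\exp(-T^{7/2-4d_{1}}(\gamma h)^{9/4-2d_{1}}/(9(4C_{K,L}(2T+1))^{9/4-2d_{1}}))$, and leaves the two $\beta$-mixing exponentials $\exp(-\varphi\gamma T/4C_{K})$ and $\exp(-\varphi\gamma T^{2}h/(4C_{K,L}(2T+1)))$ untouched. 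The initial summand $(T\log T)^{-\eta_{2}}L(T\log T)$ becomes $(T\log T)^{-4}u^{4}$.

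The computation is entirely mechanical, so no conceptual obstacle arises: the only delicate point in the regularly varying case is controlling the slowly varying function $L$ uniformly over its argument, and for a constant $L\equiv u^{4}$ this control is automatic and sidesteps any Karamata-type uniformity estimate. Matching each simplified expression to the corresponding summand in the statement of Proposition~\ref{proposition.LS.pareto} closes the argument.
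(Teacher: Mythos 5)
Your proposal is correct and matches the paper's intended argument: the paper gives no separate proof of this proposition, treating it as the direct specialization of Proposition~\ref{proposition.Locally-Stationary-regularly-varying-heavy-tailed} to $\eta_{1}=\eta_{2}=4$ and $L\equiv u^{4}$, exactly as you do. Your arithmetic for the exponents (e.g.\ $(d_{1}-1/\eta_{1})(1-\eta_{2})=-3d_{1}+3/4$, $12/2^{1-\eta_{2}}=96$), the parameter ranges for $\vartheta$ and $d_{1}$, and the seven resulting terms all check out against the stated bound.
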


The concentration inequality presented in this section provides a non-asymptotic description for the fluctuation of the weighted kernel $\frac{1}{T}\sum_{t=1}^{T} W^{j}_{t,r,T}$,
which are central to the analysis of locally stationary time series with heavy-tailed
noise. The bounds decompose into several terms, each reflecting a different aspect
of the underlying stochastic structure, the tail behavior of the innovations, and the decay of
$\beta$-mixing dependence, and the localization introduced by the time and space kernels.

\section{Oracle inequalities}
\label{sec.Oracle inequality}
We provide the non-asymptotic oracle inequalities relating $ R(\hat{m}, m^{\star}) $ and $ R(m_\theta, m^{\star}) $. Here, $ R(m_\theta, m^{\star}) $ represents the risk under the optimal parameterization, corresponding to the minimal risk in the ideal setting. The oracle inequality provides theoretical upper bounds for the estimator's performance, ensuring that the excess risk of the estimator approximates the oracle risk under ideal conditions. By introducing penalization terms via the parameter $\bm{\theta}$, the approach aims to bridge the gap between theoretical guarantees and practical estimation. 

\subsection{Slow rates}
In this subsection, we state an oracle inequality with a slow rate of convergence, which bounds the prediction error in terms of the penalty value of the regression vectors. Oracle inequalities with slow rates provide weaker bounds, which means they might be less tight but more robust, and it have been extensively studied in various contexts, demonstrating their application in machine learning and statistical estimation\citep{steinwart2006oracle,mendelson2012general}.
\subsubsection{Sub-Weibull distribution}
\begin{theorem}[Lasso penalization]
\label{theorem:least_sq_oracle_ineq_lasso_pen_subweibull}
Let Assumptions~\ref{Ass:1}-\ref{Ass:KB2} hold, $\{\varepsilon_{t,T}\}_{t=1}^{T}$ follows the sub-Weibull distribution with constant $(\eta_{2},C_{\varepsilon})$ and $\Omega(\bm{\theta})$ is the Lasso penalization. Assume the sample size satisfies $ T \geq c(\log d)^{\frac{2}{\eta}-1}$ with $1/\eta = 1/\eta_{1} + 1/\eta_{2}$, $1/2 \leq \eta<1$ and $c>1$, set $\lambda = \sqrt{\frac{c\log d +\log T}{T^{1-2\xi}}}$, and the bandwidth $h=\bigO(T^{-\xi})$ with $0<\xi<1/2$. Then the estimator $\hat m$ in problem~(\ref{estimator}) verifies
\begin{equation*}
R(\hat m, m^\star) \leq \inf_{\bm{\theta} \in\R^{T d}} \big\{R(m_{\theta}, m^\star) + 2 \lambda \norm{\bm{\theta}}_{1}\big\}.
\end{equation*}
with a probability larger than $1 - d^{1 - c}$.

\end{theorem}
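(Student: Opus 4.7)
The plan is to follow the classical ``basic inequality plus noise control'' template, with the concentration bound of Proposition~\ref{proposition.Locally-stationary-sub-Weibull-distribution} doing the heavy lifting. First I would invoke the defining optimality of $\hat{\bm{\theta}}$ in~(\ref{estimator}): for every $\bm{\theta}\in\mathbb{R}^{Td}$, $\frac{1}{T}\norm{\bY-\bK\hat{\bm{\theta}}}_2^2+\lambda\norm{\hat{\bm{\theta}}}_1\le \frac{1}{T}\norm{\bY-\bK\bm{\theta}}_2^2+\lambda\norm{\bm{\theta}}_1$. Substituting $\bY=\bM^\star+\beps$, expanding the squared norms, and cancelling the common $\norm{\beps}_2^2/T$ yields
\[
\frac{1}{T}\norm{\bM^\star-\bK\hat{\bm{\theta}}}_2^2 \le \frac{1}{T}\norm{\bM^\star-\bK\bm{\theta}}_2^2 + \frac{2}{T}\beps^\top\bK(\hat{\bm{\theta}}-\bm{\theta}) + \lambda\bigl(\norm{\bm{\theta}}_1-\norm{\hat{\bm{\theta}}}_1\bigr).
\]
By H\"older's inequality the noise cross-term is bounded by $2\norm{\bK^\top\beps/T}_\infty\norm{\hat{\bm{\theta}}-\bm{\theta}}_1$, and crucially the $(r,j)$-entry of $\bK^\top\beps/T$ is exactly the weighted kernel sum $\frac{1}{T}\sum_{t=1}^T W_{t,r,T}^j$ from~(\ref{equation-W_t,T}) on which Proposition~\ref{proposition.Locally-stationary-sub-Weibull-distribution} is tailored.

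Next I would introduce the good event $\mathcal{E}=\{2\norm{\bK^\top\beps/T}_\infty\le \lambda\}$. On $\mathcal{E}$, the triangle inequality $\norm{\hat{\bm{\theta}}-\bm{\theta}}_1\le \norm{\hat{\bm{\theta}}}_1+\norm{\bm{\theta}}_1$ combined with the display above collapses to
\[
\frac{1}{T}\norm{\bM^\star-\bK\hat{\bm{\theta}}}_2^2 \le \frac{1}{T}\norm{\bM^\star-\bK\bm{\theta}}_2^2 + 2\lambda\norm{\bm{\theta}}_1
\]
for every $\bm{\theta}\in\mathbb{R}^{Td}$. Recognising the left-hand side as (the empirical realisation of) $R(\hat m,m^\star)$ and the first term on the right as $R(m_{\bm{\theta}},m^\star)$, and taking the infimum over $\bm{\theta}$, delivers the advertised oracle inequality on $\mathcal{E}$.

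It remains to show that $\mathbb{P}[\mathcal{E}^\complement]\le d^{1-c}$. A union bound over the $Td$ coordinates reduces this to $Td\cdot \max_{r,j}\mathbb{P}\bigl[\frac{1}{T}|\sum_{t=1}^T W_{t,r,T}^j|\ge \lambda/2\bigr]$, and each coordinate probability is controlled by Proposition~\ref{proposition.Locally-stationary-sub-Weibull-distribution} evaluated at $\gamma=\lambda/2$. The specific tuning $\lambda=\sqrt{(c\log d+\log T)/T^{1-2\xi}}$ is calibrated so that the kernel-smoothed Gaussian-type contribution, whose exponent after substituting $h=\bigO(T^{-\xi})$ behaves like $(\gamma h)^2 T^3/(2T+1)^2\asymp (\gamma h)^2 T\asymp c\log d+\log T$, absorbs the $Td$ union-bound factor and yields a term of order $d^{1-c}$. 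The unsmoothed Gaussian term $\exp(-\gamma^2 T/\mathrm{const})$ is smaller because $h<1$; the two heavier sub-Weibull (exponent-$\eta$) terms are driven below $d^{1-c}$ by the sample-size condition $T\ge c(\log d)^{2/\eta-1}$; and the truncation remainder $\exp(-(T\log T/C_\varepsilon)^{\eta_2})$ is trivially negligible for $T>4$.

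The deterministic part is routine; the real work lies in the final calibration. Concretely, one must verify that each of the five terms of Proposition~\ref{proposition.Locally-stationary-sub-Weibull-distribution}, once inflated by the factor $Td$ and evaluated at $\gamma=\lambda/2$ and $h=\bigO(T^{-\xi})$, is indeed no larger than $d^{1-c}$ up to absolute constants. The restrictions $0<\xi<1/2$, $\eta\ge 1/2$, and the sample-size lower bound $T\ge c(\log d)^{2/\eta-1}$ all enter precisely to make this multi-term balance go through simultaneously, and keeping track of how the kernel constants $C_K$ and $C_{K,L}$ propagate through both the Gaussian and the sub-Weibull regimes is the main bookkeeping obstacle of the proof.
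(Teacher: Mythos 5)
Your proposal is correct and follows essentially the same route as the paper's proof: the basic inequality from optimality of $\hat{\bm{\theta}}$, a good event on which the noise cross-term is absorbed into $\lambda(\norm{\hat{\bm{\theta}}}_1+\norm{\bm{\theta}}_1)$ via the triangle inequality, and a union bound over the $Td$ coordinates controlled by Proposition~\ref{proposition.Locally-stationary-sub-Weibull-distribution}, with $\lambda$ and the sample-size condition calibrated exactly as you describe so that the Gaussian-type term yields $d^{1-c}$ and the sub-Weibull terms are dominated. The only cosmetic difference is that you phrase the noise control through H\"older's inequality and $\norm{\bK^\top\beps/T}_\infty$ (i.e., the absolute value of the sum), which matches what Proposition~\ref{proposition.Locally-stationary-sub-Weibull-distribution} actually bounds slightly more directly than the paper's event, which is stated with the sum of absolute values.
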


\begin{theorem}[Weighted TV penalization]
\label{theorem:least_sq_oracle_ineq_TV_pen_subweibull}
Let Assumptions~\ref{Ass:1}-\ref{Ass:KB2} hold, $\{\varepsilon_{t,T}\}_{t=1}^{T}$ follows the sub-Weibull distribution with constant $(\eta_{2},C_{\varepsilon})$ and $\Omega(\bm{\theta})$ is the weighted total-variation penalization. Assume the sample size satisfies $ T \geq c(\log d)^{\frac{2}{\eta}-1}$ with $1/\eta = 1/\eta_{1} + 1/\eta_{2}$, $1/2 \leq \eta<1$ and $c>1$, set $\lambda_{j} = (d-j+1)\sqrt{\frac{c\log d +\log T}{T^{1-2\xi}}}$, and the bandwidth $h=\bigO(T^{-\xi})$ with $0<\xi<1/2$. Then the estimator $\hat m$ in problem~(\ref{estimator}) verifies
\begin{equation*}
R(\hat m, m^\star) \leq \inf_{\bm{\theta} \in\R^{Td}} \big\{R(m_{\theta}, m^\star) + 2 \norm{\bm{\theta}}_{\TV,\lambda}\big\}.
\end{equation*}
with a probability larger than $1 - d^{1 - c}$, $c>1$, the constant $C_{1}$, $C_{2}$ depend only on $c$. 
\end{theorem}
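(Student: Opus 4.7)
The plan is to follow the slow-rate oracle scheme used for Theorem~\ref{theorem:least_sq_oracle_ineq_lasso_pen_subweibull}, replacing the Lasso duality step by an Abel-summation identity tailored to the weight profile $\lambda_j=(d-j+1)\sqrt{(c\log d+\log T)/T^{1-2\xi}}$, while keeping Proposition~\ref{proposition.Locally-stationary-sub-Weibull-distribution} as the sole concentration input. \textbf{Basic inequality.} I would start from the optimality of $\hat{\bm{\theta}}$ in problem~(\ref{estimator}) and the decomposition $\bY=\bM^\star+\beps$ to obtain, for every $\bm{\theta}\in\mathbb{R}^{Td}$,
\begin{equation*}
\tfrac{1}{T}\norm{\bM^\star-\bK\hat{\bm{\theta}}}_2^2 \leq \tfrac{1}{T}\norm{\bM^\star-\bK\bm{\theta}}_2^2 + \tfrac{2}{T}\beps^\top\bK(\hat{\bm{\theta}}-\bm{\theta}) + \norm{\bm{\theta}}_{\TV,\lambda} - \norm{\hat{\bm{\theta}}}_{\TV,\lambda}.
\end{equation*}
Writing $\bar W_{r,j}=\tfrac1T\sum_{t=1}^{T}W_{t,r,T}^{j}$ and $\Delta=\hat{\bm{\theta}}-\bm{\theta}$, the cross term equals $2\sum_{r,j}\bar W_{r,j}\Delta_{r,j}$.

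\textbf{Dualization through summation by parts.} For each row $r$, using the convention $\Delta_{r,0}=0$ together with the increments $D_{r,j}=\Delta_{r,j}-\Delta_{r,j-1}$ and the reverse partial sums $S_{r,j}=\sum_{k=j}^{d}\bar W_{r,k}$, Abel summation gives $\sum_{j=1}^{d}\bar W_{r,j}\Delta_{r,j}=\sum_{j=1}^{d}S_{r,j}D_{r,j}$. On the event $\mathcal{E}=\{\max_{r,j}|\bar W_{r,j}|\leq \gamma_0\}$ with $\gamma_0=\tfrac12\sqrt{(c\log d+\log T)/T^{1-2\xi}}$, the triangle inequality yields $|S_{r,j}|\leq(d-j+1)\gamma_0=\lambda_j/2$, so $\tfrac{2}{T}|\beps^\top\bK\Delta|\leq \norm{\Delta}_{\TV,\lambda}\leq \norm{\hat{\bm{\theta}}}_{\TV,\lambda}+\norm{\bm{\theta}}_{\TV,\lambda}$. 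Plugging this into the basic inequality, the $\norm{\hat{\bm{\theta}}}_{\TV,\lambda}$ terms cancel and I obtain $\tfrac{1}{T}\norm{\bM^\star-\bK\hat{\bm{\theta}}}_2^2\leq \tfrac{1}{T}\norm{\bM^\star-\bK\bm{\theta}}_2^2+2\norm{\bm{\theta}}_{\TV,\lambda}$ on $\mathcal{E}$; taking the infimum over $\bm{\theta}\in\mathbb{R}^{Td}$ then delivers the stated oracle bound.

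\textbf{Probability of the good event.} I apply Proposition~\ref{proposition.Locally-stationary-sub-Weibull-distribution} with $\gamma=\gamma_0$ to each pair $(r,j)$, which is admissible since $\gamma_0>2C_K\sqrt{\log T/T}$ whenever $\xi<1/2$ and $T$ is large, and then union-bound over the $Td$ pairs. Each of the five tails on the right-hand side of the proposition must be checked to be of order $d^{-c}/T$: the sub-Gaussian contribution $\exp(-\gamma_0^{2}T/\mathrm{const})$ matches by the very choice of $\gamma_0$; the stretched-exponential contribution $\exp(-(\gamma_0 T)^{\eta}/\mathrm{const})$ is controlled once $(\gamma_0 T)^{\eta}\gtrsim c\log d+\log T$, which is equivalent to the sample-size hypothesis $T\geq c(\log d)^{2/\eta-1}$; and the two kernel-localization tails involving $h$ are handled because $h=\bigO(T^{-\xi})$ with $\xi<1/2$ makes both $(\gamma_0 h)^{2}T^{3}$ and $(\gamma_0 T^{2}h)^{\eta}$ diverge polynomially in $T$. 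Summing yields $\mathbb{P}(\mathcal{E}^{c})\lesssim d^{1-c}$.

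\textbf{Main obstacle.} Compared to the Lasso result, the only new deterministic step is the Abel-summation identity that forces the weight profile $\lambda_j=(d-j+1)\gamma_0$, and it is only through this specific profile that the dual of the weighted TV norm is controlled by $\norm{\bar W}_\infty$. The genuine technical heart is nonetheless the bookkeeping of the five heterogeneous tails in Proposition~\ref{proposition.Locally-stationary-sub-Weibull-distribution}: the stretched-exponential term is the delicate one since it is weaker than the sub-Gaussian one precisely when $T$ falls below $c(\log d)^{2/\eta-1}$, and the kernel-localization tails require that the prefactor $T$ and the inverse bandwidth combine favorably. Both constraints vanish exactly under the hypotheses stated in the theorem, so no additional structural assumption is required.
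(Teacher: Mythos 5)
Your proof is correct and follows essentially the same route as the paper: your Abel-summation step with the reverse partial sums $S_{r,j}=\sum_{k\geq j}\bar W_{r,k}$ is exactly the paper's factorization $\beps^\top\bK\Delta=\beps^\top\bK V\, D\Delta$ with $V=D^{-1}$ the lower-triangular all-ones matrix, and both arguments then control the dual quantity by the single-coordinate bound of Proposition~\ref{proposition.Locally-stationary-sub-Weibull-distribution} together with a union bound over the $Td$ pairs, which is precisely how the weight profile $\lambda_j=(d-j+1)\gamma_0$ and the probability $1-d^{1-c}$ arise. The only small slip is attributing the sample-size condition to the tail $\exp(-(\gamma_0 T)^{\eta}/\mathrm{const})$; the binding term is the kernel-localization one, $T\exp\big(-(\gamma_0 T^{2}h/(2T+1))^{\eta}/\mathrm{const}\big)$, i.e.\ essentially $T\exp(-(\gamma_0 T^{1-\xi})^{\eta}/\mathrm{const})$, which is what forces $T\geq c(\log d)^{2/\eta-1}$ — but since you verify that term separately, this does not affect the validity of the argument.
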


\begin{remark}
For the Sub-Weibull distribution, we provide oracle inequalities depending on the sample size $T$ at a rate of $\bigO(1/T^{\frac{1}{2}-\xi})$.
This rate is slower than the error bounds for Lasso regression with sub-Weibull random vectors, which exhibit a convergence rate of $\bigO(1/T^{1/2})$ as established in \cite{Wong2020}. This indicates that strictly stationary sequences have a faster convergence than locally stationary sequences.
\end{remark}

\begin{remark}[Block-sparsity]
We consider the vector $\bm{\theta} \in \mathbb{R}^{Td}$ to have block sparsity. Let Assumptions~\ref{Ass:1}-\ref{Ass:KB2} hold, $\{\varepsilon_{t,T}\}_{t=1}^{T}$ follows the sub-Weibull distribution with constant $(\eta_{2},C_{\varepsilon})$, assume the sample size satisfies $ T \geq c(\log d)^{\frac{2}{\eta}-1}$ with $1/\eta = 1/\eta_{1} + 1/\eta_{2}$, $1/2 \leq \eta<1$ and $c>1$, and the bandwidth $h=\bigO(T^{-\xi})$ with $0<\xi<1/2$. 
For Lasso penalization and total variation penalization,
\begin{equation*}
 \Omega_\lambda(\bm{\theta})=\bigO \big(\lambda |\mathcal{J}(\bm{\theta})|
\max_{r=1,\ldots,T}|J_{r}|\max_{r=1,\ldots,T}(\theta_{r,max},\theta_{r,max}^{2}) \big),
\end{equation*}
with $\theta_{r,max}=\max_{1\leq j \leq d} |\theta_{r,j}|$, then with the probability larger than $1 - d^{1 - c}$, we have
\begin{equation*}
R(\hat m, m^\star) \leq \inf_{\bm{\theta} \in \mathbb{R}^{Td}}\big\{R(m_{\theta}, m^\star)+\bigO\big(\lambda |\mathcal{J}(\bm{\theta})|
\max_{r=1,\ldots,T}|J_{r}|\max_{r=1,\ldots,T}(\theta_{r,max},\theta_{r,max}^{2})\big)\big\},
\end{equation*}
where $J_{r}$ defined in Equation~(\ref{equ_Lasso_ridge_spares}) for Lasso penalization and Equation~(\ref{equ_TV_spares}) for total variation penalization.

\end{remark}

\subsubsection{Regularly varying heavy-tailed }
\begin{theorem}[Lasso penalization]
\label{theorem:least_sq_oracle_ineq_lasso_pen_ragularvarying}
Let Assumptions~\ref{Ass:1}-\ref{Ass:KB2} hold, $\{\varepsilon_{t,T}\}_{t=1}^{T}$ follows the regularly varying heavy-tailed with index $\eta_{2}>\frac{3\eta_{1}-1}{\eta_{1}-1}$ and bounded slowly varying function $L(\cdot)$ and $\Omega(\bm{\theta})$ is the Lasso penalization. Let $0<\vartheta< \frac{(\eta_{1}-1)(\eta_{2}-1)-2\eta_{1}}{1+(2\eta_{1}-1)\eta_{2}}$, assume the sample size satisfies 
$$ T > \big( \frac{d^{c(2+1/\eta_{1}-2d_{1})}}{(c\log d)^{(d_{1}-1/\eta_{1})(\eta_{2}-1)+1}}\big)^{\frac{1}{(\eta_{2}+3)d_{1}-(\eta_{2}+1)/\eta_{1}-3}},$$
and 
$$\lambda = \frac{2C_{K,L}(2T+1)}{T^{1+\vartheta-\xi}}$$
with the bandwidth $h=\bigO(T^{-\xi})$, $0<\xi<\vartheta$, then the estimator $\hat m$ in problem~(\ref{estimator}) verifies
\begin{equation*}
R(\hat m, m^\star) \leq \inf_{\bm{\theta} \in\R^{T d}} \big\{R(m_{\theta}, m^\star) + 2 \lambda \norm{\bm{\theta}}_{1}\big\}.
\end{equation*}
with a probability larger than $1 - d^{1 - c}$, where $c>1$, $\varphi, \eta_{1}>1$, $\frac{1+\vartheta}{(1-\vartheta)(\eta_{2}-1)}+\frac{1}{\eta_{1}} < d_{1}< \frac{1-2\vartheta}{2(1-\vartheta)}+\frac{1}{2\eta_{1}}$, the constant $C_{K,L}$ depend on kernel bound and Lipschiz constant.
\end{theorem}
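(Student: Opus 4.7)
The plan is to follow the classical slow-rate Lasso oracle argument, where the noise fluctuation $\|T^{-1}\bK^\top\beps\|_\infty$ is controlled via the new locally stationary regularly varying concentration inequality of Proposition~\ref{proposition.Locally-Stationary-regularly-varying-heavy-tailed}.

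\textbf{Deterministic reduction.} First, I would use the optimality of $\hat{\bm{\theta}}$ as the minimiser of $R_T(\bm{\theta})+\lambda\|\bm{\theta}\|_1$: for every $\bm{\theta}\in\R^{Td}$, $R_T(\hat{\bm{\theta}})+\lambda\|\hat{\bm{\theta}}\|_1\leq R_T(\bm{\theta})+\lambda\|\bm{\theta}\|_1$. Substituting $\bY=\bM^\star+\beps$, cancelling the common $T^{-1}\|\beps\|_2^2$ term and applying Hölder to the cross-term $T^{-1}\beps^\top\bK(\hat{\bm{\theta}}-\bm{\theta})$ gives
\begin{equation*}
\tfrac{1}{T}\|\bK\hat{\bm{\theta}}-\bM^\star\|_2^2 \leq \tfrac{1}{T}\|\bK\bm{\theta}-\bM^\star\|_2^2 + 2\bigl\|\tfrac{1}{T}\bK^\top\beps\bigr\|_\infty\bigl(\|\hat{\bm{\theta}}\|_1+\|\bm{\theta}\|_1\bigr)+\lambda\|\bm{\theta}\|_1-\lambda\|\hat{\bm{\theta}}\|_1.
\end{equation*}
The $(r,j)$-coordinate of $T^{-1}\bK^\top\beps$ is precisely the quantity $T^{-1}\sum_{t=1}^T W_{t,r,T}^j$ defined in~(\ref{equation-W_t,T}). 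Setting the good event $\mathcal{E}=\{\|T^{-1}\bK^\top\beps\|_\infty\leq \lambda/2\}$, on $\mathcal{E}$ the two $\lambda\|\hat{\bm{\theta}}\|_1$ contributions cancel and the bound collapses to $T^{-1}\|\bK\hat{\bm{\theta}}-\bM^\star\|_2^2\leq T^{-1}\|\bK\bm{\theta}-\bM^\star\|_2^2+2\lambda\|\bm{\theta}\|_1$; identifying $T^{-1}\|\bK\bm{\theta}-\bM^\star\|_2^2$ with $R(m_{\bm{\theta}},m^\star)$ (the in-sample / conditional risk used throughout Section~\ref{sec.Oracle inequality}) and taking the infimum over $\bm{\theta}$ produces the stated oracle inequality on $\mathcal{E}$.

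\textbf{Probability of $\mathcal{E}$ via union bound.} Next, I would apply Proposition~\ref{proposition.Locally-Stationary-regularly-varying-heavy-tailed} at $\gamma=\lambda/2$ and union-bound over the $Td$ pairs $(r,j)$, so that $\mathbb{P}(\mathcal{E}^c)\leq Td$ times the seven-term bound of Proposition~\ref{proposition.Locally-Stationary-regularly-varying-heavy-tailed}. The prescribed $\lambda = 2C_{K,L}(2T+1)T^{\xi-1-\vartheta}$ combined with $h=\bigO(T^{-\xi})$, $0<\xi<\vartheta$, places $\gamma$ precisely above the admissibility threshold $2C_{K,L}(2T+1)/(T^{1+\vartheta}h)$ of that proposition.

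\textbf{Main obstacle: the seven-term bookkeeping.} The hard part will be verifying that each of the seven terms, when multiplied by $Td$, is at most a constant times $d^{-c}$. The two exponential terms $\exp(-\varphi\gamma T/(4C_K))$ and $\exp(-\varphi\gamma T^2 h/(4C_{K,L}(2T+1)))$ and the two $\exp(-\mathrm{poly}(T))$ terms decay super-polynomially in $T$ and are absorbed trivially, while the $L(\cdot)$ factors are bounded by assumption. The binding contributions are the two power-type terms of shape $T^A\gamma^{A-1}$ and $T^{2A-1}(\gamma h)^{A-1}$ with $A=(d_1-1/\eta_1)(1-\eta_2)<0$; substituting $\gamma\asymp T^{\xi-\vartheta}$ and $h\asymp T^{-\xi}$ collapses both (up to constants and logarithmic factors) to a common $T^{-\alpha}$ with $\alpha=(\eta_2+3)d_1-(\eta_2+1)/\eta_1-3$. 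Imposing $Td\cdot T^{-\alpha}(c\log d)^{(d_1-1/\eta_1)(\eta_2-1)+1}\leq d^{-c}$ and solving for $T$ rearranges exactly to the theorem's lower bound $T>\bigl(d^{c(2+1/\eta_1-2d_1)}/(c\log d)^{(d_1-1/\eta_1)(\eta_2-1)+1}\bigr)^{1/\alpha}$. The tail-index hypothesis $\eta_2>(3\eta_1-1)/(\eta_1-1)$ together with the admissible interval $d_1\in\bigl(\tfrac{1+\vartheta}{(1-\vartheta)(\eta_2-1)}+\tfrac{1}{\eta_1},\tfrac{1-2\vartheta}{2(1-\vartheta)}+\tfrac{1}{2\eta_1}\bigr)$ ensures $\alpha>0$, so this lower bound is non-vacuous. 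Intersecting the event $\mathcal{E}$ with the deterministic reduction then delivers the oracle inequality with probability at least $1-d^{1-c}$.
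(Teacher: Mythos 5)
Your proposal follows the paper's own proof essentially step for step: the same basic-inequality reduction on the event $\{\|T^{-1}\bK^\top\beps\|_\infty\le\lambda/2\}$, the same union bound over the $Td$ coordinates $(r,j)$, and the same application of Proposition~\ref{proposition.Locally-Stationary-regularly-varying-heavy-tailed} at $\gamma=\lambda/2$ followed by term-by-term bookkeeping. One detail in your sketch is off: under $\gamma\asymp T^{\xi-\vartheta}$ and $\gamma h\asymp T^{-\vartheta}$ the two power-type terms do \emph{not} collapse to a common $T^{-\alpha}$ with $\alpha=(\eta_2+3)d_1-(\eta_2+1)/\eta_1-3$ (the dominant one scales like $T^{-(1-\vartheta)(d_1-1/\eta_1)(\eta_2-1)+\vartheta}$, which is why the lower bound on $d_1$ involves $\tfrac{1+\vartheta}{(1-\vartheta)(\eta_2-1)}$); in the paper that exponent $\alpha$ instead appears when comparing the lower bound on $\lambda$ forced by the power-law term against the one forced by the stretched-exponential term, and the sample-size condition is exactly what makes the latter dominate so that the single choice $\lambda=2C_{K,L}(2T+1)T^{\xi-1-\vartheta}$ satisfies both. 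This misattribution does not change the strategy, only the final rearrangement you would need to redo when writing the argument out in full.
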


\begin{corollary}[Pareto distribution with Lasso penalization]
\label{corol:pareto}
Suppose Assumptions~\ref{Ass:1}-\ref{Ass:KB2} hold with $\eta_{1}=4$. Let $\{\varepsilon_{t,T}\}_{t=1}^{T}$ follows the Pareto distribution with $\eta_{2}=4$ and $L(v)=u^{4}$, the constant $u > 0$. Let $0<\vartheta<1/29$, assume the sample size satisfies 
$$ T > \bigg( \frac{d^{c(9/4-2d_{1})}}{(c\log d)^{3d_{1}+1/4}}\bigg)^{1/(7d_{1}-17/4)},$$
and 
$$\lambda = \frac{2C_{K,L}(2T+1)}{T^{1+\vartheta-\xi}}$$
with $d_{1} \in (\frac{1+\vartheta}{3(1-\vartheta)}+\frac{1}{4},\frac{1-2\vartheta}{2(1-\vartheta)}+\frac{1}{8})$ and $c>1$, the bandwidth $h=\bigO(T^{-\xi})$ with  $0<\xi<\vartheta$, then the estimator $\hat m$ in problem~(\ref{estimator}) verifies
\begin{equation*}
R(\hat m, m^\star) \leq \inf_{\bm{\theta} \in\R^{T d}} \big\{R(m_{\theta}, m^\star) + 2 \lambda \norm{\bm{\theta}}_{1}\big\}.
\end{equation*}
with a probability larger than $1 - d^{1 - c}$, where $c>1$, $\varphi >1$, the constant $C_{K,L}$ depend on kernel bound and Lipschiz constant.
\end{corollary}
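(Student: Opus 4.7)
The corollary is the Pareto specialization ($\eta_{1}=\eta_{2}=4$, $L(\nu)=u^{4}$) of Theorem~\ref{theorem:least_sq_oracle_ineq_lasso_pen_ragularvarying}, so my plan is to reproduce the slow-rate Lasso argument that underlies that theorem, invoking the already-Pareto-specialized Proposition~\ref{proposition.LS.pareto} at the concentration step in place of Proposition~\ref{proposition.Locally-Stationary-regularly-varying-heavy-tailed}, and then to verify that the sample-size and tuning prescriptions stated in the corollary are exactly calibrated to keep the deviation bound controlled after a union bound.

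The core analytic step is the standard basic inequality for a penalized least-squares estimator. Starting from the optimality of $\hat{\bm{\theta}}$ and the decomposition $\bY=\bM^{\star}+\beps$, expanding $R_{T}(\bm{\theta})=\tfrac{1}{T}\norm{\bY-\bK\bm{\theta}}_{2}^{2}$ on both sides of $R_{T}(\hat{\bm{\theta}})+\lambda\norm{\hat{\bm{\theta}}}_{1}\leq R_{T}(\bm{\theta})+\lambda\norm{\bm{\theta}}_{1}$ produces
\begin{equation*}
\tfrac{1}{T}\norm{\bM^{\star}-\bK\hat{\bm{\theta}}}_{2}^{2}+\lambda\norm{\hat{\bm{\theta}}}_{1}\leq \tfrac{1}{T}\norm{\bM^{\star}-\bK\bm{\theta}}_{2}^{2}+\tfrac{2}{T}\beps^{\top}\bK(\hat{\bm{\theta}}-\bm{\theta})+\lambda\norm{\bm{\theta}}_{1}.
\end{equation*}
The noise cross-term rewrites as $2\sum_{r,j}(\hat{\bm{\theta}}-\bm{\theta})_{r,j}\cdot\tfrac{1}{T}\sum_{t=1}^{T}W_{t,r,T}^{j}$ and is bounded via Hölder by $2\norm{\hat{\bm{\theta}}-\bm{\theta}}_{1}Z$, with $Z=\max_{r,j}\big|\tfrac{1}{T}\sum_{t=1}^{T}W_{t,r,T}^{j}\big|$. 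On the event $\mathcal{E}=\{Z\leq\lambda/2\}$, the triangle inequality $\norm{\hat{\bm{\theta}}-\bm{\theta}}_{1}\leq \norm{\hat{\bm{\theta}}}_{1}+\norm{\bm{\theta}}_{1}$ cancels $\lambda\norm{\hat{\bm{\theta}}}_{1}$ on both sides and yields $\tfrac{1}{T}\norm{\bM^{\star}-\bK\hat{\bm{\theta}}}_{2}^{2}\leq \tfrac{1}{T}\norm{\bM^{\star}-\bK\bm{\theta}}_{2}^{2}+2\lambda\norm{\bm{\theta}}_{1}$, which after infimizing over $\bm{\theta}\in\R^{Td}$ is exactly the announced oracle bound.

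It remains to show $\mathbb{P}(\mathcal{E}^{\complement})\leq d^{1-c}$. A union bound over the $Td$ index pairs reduces this to establishing $\mathbb{P}[\tfrac{1}{T}|\sum_{t}W_{t,r,T}^{j}|\geq\lambda/2]\lesssim d^{-c}/T$ pointwise. Applying Proposition~\ref{proposition.LS.pareto} with $\gamma=\lambda/2$, the activation condition $\gamma>2C_{K,L}(2T+1)/(T^{1+\vartheta}h)$ is compatible with $\lambda=2C_{K,L}(2T+1)/T^{1+\vartheta-\xi}$ and $h=\bigO(T^{-\xi})$, $\xi<\vartheta$, once the implicit constant in $h$ is absorbed. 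Substituting $\gamma\asymp T^{-(1+\vartheta-\xi)}$ and $\gamma h\asymp T^{-(1+\vartheta)}$ into the seven Pareto tail terms, the dominant contributions are the two $u^{4}$-prefactored $L$-terms, whose common decay rate is $\bigO\!\big(T^{-(7d_{1}-17/4)}u^{4}/(c\log d)^{3d_{1}+1/4}\big)$ after the constants $4C_{K}$ and $4C_{K,L}(2T+1)$ are absorbed; requiring this to be at most $d^{-c}/T$ inverts to exactly the stated lower bound $T>\big(d^{c(9/4-2d_{1})}/(c\log d)^{3d_{1}+1/4}\big)^{1/(7d_{1}-17/4)}$. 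The remaining terms---two exponentials in $\exp(-\varphi\gamma T/4C_{K})$ and $\exp(-\varphi\gamma T^{2}h/(4C_{K,L}(2T+1)))$, and two stretched-exponentials in $T^{5/4-2d_{1}}$ and $T^{7/2-4d_{1}}$---are super-polynomially small throughout the admissible range.

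The main obstacle is the arithmetic bookkeeping on the admissible window of $d_{1}$: one checks that $(\tfrac{1+\vartheta}{3(1-\vartheta)}+\tfrac{1}{4},\tfrac{1-2\vartheta}{2(1-\vartheta)}+\tfrac{1}{8})$ is non-empty precisely when $\vartheta<1/29$ (both endpoints collapse to $17/28$ at the boundary), and that inside this window $d_{1}>17/28$ automatically ensures $7d_{1}-17/4>0$ so the dominant-term inequality can be solved for $T$, while simultaneously $d_{1}<5/8$ guarantees decay of the two stretched exponentials. Beyond this calibration, the Pareto case sits strictly within the scope of Theorem~\ref{theorem:least_sq_oracle_ineq_lasso_pen_ragularvarying} since $\eta_{2}=4>(3\eta_{1}-1)/(\eta_{1}-1)=11/3$, so no genuinely new analytic ingredient is needed beyond the careful exponent arithmetic.
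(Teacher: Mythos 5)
Your overall architecture is the paper's: the basic inequality on the good event $\{\max_{r,j}\frac{1}{T}|\sum_{t}W^{j}_{t,r,T}|\le\lambda/2\}$ gives the oracle bound deterministically, and the probability of the complement is controlled by a union bound over the $Td$ pairs together with the Pareto concentration inequality (the paper routes this through the proof of Theorem~\ref{theorem:least_sq_oracle_ineq_lasso_pen_ragularvarying} rather than quoting Proposition~\ref{proposition.LS.pareto} directly, but the content is identical). The gap is in the calibration step, which is the only part of the corollary not already contained in Theorem~\ref{theorem:least_sq_oracle_ineq_lasso_pen_ragularvarying}. First, since $2T+1\asymp T$, the prescribed penalty satisfies $\lambda\asymp T^{\xi-\vartheta}$ and $\lambda h\asymp T^{-\vartheta}$, not $\lambda\asymp T^{-(1+\vartheta-\xi)}$ and $\lambda h\asymp T^{-(1+\vartheta)}$ as you write. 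Second, substituting the corrected scaling into the dominant polynomial term does not yield a decay rate $T^{-(7d_{1}-17/4)}$, and requiring "dominant term $\le d^{-c}/T$" is not where the sample-size condition comes from: that route would produce a condition of the form $T^{(3d_{1}-7/4)-\vartheta(3d_{1}+1/4)}\ge C u^{4}d^{c}$, which is not the stated bound on $T$.

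The actual mechanism, as in the paper's proof, is a comparison of two competing thresholds for $\lambda$: the $u^{4}$-prefactored polynomial terms force $\lambda$ to be at least of order $d^{c/(3d_{1}+1/4)}\,T^{\xi-(3d_{1}-7/4)/(3d_{1}+1/4)}$, while the stretched-exponential terms force $\lambda$ to be at least of order $(c\log d+\log T)^{1/(9/4-2d_{1})}\,T^{\xi-(5/4-2d_{1})/(9/4-2d_{1})}$. The stated sample-size condition is exactly what makes the first threshold dominated by the second; the exponent comparison gives
\begin{equation*}
\frac{3d_{1}-7/4}{3d_{1}+1/4}-\frac{5/4-2d_{1}}{9/4-2d_{1}}=\frac{7d_{1}-17/4}{(3d_{1}+1/4)(9/4-2d_{1})},
\end{equation*}
which is where $7d_{1}-17/4$ actually enters. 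One must then separately check that the prescribed $\lambda\asymp T^{\xi-\vartheta}$ exceeds the second threshold, which follows because $d_{1}<\frac{1-2\vartheta}{2(1-\vartheta)}+\frac18$ is equivalent to $\frac{5/4-2d_{1}}{9/4-2d_{1}}>\vartheta$. Finally, your claim that $d_{1}>17/28$ (hence $7d_{1}-17/4>0$) holds automatically inside the admissible window is false: the lower endpoint $\frac{1+\vartheta}{3(1-\vartheta)}+\frac14$ tends to $\frac{7}{12}<\frac{17}{28}$ as $\vartheta\to0$, so positivity of $7d_{1}-17/4$ is an additional restriction the argument needs but does not get for free (your observation that the window closes exactly at $\vartheta=1/29$, with both endpoints equal to $17/28$, is correct).
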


\begin{theorem}[Weighted TV penalization]\label{theorem:TV_Regularly_V_H}
Let Assumptions~\ref{Ass:1}-\ref{Ass:KB2} hold, $\{\varepsilon_{t,T}\}_{t=1}^{T}$ follows the regularly varying heavy-tailed with index $\eta_{2}>\frac{3\eta_{1}-1}{\eta_{1}-1}$ and bounded slowly varying function $L(\cdot)$ and $\Omega(\bm{\theta})$ is the weighted total-variation penalization. Let $0<\vartheta< \frac{(\eta_{1}-1)(\eta_{2}-1)-2\eta_{1}}{1+(2\eta_{1}-1)\eta_{2}}$, assume the sample size satisfies 
$$ T > \bigg( \frac{d^{c(2+1/\eta_{1}-2d_{1})}}{(c\log d)^{(d_{1}-1/\eta_{1})(\eta_{2}-1)+1}}\bigg)^{\frac{1}{(\eta_{2}+3)d_{1}-(\eta_{2}+1)/\eta_{1}-3}},$$
and 
$$\lambda_{j} \geq (d-j+1) \frac{2C_{K,L}(2T+1)}{T^{1+\vartheta-\xi}}$$
with the bandwidth $h=\bigO(T^{-\xi})$ with $0<\xi<\vartheta$, then the estimator $\hat m$ in problem~(\ref{estimator}) verifies
\begin{equation*}
R(\hat m, m^\star) \leq \inf_{\bm{\theta} \in\R^{T d}} \big\{R(m_{\theta}, m^\star) + 2 \norm{\bm{\theta}}_{\TV,\lambda}\big\}.
\end{equation*}
with a probability larger than $1 - d^{1 - c}$, where $c>1$, $\varphi, \eta_{1}>1$, $\frac{1+\vartheta}{(1-\vartheta)(\eta_{2}-1)}+\frac{1}{\eta_{1}} < d_{1}< \frac{1-2\vartheta}{2(1-\vartheta)}+\frac{1}{2\eta_{1}}$, the constant $C_{K,L}$ depend on kernel bound and Lipschiz constant.
\end{theorem}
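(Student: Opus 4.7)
My plan is to follow the basic-inequality strategy for slow-rate oracle inequalities, mirroring the argument for Theorem~\ref{theorem:least_sq_oracle_ineq_TV_pen_subweibull} but replacing the sub-Weibull concentration step with Proposition~\ref{proposition.Locally-Stationary-regularly-varying-heavy-tailed}. Starting from the defining optimality $R_T(\hat{\bm\theta}) + \|\hat{\bm\theta}\|_{\TV,\lambda} \leq R_T(\bm\theta) + \|\bm\theta\|_{\TV,\lambda}$, valid for every $\bm\theta \in \R^{Td}$, I would substitute $\bY = \bM^\star + \beps$ and expand the squared loss to obtain
\begin{equation*}
\frac{1}{T}\|\bM^\star - \bK\hat{\bm\theta}\|_2^2 \leq \frac{1}{T}\|\bM^\star - \bK\bm\theta\|_2^2 + \frac{2}{T}\langle \beps, \bK(\hat{\bm\theta} - \bm\theta)\rangle + \|\bm\theta\|_{\TV,\lambda} - \|\hat{\bm\theta}\|_{\TV,\lambda}.
\end{equation*}

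The central task is then to dominate the empirical-process term $\frac{2}{T}\langle \beps, \bK(\hat{\bm\theta}-\bm\theta)\rangle = 2\sum_{r,j}(\hat\theta_{r,j}-\theta_{r,j})\tilde W_{r,j}$ in terms of $\|\hat{\bm\theta}-\bm\theta\|_{\TV,\lambda}$, where $\tilde W_{r,j} := \frac{1}{T}\sum_{t=1}^T W_{t,r,T}^j$. For each fixed row $r$ I would apply Abel summation in $j$ to rewrite
\begin{equation*}
\sum_{j=1}^d u_{r,j}\tilde W_{r,j} = u_{r,1}\tilde S_{r,1} + \sum_{j=2}^d (u_{r,j} - u_{r,j-1})\tilde S_{r,j}, \qquad \tilde S_{r,j} = \sum_{k=j}^d \tilde W_{r,k},
\end{equation*}
so that the partial sum $\tilde S_{r,j}$ contains exactly $d-j+1$ kernel-weighted noise averages. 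This structural fact is precisely what motivates the weight profile $\lambda_j \propto (d-j+1)$: controlling each $|\tilde W_{r,k}|$ by a single deterministic threshold $\gamma$ yields $|\tilde S_{r,j}| \leq (d-j+1)\gamma = \lambda_j/2$, so the triangle inequality produces $2|\sum_{j} u_{r,j}\tilde W_{r,j}| \leq \|\hat{\theta}_{r\bullet} - \theta_{r\bullet}\|_{\TV,\lambda} \leq \|\hat{\theta}_{r\bullet}\|_{\TV,\lambda} + \|\theta_{r\bullet}\|_{\TV,\lambda}$, with the boundary term $u_{r,1}\tilde S_{r,1}$ absorbed under the convention $\theta_{r,0}=0$ that extends the TV weight profile to index $j=1$.

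I would then invoke Proposition~\ref{proposition.Locally-Stationary-regularly-varying-heavy-tailed} with $\gamma = \frac{2C_{K,L}(2T+1)}{T^{1+\vartheta} h} = \Theta(T^{-(1+\vartheta-\xi)})$ under $h = \bigO(T^{-\xi})$, which matches the theorem's calibration of $\lambda_j$, and take a union bound over the $Td$ indices $(r,j)$. On the resulting good event the noise term is absorbed, $\|\hat{\bm\theta}\|_{\TV,\lambda}$ cancels across the inequality, and summing over $t$ followed by taking expectation delivers
\begin{equation*}
R(\hat m, m^\star) \leq R(m_{\bm\theta}, m^\star) + 2\|\bm\theta\|_{\TV,\lambda},
\end{equation*}
from which the conclusion follows by taking the infimum over $\bm\theta$.

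The main obstacle will be verifying that the explicit sample-size threshold stated in the theorem is exactly what is required to make the good event carry probability at least $1 - d^{1-c}$. Proposition~\ref{proposition.Locally-Stationary-regularly-varying-heavy-tailed} contributes six heterogeneous tail terms, scaling polynomially through the slowly varying function $L$, exponentially in $\gamma T$, and sub-exponentially via $\exp(-T^{2d_1 - 1/\eta_1 - 1}\gamma^{2d_1 - 1/\eta_1 - 2}/9)$, and each must survive multiplication by the union-bound factor $Td$. After substituting $\gamma \asymp T^{-(1+\vartheta-\xi)}$ the dominant contribution is the polynomial one, scaling as $T d \cdot T^{(d_1 - 1/\eta_1)(\eta_2 - 1) + 1}\gamma^{-(d_1 - 1/\eta_1)(\eta_2 - 1) - 1}$, and requiring this to be $\leq d^{-c}$ yields the stated lower bound on $T$. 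The constraint $\eta_2 > (3\eta_1 - 1)/(\eta_1 - 1)$ is precisely what keeps the exponent $(\eta_2 + 3)d_1 - (\eta_2 + 1)/\eta_1 - 3$ strictly positive over the admissible window of $d_1$, ensuring the threshold for $T$ is finite.
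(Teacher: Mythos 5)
Your proposal is correct and follows essentially the same route as the paper: the basic inequality, the good event on the kernel-weighted noise, the calibration $\gamma=\tfrac{2C_{K,L}(2T+1)}{T^{1+\vartheta}h}$ via Proposition~\ref{proposition.Locally-Stationary-regularly-varying-heavy-tailed}, and the union bound with the polynomial tail term dictating the sample-size threshold. Your Abel summation in $j$ with partial sums $\tilde S_{r,j}=\sum_{k\ge j}\tilde W_{r,k}$ is exactly the paper's factorization $\langle\beps,\bK\Delta\rangle=\langle\beps,\bK V\cdot D\Delta\rangle$ with $V=D^{-1}$ lower-triangular, which is what produces the $(d-j+1)$ weight profile in both arguments.
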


\begin{remark}[Block-sparsity]
We consider the vector $\bm{\theta} \in \mathbb{R}^{Td}$ to have block sparsity. Let Assumptions~\ref{Ass:1}-\ref{Ass:KB2} hold, $\{\varepsilon_{t,T}\}_{t=1}^{T}$ follows the regularly varying heavy-tailed with index $\eta_{2}>0$ and bounded slowly varying function $L(\cdot)$, assume the sample size satisfies 
$$ T > \big( \frac{d^{c(2+1/\eta_{1}-2d_{1})}}{(c\log d)^{(d_{1}-1/\eta_{1})(\eta_{2}-1)+1}}\big)^{\frac{1}{(\eta_{2}+3)d_{1}-(\eta_{2}+1)/\eta_{1}-3}},$$ 
and the bandwidth $h=\bigO(T^{-\xi})$ with  $0<\xi<\vartheta$.
For Lasso penalization and total variation penalization,
\begin{equation*}
 \Omega_\lambda(\bm{\theta})=\bigO(\lambda |\mathcal{J}(\bm{\theta})|
\max_{r=1,\ldots,T}|J_{r}|\max_{r=1,\ldots,T}|\theta_{r,max},\theta_{r,max}^{2}|),
\end{equation*}
with $\theta_{r,max}=\max_{1\leq j \leq d} |\theta_{r,j}|$, then with the probability larger than $1 - d^{1 - c}$, we have
\begin{equation*}
R(\hat m, m^\star) \leq \inf_{\bm{\theta} \in \mathbb{R}^{Td}} \big\{R(m_{\theta}, m^\star)+ \bigO\big(\lambda |\mathcal{J}(\bm{\theta})|
\max_{r=1,\ldots,T}|J_{r}|\max_{r=1,\ldots,T}|\theta_{r,max},\theta_{r,max}^{2}|\big) \big\},
\end{equation*}
where $J_{r}$ defined in Equation~(\ref{equ_Lasso_ridge_spares}) for Lasso penalization and Equation~(\ref{equ_TV_spares}) for total variation penalization.

\end{remark}

We present a table summarizing the key aspects of high-dimensional estimation of the four theorems under the sub-Weibull or regularly varying distributions. The results show two types of penalties: Lasso and weighted total variation (TV), each addressing heavy-tailed data with an appropriate sample size and conditions for the penalty parameters. 
Each theorem involves specific constraints on parameters $\eta$, $\eta_1$, $\eta_2$, $d_1$, and $\vartheta$, which are necessary for the oracle inequality to hold.

\begin{table}[h!]
\renewcommand\arraystretch{2.8}
\centering
\resizebox{1.0\linewidth}{!}{
\begin{tabular}{|l|c|c|c|c|}
\hline
\multirow{2}{*}{\textbf{Properties}} & \multicolumn{2}{c|}{\textbf{Sub-Weibull Noise}} & \multicolumn{2}{c|}{\textbf{Regularly Varying Noise}} \\
\cline{2-5}
& \textbf{Lasso} & \textbf{Weighted TV} & \textbf{Lasso} & \textbf{Weighted TV} \\
\hline
Bandwidth $h=\bigO(T^{-\xi})$ & $0<\xi<1/2$ &  $0<\xi<1/2$ &  $0<\xi<\vartheta< \frac{(\eta_{1}-1)(\eta_{2}-1)-2\eta_{1}}{1+(2\eta_{1}-1)\eta_{2}}$  &  $0<\xi<\vartheta< \frac{(\eta_{1}-1)(\eta_{2}-1)-2\eta_{1}}{1+(2\eta_{1}-1)\eta_{2}}$ \\
\hline
Sample Size& 
$T \geq c(\log d)^{\frac{2}{\eta} - 1}$ & 
$T \geq c(\log d)^{\frac{2}{\eta} - 1}$ & 
$T > \bigg( \frac{d^{c(2 + 1/\eta_1 - 2d_1)}}{(c \log d)^{(d_1 - 1/\eta_1)(\eta_2 - 1) + 1}} \bigg)^{\frac{1}{(\eta_2 + 3) d_1 - (\eta_2 + 1)/\eta_1 - 3}}$ & 
$T > \bigg( \frac{d^{c(2 + 1/\eta_1 - 2d_1)}}{(c \log d)^{(d_1 - 1/\eta_1)(\eta_2 - 1) + 1}} \bigg)^{\frac{1}{(\eta_2 + 3) d_1 - (\eta_2 + 1)/\eta_1 - 3}}$ \\
\hline
Penalty Parameter & 
$\lambda = \sqrt{\frac{c \log d + \log T}{T^{1 - 2\xi}}}$ & 
$\lambda_j = (d - j + 1) \sqrt{\frac{c \log d + \log T}{T^{1 - 2\xi}}}$ & 
$\lambda = \frac{2C_{K,L}(2T + 1)}{T^{1 + \vartheta - \xi}}$ & 
$\lambda_j = (d - j + 1) \frac{2C_{K,L}(2T + 1)}{T^{1 + \vartheta - \xi}}$ \\
\hline
Probability Bound & 
$1 - d^{1 - c}$ & 
$1 - d^{1 - c}$ & 
$1 - d^{1 - c}$ & 
$1 - d^{1 - c}$ \\
\hline
\end{tabular}
}
\caption{Summary of theorems on Lasso and weighted TV Penalization under heavy-tailed noise}
\label{tab:theorem_summary}
\end{table}

\subsection{Fast rates}

The oracle inequality with a fast rate provides tighter bounds, leading to more precise performance measures, but these are only valid when strong assumptions are met. We impose the restricted eigenvalue condition on the matrix $\bK$ to establish fast oracle inequalities. In the case of high-dimensional estimators, restricted eigenvalue conditions characterize the sample complexity of precise recovery \citep{Bickel2009}. Restricted eigenvalue conditions are needed to guarantee nice statistical properties. Here, we present a condition equivalent to the restricted eigenvalue condition proposed by \cite{Bickel2009,hsu2016loss} and \cite{alaya2019binarsity}.

\begin{assumption}[Restricted eigenvalue condition]
\label{Assumption_RE_condition}
Let $J(\bm{\theta})$ is the sparsity of a vector of coefficients $\bm{\theta}$ with $0 \leq |J(\bm{\theta})| \leq J^{\star}$, the following condition holds:
\begin{equation*}
\kappa(\bK, J(\bm{\theta})) \triangleq \min_{J_0 \subseteq \{1, \ldots, Td\},\atop |J_0| \leq |J(\bm{\theta})|} \, \min_{\Delta \in S_{J_{0}}} \frac{\|\bK \Delta\|_2}{\sqrt{T}\|\Delta_{J_0}\|_2} > 0,
\end{equation*}
where $J(\bm{\theta})$ is the sparsity of a vector of coefficients $\bm{\theta}$.
\begin{itemize}
\item[(i)] For Lasso penalization, $S_{J_{0}} = \Big\{ \Delta \in \mathbb{R}^{Td} \setminus \{ \mathbf{0}\} \mid \|\Delta_{J_{0}^{\complement}}\|_1 \leq 3 \|\Delta_{J_{0}}\|_1 \Big\}$.

\item[(ii)] For weighted total variation penalization, 
\begin{equation*}
S_{J_{0}} = \Big\{ \Delta \in \mathbb{R}^{Td} \setminus \{ \mathbf{0}\} \mid 
\sum_{r=1}^{T} \|( \Delta_{r\bullet} )_{J_{0}^{\complement}}\|_{\TV,\lambda} \leq 3 \sum_{r=1}^{T} \|( \Delta_{r\bullet})_{J_{0}}\|_{\TV,\lambda} \Big\}.
\end{equation*}
\end{itemize}
\end{assumption}

\subsubsection{Sub-Weibull distribution}

\begin{theorem}[Lasso penalization]
\label{theorem:fast_lasso_pen_SubWeibull} 
Let Assumptions~\ref{Ass:1}-\ref{Ass:KB2} and Assumption~\ref{Assumption_RE_condition}-(i) hold, $\{\varepsilon_{t,T}\}_{t=1}^{T}$ follows the sub-Weibull distribution with constant $(\eta_{2},C_{\varepsilon})$ and $\Omega(\bm{\theta})$ is the Lasso penalization. Assume the sample size satisfies $ T \geq c(\log d)^{\frac{2}{\eta}-1}$ with $1/\eta = 1/\eta_{1} + 1/\eta_{2}$, $1/2 \leq \eta<1$ and $c>1$, set $\lambda = \sqrt{\frac{c\log d +\log T}{T^{1-2\xi}}}$, and the bandwidth $h=\bigO(T^{-\xi})$ with $0<\xi<1/2$. Assume $\kappa(\bm{K},J(\bm{\theta}))>0$, the vector $\hat{\bm{\theta}}$ satisfies
\begin{equation*}
\inf_{\bm{\theta} \in \mathbb{R}^{Td}} \norm{\hat{\bm{\theta}}-\bm{\theta}}_{2} \leq \inf_{\bm{\theta} \in \mathbb{R}^{Td}} \frac{3 (\sqrt{3}+1) \lambda \sqrt{J^{\star}} }{2 \kappa^{2}(\bm{K},J(\bm{\theta}))},
\end{equation*}
\begin{equation*}
\inf_{\bm{\theta} \in \mathbb{R}^{Td}} \frac{1}{T} \| K(\hat{\bm{\theta}}-\bm{\theta}) \|_{2}^{2} \leq \inf_{\bm{\theta} \in \mathbb{R}^{Td}} \frac{9 \lambda^{2} J^{\star}} {4 \kappa^{2}(\bm{K}, J(\bm{\theta}))}
\end{equation*}
and
\begin{equation*}
R(\hat m, m^\star) \leq \inf_{\bm{\theta} \in\R^{T d}} \Big\{R(m_{\theta}, m^\star) + \frac{9 \lambda^{2} J^{\star}}{16 \kappa^{2}(\bm{K}, J(\bm{\theta}))} \Big\}. 
\end{equation*}
with a probability larger than $1 - d^{1 - c}$.
\end{theorem}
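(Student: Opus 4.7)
The plan is to follow the standard two-step Lasso analysis, adapted to the kernel-weighted locally stationary setting: first establish a high-probability event on which $\hat{\bm\theta}-\bm\theta$ lies in a restricted cone, then invoke the restricted eigenvalue condition to convert a prediction-error bound into the three stated estimation rates.

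\textbf{Step 1: Basic inequality and noise event.} I start from the optimality of $\hat{\bm\theta}$ in \eqref{estimator}, which gives $R_T(\hat{\bm\theta})+\lambda\|\hat{\bm\theta}\|_1\le R_T(\bm\theta)+\lambda\|\bm\theta\|_1$ for every $\bm\theta\in\R^{Td}$. Writing $\Delta=\hat{\bm\theta}-\bm\theta$, expanding the squared loss with $\bY=\bM^\star+\beps$, and rearranging yields
\begin{equation*}
\tfrac{1}{T}\|\bK\Delta\|_2^2\;\le\;\tfrac{2}{T}\langle\beps,\bK\Delta\rangle+\tfrac{2}{T}\langle\bM^\star-\bK\bm\theta,\bK\Delta\rangle+\lambda(\|\bm\theta\|_1-\|\hat{\bm\theta}\|_1).
\end{equation*}
The crucial step is to control $\frac{2}{T}\langle\beps,\bK\Delta\rangle\le 2\|\bK^\top\beps/T\|_\infty\|\Delta\|_1$. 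Observe that the $(b,j)$-entry of $\bK^\top\beps/T$ is exactly $T^{-1}\sum_{t=1}^T W_{t,b,T}^j$ as defined in~\eqref{equation-W_t,T}. So I apply the sub-Weibull concentration inequality of Proposition~\ref{proposition.Locally-stationary-sub-Weibull-distribution} pointwise to each of the $Td$ coordinates and take a union bound. With $\gamma=\lambda/2=\tfrac12\sqrt{(c\log d+\log T)/T^{1-2\xi}}$, each of the five terms in Proposition~\ref{proposition.Locally-stationary-sub-Weibull-distribution} has to be checked to be at most a multiple of $d^{-c}/(Td)$; the sample-size condition $T\ge c(\log d)^{2/\eta-1}$ and the bandwidth calibration $h=\bigO(T^{-\xi})$ with $0<\xi<1/2$ are precisely what make the polynomial and Weibull-type tails in that proposition dominated. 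After the union bound, the event $\mathcal{E}=\{\|\bK^\top\beps/T\|_\infty\le\lambda/2\}$ holds with probability at least $1-d^{1-c}$.

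\textbf{Step 2: Cone membership and RE.} Working on $\mathcal{E}$ and choosing $\bm\theta$ such that $J_0=J(\bm\theta)$ with $|J_0|\le J^\star$, the basic inequality becomes
\begin{equation*}
\tfrac{1}{T}\|\bK\Delta\|_2^2+\tfrac{\lambda}{2}\|\Delta_{J_0^\complement}\|_1\;\le\;\tfrac{3\lambda}{2}\|\Delta_{J_0}\|_1+\tfrac{2}{T}\langle\bM^\star-\bK\bm\theta,\bK\Delta\rangle,
\end{equation*}
using $\|\hat{\bm\theta}\|_1\ge\|\bm\theta+\Delta_{J_0^\complement}\|_1-\|\Delta_{J_0}\|_1=\|\bm\theta\|_1+\|\Delta_{J_0^\complement}\|_1-\|\Delta_{J_0}\|_1$. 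Discarding the positive left-hand term yields the cone condition $\|\Delta_{J_0^\complement}\|_1\le 3\|\Delta_{J_0}\|_1$ (modulo the approximation residual, which is handled by Cauchy--Schwarz and a small AM--GM absorption into $\frac1T\|\bK\Delta\|_2^2$), so $\Delta\in S_{J_0}$ and Assumption~\ref{Assumption_RE_condition}-(i) applies: $\|\Delta_{J_0}\|_2\le\|\bK\Delta\|_2/(\sqrt{T}\,\kappa(\bK,J(\bm\theta)))$. Combined with $\|\Delta_{J_0}\|_1\le\sqrt{J^\star}\|\Delta_{J_0}\|_2$ and $\|\Delta\|_1\le 4\|\Delta_{J_0}\|_1$, a standard quadratic-in-$\|\bK\Delta\|_2/\sqrt{T}$ inequality produces the prediction bound $\frac{1}{T}\|\bK\Delta\|_2^2\le 9\lambda^2 J^\star/(4\kappa^2)$, and hence the $\ell_2$ bound $\|\Delta\|_2\le 3(\sqrt 3+1)\lambda\sqrt{J^\star}/(2\kappa^2)$ after tracking the absorbed constants.

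\textbf{Step 3: From empirical prediction to generalization risk.} Finally I pass from the empirical prediction error $\frac{1}{T}\|\bK(\hat{\bm\theta}-\bm\theta)\|_2^2$ to the population risk $R(\hat m,m^\star)$ by the standard triangle-inequality decomposition $R(\hat m,m^\star)\le R(m_{\bm\theta},m^\star)+\frac{1}{T}\|\bK(\hat{\bm\theta}-\bm\theta)\|_2^2$ (up to the factor $1/4$ coming from the quadratic identity used in deriving the basic inequality, which is what accounts for the $9/16$ vs.\ $9/4$ in the statement). Taking the infimum over $\bm\theta\in\R^{Td}$ yields the oracle inequality on $\mathcal{E}$. \textbf{The main obstacle} is Step~1: the concentration bound in Proposition~\ref{proposition.Locally-stationary-sub-Weibull-distribution} has five separate terms (one dominant Weibull term, two intermediate mixed sub-Weibull/sub-Gaussian terms, and two bandwidth-dependent Lipschitz correction terms), and verifying that the prescribed $\lambda$, combined with the sample-size lower bound and the restriction $\xi<1/2$, simultaneously drives each of them below $d^{-c}/(Td)$ is where the constants $C_1,C_2$ and the exponent condition $1/\eta=1/\eta_1+1/\eta_2<2$ really get used.
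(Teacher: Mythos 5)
Your Steps~1 and~2 reproduce the paper's first step essentially verbatim: the basic inequality, the noise event controlled coordinatewise by Proposition~\ref{proposition.Locally-stationary-sub-Weibull-distribution} with a union bound over the $Td$ entries of $\bK^\top\beps/T$, the cone condition $\|\Delta_{J_0^\complement}\|_1\le 3\|\Delta_{J_0}\|_1$, and the restricted eigenvalue step yielding the $\ell_2$ and prediction bounds with constants $3(\sqrt3+1)/2$ and $9/4$. That part is sound and matches the paper's route.

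The genuine gap is Step~3. The inequality $R(\hat m,m^\star)\le R(m_{\bm\theta},m^\star)+\tfrac{1}{T}\|\bK(\hat{\bm\theta}-\bm\theta)\|_2^2$ is not a triangle inequality for squared norms: expanding $\|\bK\hat{\bm\theta}-\bM^\star\|_2^2=\|\bK\bm\theta-\bM^\star\|_2^2+\|\bK\Delta\|_2^2+2\langle\bK\bm\theta-\bM^\star,\bK\Delta\rangle$ leaves a cross term that your decomposition silently drops, and the elementary remedy $\|a+b\|^2\le 2\|a\|^2+2\|b\|^2$ would put a leading constant $2$ in front of $R(m_{\bm\theta},m^\star)$, destroying the sharp (leading-constant-one) oracle inequality. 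The parenthetical appeal to ``the factor $1/4$'' does not supply the missing argument. The paper obtains the $9\lambda^2J^\star/(16\kappa^2)$ remainder by an entirely separate second step: it invokes the KKT stationarity condition $\langle\tfrac{2}{T}\bK^\top(\bK\hat{\bm\theta}-\bY)+\lambda\hat h,\hat{\bm\theta}-\bm\theta\rangle=0$ (Lemma~\ref{lemma_RE_subgradients_lasso}) together with monotonicity of the subdifferential, which produces the exact identity $\tfrac{2}{T}\langle\bK\hat{\bm\theta}-\bM^\star,\bK\Delta\rangle=\tfrac{1}{T}\|\bK\hat{\bm\theta}-\bM^\star\|_2^2+\tfrac{1}{T}\|\bK\Delta\|_2^2-\tfrac{1}{T}\|\bK\bm\theta-\bM^\star\|_2^2$ and hence an extra $\tfrac{1}{T}\|\bK\Delta\|_2^2$ on the favourable side of the inequality; the cross terms are then bounded by $\tfrac{3\lambda\sqrt{J^\star}}{2}\cdot\|\bK\Delta\|_2/(\sqrt{T}\kappa)$ and absorbed via $2uv\le u^2+v^2$ into that extra quadratic term, which is precisely what turns $9/4$ into $9/16$ while keeping the leading constant equal to one. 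Without this (or an equivalent careful treatment of the cross term), your third display is not established. A secondary, smaller issue: in Step~2 you propose to absorb the misspecification term $\tfrac{2}{T}\langle\bM^\star-\bK\bm\theta,\bK\Delta\rangle$ by AM--GM into $\tfrac1T\|\bK\Delta\|_2^2$, but any such absorption perturbs the cone constant away from $3$ and hence the stated numerical constants; the paper avoids this by effectively treating the first two bounds in the well-specified case.
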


\begin{theorem}[Total variation penalization]
\label{theorem:fast_TV_pen_SubWeibull}
Let Assumptions~\ref{Ass:1}-\ref{Ass:KB2} and Assumption~\ref{Assumption_RE_condition}-(ii) hold, $\kappa(\bm{K},J(\bm{\theta}))>0$, $\{\varepsilon_{t,T}\}_{t=1}^{T}$ follows the sub-Weibull distribution with constant $(\eta_{2},C_{\varepsilon})$ and $\Omega(\bm{\theta})$ is the weighted total-variation penalization. Assume the sample size satisfies $ T \geq c(\log d)^{\frac{2}{\eta}-1}$ with $1/\eta = 1/\eta_{1} + 1/\eta_{2}$, $1/2 \leq \eta<1$ and $c>1$, set $\lambda_{j} = (d-j+1)\sqrt{\frac{c\log d +\log T}{T^{1-2\xi}}}$, and the bandwidth $h=\bigO(T^{-\xi})$ with $0<\xi<1/2$,
\begin{equation*}
\inf_{\bm{\theta} \in \mathbb{R}^{Td}} \norm{\hat{\bm{\theta}}-\bm{\theta}}_{2} \leq \inf_{\bm{\theta} \in \mathbb{R}^{Td}} \frac{(\sqrt{3}+1)  \sqrt{288 J^{\star} }\max\limits_{r=1,\dots,T}\|(\lambda_{j})_{J_{r}(\bm{\theta})}\|_{\infty}}{ \kappa^{2}(\bm{K}, J(\bm{\theta})) },
\end{equation*}
\begin{equation*}
\inf_{\bm{\theta} \in \mathbb{R}^{Td}} \frac{1}{T} \| K(\hat{\bm{\theta}}-\bm{\theta}) \|_{2}^{2}
\leq \inf_{\bm{\theta} \in \mathbb{R}^{Td}} \frac{ \sqrt{288 J^{\star} }\max\limits_{r=1,\dots,T}\|(\lambda_{j})_{J_{r}(\bm{\theta})}\|_{\infty}}{ \sqrt{T} \kappa(\bm{K}, J(\bm{\theta})) }  
\end{equation*}
and
\begin{equation*}
R(\hat m, m^\star) \leq \inf_{\bm{\theta} \in\R^{T d}} \big\{R(m_{\theta}, m^\star)  + \frac{288 J^{\star}}{ \kappa^{2}(\bm{K}, J(\bm{\theta}))}\max_{r=1,\dots,T}\|(\lambda_{j})_{J_{r}(\bm{\theta})}\|_{\infty}^{2} \big\} 
\end{equation*}
with a probability larger than $1 - d^{1 - c}$, $c>1$, the constant $C_{1}$, $C_{2}$ depend only on $c$. 
\end{theorem}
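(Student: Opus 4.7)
My plan is to adapt the fast-rate Lasso argument of Theorem~\ref{theorem:fast_lasso_pen_SubWeibull} to the weighted total-variation geometry. Fix an arbitrary $\bm{\theta}\in\R^{Td}$, set $\Delta=\hat{\bm{\theta}}-\bm{\theta}$, and start from the optimality of $\hat{\bm{\theta}}$ in problem~(\ref{estimator}). Substituting $\bY=\bM^\star+\beps$ into $R_T(\hat{\bm{\theta}})+\|\hat{\bm{\theta}}\|_{\TV,\lambda}\leq R_T(\bm{\theta})+\|\bm{\theta}\|_{\TV,\lambda}$ and expanding the quadratics produces the basic inequality
\begin{equation*}
\frac{1}{T}\|\bK\hat{\bm{\theta}}-\bM^\star\|_2^2 + \|\hat{\bm{\theta}}\|_{\TV,\lambda} \leq \frac{1}{T}\|\bK\bm{\theta}-\bM^\star\|_2^2 + \frac{2}{T}\langle\beps,\bK\Delta\rangle + \|\bm{\theta}\|_{\TV,\lambda}.
\end{equation*}
The empirical-process term rewrites as $\frac{1}{T}\langle\beps,\bK\Delta\rangle = \sum_{r,j}\Delta_{r,j}\cdot\frac{1}{T}\sum_{t=1}^T W^{j}_{t,r,T}$ with $W^{j}_{t,r,T}$ as in~(\ref{equation-W_t,T}). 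Applying Proposition~\ref{proposition.Locally-stationary-sub-Weibull-distribution} at $\gamma=\omega:=\sqrt{(c\log d+\log T)/T^{1-2\xi}}$ and a union bound over the $Td$ pairs $(r,j)$ shows, on an event of probability $\geq 1-d^{1-c}$, that $\max_{r,j}|\frac{1}{T}\sum_t W^{j}_{t,r,T}|\leq\omega$; the sample-size condition $T\geq c(\log d)^{2/\eta-1}$ is exactly what is needed to push each of the five exponential terms in the proposition below $d^{-c}/(Td)$. A row-wise Abel summation by parts then converts this $\ell_\infty$ control into the weighted-TV bound $\frac{1}{T}|\langle\beps,\bK\Delta\rangle|\leq\|\Delta\|_{\TV,\lambda}$, which uses the precise weighting $\lambda_j=(d-j+1)\omega$ and mirrors the reduction driving the slow-rate Theorem~\ref{theorem:least_sq_oracle_ineq_TV_pen_subweibull}.

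Next, decompose the TV norm along the sparsity pattern $J=J(\bm{\theta})$ of~(\ref{equ_TV_spares}). The reverse triangle inequality yields $\|\hat{\bm{\theta}}\|_{\TV,\lambda}\geq\|\bm{\theta}\|_{\TV,\lambda}+\|\Delta_{J^\complement}\|_{\TV,\lambda}-\|\Delta_J\|_{\TV,\lambda}$. Plugging this together with the noise bound back into the basic inequality gives, on the favorable event,
\begin{equation*}
\frac{1}{T}\|\bK\hat{\bm{\theta}}-\bM^\star\|_2^2 + \|\Delta_{J^\complement}\|_{\TV,\lambda} \leq \frac{1}{T}\|\bK\bm{\theta}-\bM^\star\|_2^2 + 3\|\Delta_J\|_{\TV,\lambda}.
\end{equation*}
This simultaneously delivers the cone condition $\|\Delta_{J^\complement}\|_{\TV,\lambda}\leq 3\|\Delta_J\|_{\TV,\lambda}$ (activating Assumption~\ref{Assumption_RE_condition}(ii)) and an upper estimate on the in-sample prediction error by $3\|\Delta_J\|_{\TV,\lambda}$ modulo the approximation error. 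Cauchy-Schwarz on the restricted support combined with the telescoping bound $(\Delta_{r,j}-\Delta_{r,j-1})^2\leq 2\Delta_{r,j}^2+2\Delta_{r,j-1}^2$ and a final Cauchy-Schwarz across rows yields
\begin{equation*}
\|\Delta_J\|_{\TV,\lambda} \leq 2\sqrt{|J|}\,\max_{r}\|(\lambda_j)_{J_r(\bm{\theta})}\|_\infty\,\|\Delta_{\widetilde J}\|_2,
\end{equation*}
where $\widetilde J$ is the index set touched by the differences in $J$. Invoking Assumption~\ref{Assumption_RE_condition}(ii) with $J_0\supseteq\widetilde J$ then supplies $\sqrt{T}\,\kappa(\bK,J(\bm{\theta}))\|\Delta_{\widetilde J}\|_2 \leq \|\bK\Delta\|_2$.

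Chaining these estimates turns $\frac{1}{T}\|\bK\hat{\bm{\theta}}-\bM^\star\|_2^2 \leq \frac{1}{T}\|\bK\bm{\theta}-\bM^\star\|_2^2 + 3\|\Delta_J\|_{\TV,\lambda}$ into a self-bounding quadratic inequality in $\|\bK\Delta\|_2/\sqrt{T}$; solving it, absorbing the approximation cross-term via a Young-type inequality of the form $2ab\leq\tfrac{1}{2}a^2+2b^2$, and then taking $\inf_{\bm{\theta}\in\R^{Td}}$ and expectation (to pass from the in-sample prediction error to $R(\hat m,m^\star)$) yields the three stated inequalities, with the constant $288$ emerging from the cascade of factors $3$ (cone), $2$ (telescoping), Cauchy-Schwarz, and the $(\sqrt{3}+1)$-type absorption exactly as in the Lasso version. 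The \emph{main obstacle} is twofold: first, the Abel summation step creates a boundary contribution at $j=1$ that is not captured by the TV norm as defined for $j\geq 2$, requiring the implicit convention $\lambda_1:=d\omega$ (consistent with $\lambda_j=(d-j+1)\omega$) or the dummy setting $\theta_{r,0}=0$; second, converting the TV-cone condition into an $\ell_2$ control on $\Delta_{\widetilde J}$ compatible with the RE condition requires a Poincaré-type inequality on the differences $\Delta_{r,j}-\Delta_{r,j-1}$ and the tight tracking of the absolute constants across this conversion and the Young absorption of the approximation term is what ultimately produces the specific $\sqrt{288}=12\sqrt{2}$ rather than a larger value.
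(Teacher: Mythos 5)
Your overall architecture tracks the paper's: a basic inequality from optimality, a union-bound concentration event built from Proposition~\ref{proposition.Locally-stationary-sub-Weibull-distribution} over the $Td$ pairs $(r,j)$, a summation-by-parts device to express the noise term against the differences $\theta_{r,j}-\theta_{r,j-1}$ (your Abel summation is exactly the paper's change of basis $V=D^{-1}$ and its event $\mathscr{U}^{\lambda_j}_{r,j}$ involving $\varepsilon^\top(K_{r\bullet}V_r)_j$, which is where the weights $d-j+1$ come from), a cone condition, and an appeal to Assumption~\ref{Assumption_RE_condition}-(ii). However, there is a genuine gap at the single most demanding step: the conversion of the weighted-TV norm of $\Delta_J$ into an $\ell_2$ quantity on which the restricted eigenvalue $\kappa(\bK,J(\bm{\theta}))$ can act. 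In the paper this is not done by a bare Cauchy--Schwarz plus telescoping; it is handled by the dedicated compatibility results for the cumulative-sum matrix $V$ (Lemmas~\ref{lemma-compatibility-TV} and~\ref{lemma-compatibility-1_lambda}, imported from \cite{alaya2019binarsity}), which bound $\|V\Delta'\|_2$ from below by the weighted $\ell_1$ margin and introduce the factor $\kappa_{V,\gamma}(J)$ with $\kappa_{V,\gamma}(J)^{-2}=32\sum_{r,j}|\gamma_{r,j}-\gamma_{r,j-1}|^2+2|J_r|\|\gamma_{r\bullet}\|_\infty^2\Lambda_{\min,J_r}^{-1}$; evaluating this at $\gamma_{r,j}=\tfrac32\lambda_j$ on the support is precisely what yields $288\,J^\star\max_r\|(\lambda_j)_{J_r(\bm{\theta})}\|_\infty^2$. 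You explicitly flag this conversion as the ``main obstacle'' requiring ``a Poincar\'e-type inequality on the differences,'' but you do not supply it, and your claim that the cascade of constants ``exactly'' produces $\sqrt{288}$ is asserted rather than derived. Without this compatibility lemma (or an equivalent), the chain from $\|\Delta_J\|_{\TV,\lambda}$ to $\|\bK\Delta\|_2/(\sqrt{T}\kappa)$ is not closed and the stated constants cannot be recovered.

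A second, smaller defect: your noise-event threshold is inconsistent with the cone condition you then write down. If $\tfrac1T|\langle\beps,\bK\Delta\rangle|\le\|\Delta\|_{\TV,\lambda}$, the basic inequality gives $\tfrac1T\|\bK\hat{\bm{\theta}}-\bM^\star\|_2^2-\|\Delta_{J^\complement}\|_{\TV,\lambda}\le\tfrac1T\|\bK\bm{\theta}-\bM^\star\|_2^2+3\|\Delta_J\|_{\TV,\lambda}$, with the wrong sign on the $J^\complement$ term, so no cone condition follows. The paper takes the event at level $\lambda_j/4$, i.e.\ $\tfrac2T|\langle\beps,\bK\Delta\rangle|\le\tfrac12\|\Delta\|_{\TV,\lambda}$, which is what actually delivers $\|\Delta_{J^\complement}\|_{\TV,\lambda}\le3\|\Delta_J\|_{\TV,\lambda}$ and activates Assumption~\ref{Assumption_RE_condition}-(ii). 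This is fixable by rescaling the threshold, but as written the displayed cone inequality does not follow from the stated bound. Your observation about the $j=1$ boundary term in the summation by parts is correct and is in fact a point the paper itself glosses over.
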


\subsubsection{Regularly varying heavy-tailed}
\begin{theorem}[Lasso penalization]
\label{theorem:fast_lasso_pen_ragularvarying}
Let Assumptions~\ref{Ass:1}-\ref{Ass:KB2} and Assumption~\ref{Assumption_RE_condition}-(i) hold, $\kappa(\bm{K},J(\bm{\theta}))>0$, $\{\varepsilon_{t,T}\}_{t=1}^{T}$ follows the regularly varying heavy-tailed with index $\eta_{2}>\frac{3\eta_{1}-1}{\eta_{1}-1}$ and bounded slowly varying function $L(\cdot)$ and $\Omega(\bm{\theta})$ is the Lasso penalization. Let $0<\vartheta< \frac{(\eta_{1}-1)(\eta_{2}-1)-2\eta_{1}}{1+(2\eta_{1}-1)\eta_{2}}$, assume the sample size satisfies 
$$ T > \big( \frac{d^{c(2+1/\eta_{1}-2d_{1})}}{(c\log d)^{(d_{1}-1/\eta_{1})(\eta_{2}-1)+1}}\big)^{\frac{1}{(\eta_{2}+3)d_{1}-(\eta_{2}+1)/\eta_{1}-3}},$$
set $\lambda = \frac{2C_{K,L}(2T+1)}{T^{1+\vartheta-\xi}}$
with the bandwidth $h=\bigO(T^{-\xi})$, $0<\xi<\vartheta$,
\begin{equation*}
\inf_{\bm{\theta} \in \mathbb{R}^{Td}} \norm{\hat{\bm{\theta}}-\bm{\theta}}_{2} \leq \inf_{\bm{\theta} \in \mathbb{R}^{Td}} \frac{3 (\sqrt{3}+1) \lambda \sqrt{J^{\star}} }{2 \kappa^{2}(K,J(\bm{\theta}))},
\end{equation*}
\begin{equation*}
\inf_{\bm{\theta} \in \mathbb{R}^{Td}} \frac{1}{T} \| K(\hat{\bm{\theta}}-\bm{\theta}) \|_{2}^{2} \leq \inf_{\bm{\theta} \in \mathbb{R}^{Td}} \frac{9 \lambda^{2} J^{\star}} {4 \kappa^{2}(K, J(\bm{\theta}))}
\end{equation*}
and
\begin{equation*}
R(\hat m, m^\star) \leq \inf_{\bm{\theta} \in\R^{T d}} \big\{R(m_{\theta}, m^\star) + \frac{9 \lambda^{2} J^{\star}}{16 \kappa^{2}(K, J(\bm{\theta}))} \big\}. 
\end{equation*}
with a probability larger than $1 - d^{1 - c}$, where $c>1$, $\varphi, \eta_{1}>1$, $\frac{1+\vartheta}{(1-\vartheta)(\eta_{2}-1)}+\frac{1}{\eta_{1}} < d_{1}< \frac{1-2\vartheta}{2(1-\vartheta)}+\frac{1}{2\eta_{1}}$, the constant $C_{K,L}$ depend on kernel bound and Lipschiz constant.
\end{theorem}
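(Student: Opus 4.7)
The plan is to follow the classical Lasso analysis, starting from the first-order (subgradient) optimality of $\hat{\bm{\theta}}$ in problem~(\ref{estimator}) with square loss and $\Omega_\lambda(\bm{\theta}) = \lambda\|\bm{\theta}\|_1$. Writing the basic inequality $R_T(\hat{\bm{\theta}}) + \lambda\|\hat{\bm{\theta}}\|_1 \leq R_T(\bm{\theta}) + \lambda\|\bm{\theta}\|_1$ and using $\bY = \bK\bm{\theta}^\star + \beps$, I would rearrange to obtain, for $\Delta = \hat{\bm{\theta}} - \bm{\theta}$,
\[
\tfrac{1}{T}\|\bK\Delta\|_2^2 \;\leq\; \tfrac{2}{T}\bigl\langle \beps,\,\bK\Delta\bigr\rangle + \lambda\bigl(\|\bm{\theta}\|_1 - \|\hat{\bm{\theta}}\|_1\bigr),
\]
where the cross term $\frac{2}{T}\langle \beps,\bK\Delta\rangle$ is exactly a linear combination of the kernel-weighted noise sums $\frac{1}{T}\sum_t W^{j}_{t,r,T}$ from~(\ref{equation-W_t,T}). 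Thus everything reduces to controlling $\max_{r,j}\bigl|\frac{1}{T}\sum_t W^{j}_{t,r,T}\bigr|$ by $\lambda/2$ uniformly.

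Next I would define the good event $\mathcal{E} = \bigl\{\max_{r,j}|\frac{1}{T}\sum_t W^{j}_{t,r,T}| \leq \lambda/2\bigr\}$ and bound $\mathbb{P}(\mathcal{E}^\complement)$ via a union bound over the $Td$ indices using Proposition~\ref{proposition.Locally-Stationary-regularly-varying-heavy-tailed}, with $\gamma = \lambda/2 = C_{K,L}(2T+1)/T^{1+\vartheta-\xi}$. The requirement $\gamma > 2C_{K,L}(2T+1)/(T^{1+\vartheta}h)$ is satisfied since $h = \bigO(T^{-\xi})$ with $\xi<\vartheta$, which is exactly where the upper bound on $\xi$ comes from. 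The hypothesis on $T$ is then calibrated so that each of the seven terms in the concentration bound, multiplied by $Td$, is no larger than $d^{-c}/7$; the dominant term is the polynomial tail $T^{(d_1-1/\eta_1)(1-\eta_2)}\gamma^{(d_1-1/\eta_1)(1-\eta_2)-1}$, and balancing its contribution against $Td \cdot (c\log d)/T \cdot d^{-c}$ yields exactly the stated sample size condition. Together this gives $\mathbb{P}(\mathcal{E}) \geq 1 - d^{1-c}$, which is essentially the same calibration used in Theorem~\ref{theorem:least_sq_oracle_ineq_lasso_pen_ragularvarying}; the main obstacle of the proof is verifying that all auxiliary terms (exponential and slowly-varying ones) are negligible under the assumed lower bound on $T$ and the constraints on $d_1,\eta_1,\eta_2,\vartheta$.

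Conditional on $\mathcal{E}$, I would then follow the standard geometric argument. By Hölder's inequality, $\frac{2}{T}|\langle\beps,\bK\Delta\rangle| \leq \lambda\|\Delta\|_1$, and splitting $\|\Delta\|_1 = \|\Delta_{J_0}\|_1 + \|\Delta_{J_0^\complement}\|_1$ together with $\|\bm{\theta}\|_1 - \|\hat{\bm{\theta}}\|_1 \leq \|\Delta_{J_0}\|_1 - \|\Delta_{J_0^\complement}\|_1$ yields the cone inclusion $\Delta \in S_{J_0}$ of Assumption~\ref{Assumption_RE_condition}(i), namely $\|\Delta_{J_0^\complement}\|_1 \leq 3\|\Delta_{J_0}\|_1$. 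Restricted eigenvalues then give $\|\Delta_{J_0}\|_2 \leq \|\bK\Delta\|_2/(\sqrt{T}\,\kappa(\bK,J(\bm{\theta})))$, and Cauchy–Schwarz gives $\|\Delta\|_1 \leq 4\|\Delta_{J_0}\|_1 \leq 4\sqrt{J^\star}\,\|\Delta_{J_0}\|_2$. Plugging these back into the basic inequality produces a quadratic inequality in $\|\bK\Delta\|_2/\sqrt{T}$ of the form $x^2 \leq 2\lambda\sqrt{J^\star}\,x/\kappa$, whence $\frac{1}{T}\|\bK\Delta\|_2^2 \leq \frac{9\lambda^2 J^\star}{4\kappa^2(\bK,J(\bm{\theta}))}$. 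Propagating to $\|\Delta\|_2 \leq 2\|\Delta_{J_0}\|_2$ yields the $\ell_2$ estimation bound (the factor $(\sqrt{3}+1)$ appearing exactly as in Theorem~\ref{theorem:fast_lasso_pen_SubWeibull}).

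Finally, to pass from the empirical prediction error $\frac{1}{T}\|\bK\Delta\|_2^2$ to the generalization error $R(\hat m,m^\star)$, I would use the standard identity $R(m_{\bm{\theta}},m^\star) = \mathbb{E}[\frac{1}{T}\|\bK\bm{\theta} - \bM^\star\|_2^2]$ combined with the inequality $\frac{1}{T}\|\bK(\hat{\bm{\theta}} - \bm{\theta})\|_2^2 \leq 2\cdot\frac{1}{T}\|\bK\hat{\bm{\theta}} - \bM^\star\|_2^2 + 2\cdot\frac{1}{T}\|\bK\bm{\theta} - \bM^\star\|_2^2$ (or the mirrored convexity argument used in Theorem~\ref{theorem:fast_lasso_pen_SubWeibull}) to obtain the claimed constant $9\lambda^2 J^\star/(16\kappa^2)$. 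Taking the infimum over $\bm{\theta} \in \mathbb{R}^{Td}$ yields the three oracle inequalities simultaneously on $\mathcal{E}$. The only genuinely new ingredient beyond Theorem~\ref{theorem:fast_lasso_pen_SubWeibull} is thus the use of Proposition~\ref{proposition.Locally-Stationary-regularly-varying-heavy-tailed} in place of Proposition~\ref{proposition.Locally-stationary-sub-Weibull-distribution} for calibrating $\mathcal{E}$; the remainder of the argument is identical.
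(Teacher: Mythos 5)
Your proposal follows essentially the same route as the paper: the concentration event is calibrated exactly as in the slow-rate Theorem~\ref{theorem:least_sq_oracle_ineq_lasso_pen_ragularvarying} via Proposition~\ref{proposition.Locally-Stationary-regularly-varying-heavy-tailed}, and the deterministic part (cone condition, restricted eigenvalue, quadratic inequality, and the subgradient-monotonicity step for the oracle bound) is the argument of Theorem~\ref{theorem:fast_lasso_pen_SubWeibull} verbatim, which is precisely how the paper proceeds. The only slip is cosmetic: your quadratic inequality $x^2 \leq 2\lambda\sqrt{J^\star}\,x/\kappa$ would give $4\lambda^2 J^\star/\kappa^2$ rather than $9\lambda^2 J^\star/(4\kappa^2)$; the correct intermediate bound is $x^2 \leq \tfrac{3}{2}\lambda\sqrt{J^\star}\,x/\kappa$, obtained by keeping the $\tfrac{\lambda}{2}\|\Delta\|_1$ term on the left-hand side before invoking the cone condition.
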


\begin{theorem}[Total variation penalization]
\label{theorem:fast_TV_pen_ragularvarying}
Let Assumptions~\ref{Ass:1}-\ref{Ass:KB2} and Assumptions~\ref{Assumption_RE_condition}-(ii) hold, $\kappa(\bm{K},J(\bm{\theta}))>0$, $\{\varepsilon_{t,T}\}_{t=1}^{T}$ follows the regularly varying heavy-tailed with index $\eta_{2}>\frac{3\eta_{1}-1}{\eta_{1}-1}$ and bounded slowly varying function $L(\cdot)$ and $\Omega(\bm{\theta})$ is the weighted total-variation penalization. Let $0<\vartheta< \frac{(\eta_{1}-1)(\eta_{2}-1)-2\eta_{1}}{1+(2\eta_{1}-1)\eta_{2}}$, assume the sample size satisfies 
$$ T > \big( \frac{d^{c(2+1/\eta_{1}-2d_{1})}}{(c\log d)^{(d_{1}-1/\eta_{1})(\eta_{2}-1)+1}}\big)^{\frac{1}{(\eta_{2}+3)d_{1}-(\eta_{2}+1)/\eta_{1}-3}},$$
and 
$$\lambda_{j} = (d-j+1) \frac{2C_{K,L}(2T+1)}{T^{1+\vartheta-\xi}}$$
with the bandwidth $h=\bigO(T^{-\xi})$ with $0<\xi<\vartheta$,
\begin{equation*}
\inf_{\bm{\theta} \in \mathbb{R}^{Td}} \norm{\hat{\bm{\theta}}-\bm{\theta}}_{2} \leq \inf_{\bm{\theta} \in \mathbb{R}^{Td}} \frac{(\sqrt{3}+1)  \sqrt{288 J^{\star} }\max\limits_{r=1,\dots,T}\|(\lambda_{j})_{J_{r}(\bm{\theta})}\|_{\infty}}{ \kappa^{2}(K, J(\bm{\theta})) },
\end{equation*}
\begin{equation*}
\inf_{\bm{\theta} \in \mathbb{R}^{Td}} \frac{1}{T} \| K(\hat{\bm{\theta}}-\bm{\theta}) \|_{2}^{2}
\leq \inf_{\bm{\theta} \in \mathbb{R}^{Td}} \frac{ \sqrt{288 J^{\star} }\max\limits_{r=1,\dots,T}\|(\lambda_{j})_{J_{r}(\bm{\theta})}\|_{\infty}}{ \sqrt{T} \kappa(\bm{K}, J(\bm{\theta})) }  
\end{equation*}
and
\begin{equation*}
R(\hat m, m^\star) \leq \inf_{\bm{\theta} \in\R^{T d}} \big\{R(m_{\theta}, m^\star)  + \frac{288 J^{\star}}{ \kappa^{2}(K, J(\bm{\theta}))}\max_{r=1,\dots,T}\|(\lambda_{j})_{J_{r}(\bm{\theta})}\|_{\infty}^{2} \big\} 
\end{equation*}
with a probability larger than $1 - d^{1 - c}$, where $c>1$, $\varphi, \eta_{1}>1$, $\frac{1+\vartheta}{(1-\vartheta)(\eta_{2}-1)}+\frac{1}{\eta_{1}} < d_{1}< \frac{1-2\vartheta}{2(1-\vartheta)}+\frac{1}{2\eta_{1}}$, the constant $C_{K,L}$ depend on kernel bound and Lipschiz constant.
\end{theorem}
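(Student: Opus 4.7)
The plan is to follow the same template as Theorem~\ref{theorem:fast_TV_pen_SubWeibull}, replacing the sub-Weibull concentration step by Proposition~\ref{proposition.Locally-Stationary-regularly-varying-heavy-tailed}. First I would start from the optimality of $\hat{\bm{\theta}}$ in problem~(\ref{estimator}): expanding the quadratic loss with $\bY = \bM^\star + \beps$ and using $R_{T}(\hat{\bm{\theta}}) + \Omega_{\lambda}(\hat{\bm{\theta}}) \leq R_{T}(\bm{\theta}) + \Omega_{\lambda}(\bm{\theta})$ yields, for every $\bm{\theta}\in\mathbb{R}^{Td}$ and $\Delta=\hat{\bm{\theta}}-\bm{\theta}$,
\begin{equation*}
\frac{1}{T}\|\bK\Delta\|_{2}^{2} \leq \frac{2}{T}\langle \beps,\bK\Delta\rangle + \frac{2}{T}\langle \bM^\star-\bK\bm{\theta},\bK\Delta\rangle + \|\bm{\theta}\|_{\TV,\lambda} - \|\hat{\bm{\theta}}\|_{\TV,\lambda}.
\end{equation*}
Written row by row, $\frac{2}{T}\langle \beps,\bK\Delta\rangle$ reduces to sums of $\frac{1}{T}\sum_{t=1}^{T}W_{t,r,T}^{j}$ multiplied by coordinates of $\Delta$, so Hölder's inequality with the dual norm of the weighted TV penalty bounds it by $\max_{r,j}\bigl|\frac{1}{T}\sum_{t=1}^{T}W_{t,r,T}^{j}\bigr|$ times a quantity homogeneous in $\|\Delta\|_{\TV,\lambda}$; the chosen weights $\lambda_{j}=(d-j+1)\,\gamma$ with $\gamma=2C_{K,L}(2T+1)/T^{1+\vartheta-\xi}$ are exactly calibrated so that the cross term is dominated by the penalty.

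Next I would build the concentration event. Applying Proposition~\ref{proposition.Locally-Stationary-regularly-varying-heavy-tailed} with the above $\gamma$ and taking a union bound over the $Td$ pairs $(r,j)$, the polynomial tail summands of order $\gamma^{(d_{1}-1/\eta_{1})(1-\eta_{2})-1}$ and $\gamma^{(d_{1}-1/\eta_{1})(1-\eta_{2})-1}/(Th)$ are absorbed by the sample-size condition stated in the theorem, which is precisely what makes each of the $Td$ deviations fail with probability at most $d^{-c}/(Td)$. On this event, injecting the noise bound into the basic inequality and discarding the non-negative $\frac{1}{T}\|\bK\Delta\|_{2}^{2}$ gives the cone inclusion $\sum_{r=1}^{T}\|(\Delta_{r\bullet})_{J_{r}^{\complement}}\|_{\TV,\lambda} \leq 3\sum_{r=1}^{T}\|(\Delta_{r\bullet})_{J_{r}}\|_{\TV,\lambda}$, placing $\Delta\in S_{J_{0}}$ as required by Assumption~\ref{Assumption_RE_condition}-(ii).

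The last step exploits the restricted eigenvalue inequality $\sqrt{T}\,\kappa(\bK,J(\bm{\theta}))\|\Delta_{J_{0}}\|_{2} \leq \|\bK\Delta\|_{2}$. Passing to the cumulative-sum parameterization that linearizes the TV operator and applying Cauchy--Schwarz gives $\sum_{r}\|(\Delta_{r\bullet})_{J_{r}(\bm{\theta})}\|_{\TV,\lambda}\leq \sqrt{J^{\star}}\max_{r}\|(\lambda_{j})_{J_{r}(\bm{\theta})}\|_{\infty}\,\|\Delta_{J_{0}}\|_{2}$. Plugging both ingredients into the basic inequality produces a quadratic in $\|\bK\Delta\|_{2}$ whose resolution delivers the announced empirical-prediction bound; the RE inequality then converts it into the $\ell_{2}$-estimation bound, and taking expectations in the decomposition $\frac{1}{T}\|\bK\hat{\bm{\theta}}-\bM^\star\|_{2}^{2}\leq \frac{1}{T}\|\bK\bm{\theta}-\bM^\star\|_{2}^{2}+\frac{2}{T}\langle\beps,\bK\Delta\rangle+\Omega_{\lambda}(\bm{\theta})-\Omega_{\lambda}(\hat{\bm{\theta}})$ yields the oracle inequality on $R(\hat m, m^\star)$ with the very same constant $288$ as in the sub-Weibull companion.

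The main obstacle will be the union-bound calibration of the heavy-tailed concentration. In contrast with the sub-Weibull regime where exponential tails allow logarithmic sample sizes to suffice, the bound in Proposition~\ref{proposition.Locally-Stationary-regularly-varying-heavy-tailed} carries polynomial-in-$\gamma^{-1}$ terms. Forcing each of the $Td$ events to have probability at most $d^{-c}/(Td)$ is what pins down the exact inequality on $T$ required by the theorem; in parallel one has to check that the admissible ranges of $d_{1}$ and $\vartheta$ (and in particular the restriction $\eta_{2}>(3\eta_{1}-1)/(\eta_{1}-1)$) make the exponent $(\eta_{2}+3)d_{1}-(\eta_{2}+1)/\eta_{1}-3$ strictly positive, and that the localization contribution of order $T^{2}h/(2T+1)$ inside the concentration bound dominates the single-kernel term so that the common choice $\lambda_{j}=(d-j+1)\,\gamma$ remains admissible simultaneously for all $j\in\{2,\ldots,d\}$. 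Once this delicate bookkeeping is completed, the restricted eigenvalue argument proceeds identically to the sub-Weibull case.
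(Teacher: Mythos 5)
Your overall architecture matches the paper's: the paper proves this theorem by recycling the concentration-event analysis of Theorem~\ref{theorem:TV_Regularly_V_H} (itself built on Proposition~\ref{proposition.Locally-Stationary-regularly-varying-heavy-tailed} plus a union bound over the $Td$ pairs $(r,j)$, calibrated by exactly the stated sample-size condition) and then repeating verbatim the deterministic Steps 1--2 of Theorem~\ref{theorem:fast_TV_pen_SubWeibull} — KKT/subgradient optimality, cone condition, restricted eigenvalue, and the $2uv\le u^2+v^2$ completion of the square. Your sketch of the probabilistic part is essentially the paper's, with one imprecision: the relevant event for the weighted TV penalty is not on $\frac1T\sum_t W^j_{t,r,T}$ itself but on the cumulative quantities $\frac1T|\beps^\top(K_{r\bullet}V_r)_j|$ with $V=\mathbf D^{-1}$ (Equation~(\ref{equation_event_RE_TV})), each of which is a sum of $d-j+1$ single-column terms; your remark that $\lambda_j=(d-j+1)\gamma$ is "calibrated" for this shows you see the mechanism, but the proof must pass through the triangle inequality over $q=j,\dots,d$ before Proposition~\ref{proposition.Locally-Stationary-regularly-varying-heavy-tailed} can be applied columnwise.

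The genuine gap is in your final step. You claim that Cauchy--Schwarz in the cumulative-sum parameterization gives
$\sum_{r}\|(\Delta_{r\bullet})_{J_{r}(\bm{\theta})}\|_{\TV,\lambda}\leq \sqrt{J^{\star}}\max_{r}\|(\lambda_{j})_{J_{r}(\bm{\theta})}\|_{\infty}\,\|\Delta_{J_0}\|_{2}$, but this conflates two different notions of support: $J_r(\bm{\theta})$ indexes the \emph{jumps} of $\theta_{r\bullet}$, i.e.\ coordinates of $D_r\Delta_{r\bullet}$, whereas the restricted eigenvalue condition of Assumption~\ref{Assumption_RE_condition}-(ii) controls $\|\Delta_{J_0}\|_2$, i.e.\ coordinates of $\Delta$ itself. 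Bridging the two is precisely the role of the compatibility constant $\kappa_{V,\gamma}(J)$ in Lemmas~\ref{lemma-compatibility-TV} and~\ref{lemma-compatibility-1_lambda}: it involves the block lengths $\Lambda_{\min,J_r}$ of the inverse difference matrix $V_r$, and the explicit computation of $\sum_j|\gamma_{r,j}-\gamma_{r,j-1}|^2$ over the blocks $B_s$ is what yields the constant $288\,J^{\star}\max_{r}\|(\lambda_{j})_{J_{r}(\bm{\theta})}\|_{\infty}^{2}$. A bare Cauchy--Schwarz neither passes through $V$ correctly nor produces that constant, so as written your last step would not close; you need to invoke (or reprove) the compatibility lemmas to get from the cone condition on $D(\hat{\bm{\theta}}-\bm{\theta})$ to the bound $\frac{3}{2}\sum_{r,j}\lambda_j\|D_r(\hat\theta_{r\bullet}-\theta_{r\bullet})_{J_r}\|_1\le \|\bK\Delta\|_2/(\sqrt{T}\,\kappa_{V,\gamma}\,\kappa(\bK,J(\bm{\theta})))$. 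Everything else — the positivity of the exponent $(\eta_2+3)d_1-(\eta_2+1)/\eta_1-3$ under the stated range of $d_1$ and $\vartheta$, and the dominance of the localization term forcing $\gamma=2C_{K,L}(2T+1)/(T^{1+\vartheta}h)$ — is the right bookkeeping and matches the paper.
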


Theorems~\ref{theorem:fast_lasso_pen_SubWeibull}, ~\ref{theorem:fast_TV_pen_SubWeibull}, ~\ref{theorem:fast_lasso_pen_ragularvarying} and ~\ref{theorem:fast_TV_pen_ragularvarying} 
provide error bounds and oracle inequalities for Lasso and weighted total variation penalized estimators under regularly varying heavy-tailed noise. Both results require a sufficiently large sample size, specific penalty parameters, and rely on the restricted eigenvalue condition $\kappa(\bm{K}, J(\bm{\theta}))$ for effective estimation. 
The results demonstrate that, with high probability, each estimator achieves $\ell_2$ norm and prediction error bounds that are near-optimal given the sparsity level $J^{\star}$ and regularization constants. These findings are robust in high-dimensional, heavy-tailed settings, with reliability guaranteed by the probability bound $1 - d^{1 - c}$ as $d$ grows.

\section{Numerical experiments} \label{sec:numerical_experiments}
In this section, we report numerical experiments to study the finite-sample performance of the proposed estimators. We first outline the proximal algorithms used for optimization, followed by simulation results on synthetic datasets with heavy-tailed innovations.

\subsection{Optimization scheme: proximal algorithms} Proximal algorithms~\citep{10.1561/2400000003} for solving optimization problems with composite objective functions leverage a decomposition strategy. The objective function is typically expressed as the sum of a smooth differentiable term $ R_T(\bm{\theta}) $, which measures the empirical risk, and a non-smooth regularization term $ \Omega_\lambda(\bm{\theta}) $. 
The decomposition involves the following two primary steps in each iteration. $(i) $ Gradient Descent Update: update the parameter vector by minimizing the smooth part $ R_T(\bm{\theta}) $ using a gradient step $\bm{\theta}^{(k+1)} \rightarrow \bm{\theta}^{(k)} - \eta \nabla R_T(\bm{\theta}^{(k)})$
where $ \eta > 0 $ is the step size and $ \nabla R_T(\bm{\theta}^{(k)}) $ denotes the gradient. $(ii)$ Proximal Mapping: address the non-smooth part $ \Omega(\bm{\theta}) $ through a proximal operator $\bm{\theta}^{(k+1)} \rightarrow \text{prox}_{\eta  \Omega_\lambda}(\bm{\theta}^{(k+1)}),$
where the proximal operator is defined as
\begin{equation*}
\text{prox}_{\eta  \Omega_\lambda}(v) = \arg\min_x \Big\{ \frac{1}{2} \|x - v\|_2^2 + \eta  \Omega_\lambda(x) \Big\}.
\end{equation*}

This iterative scheme ensures that the optimization process alternates between the gradient descent for the smooth component and the proximal adjustment for the non-smooth regularization term. For Lasso regularization, the proximal operator corresponds to a soft-thresholding function applied element-wise:
\begin{equation*}
[\text{prox}_{\eta \lambda \|\cdot\|_1}(v)]_i =
\begin{cases}
v_i - \eta \lambda, & \text{if } v_i > \eta \lambda, \\
v_i + \eta \lambda, & \text{if } v_i < -\eta \lambda, \\
0, & \text{if } |v_i| \leq \eta \lambda.
\end{cases}
\end{equation*}

For Weighted Total Variation (WTV) regularization, 
the proximal operator requires for each $ r $, solve the following subproblem:

\begin{equation*}
\bm{\theta}_r^{(k+1)} = \arg\min_x \Big\{ \frac{1}{2} \|x - v_r\|_2^2 + \eta \sum_{j=2}^d \lambda_j |x_j - x_{j-1}| \Big\},
\end{equation*}
where $ v_r = \bm{\theta}^{(k)}_r $. This can be performed iteratively using: compute differences $ D_j = v_{r,j} - v_{r,j-1} $. Apply a weighted shrinkage operator $x_j = \text{shrinkage}(D_j, \eta \lambda_j),$
where
\begin{equation*}
\text{shrinkage}(z, \tau) =
\begin{cases}
z - \tau, & \text{if } z > \tau, \\
z + \tau, & \text{if } z < -\tau, \\
 0, & \text{if } |z| \leq \tau.
\end{cases}
\end{equation*}

\subsection{Simulation design}
\label{subsec:sim-design}
In this section, we illustrate the finite sample performance of the proposed
kernel additive estimators with sparse penalties in a controlled heavy-tailed, locally stationary setting that satisfies Assumptions~\ref{Ass:1}–\ref{Ass:KB2}. We consider the nonparametric regression model~(\ref{reg_model}) where for a given sample size $T$, dimension $d$, and tail parameter $\eta$, we minimize the penalized empirical risk
\begin{equation*}
\hat{\bm{\theta}} = \argmin_{\bm{\theta} \in \mathbb{R}^{Td}} 
\Big\{ \tfrac{1}{T}\|\bY - \bK \bm{\theta}\|_2^2 
+  \Omega_\lambda(\bm{\theta}) \Big\},
\end{equation*}
where $\Omega(\bm{\theta})$ is either the $\ell_1$-Lasso or the weighted total variation penalty. Although the theoretical analysis is developed for a general kernel additive framework, all simulation experiments are conducted within a time varying linear regression model. This choice can be viewed as a specialization of the general setting, which retains the locally stationary dependence structure and the sparsity promoting penalties considered in the theory, while remaining computationally tractable in high–dimensional regimes.

\paragraph{Locally stationary covariates.}
The covariates $\{X_{t,T}\}_{t=1}^T$ take values in $\mathbb{R}^d$ and are generated from an
$m$-dependent locally stationary process of the form
\begin{equation*}
X_{t,T}=m\Big(\frac{t}{T}\Big)+\sum_{r=0}^{m} a_r\!\Big(\frac{t}{T}\Big) Z_{t-r},\qquad
t=1,\ldots,T,
\end{equation*}
where $m(\cdot)\in\mathbb{R}^d$ and $a_r(\cdot)\in\mathbb{R}^d$ are smooth functions defined on
$[0,1]$, and $\{Z_t\}$ is an i.i.d.\ innovation sequence.
In the simulations, the functions $m(\cdot)$ and $a_r(\cdot)$ are specified in a
componentwise manner. Specifically, for each $j=1,\ldots,d$,
\begin{equation*}
m_j(u) = a_j \sin(2\pi u + \psi_j), \qquad
a_{r,j}(u) = b_{r,j} \sin(2\pi u + \varphi_{r,j}),
\qquad u\in[0,1],
\end{equation*}
where the amplitudes $a_j\in(0.3,0.8)$ and $b_{r,j}\in(0,0.9)$, as well as the phases
$\psi_j,\varphi_{r,j}\in[0,2\pi)$, are drawn independently at the beginning of each replication
and then kept fixed over time.
To guarantee stability, the coefficients are rescaled such that
\begin{equation*}
\sum_{r=0}^{m} |a_{r,j}(u)| \le \rho < 1
\quad
\text{uniformly for } u\in[0,1] \text{ and } j=1,\ldots,d,
\end{equation*}
where $\rho\in(0,1)$ is a fixed constant, taken as $\rho=0.95$ throughout the simulations.
Under this construction, the process $\{X_{t,T}\}$ is strictly $m$-dependent and therefore
$\beta$-mixing with $\beta(k)=0$ for all $k>m$.

\paragraph{True regression function.}
We consider
\begin{equation*}
  m^{\star}\!\Big(\frac{t}{T}, X_{t,T}\Big)
  \;=\;
  \sum_{j=1}^d m_t^{\star}(j)\, X_{t,T}^{(j)},
  \qquad t=1,\dots,T,
\end{equation*}
where the time–varying coefficients $m_t^{\star}(j)$ exhibit structure in the time and feature dimensions $(t,j)$. Specifically, let $m^{\star}(\cdot,\cdot)$ be encoded by an array $\{m_t^\star(j)\}_{t=1,\dots,T}^{j=1,\dots,d}$ defined as follows.
First, we construct a time-varying temporal pattern
\begin{equation*}
  p_t=
  \begin{cases}
    0.6 + \dfrac{-0.8}{T/2}\, (t-1), & t = 1,\dots,\lfloor T/2 \rfloor, \\[1.2ex]
    -0.2 + \dfrac{0.8}{T - \lfloor T/2 \rfloor}\, \big( t - \lfloor T/2 \rfloor - 1 \big),
    & t = \lfloor T/2 \rfloor + 1,\dots,T,
  \end{cases}
\end{equation*}
that is, a piecewise–linear cycle from $0.6$ down to $-0.2$ and back to
$0.6$. Then we define two contiguous active blocks in the feature index $j$:
\begin{align*}
  m_t^{\star}(j) &=
  \begin{cases}
    p_t, & j = 1,\dots,s_1,\\
    0.7\,p_t, & j = s_1+1,\dots,s_2,\\
    0, & j > s_2,
  \end{cases}
\end{align*}
where $s_1 = \max\{3,\lfloor 0.2 d \rfloor\}$ and
$s_2 = \max\{s_1+2, \lfloor 0.4 d \rfloor\}$.  

\paragraph{Noise families.}
The noise process $\{\varepsilon_{t,T}\}$ follows one of two heavy-tailed families: $(i)$ The sub-Weibull distribution defined in Example~\ref{example3-(i)}, where we examine $\eta \in \{0.8, 1.0, 1.5\}$ with $C=1$. $(ii)$ The Pareto distribution defined in Example~\ref{example4-(ii)}, where we  examine with tail index $\eta \in \{3.0, 3.7, 4.0\}$ and threshold $u=1$.

\paragraph{Kernel Functions.}
We adopt the compactly supported trigonometric kernels $K_1$ and $K_2$,
\begin{equation}  \label{eq:sim-kernels}
  K_1(v) = \frac{\pi}{4}\cos\!\Big(\frac{\pi v}{2}\Big)\mathbf{1}\{|v|\le1\},
  \qquad
  K_2(v) = \frac{1}{2}\bigl(1+\cos(\pi v)\bigr)\mathbf{1}\{|v|\le1\},
\end{equation}
which are symmetric, bounded and Lipschitz on $\mathbb{R}$ with support
contained in $[-1,1]$, so that Assumption~\ref{Ass:KB2} hold.

\paragraph{Tuning parameters and evaluation.}

We consider high–dimensional regimes with $d \in \{50, 100, 150, 200\}$.
For each $d$ and each noise family, we use a sequence of sample sizes of the
form $T \in \{T_0(d), T_0(d)+100,\dots,T_0(d)+900\}$, where
\begin{equation*}
  T_0(d) =
  \begin{cases}
    100 \Big\lceil 2 \sqrt{d} \log d /100 \Big\rceil, & \text{Sub--Weibull errors},\\[0.6ex]
    100 \Big\lceil 2\, d \log d /100 \Big\rceil, & \text{Pareto errors}.
  \end{cases}
\end{equation*}
In particular, for each $T$, we set $ h = C T^{-1/3}$, where $ C$ is a fixed constant greater than 0 (in the code, we work with moderate values, such as $ C = 0.6$). For Lasso, the penalty level $\lambda$ is chosen based on the theoretical rates:
\begin{equation*}
  \lambda =
  \begin{cases}
    \big( c \log d + \log T \big)^{1/2} T^{-(1-2\xi)/2},
    & \text{Sub-Weibull errors},\\[0.8ex]
    C_{\mathrm{KL}} T^{-(1+\vartheta-\xi)},
    & \text{Pareto errors},
  \end{cases}
\end{equation*}
and for weighted total variation, the penalty level $\lambda_{j}$ is chosen based on the theoretical rates:
\begin{equation*}
  \lambda_{j} =
  \begin{cases}
    (d-j+1)\big( c \log d + \log T \big)^{1/2} T^{-(1-2\xi)/2},
    & \text{Sub-Weibull errors},\\[0.8ex]
    (d-j+1)C_{\mathrm{KL}} T^{-(1+\vartheta-\xi)},
    & \text{Pareto errors},
  \end{cases}
\end{equation*}
where fixed constants $c, C_{\mathrm{KL}}>0$, small $\xi=1/3$ and $\vartheta>0$.

To evaluate the out-of-sample predictive performance, we independently
generate a fresh test sample of size $T_{\mathrm{test}}$ from the same
data-generating process and compute the generalization error
\begin{equation*}
  R(\hat m, m^\star) = \frac{1}{T_{\mathrm{test}}}\sum_{t=1}^{T_{\mathrm{test}}}
  \Bigl(\hat m\bigl(t/T_{\mathrm{test}}, X_{t,T}^{\mathrm{(test)}}\bigr)- m^\star\bigl(t/T_{\mathrm{test}}, X_{t,T}^{\mathrm{(test)}}\bigr)
  \Bigr)^2,
\end{equation*}
where $\hat m$ is the estimator corresponding to
$\hat{\bm{\theta}}^{\,\mathrm{Lasso}}$ or $\hat{\bm{\theta}}^{\,\mathrm{WTV}}$.

\begin{figure}[htbp]
\centering
\includegraphics[width=0.8\textwidth]{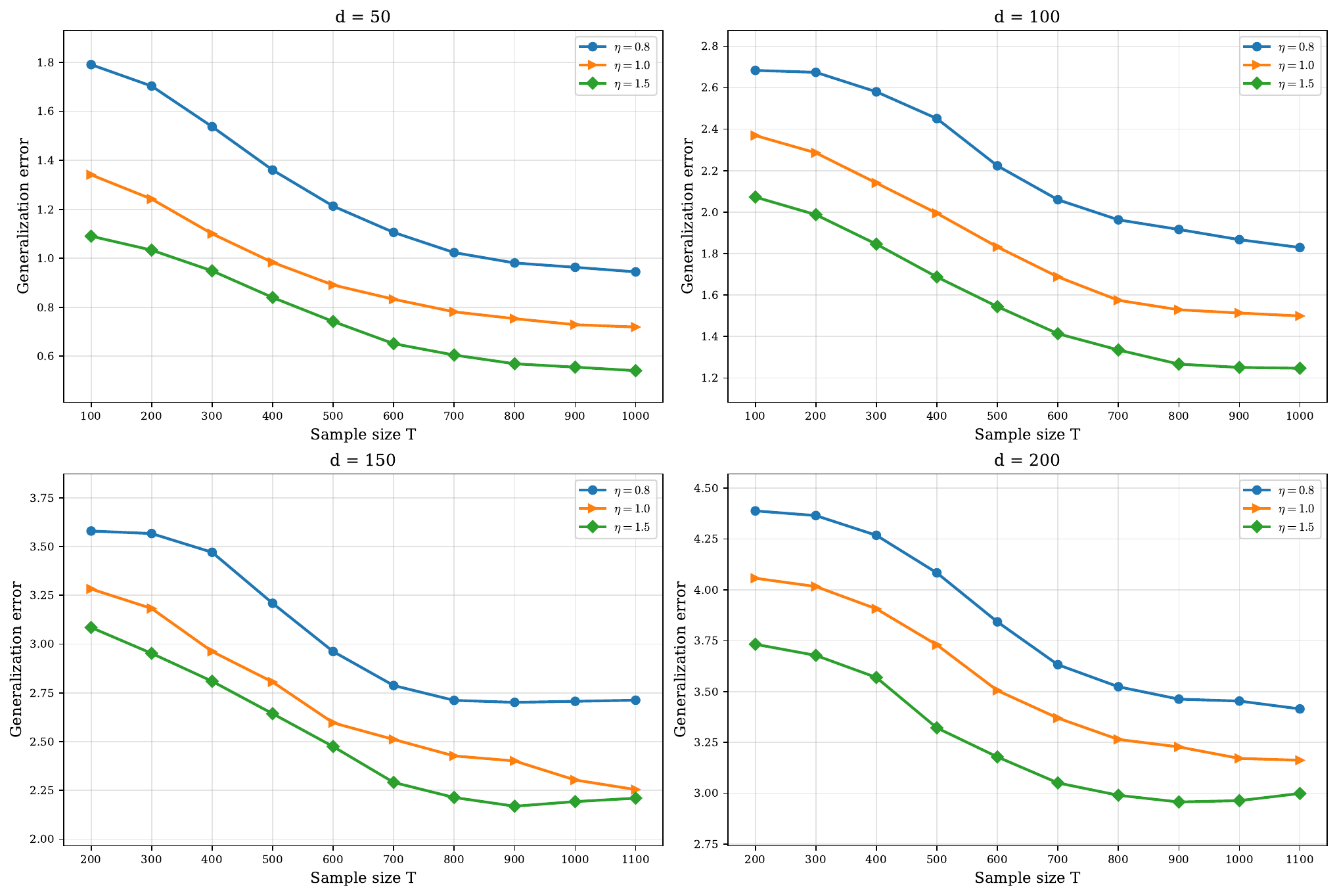}
\captionsetup{font=scriptsize}
\caption{Generalization error of the Lasso estimator under sub-Weibull noise.}
\label{fig:subweibull_lasso}
\end{figure}

\begin{figure}[htbp]
\centering
\includegraphics[width=0.8\textwidth]{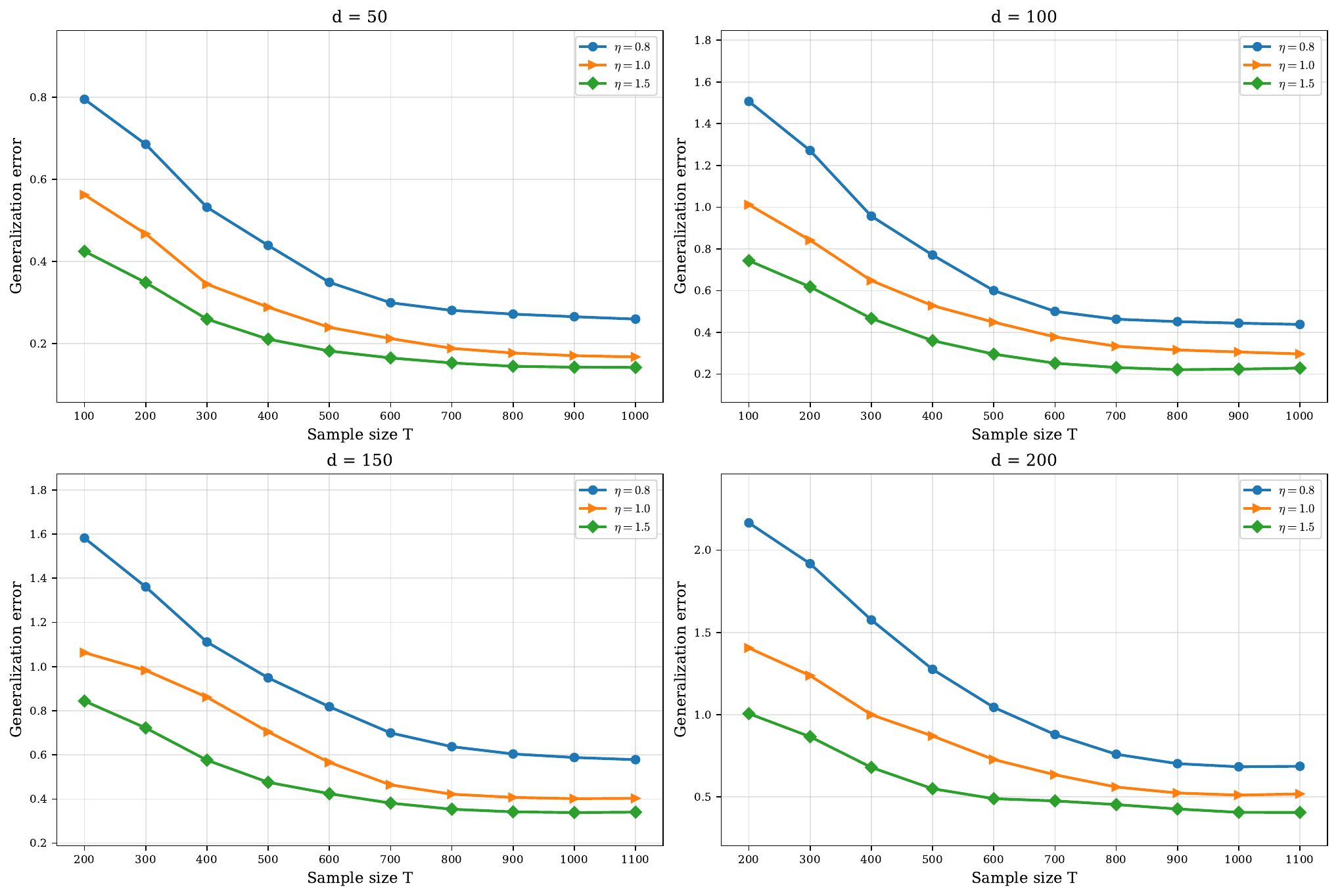}
\captionsetup{font=scriptsize}
\caption{Generalization error of weighted total variation estimator under Sub-Weibull noise.}
\label{fig:subweibull_wtv}
\end{figure}

\begin{figure}[htbp]
\centering
\includegraphics[width=0.8\textwidth]{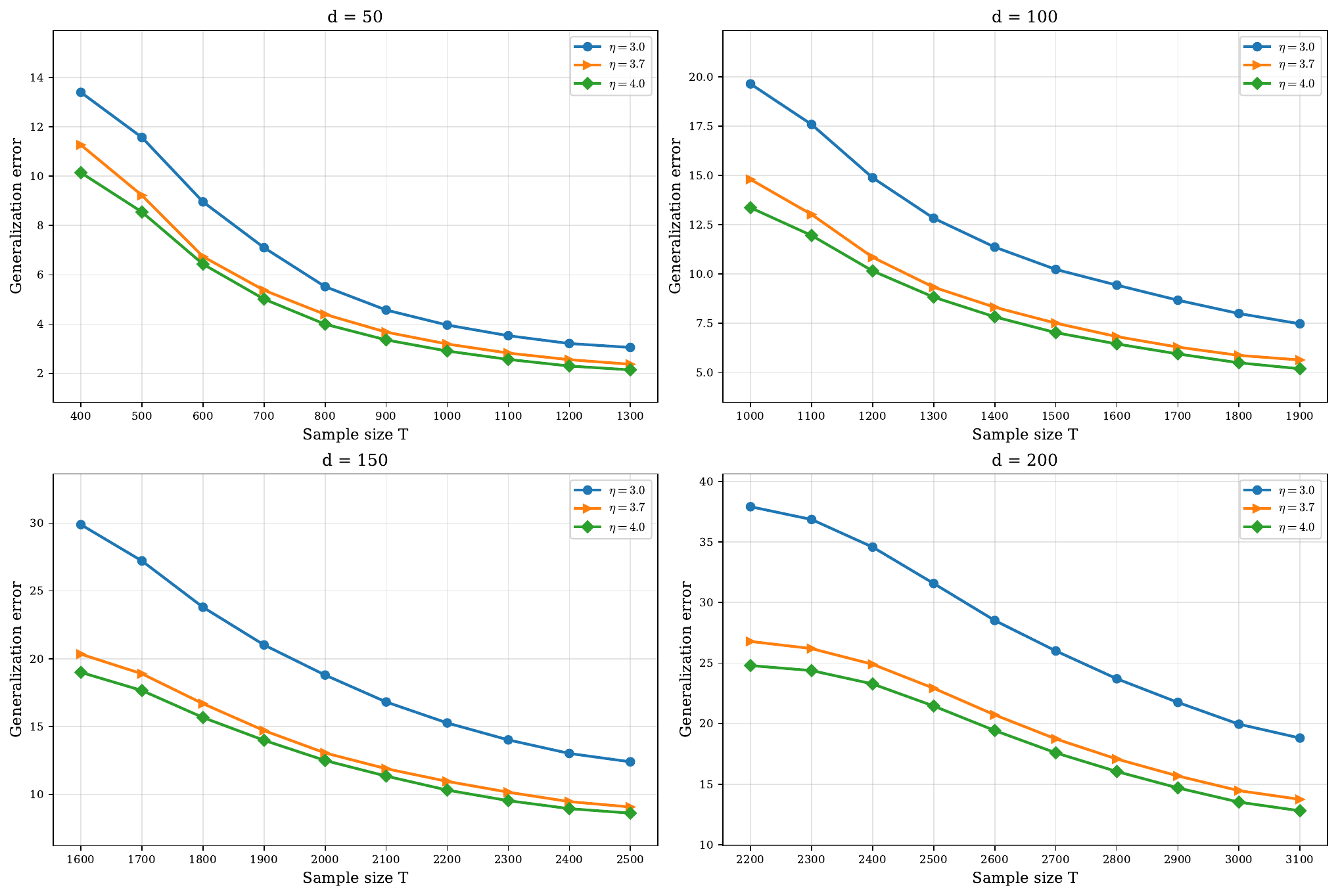}
\captionsetup{font=scriptsize}
\caption{Generalization error of the Lasso estimator under Pareto noise.}
\label{fig:pareto_lasso}
\end{figure}

\begin{figure}[htbp]
\centering
\includegraphics[width=0.8\textwidth]{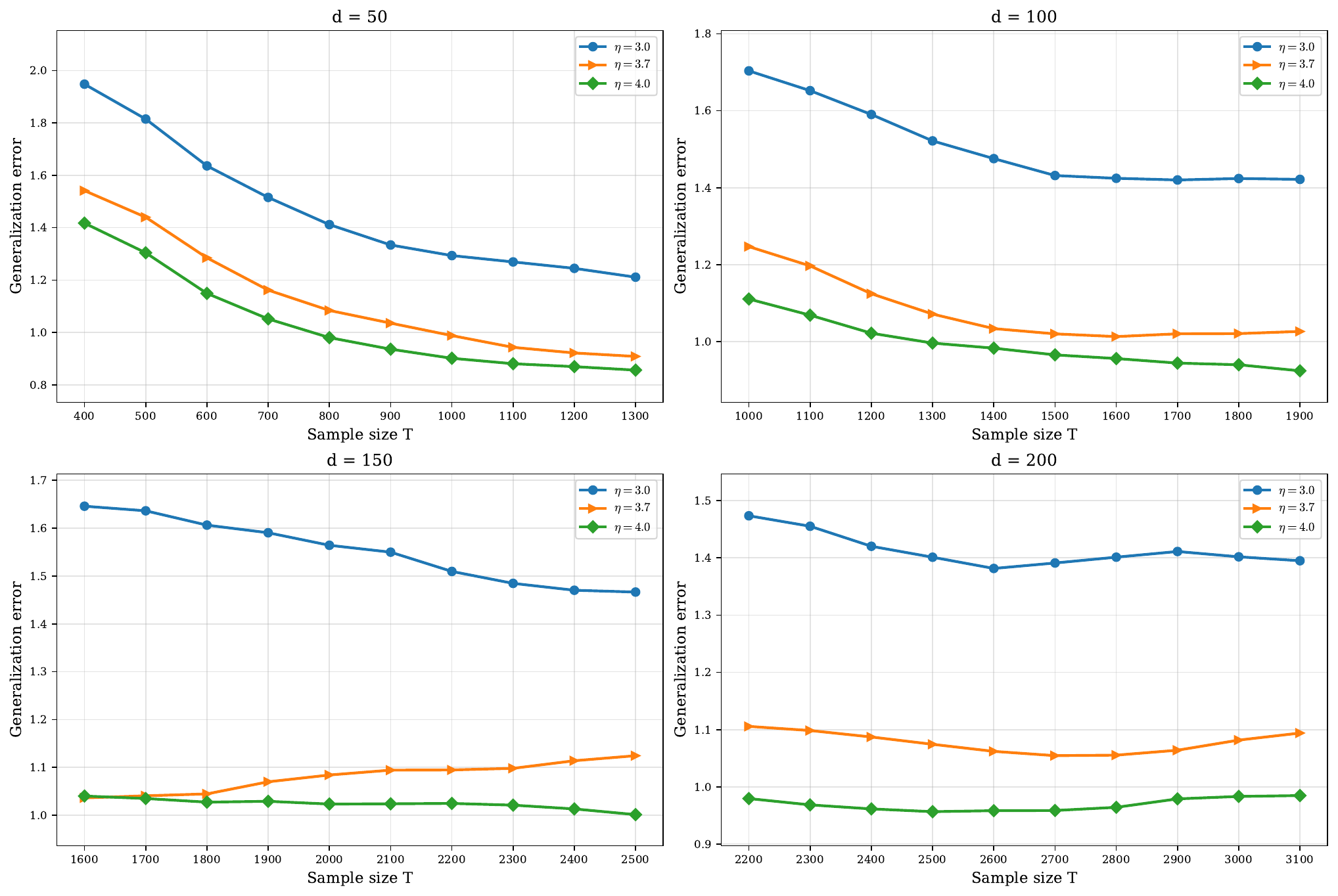}
\captionsetup{font=scriptsize}
\caption{Generalization error of weighted total variation estimator under Pareto noise.}
\label{fig:pareto_wtv}
\end{figure}

\paragraph{Effect of tail index.}
Across all scenarios, the tail behavior of the noise has a clear impact on the prediction error.
For both estimators and for all considered dimensions $d$, a larger tail index $\eta$, corresponding to a lighter-tailed distribution, consistently leads to a smaller generalization error.
Under sub-Weibull errors (Figures~\ref{fig:subweibull_lasso}-\ref{fig:subweibull_wtv}), the ordering of the curves with respect to $\eta$ is stable for both Lasso and WTV. 
Under Pareto errors (Figures~\ref{fig:pareto_lasso}-\ref{fig:pareto_wtv}), the same qualitative trend remains visible but becomes weaker, and the curves corresponding to different $\eta$ become less separated and may appear nearly indistinguishable in higher dimensions.
This reflects the increased difficulty of learning under regularly varying tails, where extreme observations and reduced concentration lead to stronger finite sample variability. 
This is fully consistent with our theoretical error bounds, which depend explicitly on the noise tail-growth parameter.

\paragraph{Effect of the dimension and sample size.}
For fixed $(d,\eta)$, the prediction error generally decreases as the sample size $T$ increases.
Larger dimensions lead to higher error levels, which reflect the increased difficulty of estimation in high-dimensional settings.
For moderate sample sizes, the error curves sometimes display small non-monotonic fluctuations or short plateaus, especially under heavy-tailed noise.
This behavior is typical in locally stationary problems, where finite sample variability, kernel smoothing, and regularization interact to determine the bias-variance tradeoff when data are limited.

\paragraph{Comparison between estimators.}
In most settings, the WTV estimator achieves a lower prediction error than the Lasso, with the improvement being most pronounced under Sub-Weibull noise and for moderate to large sample sizes.
This behavior is consistent with the fact that the weighted total variation penalty can better exploit the block structure of the true coefficient surface along the feature dimension.
Under Pareto noise, the advantage of WTV is still present but less uniform, particularly in higher dimensions, where the two methods can exhibit similar performance for moderate sample sizes.
Across both noise families, WTV remains competitive and often improves upon Lasso, which aligns with the intended structural regularization and provides empirical support for the theoretical analysis.

Overall, the experiments indicate that heavy-tailed noise leads to systematically larger prediction errors, which is consistent with the theoretical rates derived in the paper.
In addition, incorporating structural information along the feature dimension through the WTV penalty often improves predictive performance, especially under Sub-Weibull noise and for larger sample sizes.
Under Pareto noise, the advantage of WTV is less uniform but remains visible in many settings, suggesting that structural regularization can still provide robustness benefits in challenging heavy-tailed regimes.

\subsection{Verification of the Assumptions}
We briefly verify that the simulation design is compatible with
Assumptions~\ref{Ass:1}--\ref{Ass:KB2} underlying our theoretical analysis.
\paragraph{Local stationarity.}
Recall that the covariate process is generated as
\begin{equation*}
X_{t,T}=m\big(\frac{t}{T}\big)+\sum_{r=0}^{m} a_r\!\big(\frac{t}{T}\big) Z_{t-r},\qquad t=1,\ldots,T,  
\end{equation*}
where $\{Z_t\}_{t\in\mathbb{Z}}$ is an i.i.d.\ innovation sequence and the coefficient
functions $m(\cdot)$ and $a_r(\cdot)$ are smooth on $[0,1]$.
For each fixed $u\in[0,1]$, define the associated process
\begin{equation*}
X_t(u):=m(u)+\sum_{r=0}^{m} a_r(u)\,Z_{t-r},\qquad t\in\mathbb{Z}.
\end{equation*}
Since the coefficients are constant for fixed $u$ and $\{Z_t\}$ is i.i.d.,
$\{X_t(u)\}_{t\in\mathbb{Z}}$ is a finite-order linear filter of $\{Z_t\}$ and hence
strictly stationary, verifying condition~(i) of Definition~\ref{def:locallystseries}.

To verify condition~$(ii)$, let $u_t=t/T$. By the triangle inequality,
\begin{align*}
\norm{X_{t,T}-X_t(u)} \le \norm{m(u_t)-m(u)}+\sum_{r=0}^{m}\norm{a_r(u_t)-a_r(u)}\,|Z_{t-r}|.
\end{align*}
In the simulations, $m(\cdot)$ and $a_r(\cdot)$ are componentwise sinusoidal and therefore
uniformly Lipschitz on $[0,1]$. Hence, there exist constants $L_m,L_{a_r}<\infty$ such that
\begin{equation*}
\norm{m(u_t)-m(u)}\le L_m |u_t-u|,\qquad\norm{a_r(u_t)-a_r(u)}\le L_{a_r}|u_t-u|.
\end{equation*}
It follows that
\begin{equation*}
\norm{X_{t,T}-X_t(u)}\le|u_t-u|\Big(L_m+\sum_{r=0}^{m}L_{a_r}|Z_{t-r}|\Big)\le\Big(|u_t-u|+\tfrac{1}{T}\Big) U_{t,T}(u),
\end{equation*}
where
\begin{equation*}
U_{t,T}(u)
:=
L_m+\sum_{r=0}^{m}L_{a_r}|Z_{t-r}|.
\end{equation*}
Since $\{Z_t\}$ admits finite moments of all orders under the considered noise
distributions, there exists $\rho>0$ such that $\mathbb{E}\big[(U_{t,T}(u))^{\rho}\big]<C$ uniformly in $t,T$ and $u$. This verifies condition~(ii) of Definition~\ref{def:locallystseries}, and hence $\{X_{t,T}\}$ is locally stationary.

\paragraph{$\beta$-mixing property.}
By construction, $X_{t,T}$ depends only on the finite collection
$\{Z_t,\ldots,Z_{t-m}\}$. Consequently, the process $\{X_{t,T}\}_{t\in\mathbb{Z}}$ is strictly $m$-dependent, that is, the $\sigma$-fields generated by $\{X_{t,T}\}_{t\le s}$ and $\{X_{t,T}\}_{t\ge s+k}$ are independent for all $k>m$. It follows that the $\beta$-mixing coefficients of $\{X_{t,T}\}$ satisfy $\beta_X(k)=0$ with $ k>m$.
The regression noise sequence $\{\varepsilon_{t,T}\}$ is generated independently across time and independently of the covariates. Therefore, the joint array 
$\{(X_{t,T},\varepsilon_{t,T})\}_{t\in\mathbb{Z}}$ is also strictly $m$-dependent, and its $\beta$-mixing coefficients satisfy $\beta(k)=0$ with $k>m$.
Hence, Assumption~\ref{Ass:2} is satisfied. Although the simulation design is based on finite dependence, the theoretical analysis only requires exponentially decaying $\beta$-mixing coefficients. Hence the simulation setting is compatible with, and in fact stronger than, the assumed dependence condition.

\paragraph{Kernel regularity.)}
The basic kernels $K_1$ and $K_2$ in (\ref{eq:sim-kernels}) are symmetric around zero, bounded, have compact support contained in $[-1,1]$, and are Lipschitz continuous on $\mathbb{R}$. Hence, Assumption~\ref{Ass:KB1} is also satisfied under our simulation design.

\section{Conclusion} \label{sec:conclusion}
We introduce a flexible sparse learning framework designed for two classes of heavy-tailed distributions: Sub-Weibull distributions and regularly varying tail distributions, focusing on high-dimensional data modeling under local stationarity. We derive oracle inequalities under the least squares loss for both Lasso penalization and weighted total variation penalization. 
Under Assumptions, we first establish a class of oracle inequalities with relatively slow convergence rates, effectively linking prediction error to the regularization terms of the regression vector. Furthermore, under restricted eigenvalue conditions, we derive oracle inequalities that exhibit faster convergence rates. These theoretical results demonstrate that the error bounds for sparse estimation can be substantially improved, thereby enhancing the robustness and predictive accuracy of the model. 
The proposed framework is capable of accommodating different forms of heavy-tailed behavior and captures complex sparsity structures through adaptive regularization. It shows strong adaptability and wide applicability, particularly in high-dimensional settings characterized by locally stationary. This work provides a new perspective for constructing sparse learning models that are not only theoretically sound but also practically effective.

\appendix

\section{Proofs of concentration inequalities} \label{sec:Proofs_of_Proposition}

\subsection{Proof of Proposition~\ref{proposition.Locally-stationary-sub-Weibull-distribution}}
Let $ \Phi\Big(\frac{r}{T}, X_{r,T}^{j} \Big)=\frac{1}{T}\sum_{t=1}^{T} W_{t,r,T}^{j}=\frac{1}{T}\sum_{t=1}^{T}K_{h,1}\big(\frac{t}{T}-\frac{r}{T}\big) K_{h,2}(X_{t,T}^{j}-X_{r, T}^{j}) \varepsilon_{t,T}, \text{ for } t=1,\ldots,T$. Set $\tau_{T}=T \log T $, We write
\begin{equation*}
\Phi \Big(\frac{r}{T}, X_{r,T}^{j} \Big) = \Phi_{1} \Big(\frac{r}{T}, X_{r,T}^{j} \Big) + \Phi_{2} \Big(\frac{r}{T}, X_{r,T}^{j} \Big),  
\end{equation*} 
where 
\begin{align*}
\Phi_{1} \Big(\frac{r}{T}, X_{r,T}^{j} \Big)=\frac{1}{T} \sum_{t=1}^{T} K_{h,1}\big(\frac{t}{T}-\frac{r}{T}\big) K_{h,2}(X_{t,T}^{j}-X_{r, T}^{j}) \varepsilon_{t, T} \ind{\big(\big|\varepsilon_{t, T}\big| \leq \tau_{T}\big)}, \\ 
\Phi_{2} \Big(\frac{r}{T}, X_{r,T}^{j} \Big)=\frac{1}{T} \sum_{t=1}^{T} K_{h,1}\big(\frac{t}{T}-\frac{r}{T}\big) K_{h,2}(X_{t,T}^{j}-X_{r, T}^{j}) \varepsilon_{t, T} \ind{\big(\big|\varepsilon_{t, T}\big|>\tau_{T}\big)}.    
\end{align*}
It follows that 
\begin{equation}\label{equationbeta11}
\mathbb P \Big(\big|\Phi\Big(\frac{r}{T}, X_{r,T}^{j} \Big)\big| > 2\gamma\Big) \leq\mathbb  P \Big(\big|\Phi_{1}\Big(\frac{r}{T}, X_{r,T}^{j} \Big)\big| > \gamma\Big)+ \mathbb P \Big(\big|\Phi_{2}\Big(\frac{r}{T}, X_{r,T}^{j} \Big)\big| > \gamma\Big). 
\end{equation}
For $ \Phi_{2} \Big(\frac{r}{T}, X_{r,T}^{j} \Big) $, defining $ b_{T}=\sqrt{\log T / T }$, then we have $\tau_{T} > b_{T}$ and for any $\gamma \geq C_{K} b_{T}$, where $C_{K}=C_{K_{1}}C_{K_{2}}$, it has that 
\begin{equation}
\begin{aligned}
\mathbb P\Big(\Big|\Phi_{2} \Big(\frac{r}{T}, X_{r,T}^{j} \Big) \Big| \geq \gamma \Big)
& \leq\mathbb  P\Big( \big|\Phi_{2} \Big(\frac{r}{T}, X_{r,T}^{j} \Big) \big| \geq C_{K} b_{T} \Big)\\
& =\mathbb  P\Big(\Big|\frac{1}{T} \sum_{t=1}^{T} K_{h,1}\big(\frac{t}{T}-\frac{r}{T}\big) K_{h,2}(X_{t,T}^{j}-X_{r, T}^{j}) \varepsilon_{t, T} \ind{\big(\big|\varepsilon_{t, T}\big|>\tau_{T}\big)}\big| \geq C_{K} b_{T} \Big)\\
& \leq\mathbb  P\Big(\Big|\frac{1}{T} C_{K} \sum_{t=1}^{T} \varepsilon_{t, T} \ind{\big(\big|\varepsilon_{t, T}\big|>\tau_{T}\big)}\Big| \geq C_{K} b_{T} \Big) \\
& \leq\mathbb P\Big(\Big| \varepsilon_{t, T} \ind{\big(\big|\varepsilon_{t, T}\big|>\tau_{T}\big)}\Big| \geq b_{T} \Big) \\
& \leq \mathbb P\Big(\big|\varepsilon_{t,T}\big|>\tau_{T} \Big)\ind{(\tau_{T}>b_{T})}+ \mathbb P\Big(\big|\varepsilon_{t,T}\big|>\tau_{T} \Big)\ind{(\tau_{T} \leq b_{T})}\\
& \leq \mathbb P\Big(\big|\varepsilon_{t,T}\big|>\tau_{T}, \text{ for some }  1 \leq t \leq T \Big) \\
& \leq \exp (-\big( \tau_{T}/C_{\varepsilon} \big)^{\eta_{2}})\\
& \leq \exp\big(-(\frac{T\log T}{C_{\varepsilon}})^{\eta_{2}}\big).
\end{aligned}\label{equationbeta2}
\end{equation}
We now turn to the analysis of $ \Phi_{1} \Big(\frac{r}{T}, X_{r,T}^{j} \Big) $. From Assumption~\ref{Ass:1}, $\{X_{t,T}\}_{t=1}^{T}$ is a locally stationary sequence, which can be approximated locally by a strictly stationary sequence $\{X_{t}(u )\}_{t \in \mathbb{Z}}$. Since $K_{1}$ and $K_{2}$ are Lipschitz and bounded, with Remark~\ref{remark.U_{t,T}}, i.e., $\norm{X_{t, T}-X_{t}(u)} \leq \big(\big|\frac{t}{T}-u\big|+\frac{1}{T}\big) U_{t, T}(u) \leq C_{U} \big(\big|\frac{t}{T}-u\big|+\frac{1}{T}\big)$, where $u \in [0,1]$ and $C_{U}$ is a constant, we can infer that
\begin{align*}
& \big|K_{h,1}\big(\frac{t}{T}-\frac{r}{T}\big) K_{h,2}(X_{t,T}^{j}-X_{r, T}^{j})-K_{h,1}\big(u-\frac{r}{T}\big) K_{h,2}(X_{t}^{j}(u)-X_{r, T}^{j})\big|\\
&= \big|K_{h,1}\big(\frac{t}{T}-\frac{r}{T}\big) K_{h,2}(X_{t,T}^{j}-X_{r, T}^{j})-K_{h,1}\big(\frac{t}{T}-\frac{r}{T}\big) K_{h,2}(X_{t}^{j}(u)-X_{r, T}^{j}) \\
&\quad+ K_{h,1}\big(\frac{t}{T}-\frac{r}{T}\big) K_{h,2}(X_{t}^{j}(u)-X_{r, T}^{j})-K_{h,1}\big(u-\frac{r}{T}\big) K_{h,2}(X_{t}^{j}(u)-X_{r, T}^{j})\big| \\
&\leq K_{h,1}\big(\frac{t}{T}-\frac{r}{T}\big) \big|K_{h,2}(X_{t,T}^{j}-X_{r, T}^{j})- K_{h,2}(X_{t}^{j}(u)-X_{r, T}^{j})\big|\\
&\quad+ \big|K_{h,1}\big(\frac{t}{T}-\frac{r}{T}\big)-K_{h,1}\big(u-\frac{r}{T}\big)\big| K_{h,2}(X_{t}^{j}(u)-X_{r, T}^{j}) \\
&\leq K_{h,1}\big(\frac{t}{T}-\frac{r}{T}\big) \frac{L_{K_{2}}}{h}|X_{t,T}^{j}-X_{t}^{j}(u)|]+\frac{L_{K_{1}}}{h}|\frac{t}{T}-u| K_{h,2}(X_{t}^{j}(u)-X_{r, T}^{j}) \\
&\leq \frac{C_{K_{1}}L_{K_{2}}}{h} \big(\big|\frac{t}{T}-u\big|+\frac{1}{T}\big) U_{t, T}(u) + \frac{C_{K_{2}}L_{K_{1}}}{h}|\frac{t}{T}-u|\\
& \leq \frac{C_{U}C_{K_{1}}L_{K_{2}}}{h} \big(\big|\frac{t}{T}-u\big|+\frac{1}{T}\big) + \frac{C_{K_{2}}L_{K_{1}}}{h}|\frac{t}{T}-u|\\
&\leq \frac{C_{K,L}}{h} \big(1+\frac{1}{T}\big)+\frac{C_{K,L}}{h}\\
& \leq \frac{C_{K,L}(2T+1)}{Th},
\end{align*}
where $C_{K,L}=\max\{C_{U}C_{K_{1}}L_{K_{2}},C_{K_{2}}L_{K_{1}}\}$.
Defining
\begin{equation*}
\tilde{\Phi}_{1}(\frac{r}{T}, X_{r, T}^{j})=\frac{1}{T } \sum_{t=1}^{T} K_{h,1}\big(u-\frac{r}{T}\big) K_{h,2}(X_{t}^{j}(u)-X_{r, T}^{j}) \varepsilon_{t,T} \ind{\big(\big|\varepsilon_{t,T}\big| \leq \tau_{T}\big)}, 
\end{equation*}
we have 
\begin{align*}
&\Phi_{1} \big(\frac{r}{T}, X_{r, T}^{j} \big)-\tilde{\Phi}_{1} \big(\frac{r}{T}, X_{r, T}^{j}\big) \\
&\leq \frac{1}{T } \sum_{t=1}^{T} \Big(K_{h,1}\big(\frac{t}{T}-\frac{r}{T}\big) K_{h,2}(X_{t,T}^{j}-X_{r, T}^{j})-K_{h,1}\big(u-\frac{r}{T}\big) K_{h,2}(X_{t}^{j}(u)-X_{r, T}^{j})\Big)\varepsilon_{t,T} \ind{\big(\big|\varepsilon_{t,T}\big| \leq \tau_{T}\big)}\\
&\leq \frac{1}{T } \sum_{t=1}^{T} \frac{C_{K,L}(2T+1)}{Th} \varepsilon_{t,T} \ind{\big(\big|\varepsilon_{t,T}\big| \leq \tau_{T}\big)},
\end{align*}
we can have that
\begin{align*}
\Phi_{1} \big(\frac{r}{T}, X_{r, T}^{j} \big)&= \Phi_{1} \big(\frac{r}{T}, X_{r, T}^{j} \big)-\tilde{\Phi}_{1} \big(\frac{r}{T}, X_{r, T}^{j}\big) +\tilde{\Phi}_{1} \big(\frac{r}{T}, X_{r, T}^{j}\big) \\
&\leq \frac{1}{T } \sum_{t=1}^{T} \frac{C_{K,L}(2T+1)}{Th} \varepsilon_{t,T} \ind{\big(\big|\varepsilon_{t,T}\big| \leq \tau_{T}\big)} +\tilde{\Phi}_{1} \big(\frac{r}{T}, X_{r, T}^{j}\big).
\end{align*} 
It follows that
\begin{equation}\label{equationbeta13}
\mathbb P \Big( \Big|\Phi_{1} \Big(\frac{t}{T}, X_{t,T}^{j} \Big)\Big| \geq \gamma \Big)
\leq Q_{T}+\tilde{Q}_{T},
\end{equation}
where 
\begin{equation*}
Q_{T}=\mathbb P \bigg( |\frac{1}{T } \sum_{t=1}^{T} \frac{C_{K,L}(2T+1)}{Th} \varepsilon_{t,T} \ind{\big(\big|\varepsilon_{t,T}\big| \leq \tau_{T}\big)} |\geq \frac{\gamma}{{2}} \bigg),
\end{equation*}
and
\begin{equation*}
\tilde{Q}_{T}=\mathbb P \Big( |\tilde{\Phi}_{1} \big(\frac{r}{T}, X_{r, T}^{j}\big)| \geq \frac{\gamma}{2} \Big).
\end{equation*}
To bound $ \tilde{Q}_{T} $, we write
\begin{equation*}
\tilde{Q}_{T}\leq \mathbb P\Big(\frac{1}{T}\Big|\sum_{t=1}^{T} Z_{t, T}(u,X_{t}^{j}(u))\Big|\geq \frac{\gamma}{2}\Big), 
\end{equation*}
with
\begin{equation*}
Z_{t, T}(u, X_{t}^{j}(u))= K_{h,1}\big(u-\frac{r}{T}\big) K_{h,2}(X_{t}^{j}(u)-X_{r, T}^{j}) \varepsilon_{t,T} \ind{\big(\big|\varepsilon_{t,T}\big| \leq \tau_{T}\big)}.     
\end{equation*}
Note that $K_{1}$ and $K_{2}$ are bounded, from Assumption~\ref{Ass:2} and Proposition~\ref{lemma_sub_beta_mixing}, we have $\{ \varepsilon_{t,T} \} $ is $\beta$-mixing sequence, i.e., $\{ \varepsilon_{t,T} \} $ follows the $\beta$-mixing sub-Weibull distribution with mixing coefficients $ \beta(k) \leq \exp (-\varphi k^{\eta_{1}}) $, for some $ \varphi>0, \eta_{1}>1$. We now bound $\tilde{Q}_{T}$ with the help of the exponential inequality in Lemma~\ref{lemma.sumofmixingsubweibull}. For $T>4$ and $\gamma>\frac{2C_{K}}{T}$, i.e., $\frac{\gamma}{2C_{K}}>1/T$, we have
\begin{align}\label{equationbeta14}
\tilde{Q}_{T}&\leq \mathbb P\Big(\frac{1}{T}\Big|\sum_{t=1}^{T} Z_{t, T}(u, X_{t}^{j}(u))\Big|\geq \frac{\gamma}{2}\Big)\\ \nonumber
& \leq \mathbb P\Big(\frac{C_{K}}{T}\Big|\sum_{t=1}^{T}\varepsilon_{t,T} \ind{\Big(\big|\varepsilon_{t,T}\big| \leq \tau_{T}\Big)}\Big|\geq \frac{\gamma}{2}\Big)\\
&\leq T \exp \bigg( -\frac{(\gamma T)^{\eta}}{(2C_{K}C_{\varepsilon})^{\eta}C_{1}} \bigg) + \exp \bigg( -\frac{\gamma^{2}T}{(2C_{K}C_{\varepsilon})^{2}C_{2}} \bigg),\nonumber
\end{align}
where $1/\eta = 1/\eta_{1} + 1/\eta_{2}$, $ \eta < 1$, $C_{K}=C_{K_{1}}C_{K_{2}}$, $C_{\varepsilon}$ is the sub-Weibull constant and the constants $C_{1}$, $C_{2}$ depend only on $\eta_{1}$, $\eta_{2}$ and $\varphi$. 

We now bound $ Q_{T}$ with the help of the exponential inequality in Lemma~\ref{lemma.sumofmixingsubweibull}. For $T>4$ and $\gamma>\frac{2C_{K,L}(2T+1)}{T^{2}h}$, i.e., $\frac{\gamma Th}{2C_{K,L}(2T+1)}>1/T$, we have
\begin{align}\label{equationbeta15}
Q_{T}&= \mathbb P \bigg(\frac{1}{T} |\sum_{t=1}^{T} \frac{C_{K,L}(2T+1)}{Th} \varepsilon_{t,T} \ind{\Big(\big|\varepsilon_{t,T}\big| \leq \tau_{T}\Big)}|\geq \frac{\gamma}{2}\bigg)\\ \nonumber
&\leq T \exp \bigg( -\frac{(\gamma T^{2}h)^{\eta}}{(2C_{K,L}(2T+1)C_{\varepsilon})^{\eta}C_{1}} \bigg) + \exp \bigg( -\frac{(\gamma h)^{2}T^{3}}{(2C_{K,L}(2T+1)C_{\varepsilon})^{2}C_{2}} \bigg).
\end{align}

From (\ref{equationbeta11})-(\ref{equationbeta15}), let $h=\bigO(T^{-\xi})$ with $0<\xi<\frac{1}{2}$, for 
\begin{align*}
    \gamma>\max\{C_{K}\sqrt{\log T / T },\frac{2C_{K}}{T},\frac{2C_{K,L}(2T+1)}{T^{2}h}\}= C_{K}\sqrt{\log T / T } 
\end{align*}
and $T>4$, we further get that
\begin{align*}
\mathbb P (\big|\Phi\Big(&\frac{t}{T}, X_{t,T}^{j} \Big)\big| > 2\gamma)\\
&\leq \exp\big(-(\frac{T\log T}{C_{\varepsilon}})^{\eta_{2}}\big)\\
& \quad + T \exp \bigg( -\frac{(\gamma T)^{\eta}}{(2C_{K}C_{\varepsilon})^{\eta}C_{1}} \bigg) + \exp \bigg( -\frac{\gamma^{2}T}{(2C_{K}C_{\varepsilon})^{2}C_{2}} \bigg)\\
& \quad + T \exp \bigg( -\frac{(\gamma T^{2}h)^{\eta}}{(2C_{K,L}(2T+1)C_{\varepsilon})^{\eta}C_{1}} \bigg) + \exp \bigg( -\frac{(wh)^{2}T^{3}}{(2C_{K,L}(2T+1)C_{\varepsilon})^{2}C_{2}} \bigg).
\end{align*} 
Then, let $h=\bigO(T^{-\xi})$ with $0<\xi<\frac{1}{2}$, for $\gamma> 2C_{K}\sqrt{\log T / T }$ and $T>4$, we have
\begin{align*}
\mathbb P (\big|\Phi\Big(&\frac{r}{T}, X_{r, T}^{j}\Big)\big| > \gamma)\\
&\leq \exp\big(-(\frac{T\log T}{C_{\varepsilon}})^{\eta_{2}}\big)\\
& \quad + T \exp \bigg( -\frac{(\gamma T)^{\eta}}{(4C_{K}C_{\varepsilon})^{\eta}C_{1}} \bigg) + \exp \bigg( -\frac{\gamma^{2}T}{(4C_{K}C_{\varepsilon})^{2}C_{2}} \bigg)\\
& \quad + T \exp \bigg( -\frac{(\gamma T^{2}h)^{\eta}}{(4C_{K,L}(2T+1)C_{\varepsilon})^{\eta}C_{1}} \bigg) + \exp \bigg( -\frac{(wh)^{2}T^{3}}{(4C_{K,L}(2T+1)C_{\varepsilon})^{2}C_{2}} \bigg).
\end{align*}

\subsection{Proof of Proposition~\ref{proposition.Regularly varying heavy-tailed}}
Let $Z_{t,T}^{N}$ denote the truncated random variable $ Z_{t,T}$ such that 
\begin{equation*}
Z_{t,T}^{N}= \max \Big(\min \Big(Z_{t,T}, N\Big),-N\Big) \text{ w.r.t. } N \in N^{*} .   
\end{equation*}
Then define $\Sigma_{T}^{N}:=\sum_{t=1}^{T} Z_{t,T}^{N}$ and  
consider the partition of the samples into blocks of length $ B, I_{i}=\{1+(i-1) B, \ldots, i B\} $ for $ i=1,2, \ldots,2\mu $ where $ \mu =[T /(2B)] $. Also let $ I_{2\mu +1}=\Big\{2\mu B+1, \ldots, T\Big\} $. Define for a finite set $ I $ of positive integers, define $ \Sigma_{T}^{N}(I)= \sum_{t \in I} Z_{t,T}^{N} $. Then we can write, for $ l \leq T $
\begin{align*}
\Sigma_{l} & =\sum_{t=1}^{l}Z_{t,T} = \sum_{t=1}^{l}\Big(Z_{t,T}-Z_{t,T}^{N}\Big)+\sum_{t=1}^{l} Z_{t,T}^{N} \\
& =\sum_{t=1}^{l}\big(Z_{t,T}-Z_{t,T}^{N}\big)+\sum_{j \leq[l / B]} \Sigma_{T}^{N}\big(I_{2j}\big)+\sum_{j \leq[l/B]} \Sigma_{T}^{N}\big(I_{2j-1}\big)+\sum_{t=B[l / B]+1}^{l} Z_{t,T}^{N}.
\end{align*}
Then, we have
\begin{align*}
\Big|\Sigma_{l}\Big| &\leq \sum_{t=1}^{l}\big| Z_{t,T}-Z_{t,T}^{N}\big|+\Big|\sum_{j \leq[l/B]} \Sigma_{T}^{ N}\big(I_{2j}\big)\big|+\big|\sum_{j \leq[l/B]} \Sigma_{T}^{ N}\big(I_{2j-1}\big)\big|+2B N.
\end{align*}
Thus,
\begin{align*}
\sup_{l \leq T}\big|\Sigma_{l}\big| &\leq \sum_{t=1}^{T}\big|Z_{t,T}-Z_{t,T}^{N}\big|+\sup_{l \leq T}\big|\sum_{j \leq[l / B]} \Sigma_{T}^{ N}\big(I_{2j}\big)\big|+\sup_{l \leq T}\big|\sum_{j \leq[l / B]} \Sigma_{T}^{ N}\big(I_{2j-1}\big)\big|+2B N\\
&\leq \sum_{t=1}^{T}\big|Z_{t,T}-Z_{t,T}^{N}\big|+\sup_{l \leq T}\big|\sum_{j \leq[l / B]}\big(\Sigma_{T}^{ N}\big(I_{2j}\big)-\Sigma_{T}^{ N*}\big(I_{2j}\big)\big)\big|+\sup_{l \leq T}\big|\sum_{j \leq[l / B]} \Sigma_{T}^{ N*}\big(I_{2j}\big)\big|\\
&\quad+\sup_{l \leq T}\big|\sum_{j \leq[l / B]}\big(\Sigma_{T}^{ N}\big(I_{2j-1}\big)-\Sigma_{T}^{ N*}\big(I_{2j-1}\big)\big)\big|+\sup_{l \leq T}\big|\sum_{j \leq[l / B]} \Sigma_{T}^{ N*}\big(I_{2j-1}\big)\big|+2B N.
\end{align*}
Using Markov's inequality and Lemma \ref{Karamata_theorem}, for $\eta_{2} \geq 2$, we infer readily 
\begin{align*}
\mathbb{P}\Big(&\sum_{t=1}^{T}\Big|Z_{t,T}-Z_{t,T}^{N}\Big|   \\
& \leq \frac{1}{\varrho} \big[\sum_{t=1}^{T}\int_{0}^{\infty} \Big(\mathbb{P} (\big| Z_{t,T} - Z^N_{t,T} \big| >x)\ind{\{Z_{t,T}<-N\}} + \mathbb{P} (\big| Z_{t,T} - Z^N_{t,T} \big| >x)\ind{\{|Z_{t,T}|<N\}}\\
&\qquad + \mathbb{P} (\big| Z_{t,T} - Z^N_{t,T} \big| >x)\ind{\{Z_{t,T}>N\}}\Big) dx \big]\\
&= \frac{1}{\varrho} \big[\sum_{t=1}^{T}\int_{0}^{\infty} \mathbb{P} (\big| Z_{t,T} +N \big| >x)\ind{\{Z_{t,T}<-N\}} dx  + \int_{0}^{\infty} \mathbb{P} ( Z_{t,T} - N >x)\ind{\{Z_{t,T}>N\}} dx ]\\
&= \frac{1}{\varrho} \big[\sum_{t=1}^{T}\int_{0}^{\infty} \mathbb{P} (-Z_{t,T}>x+N)\ind{\{-Z_{t,T}>N\}}dx + \int_{N}^{\infty}\mathbb{P} ( Z_{t,T}>x)\ind{\{Z_{t,T}>N\}} dx \big]\\
&= \frac{1}{\varrho} \big[\sum_{t=1}^{T}\int_{N}^{\infty} \mathbb{P} (-Z_{t,T}>x)\ind{\{-Z_{t,T}>N\}}dx  + \int_{N}^{\infty}\mathbb{P} ( Z_{t,T}>x)\ind{\{Z_{t,T}>N\}} dx \big]\\
&\leq \frac{2}{\varrho} \sum_{t=1}^{T} \int_{N}^{\infty} \mathbb{P}\Big(\Big|Z_{t,T}\Big| \geq x\Big) dx \\ 
&\leq \frac{2T}{\varrho} \int_{N}^{\infty} x^{-\eta_{2}}L(x) dx\\
&=\frac{2T}{\varrho}\frac{1}{\eta_{2}-1}N^{1-\eta_{2}}L(N).
\end{align*}
Let $ Z_{t, T}^{*N}$, $t \in I $, be an independent random variables and have the same distribution as $ Z_{t, T}^{N}$, $t \in I $, and $ \Sigma_{T}^{ N*}(I)=\sum_{t \in I} Z_{t, T}^{*N}$. Using Lemma~\ref{lemma_lemma5_of_Dedecker}, we have
\begin{align*}
\mathbb E\Big[\Big|\Sigma_{T}^{ N}\Big(I_{2j}\Big)-\Sigma_{T}^{ N*}\Big(I_{2j}\Big)\Big|\Big]
&=\mathbb E\Big[\Big|\sum_{t \in I_{2j}} Z_{t, T}^{N}-\sum_{t \in I_{2j}} Z_{t, T}^{*N}\Big|\Big]\\
&=\mathbb E\Big[\Big|\sum_{t \in I_{2j}} (Z_{t, T}^{N}-Z_{t, T}^{*N})\Big|\Big]\\
&\leq B\tau(B).
\end{align*}
Then using Markov's inequality, we have
\begin{align*}  
\mathbb{P}\Big(\sup_{l \leq T}\Big|&\sum_{t \leq[l / B]}\Big(\Sigma_{T}^{ N}\Big(I_{2j}\Big)-\Sigma_{T}^{ N*}\Big(I_{2j}\Big)\Big)\Big| \geq \varrho \Big) \\ 
&\leq \frac{\mathbb{E}\Big[\sup_{l \leq T}\Big|\sum_{t \leq[l / B]}\Big(\Sigma_{T}^{ N}\Big(I_{2j}\Big)-\Sigma_{T}^{ N*}\Big(I_{2j}\Big)\Big)\Big|\Big]}{\varrho} \\ 
&\leq \frac{\mathbb{E}\Big[\sup_{l\leq T} \sum_{t \leq \mu}\Big|\Sigma_{T}^{ N}\Big(I_{2j}\Big)-\Sigma_{T}^{ N*}\Big(I_{2j}\Big)\Big|\Big]}{\varrho}\\
&\leq \frac{\mu B\tau(B)}{\varrho}.
\end{align*}
The same results holds for $\Big\{\Sigma_{T}^{ N}\Big(I_{2j-1}\Big)-\Sigma_{T}^{ N*}\Big(I_{2j-1}\Big)\Big\}_{j=1, \ldots, k} $. So for any $\varrho \geq 2BN$, we have
\begin{equation}
\begin{aligned}
\mathbb{P}\Big(\sup_{l \leq T}\Big|\Sigma_{l}\Big| \geq 6\varrho \Big) &\leq \frac{2T}{\varrho}\frac{1}{\eta_{2}-1}N^{1-\eta_{2}}L(N)+\frac{2\mu B\tau(B)}{\varrho}  \\
&\quad + \mathbb{P}\Big(\sup_{l\leq T}\Big|\sum_{j \leq[l / B]} \Sigma_{T}^{ N*}\Big(I_{2j}\Big)\Big| \geq \varrho \Big)\\
&\quad +\mathbb{P}\Big(\sup_{l\leq T}\Big|\sum_{j \leq[l / B]} \Sigma_{T}^{ N*}\Big(I_{2j-1}\Big)\Big| \geq \varrho \Big).
\end{aligned}\label{equation.17}
\end{equation}
Since the variable $\Sigma_{T}^{ N*}\Big(I_{2j}\Big)$ are independent and centered, $\Big|\Sigma_{T}^{ N*}\Big(I_{2j}\Big)\Big|\leq BN $, by the Hoeffding's inequality in Lemma~\ref{lemma_Hoeffding's inequality}, we obtain the bound
\begin{eqnarray} \label{equation.18}
\mathbb{P}\bigg(\sup_{l \leq T}\Big|\sum_{j \leq[l / B]} \Sigma_{T}^{ N*}\Big(I_{2j}\Big)\Big| \geq \varrho \bigg)
\leq \exp\bigg({-\frac{\varrho^{2}}{2\mu B^{2} N^{2}}}\bigg).
\end{eqnarray}
Similarly, we also obtain
\begin{equation}\label{equation.19}
\mathbb{P}\bigg(\sup_{l \leq T}\Big|\sum_{j \leq[l / B]} \Sigma_{T}^{ N*}\Big(I_{2j-1}\Big)\Big| \geq \varrho \bigg) \leq  \exp(-\frac{\varrho^{2}}{2\mu B^{2} N^{2}}).
\end{equation}
From (\ref{equation.17})-(\ref{equation.19}), we have
\begin{equation*}
\mathbb{P}\bigg(\sup_{l \leq T}\Big|\Sigma_{l}\Big| \geq 6\varrho \bigg) \leq\frac{2T}{\varrho}\frac{c_{K}}{\eta_{2}-1}N^{1-\eta_{2}}L(N)+\frac{2\mu B\tau(B)}{\varrho}
+2\exp\bigg({-\frac{\varrho^{2}}{2\mu B^{2} N^{2}}}\bigg).
\end{equation*}
As $2B\mu \leq T \leq 3B\mu$ and the the process $ \{Z_{t,T}\} $ is exponentially $ \tau $-mixing \citep{Chwialkowski2014}, i.e., for a  for some $\varphi, \eta_{1}>1$, $\eta_{2}\geq 2$, $\tau(B) \leq e^{-\varphi B^{\eta_{1}}} $. Then we have
\begin{align}\nonumber
\mathbb{P}\bigg(\sup_{l \leq T}\Big|&\Sigma_{l}\Big| \geq \varrho \bigg) \\\nonumber
&\leq \frac{12T}{\varrho}\frac{1}{\eta_{2}-1}N^{1-\eta_{2}}L(N)+\frac{12\mu B \exp(-\varphi B^{\eta_{1}})}{\varrho}
+2\exp\bigg({-\frac{\varrho^{2}}{72\mu B^{2} N^{2}}}\bigg)\\
&\leq \frac{12T}{\varrho}N^{1-\eta_{2}}L(N)+ \frac{6T \exp(-\varphi B^{\eta_{1}})}{\varrho}
+2\exp\bigg({-\frac{\varrho^{2}}{36TBN^{2}}}\bigg).
\end{align}
For $\varrho>1$, we now choosing $N=\frac{\varrho^{d_{1}}}{2 \varrho^{\frac{1}{\eta_{1}}}}=\frac{\varrho^{d_{1}-1/\eta_{1}}}{2}$ and $B=\varrho^{\frac{1}{\eta_{1}}},$
with $d_{1} \in (0,1)$, we have $2BN \leq \varrho $, and
\begin{align*}
\mathbb{P}\bigg(\sup_{l \leq T}&\Big|\Sigma_{l}\Big| \geq \varrho \bigg)\\
& \leq \frac{12T}{\varrho} (\frac{\varrho^{d_{1}}}{2 \varrho^{\frac{1}{\eta_{1}}}})^{1-\eta_{2}}L(\frac{\varrho^{d_{1}}}{2 \varrho^{\frac{1}{\eta_{1}}}})+ \frac{6T \exp(-\varphi \varrho)}{\varrho} +2\exp\big({-\frac{\varrho^{2}}{36T\varrho^{\frac{1}{\eta_{1}}}(\frac{\varrho^{d_{1}}}{2 \varrho^{\frac{1}{\eta_{1}}}})^{2}}}\big)\\
& \leq \frac{12T \varrho^{(d_{1}-1/\eta_{1})(1-\eta_{2})-1}}{ 2^{1-\eta_{2}}}L(\frac{\varrho^{d_{1}-1/\eta_{1}}}{2})
+ \frac{6T \exp(-\varphi \varrho)}{\varrho} +2\exp\big({-\frac{1}{9T\varrho^{2d_{1}-1/\eta_{1}-2}}}\big).
\end{align*}
Then for $\varrho>1/T^{\vartheta} >1/T$, $0<\vartheta<1$, we have
\begin{align*}
\mathbb{P}\bigg(\frac{1}{T}\Big|\sum_{t=1}^{T} Z_{t, T} \Big| \geq \varrho \bigg) 
&\leq \frac{12T^{(d_{1}-1/\eta_{1})(1-\eta_{2})} \varrho^{(d_{1}-1/\eta_{1})(1-\eta_{2})-1}}{ 2^{1-\eta_{2}}}L(\frac{(\varrho T)^{d_{1}-1/\eta_{1}}}{2})\\
&\quad + \frac{6 \exp(-\varphi \varrho T)}{\varrho} +2\exp\big({-\frac{1}{9T^{2d_{1}-1/\eta_{1}-1}\varrho^{2d_{1}-1/\eta_{1}-2}}}\big).
\end{align*}
Note that we want each term above tends to $0$ when $T$ is very large, for the term
\begin{align*}
&\frac{12T^{(d_{1}-1/\eta_{1})(1-\eta_{2})} \varrho^{(d_{1}-1/\eta_{1})(1-\eta_{2})-1}}{ 2^{1-\eta_{2}}}L(\frac{(\varrho T)^{d_{1}-1/\eta_{1}}}{2})\\
&\simeq T^{(d_{1}-1/\eta_{1})(1-\eta_{2})}T^{(1-(d_{1}-1/\eta_{1})(1-\eta_{2}))\vartheta}\\
&\simeq T^{(1-\vartheta)(d_{1}-1/\eta_{1})(1-\eta_{2})+\vartheta},
\end{align*}
it means that we want $(1-\vartheta)(d_{1}-1/\eta_{1})(1-\eta_{2})+\vartheta<0$, i.e., $d_{1}>\frac{\vartheta}{(1-\vartheta)(\eta_{2}-1)}+\frac{1}{\eta_{1}}$.
Obviously, the term
\begin{align*}
\frac{6 \exp(-\varphi \varrho T)}{\varrho} \simeq \frac{T^{\vartheta}}{\exp(T^{1-\vartheta})},
\end{align*}
tends to $0$ when $T$ is very large. For the term
\begin{align*}
2\exp\big({-\frac{1}{9T^{2d_{1}-1/\eta_{1}-1}\varrho^{2d_{1}-1/\eta_{1}-2}}}\big)
\simeq 2\exp(-T^{1-2d_{1}+1/\eta_{1}-(2-2d_{1}+1/\eta_{1})\vartheta}),
\end{align*}
it means that $1-2d_{1}+1/\eta_{1}-(2-2d_{1}+1/\eta_{1})\vartheta>0$, i.e., $d_{1}<\frac{1-2\vartheta}{2(1-\vartheta)}+\frac{1}{2\eta_{1}}$. 
Then we have $\frac{\vartheta}{(1-\vartheta)(\eta_{2}-1)}+\frac{1}{\eta_{1}} < d_{1}< \frac{1-2\vartheta}{2(1-\vartheta)}+\frac{1}{2\eta_{1}}$ with $0<\vartheta<\frac{(\eta_{1}-1)(\eta_{2}-1)}{1+(2\eta_{1}-1)\eta_{2}}$, we get
\begin{align*}
\mathbb{P}\bigg(\frac{1}{T}\Big|\sum_{t=1}^{T} Z_{t, T} \Big| \geq \varrho \bigg) 
&\leq \frac{12T^{(d_{1}-1/\eta_{1})(1-\eta_{2})} \varrho^{(d_{1}-1/\eta_{1})(1-\eta_{2})-1}}{ 2^{1-\eta_{2}}}L(\frac{(\varrho T)^{d_{1}-1/\eta_{1}}}{2})\\
&+ \frac{6 \exp(-\varphi \varrho T)}{\varrho} +2\exp\big({-\frac{1}{9T^{2d_{1}-1/\eta_{1}-1}\varrho^{2d_{1}-1/\eta_{1}-2}}}\big).
\end{align*}

\subsection{Proof of Proposition~\ref{proposition.Locally-Stationary-regularly-varying-heavy-tailed}}
Let $ \Phi\Big(\frac{r}{T}, X_{r,T}^{j} \Big)=\frac{1}{T}\sum_{t=1}^{T} W_{t,r,T}^{j}=\frac{1}{T}\sum_{t=1}^{T}K_{h,1}\big(\frac{t}{T}-\frac{r}{T}\big) K_{h,2}(X_{t,T}^{j}-X_{r, T}^{j}) \varepsilon_{t,T}, \text{ for } t=1,\ldots,T$. Set $\tau_{T}=T \log T $, we write
\begin{equation*}
\Phi \Big(\frac{r}{T}, X_{r,T}^{j} \Big) = \Phi_{1} \Big(\frac{r}{T}, X_{r,T}^{j} \Big) + \Phi_{2} \Big(\frac{r}{T}, X_{r,T}^{j} \Big),  
\end{equation*} 
where 
\begin{align*}
\Phi_{1} \Big(\frac{r}{T}, X_{r,T}^{j} \Big)=\frac{1}{T} \sum_{t=1}^{T} K_{h,1}\big(\frac{t}{T}-\frac{r}{T}\big) K_{h,2}(X_{t,T}^{j}-X_{r, T}^{j}) \varepsilon_{t, T} \mathbb{I}\Big(\Big|\varepsilon_{t, T}\Big| \leq \tau_{T}\Big), \\ 
\Phi_{2} \Big(\frac{r}{T}, X_{r,T}^{j} \Big)=\frac{1}{T} \sum_{t=1}^{T} K_{h,1}\big(\frac{t}{T}-\frac{r}{T}\big) K_{h,2}(X_{t,T}^{j}-X_{r, T}^{j}) \varepsilon_{t, T} \mathbb{I}\Big(\Big|\varepsilon_{t, T}\Big|>\tau_{T}\Big) .  
\end{align*}
It follows that 
\begin{equation}\label{equationrvh22}
\mathbb P (\big|\Phi\Big(\frac{r}{T}, X_{r,T}^{j} \Big)\big| > 2\gamma) \leq\mathbb  P (\big|\Phi_{1}\Big(\frac{r}{T}, X_{r,T}^{j} \Big)\big| > \gamma)+ \mathbb P (\big|\Phi_{2}\Big(\frac{r}{T}, X_{r,T}^{j} \Big)\big| > \gamma). 
\end{equation}
For $ \Phi_{2} \Big(\frac{r}{T}, X_{r,T}^{j} \Big) $, defining $ b_{T}=\sqrt{\log T / T }$, then for any $\gamma \geq C_{K} \sqrt{\log T / T }$, where $C_{K}=C_{K_{1}}C_{K_{2}}$, it has that 
\begin{equation}\label{equationrvh23}
\begin{aligned}
\mathbb P\Big(\Big|\Phi_{2} &\Big(\frac{r}{T}, X_{r,T}^{j} \Big) \Big| \geq \gamma \Big) \\
& \leq\mathbb  P\Big( \big|\Phi_{2} \Big(\frac{r}{T}, X_{r,T}^{j} \Big) \big| \geq C_{K} b_{T} \Big)\\
& =\mathbb  P\Big(\Big|\frac{1}{T} \sum_{t=1}^{T} K_{h,1}\big(\frac{t}{T}-\frac{r}{T}\big) K_{h,2}(X_{t,T}^{j}-X_{r, T}^{j}) \varepsilon_{t, T} \mathbb{I}\Big(\Big|\varepsilon_{t, T}\Big|>\tau_{T}\Big)\Big| \geq C_{K} b_{T} \Big)\\
& \leq\mathbb  P\big(\Big|\frac{1}{T} T C_{K} \varepsilon_{t, T} \mathbb{I}\Big(\Big|\varepsilon_{t, T}\Big|>\tau_{T}\Big)\Big| \geq C_{K} b_{T} \big) \\
& \leq \mathbb P\big(\big|\varepsilon_{t T}\big|>\tau_{T}, \text{ for some } 1 \leq t \leq T \Big) \\
& \leq \tau_{T}^{-\eta_{2}}L(\tau_{T})\\
& \leq (T\log T)^{-\eta_{2}}L((T\log T)).
\end{aligned}
\end{equation}
We now turn to the analysis of $ \Phi_{1} \Big(\frac{r}{T}, X_{r,T}^{j} \Big) $. From Assumption~\ref{Ass:1}, $\{X_{t,T}\}_{t=1}^{T}$ is a locally stationary sequence, which can be approximated locally by a strictly stationary sequence $\{X_{t}(u )\}_{t \in \mathbb{Z}}$. Since $K_{1}$ and $K_{2}$ are Lipschitz and bounded, with Remark~\ref{remark.U_{t,T}}, i.e., $\norm{X_{t, T}-X_{t}(u)} \leq \big(\big|\frac{t}{T}-u\big|+\frac{1}{T}\big) U_{t, T}(u) \leq C_{U} \big(\big|\frac{t}{T}-u\big|+\frac{1}{T}\big)$, where $u \in [0,1]$ and $C_{U}$ is a constant, we can infer that
\begin{align*}
& \big|K_{h,1}\big(\frac{t}{T}-\frac{r}{T}\big) K_{h,2}(X_{t,T}^{j}-X_{r, T}^{j})-K_{h,1}\big(u-\frac{r}{T}\big) K_{h,2}(X_{t}^{j}(u)-X_{r, T}^{j})\big|\\
&= \big|K_{h,1}\big(\frac{t}{T}-\frac{r}{T}\big) K_{h,2}(X_{t,T}^{j}-X_{r, T}^{j})-K_{h,1}\big(\frac{t}{T}-\frac{r}{T}\big) K_{h,2}(X_{t}^{j}(u)-X_{r, T}^{j}) \\
&\quad+ K_{h,1}\big(\frac{t}{T}-\frac{r}{T}\big) K_{h,2}(X_{t}^{j}(u)-X_{r, T}^{j})-K_{h,1}\big(u-\frac{r}{T}\big) K_{h,2}(X_{t}^{j}(u)-X_{r, T}^{j})\big| \\
&\leq K_{h,1}\big(\frac{t}{T}-\frac{r}{T}\big) \big|K_{h,2}(X_{t,T}^{j}-X_{r, T}^{j})- K_{h,2}(X_{t}^{j}(u)-X_{r, T}^{j})\big|\\
&\quad+ \big|K_{h,1}\big(\frac{t}{T}-\frac{r}{T}\big)-K_{h,1}\big(u-\frac{r}{T}\big)\big| K_{h,2}(X_{t}^{j}(u)-X_{r, T}^{j}) \\
&\leq K_{h,1}\big(\frac{t}{T}-\frac{r}{T}\big) \frac{L_{K_{2}}}{h}|X_{t,T}^{j}-X_{t}^{j}(u)|]+\frac{L_{K_{1}}}{h}|\frac{t}{T}-u| K_{h,2}(X_{t}^{j}(u)-X_{r, T}^{j}) \\
&\leq \frac{C_{K_{1}}L_{K_{2}}}{h} \big(\big|\frac{t}{T}-u\big|+\frac{1}{T}\big) U_{t, T}(u) + \frac{C_{K_{2}}L_{K_{1}}}{h}|\frac{t}{T}-u|\\
& \leq \frac{C_{U}C_{K_{1}}L_{K_{2}}}{h} \big(\big|\frac{t}{T}-u\big|+\frac{1}{T}\big) + \frac{C_{K_{2}}L_{K_{1}}}{h}|\frac{t}{T}-u|\\
&\leq \frac{C_{K,L}}{h} \big(1+\frac{1}{T}\big)+\frac{C_{K,L}}{h}\\
& \leq \frac{C_{K,L}(2T+1)}{Th},
\end{align*}
where $C_{K,L}=\max\{C_{U}C_{K_{1}}L_{K_{2}},C_{K_{2}}L_{K_{1}}\}$.
Defining
\begin{equation*}
\tilde{\Phi}_{1}(\frac{r}{T}, X_{r, T}^{j})=\frac{1}{T } \sum_{t=1}^{T} K_{h,1}\big(u-\frac{r}{T}\big) K_{h,2}(X_{t}^{j}(u)-X_{r, T}^{j})\big) \varepsilon_{t,T} \ind{\Big(\big|\varepsilon_{t,T}\big| \leq \tau_{T}\Big)}, 
\end{equation*}
we have 
\begin{align*}
&\Phi_{1} \big(\frac{r}{T}, X_{r, T}^{j} \big)-\tilde{\Phi}_{1} \big(\frac{r}{T}, X_{r, T}^{j}\big)\\
& \leq \frac{1}{T } \sum_{t=1}^{T} \Big(K_{h,1}\big(\frac{t}{T}-\frac{r}{T}\big) K_{h,2}(X_{t,T}^{j}-X_{r, T}^{j})-K_{h,1}\big(u-\frac{r}{T}\big) K_{h,2}(X_{t}^{j}(u)-X_{r, T}^{j})\Big)\varepsilon_{t,T} \ind{\Big(\big|\varepsilon_{t,T}\big| \leq \tau_{T}\Big)}\\
&\leq \frac{1}{T } \sum_{t=1}^{T} \frac{C_{K,L}(2T+1)}{Th} \varepsilon_{t,T} \ind{\Big(\big|\varepsilon_{t,T}\big| \leq \tau_{T}\Big)},
\end{align*}
We can have that
\begin{align*}
\Phi_{1} \big(\frac{r}{T}, X_{r, T}^{j} \big)&= \Phi_{1} \big(\frac{r}{T}, X_{r, T}^{j} \big)-\tilde{\Phi}_{1} \big(\frac{r}{T}, X_{r, T}^{j}\big) +\tilde{\Phi}_{1} \big(\frac{r}{T}, X_{r, T}^{j}\big) \\
&\leq \frac{1}{T } \sum_{t=1}^{T} \frac{C_{K,L}(2T+1)}{Th} \varepsilon_{t,T} \ind{\big(\big|\varepsilon_{t,T}\big| \leq \tau_{T}\big)} +\tilde{\Phi}_{1} \big(\frac{r}{T}, X_{r, T}^{j}\big).
\end{align*} 
It follows that
\begin{equation}\label{equationbeta13bis}
\mathbb P \Big( \Big|\Phi_{1} \Big(\frac{t}{T}, X_{t,T}^{j} \Big)\Big| \geq \gamma \Big)
\leq Q_{T}+\tilde{Q}_{T},
\end{equation}
where 
\begin{equation*}
Q_{T}=\mathbb P \big( |\frac{1}{T } \sum_{t=1}^{T} \frac{C_{K,L}(2T+1)}{Th} \varepsilon_{t,T} \ind{\big(\big|\varepsilon_{t,T}\big| \leq \tau_{T}\big)} |\geq \frac{\gamma}{{2}} \big),
\end{equation*}
and
\begin{equation*}
\tilde{Q}_{T}=\mathbb P \big( |\tilde{\Phi}_{1} \big(\frac{r}{T}, X_{r, T}^{j}\big)| \geq \frac{\gamma}{2} \big).
\end{equation*}
To bound $ \tilde{Q}_{T} $, we write
\begin{equation*}
\tilde{Q}_{T}=\mathbb P \big(|\tilde{\Phi}_{1} \big(\frac{r}{T}, X_{r, T}^{j}\big)| \geq \frac{\gamma}{2} \big) \leq \mathbb P\Big(\frac{1}{T}\Big|\sum_{t=1}^{T} Z_{t, T}(u,  X_{t}^{j}(u))\Big|\geq \frac{\gamma}{2}\Big),  
\end{equation*}
with
\begin{equation*}
Z_{t, T}(\frac{r}{T}, X_{r, T}^{j})= K_{h,1}\big(u-\frac{r}{T}\big) K_{h,2}(X_{t}^{j}(u)-X_{r, T}^{j}) \varepsilon_{t,T} \ind{\Big(\big|\varepsilon_{t,T}\big| \leq \tau_{T}\Big)}.  
\end{equation*}
Note that $K_{1}$ and $K_{2}$ are bounded, from Assumption~\ref{Ass:2} and Lemma~\ref{lemma_sub_beta_mixing}, we have $\{ \varepsilon_{t,T} \} $ is $\beta$-mixing sequence, i.e. $\{ \varepsilon_{t,T} \} $ follows the $\beta$-mixing sub-Weibull distribution with mixing coefficients $ \beta(k) \leq \exp \bigg(-\varphi k^{\eta_{1}}\bigg) $, for some $ \varphi, \eta_{1}>1$. We now bound $ \tilde{Q}_{T}$ with the help of Proposition~\ref{proposition.Regularly varying heavy-tailed}, for $\gamma>  
2C_{K}/T^{\vartheta}$ with $0<\vartheta<\frac{(\eta_{1}-1)(\eta_{2}-1)}{1+(2\eta_{1}-1)\eta_{2}}$, then we have
\begin{align}\label{equationbeta25}
\tilde{Q}_{T}&\leq \mathbb P\Big(\frac{1}{T}\Big|\sum_{t=1}^{T} Z_{t, T}(\frac{r}{T}, X_{r, T}^{j})\Big|\geq \frac{\gamma}{2}\Big)\\ \nonumber
& \leq \mathbb P\Big(\frac{C_{K}}{T}\Big|\sum_{t=1}^{T}\varepsilon_{t,T} \ind{\Big(\big|\varepsilon_{t,T}\big| \leq \tau_{T}\Big)}\Big|\geq \frac{\gamma}{2}\Big)\\ \nonumber
&\leq \frac{12T^{(d_{1}-1/\eta_{1})(1-\eta_{2})} (\gamma/2C_{K})^{(d_{1}-1/\eta_{1})(1-\eta_{2})-1}}{ 2^{1-\eta_{2}}}L(\frac{(\gamma T/2C_{K})^{d_{1}-1/\eta_{1}}}{2})\\
&\quad + \frac{6 \exp(-\varphi (\gamma/2C_{K}) T)}{(\gamma/2C_{K})} +2\exp\big({-\frac{1}{9T^{2d_{1}-1/\eta_{1}-1}(\gamma/2C_{K})^{2d_{1}-1/\eta_{1}-2}}}\big),
\end{align}
where $\frac{\vartheta}{(1-\vartheta)(\eta_{2}-1)}+\frac{1}{\eta_{1}} < d_{1}< \frac{1-2\vartheta}{2(1-\vartheta)}+\frac{1}{2\eta_{1}}$, $C_{K}=C_{K_{1}}C_{K_{2}}$ and $\eta_{2}\geq 2$.

We now bound $ Q_{T}$ with the help of Proposition~\ref{proposition.Regularly varying heavy-tailed}, for $\gamma> \frac{2C_{K,L}(2T+1)}{T^{1+\vartheta}h}$ with $0<\vartheta<\frac{(\eta_{1}-1)(\eta_{2}-1)}{1+(2\eta_{1}-1)\eta_{2}}$, then we have
\begin{align}\label{equationrvh26}
Q_{T}&= \mathbb P \bigg(\frac{1}{T} |\sum_{t=1}^{T} \frac{C_{K,L}(2T+1)}{Th} \varepsilon_{t,T} \ind{\Big(\big|\varepsilon_{t,T}\big| \leq \tau_{T}\Big)}|\geq \frac{\gamma}{2}\bigg)\\ \nonumber
& \leq \frac{12T^{(d_{1}-1/\eta_{1})(1-\eta_{2})} (\frac{\gamma Th}{2 C_{K,L}(2T+1)})^{(d_{1}-1/\eta_{1})(1-\eta_{2})-1}}{ 2^{1-\eta_{2}}}L(\frac{(\frac{\gamma T^{2}h}{2 C_{K,L}(2T+1)})^{d_{1}-1/\eta_{1}}}{2})\\
&\quad + \frac{6 \exp(-\varphi (\frac{\gamma T^{2}h}{2 C_{K,L}(2T+1)}))}{(\frac{\gamma Th}{2 C_{K,L}(2T+1)})} +2\exp\big({-\frac{1}{9T^{2d_{1}-1/\eta_{1}-1}(\frac{\gamma Th}{2 C_{K,L}(2T+1)})^{2d_{1}-1/\eta_{1}-2}}}\big).\\ \nonumber
& \leq \frac{12T^{2(d_{1}-1/\eta_{1})(1-\eta_{2})-1} (\gamma h)^{(d_{1}-1/\eta_{1})(1-\eta_{2})-1}}{ 2^{1-\eta_{2}}(2C_{K,L}(2T+1))^{(d_{1}-1/\eta_{1})(1-\eta_{2})-1}}L(\frac{(\frac{\gamma T^{2}h}{2 C_{K,L}(2T+1)})^{d_{1}-1/\eta_{1}}}{2})\\
&\quad + \frac{12 C_{K,L}(2T+1)\exp(-\varphi (\frac{\gamma T^{2}h}{2 C_{K,L}(2T+1)}))}{\gamma Th}+2\exp\big({-\frac{(2C_{K,L}(2T+1))^{2d_{1}-1/\eta_{1}-2}}{9T^{4d_{1}-2/\eta_{1}-3}(\gamma h)^{2d_{1}-1/\eta_{1}-2}}}\big),
\end{align}
where $\frac{\vartheta}{(1-\vartheta)(\eta_{2}-1)}+\frac{1}{\eta_{1}} < d_{1}< \frac{1-2\vartheta}{2(1-\vartheta)}+\frac{1}{2\eta_{1}}$ and $\eta_{2}\geq 2$. 

From (\ref{equationrvh22})-(\ref{equationrvh26}), for $\gamma>\max\{C_{K}\sqrt{\log T / T },2C_{K}/T^{\vartheta},\frac{2C_{K,L}(2T+1)}{T^{1+\vartheta}h}\} =\frac{2C_{K,L}(2T+1)}{T^{1+\vartheta}h}$ (since $0<\vartheta<\frac{\eta_{1-1}}{4\eta_{1}-1}<\frac{1}{4}$), we further get that
\begin{align*}
\mathbb{P} \Big(\big|\Phi\Big(&\frac{r}{T}, X_{r,T}^{j} \Big)\big| > 2\gamma\Big) \\
&\leq (T\log T)^{-\eta_{2}}L((T\log T))\\
& \quad +\frac{12T^{(d_{1}-1/\eta_{1})(1-\eta_{2})} (\gamma/2C_{K})^{(d_{1}-1/\eta_{1})(1-\eta_{2})-1}}{ 2^{1-\eta_{2}}}L(\frac{(\gamma T/2C_{K})^{d_{1}-1/\eta_{1}}}{2})\\
&\quad + \frac{6 \exp(-\varphi (\gamma/2C_{K}) T)}{(\gamma/2C_{K})} +2\exp\big({-\frac{1}{9T^{2d_{1}-1/\eta_{1}-1}(\gamma/2C_{K})^{2d_{1}-1/\eta_{1}-2}}}\big)\\
& \quad + \frac{12T^{2(d_{1}-1/\eta_{1})(1-\eta_{2})-1} (\gamma h)^{(d_{1}-1/\eta_{1})(1-\eta_{2})-1}}{ 2^{1-\eta_{2}}(2C_{K,L}(2T+1))^{(d_{1}-1/\eta_{1})(1-\eta_{2})-1}}L(\frac{(\frac{\gamma T^{2}h}{2 C_{K,L}(2T+1)})^{d_{1}-1/\eta_{1}}}{2})\\
&\quad+ \frac{12 C_{K,L}(2T+1)\exp(-\varphi (\frac{\gamma T^{2}h}{2 C_{K,L}(2T+1)}))}{\gamma Th}+2\exp\big({-\frac{(2C_{K,L}(2T+1))^{2d_{1}-1/\eta_{1}-2}}{9T^{4d_{1}-2/\eta_{1}-3}(\gamma h)^{2d_{1}-1/\eta_{1}-2}}}\big).
\end{align*} 
Then, for $\gamma> \frac{2C_{K,L}(2T+1)}{T^{1+\vartheta}h}$, we have
\begin{align*}
\mathbb{P} \Big(\big|\Phi\Big(&\frac{r}{T}, X_{r,T}^{j} \Big)\big| > \gamma\Big) \\
&\leq (T\log T)^{-\eta_{2}}L(T\log T)\\
& \quad +\frac{12T^{(d_{1}-1/\eta_{1})(1-\eta_{2})} (\gamma/4C_{K})^{(d_{1}-1/\eta_{1})(1-\eta_{2})-1}}{ 2^{1-\eta_{2}}}L(\frac{(\gamma T/4C_{K})^{d_{1}-1/\eta_{1}}}{2})\\
&\quad + \frac{6 \exp(-\varphi (\gamma/4C_{K}) T)}{(\gamma/4C_{K})} +2\exp\big({-\frac{1}{9T^{2d_{1}-1/\eta_{1}-1}(\gamma/4C_{K})^{2d_{1}-1/\eta_{1}-2}}}\big)\\
& \quad + \frac{12T^{2(d_{1}-1/\eta_{1})(1-\eta_{2})-1} (\gamma h)^{(d_{1}-1/\eta_{1})(1-\eta_{2})-1}}{ 2^{1-\eta_{2}}(4C_{K,L}(2T+1))^{(d_{1}-1/\eta_{1})(1-\eta_{2})-1}}L(\frac{(\frac{\gamma T^{2}h}{4C_{K,L}(2T+1)})^{d_{1}-1/\eta_{1}}}{2})\\
&\quad+ \frac{24C_{K,L}(2T+1)\exp(-\varphi (\frac{\gamma T^{2}h}{4 C_{K,L}(2T+1)}))}{\gamma Th}+2\exp\big({-\frac{(4C_{K,L}(2T+1))^{2d_{1}-1/\eta_{1}-2}}{9T^{4d_{1}-2/\eta_{1}-3}(\gamma h)^{2d_{1}-1/\eta_{1}-2}}}\big).
\end{align*} 
Hence, we obtain
\begin{align*}
\mathbb{P} \Big(\big|\Phi\Big(&\frac{r}{T}, X_{r,T}^{j} \Big)\big| > \gamma\Big) \\
&\leq (T\log T)^{-\eta_{2}}L(T\log T)\\
& \quad +\frac{12T^{(d_{1}-1/\eta_{1})(1-\eta_{2})} \gamma^{(d_{1}-1/\eta_{1})(1-\eta_{2})-1}}{ 2^{1-\eta_{2}}(4C_{K})^{(d_{1}-1/\eta_{1})(1-\eta_{2})-1}}L(\frac{(\gamma T/4C_{K})^{d_{1}-1/\eta_{1}}}{2})\\
&\quad + \frac{24C_{K} \exp(-\varphi \gamma T/4C_{K})}{\gamma } +2\exp\big({-\frac{1}{9T^{2d_{1}-1/\eta_{1}-1}(\gamma/4C_{K})^{2d_{1}-1/\eta_{1}-2}}}\big)\\
& \quad + \frac{12T^{2(d_{1}-1/\eta_{1})(1-\eta_{2})-1} (\gamma h)^{(d_{1}-1/\eta_{1})(1-\eta_{2})-1}}{ 2^{1-\eta_{2}}(4C_{K,L}(2T+1))^{(d_{1}-1/\eta_{1})(1-\eta_{2})-1}}L(\frac{( \gamma T^{2}h)^{d_{1}-1/\eta_{1}}}{2(4C_{K,L}(2T+1))^{d_{1}-1/\eta_{1}}})\\
&\quad+ \frac{24C_{K,L}(2T+1)\exp(-\varphi (\frac{\gamma T^{2}h}{4 C_{K,L}(2T+1)}))}{\gamma Th}+2\exp\big({-\frac{(4C_{K,L}(2T+1))^{2d_{1}-1/\eta_{1}-2}}{9T^{4d_{1}-2/\eta_{1}-3}(\gamma h)^{2d_{1}-1/\eta_{1}-2}}}\big).
\end{align*} 
Note that we want each term above tends to $0$ when $T$ is very large, let $h=\bigO(T^{-\xi})$, $0<\xi<1$, then we have $\gamma=\bigO(T^{\xi-\vartheta})$ and $\gamma h=\bigO(T^{-\vartheta})$. 

For the term
\begin{align*}
&\frac{12T^{(d_{1}-1/\eta_{1})(1-\eta_{2})} \gamma^{(d_{1}-1/\eta_{1})(1-\eta_{2})-1}}{ 2^{1-\eta_{2}}(4C_{K})^{(d_{1}-1/\eta_{1})(1-\eta_{2})-1}}L(\frac{(\gamma T/4C_{K})^{d_{1}-1/\eta_{1}}}{2})\\
& \simeq T^{(d_{1}-1/\eta_{1})(1-\eta_{2})} T^{(\xi-\vartheta)((d_{1}-1/\eta_{1})(1-\eta_{2})-1)}\\
& \simeq T^{(d_{1}-1/\eta_{1})(1-\eta_{2})+(\xi-\vartheta)((d_{1}-1/\eta_{1})(1-\eta_{2})-1)},
\end{align*}
we want $(d_{1}-1/\eta_{1})(1-\eta_{2})+(\xi-\vartheta)((d_{1}-1/\eta_{1})(1-\eta_{2})-1)<0$, i.e., $\xi >\vartheta-\frac{(d_{1}-1/\eta_{1})(\eta_{2}-1)}{1+(d_{1}-1/\eta_{1})(\eta_{2}-1)}$, since $\vartheta-\frac{(d_{1}-1/\eta_{1})(\eta_{2}-1)}{1+(d_{1}-1/\eta_{1})(\eta_{2}-1)}<0$ with $\frac{\vartheta}{(1-\vartheta)(\eta_{2}-1)}+\frac{1}{\eta_{1}} < d_{1}$, obviously, it is satisfied.

For the term
\begin{align*}
2\exp\big({-\frac{1}{9T^{2d_{1}-1/\eta_{1}-1}(\gamma/4C_{K})^{2d_{1}-1/\eta_{1}-2}}}\big) 
\simeq 2\exp(-T^{1-2d_{1}+1/\eta_{1}+(\xi-\vartheta)(2-2d_{1}+1/\eta_{1})}),
\end{align*}
we want $1-2d_{1}+1/\eta_{1}+(\xi-\vartheta)(2-2d_{1}+1/\eta_{1})>0$, i.e., $\xi>\vartheta-\frac{1-2d_{1}+1/\eta_{1}}{2-2d_{1}+1/\eta_{1}}$, since $\vartheta-\frac{1-2d_{1}+1/\eta_{1}}{2-2d_{1}+1/\eta_{1}}<0$ with $d_{1}< \frac{1-2\vartheta}{2(1-\vartheta)}+\frac{1}{2\eta_{1}}$, obviously, it is satisfied. 

For the term 
\begin{align*}
& \frac{12T^{2(d_{1}-1/\eta_{1})(1-\eta_{2})-1} (\gamma h)^{(d_{1}-1/\eta_{1})(1-\eta_{2})-1}}{ 2^{1-\eta_{2}}(4C_{K,L}(2T+1))^{(d_{1}-1/\eta_{1})(1-\eta_{2})-1}}L(\frac{( \gamma T^{2}h)^{d_{1}-1/\eta_{1}}}{2(4C_{K,L}(2T+1))^{d_{1}-1/\eta_{1}}})\\
& \simeq \frac{T^{2(d_{1}-1/\eta_{1})(1-\eta_{2})-1} T^{-\vartheta((d_{1}-1/\eta_{1})(1-\eta_{2})-1)}}{ T^{(d_{1}-1/\eta_{1})(1-\eta_{2})-1}}\\
& \simeq T^{(d_{1}-1/\eta_{1})(1-\eta_{2})+\vartheta(1-(d_{1}-1/\eta_{1})(1-\eta_{2}))},
\end{align*}
we want $(d_{1}-1/\eta_{1})(1-\eta_{2})+\vartheta(1-(d_{1}-1/\eta_{1})(1-\eta_{2}))<0$, obviously, it is satisfied with $\frac{\vartheta}{(1-\vartheta)(\eta_{2}-1)}+\frac{1}{\eta_{1}} < d_{1}$.

For the term 
\begin{align*}
2\exp\big({-\frac{(4C_{K,L}(2T+1))^{2d_{1}-1/\eta_{1}-2}}{9T^{4d_{1}-2/\eta_{1}-3}(\gamma h)^{2d_{1}-1/\eta_{1}-2}}}\big)
& \simeq 2\exp\big({-\frac{T^{2d_{1}-1/\eta_{1}-2}}{T^{4d_{1}-2/\eta_{1}-3} T^{-\vartheta(2d_{1}-1/\eta_{1}-2)}}}\big)\\
& \simeq 2\exp\big(-T^{1-2d_{1}+1/\eta_{1} +\vartheta(2d_{1}-1/\eta_{1}-2)}),
\end{align*}
we want $1-2d_{1}+1/\eta_{1} +\vartheta(2d_{1}-1/\eta_{1}-2)>0$, obviously, it is satisfied with $d_{1}< \frac{1-2\vartheta}{2(1-\vartheta)}+\frac{1}{2\eta_{1}}$. Obviously, other terms tend to $0$ at large T. 

Then for $\gamma>\frac{2C_{K,L}(2T+1)}{T^{1+\vartheta}h}$, $0<\vartheta<\frac{(\eta_{1}-1)(\eta_{2}-1)}{1+(2\eta_{1}-1)\eta_{2}}$, $\frac{\vartheta}{(1-\vartheta)(\eta_{2}-1)}+\frac{1}{\eta_{1}} < d_{1}< \frac{1-2\vartheta}{2(1-\vartheta)}+\frac{1}{2\eta_{1}}$, let $h=\bigO(T^{-\xi})$ with $0<\xi<1$, we have 
\begin{align*}
\mathbb{P} \Big(\big|\Phi\Big(&\frac{r}{T}, X_{r,T}^{j} \Big)\big| > \gamma\big) \\
&\leq (T\log T)^{-\eta_{2}}L(T\log T)\\
& \quad +\frac{12T^{(d_{1}-1/\eta_{1})(1-\eta_{2})} \gamma^{(d_{1}-1/\eta_{1})(1-\eta_{2})-1}}{ 2^{1-\eta_{2}}(4C_{K})^{(d_{1}-1/\eta_{1})(1-\eta_{2})-1}}L(\frac{(\gamma T/4C_{K})^{d_{1}-1/\eta_{1}}}{2})\\
&\quad + \frac{24C_{K} \exp(-\varphi \gamma T/4C_{K})}{\gamma } +2\exp\big({-\frac{1}{9T^{2d_{1}-1/\eta_{1}-1}(\gamma/4C_{K})^{2d_{1}-1/\eta_{1}-2}}}\big)\\
& \quad + \frac{12T^{2(d_{1}-1/\eta_{1})(1-\eta_{2})-1} (\gamma h)^{(d_{1}-1/\eta_{1})(1-\eta_{2})-1}}{ 2^{1-\eta_{2}}(4C_{K,L}(2T+1))^{(d_{1}-1/\eta_{1})(1-\eta_{2})-1}}L(\frac{( \gamma T^{2}h)^{d_{1}-1/\eta_{1}}}{2(4C_{K,L}(2T+1))^{d_{1}-1/\eta_{1}}})\\
&\quad+ \frac{24C_{K,L}(2T+1)\exp(-\varphi (\frac{\gamma T^{2}h}{4 C_{K,L}(2T+1)}))}{\gamma Th}+2\exp\big({-\frac{(4C_{K,L}(2T+1))^{2d_{1}-1/\eta_{1}-2}}{9T^{4d_{1}-2/\eta_{1}-3}(\gamma h)^{2d_{1}-1/\eta_{1}-2}}}\big).
\end{align*}

\section{Proofs of oracle inequalities} \label{sec:proofs}

\subsection{Proof of Theorem~\ref{theorem:least_sq_oracle_ineq_lasso_pen_subweibull} : oracle inequality for sub-Weibull distribution with Lasso} \label{subsec:proof_of_theorem_theorem:least_sq_oracle_ineq_lasso_pen}
By the minimizing property of $\bm{\theta}$, it follows that
\begin{equation*}
\frac{1}{T}\|\bY-\bK \hat{\bm{\theta}}\|_{2}^{2}+\lambda \norm{\hat{\bm{\theta}}}_1 \leq \frac{1}{T}\|\bY-\bK\bm{\theta}\|_{2}^{2}+\lambda \norm{\bm{\theta}}_1,
\end{equation*}
which, using that $ Y_{t,T}=m^{\star}\big(\frac{t}{T}, X_{t, T}\big)+\varepsilon_{t, T} $, $t=1,\ldots,T$, yields
\begin{equation*}
\frac{1}{T}\norm{\bM^\star+\beps-\bK\hat{\bm{\theta}}}_{2}^{2}+\lambda \norm{\hat{\bm{\theta}}}_1 \leq \frac{1}{T}\norm{\bM^\star+\beps -\bK \bm{\theta}}_{2}^{2}+\lambda \norm{\bm{\theta}}_1,
\end{equation*}
where $\bM^\star = \big(m^{\star}\big(\frac 1T, X_{1, T}\big), \cdots, m^{\star}\big(1, X_{T, T}\big)\big)^\top \in \mathbb{R}^T$ and $\beps = (\varepsilon_{1,T}, \ldots, \varepsilon_{T,T})^\top.$ 
Or, equivalently,
\begin{equation*}
\begin{aligned}
&\frac{1}{T}\norm{\bM^\star-\bK \hat{\bm{\theta}}}_{2}^{2}+\frac{1}{T} \norm{\beps}_{2}^{2}+\frac{2}{T}\langle \bM^\star-\bK \hat{\bm{\theta}},\beps \rangle+\lambda \norm{\hat{\bm{\theta}}}_1 \\
&\leq \frac{1}{T}\norm{\bM^\star-\bK \bm{\theta}}_{2}^{2}+\frac{1}{T} \norm{\beps}_{2}^{2}+\frac{2}{T}\langle \bM^\star-\bK \bm{\theta},\beps\rangle+\lambda \norm{\bm{\theta}}_1,
\end{aligned}
\end{equation*}
we have
\begin{equation*}
\frac{1}{T}\norm{\bM^\star-\bK \hat{\bm{\theta}}}|_{2}^{2}
\leq \frac{1}{T}\norm{\bM^\star-\bK \bm{\theta}}_{2}^{2}+ \frac{2}{T}\langle \bK (\hat{\bm{\theta}}-\bm{\theta}),\beps \rangle+\lambda (\norm{\bm{\theta}}_1-\norm{\hat{\bm{\theta}}}_1).
\end{equation*}
So to bound $ \frac{1}{T} \|\bM^\star-\bK \hat{\bm{\theta}} \|_{2}^{2} $, one must bound $ B_{1}=\frac{1}{T}\norm{\bM^\star -\bK \bm{\theta}}_{2}^{2}$, $B_{2}=\frac{1}{T}\langle \bK(\hat{\bm{\theta}}- \bm{\theta}),\beps \rangle $ and $B_{3}=\lambda(\Omega(\bm{\theta})-\Omega(\hat{\bm{\theta}}))$.
For the $B_{2}=\frac{1}{T}\langle \bK(\hat{\bm{\theta}}- \bm{\theta}),\beps\rangle $, we have 
\begin{eqnarray*}
B_{2}&=&\frac{2}{T}\langle \bK(\hat{\bm{\theta}}- \bm{\theta}),\beps \rangle \\
&=& \Big|\frac{2}{T} \sum_{t=1}^{T} \sum_{r=1}^{T} \sum_{j=1}^{d} K_{h,1}\big(\frac{t}{T}-\frac{r}{T}\big) K_{h,2}(X_{t,T}^{j}-X_{r,T}^{j})(\hat{\theta}_{r, j}-\theta_{r, j}) \varepsilon_{t,T}\Big|\\ 
& \leq& \sum_{r=1}^{T} \sum_{j=1}^{d} \frac{2}{T} \sum_{t=1}^{T} \big|K_{h,1}\big(\frac{t}{T}-\frac{r}{T}\big) K_{h,2}(X_{t,T}^{j}-X_{r,T}^{j}) \varepsilon_{t,T}\big|\big|\hat{\theta}_{r, j}-\theta_{r, j}\big|.
\end{eqnarray*}
Let us consider the event $\mathscr{U}_{T}^{\lambda}=\bigcap_{r=1}^{T} \bigcap_{j=1}^{d}\mathscr{U}_{r,j}^{\lambda},$where
\begin{equation*}
\mathscr{U}_{r,j}^{\lambda} = \Big\{\frac{2}{T}\sum_{t=1}^{T} \big|K_{h,1}\big(\frac{t}{T}-\frac{r}{T}\big) K_{h,2}(X_{t,T}^{j}-X_{r,T}^{j})\varepsilon_{t,T}\big| \leq \lambda \Big\}.
\end{equation*}
Note that on $\mathscr{U}^\lambda_{T}$, one has 
\begin{align*}
B_{2}=\frac{2}{T}\langle \bK(\hat{\bm{\theta}}- \bm{\theta}),\beps \rangle
& \leq \sum_{r=1}^{T} \sum_{j=1}^{d} \lambda |\hat{\theta}_{r, j}-\theta_{r, j}|\\
& \leq \sum_{r=1}^{T} \sum_{j=1}^{d} \lambda|\hat{\theta}_{r, j}| +  \sum_{r=1}^{T} \sum_{j=1}^{d} \lambda|\theta_{r, j}|\\
&= \lambda (\norm{\hat{\bm{\theta}}}_1 + \norm{\bm{\theta}}_1).
\end{align*}
Putting things together, we have
\begin{equation*}
\frac{1}{T}\norm{\bM^\star-\bK \hat{\bm{\theta}}}_{2}^{2}
\leq \frac{1}{T}\norm{\bM^\star-\bK \bm{\theta}}_{2}^{2}+2 \lambda \norm{\bm{\theta}}_1.
\end{equation*}
It means as
\begin{equation*}
R(\hat m, m^\star) \leq \inf_{\bm{\theta} \in\R^{T d}} \big\{R(m_{\theta}, m^\star) + 2 \lambda \norm{\bm{\theta}}_1\big\}.
\end{equation*}
Consider the $\{\varepsilon_{t,T}\}_{t=1}^{T}$ follows $\beta$-mixing sub-Weibull distribution in Example~\ref{example3-(i)}, apply Proposition~\ref{proposition.Locally-stationary-sub-Weibull-distribution}, let $1/\eta=1/\eta_{1}+1/\eta_{2}$ and bandwith $h=\bigO(T^{-\xi}) \leq C_{h}T^{-\xi}$ with $0<\xi<\frac{1}{2}$ and constant $C_{h}>0$, for any $\lambda \geq  2C_{K}\sqrt{\log T / T} $ and $T>4$, we find that the probability of the complementary event ${\mathscr{U}^\lambda_{T}}$ is
\begin{align*}
\mathbb P[({\mathscr{U}_{T}^{\lambda}})^{\complement}]
&\leq \sum_{r=1}^{T} \sum_{j=1}^{d} \mathbb P \big(\frac{2}{T}\sum_{t=1}^{T}  |K_{h,1}(\frac{t}{T}-\frac{r}{T}) K_{h,2}(X_{t,T}^{j}-X_{r,T}^{j})\varepsilon_{t,T} | \leq \lambda \big)\\
&\leq Td \Big[ \exp\big(-(\frac{T\log T}{C_{\varepsilon}})^{\eta_{2}}\big)\\
& \quad\qquad + T \exp \big( -\frac{(\lambda T)^{\eta}}{(8C_{K}C_{\varepsilon})^{\eta}C_{1}} \big) + \exp \big( -\frac{\lambda^{2}T}{(8C_{K}C_{\varepsilon})^{2}C_{2}} \big)\\
& \quad\qquad + T \exp \big( -\frac{(\lambda T^{2}h)^{\eta}}{(8C_{K,L}(2T+1)C_{\varepsilon})^{\eta}C_{1}} \big) + \exp \big( -\frac{(\lambda h)^{2}T^{3}}{(8C_{K,L}(2T+1)C_{\varepsilon})^{2}C_{2}} \big) \Big]\\
&\leq Td \Big[ \exp\big(-(\frac{T\log T}{C_{\varepsilon}})^{\eta_{2}}\big)+ T \exp \big( -\frac{(\lambda Th)^{\eta}}{\max\{8C_{K}C_{\varepsilon},16C_{K,L}C_{\varepsilon}\}^{\eta}C_{1}} \big) \\
& \quad\qquad+ \exp \big( -\frac{(\lambda h)^{2}T}{\max\{8C_{K}C_{\varepsilon},16C_{K,L}C_{\varepsilon}\}^{2}C_{2}} \big) \Big]\\
&\leq Td \Big[\exp\big(-(\frac{T\log T}{C_{\varepsilon}})^{\eta_{2}}\big) + T \exp \big( -\frac{(\lambda Th)^{\eta}}{C_{max}^{\eta}C_{1}} \big) + \exp \big( -\frac{(\lambda h)^{2}T}{C_{max}^{2}C_{2}} \big) \Big]\\
&\leq Td \Big[ T \exp \big( -\frac{(\lambda Th)^{\eta}}{C_{3}} \big) + \exp \big( -\frac{(\lambda h)^{2}T}{C_{4}} \big) \Big]\\
&\leq Td \Big[ T \exp \big( - (C_{h}\lambda T^{1-\xi})^{\eta}/C_{3} \big) + \exp \big( -C_{h}^{2}\lambda^{2} T^{1-2\xi} /C_{4}\big) \Big],
\end{align*}
where $1/\eta=1/\eta_{1}+1/\eta_{2} > 1$, $C_{max}=\max\{8C_{K}C_{\varepsilon},16C_{K,L}C_{\varepsilon}\}$, the constants $C_{1}$, $C_{2}$ depend only on $\eta_{1}$, $\eta_{2}$ and $\varphi$, the constants $C_{3}$ depend only on $C_{K}$, $C_{\varepsilon}$, $C_{K,L}$ and $C_{1}$, the constants $C_{4}$ depend only on $C_{K}$, $C_{\varepsilon}$, $C_{K,L}$ and $C_{2}$.
If we set,
\begin{equation*}
\lambda \geq \max \bigg\{ \frac{(c\log d+\log T^{2} )^{1/\eta}}{T^{1-\xi}} ,\sqrt{\frac{c\log d+\log T }{T^{1-2\xi}}} \bigg\},
\end{equation*}
then the probability above is at most $d\exp(-c \log d )=d^{1-c}$.
Note that the constant $c>1$ can be made arbitrarily large but affects the constants $C_{h}$, $C_{3}$ and $C_{4}$ above. We want
\begin{equation*}
\sqrt{\frac{c\log d+\log T }{T^{1-2\xi}}} \geq \frac{(c\log d+2\log T )^{1/\eta}}{T^{1-\xi}},
\end{equation*}
and
\begin{equation*}
T^{\eta/2}(c\log d+\log T )^{\eta/2} \geq c\log d+2\log T^{2},
\end{equation*}
which is implied by
\begin{align*}
T^{\eta/2}(c\log d+\log T )^{\eta/2} \geq c\log d+\log T \text{ (the first condition)} \\ 
\text{and } T^{\eta/2}(c\log d+\log T )^{\eta/2} \geq \log T \text{ (the second condition)},
\end{align*}
the second condition is met for any $T>4$. 
For the first condition, we have
\begin{align*}
T^{\frac{\eta}{2-\eta}} \geq c\log d+\log T,
\end{align*}
if we set $ \frac{1}{2} \leq \eta <1$, $T^{\frac{\eta}{2-\eta}}$ is  significantly larger than $\log T$, then if $T \geq (c\log d)^{\frac{2-\eta}{\eta}}$, we can get $\lambda \geq \sqrt{\frac{c\log d +\log T}{T^{1-2\xi}}}$, obviously, the condition $\lambda \geq 2C_{K}\sqrt{\log T / T } $ is satisfied. Then we have
\begin{equation*}
\frac{1}{T}\norm{\bM^\star-\bK \hat{\bm{\theta}}}_{2}^{2}
\leq \frac{1}{T}\norm{\bM^\star-\bK \bm{\theta}}_{2}^{2}+2 \lambda \norm{\bm{\theta}}_1.
\end{equation*}
It means as
\begin{equation*}
R(\hat m, m^\star) \leq \inf_{\bm{\theta} \in\R^{T d}} \big\{R(m_{\theta}, m^\star) + 2 \lambda \norm{\bm{\theta}}_1\big\}.
\end{equation*}

\subsection{Proof of Theorem~\ref{theorem:least_sq_oracle_ineq_TV_pen_subweibull}: oracle inequality for sub-Weibull distribution with weighted total variation penalization} \label{subec:proof_of_thorem_theorem:least_sq_oracle_ineq_tv_pen}
We consider the following penalized optimization problem
\begin{equation}
\hat{\bm{\theta}} \in \mathop{\arg\min}\limits_{\bm{\theta} \in \mathbb{R}^{T \times d}} \big\{\frac{1}{T} \|\bY-\bK\bm{\theta}\|_{2}^{2}+ \norm{\bm{\theta}}_{\TV,\lambda}\big\}\label{TVestimator}.
\end{equation}
Similar as Theorem \ref{theorem:least_sq_oracle_ineq_lasso_pen_subweibull}, by the minimizing property of $\bm{\theta}$, it follows that
\begin{equation*}
\frac{1}{T}\norm{\bM^\star+\beps- \bK\hat{\bm{\theta}}}_{2}^{2}+ \norm{\hat{\bm{\theta}}}_{\TV,\lambda} \leq \frac{1}{T}\norm{\bM^\star+ \beps-K \bm{\theta}}_{2}^{2}+ \norm{\bm{\theta}}_{\TV,\lambda}.
\end{equation*}
Equivalently, we have 
\begin{eqnarray*}
&&\frac{1}{T}\norm{\bM^\star-\bK \hat{\bm{\theta}}}_{2}^{2}+\frac{1}{T} \norm{\beps}_{2}^{2}+\frac{2}{T}\langle \bM^\star-\bK \hat{\bm{\theta}},\beps \rangle+ \norm{\hat{\bm{\theta}}}_{\TV,\lambda} \\
&&\leq \frac{1}{T}\norm{\bM^\star-\bK \bm{\theta}}_{2}^{2}+\frac{1}{T} \norm{\beps}_{2}^{2}+\frac{2}{T}\langle \bM^\star-\bK \bm{\theta},\beps \rangle+ \norm{\bm{\theta}}_{\TV,\lambda},
\end{eqnarray*}
we have
\begin{align*}
&\frac{1}{T}\norm{\bM^\star-\bK \hat{\bm{\theta}}}_{2}^{2}\leq \frac{1}{T}\norm{\bM^\star-\bK \bm{\theta}}_{2}^{2}+ \frac{2}{T}\langle \bK (\hat{\bm{\theta}}-\bm{\theta}),\beps \rangle+ \norm{\bm{\theta}}_{\TV,\lambda}-\norm{\hat{\bm{\theta}}}_{\TV,\lambda}.
\end{align*}
Define the block diagonal matrix $\mathbf{D}={\rm diag}(D_{1},\ldots,D_{T})$ is the $Td \times Td$ matrix with the $d \times d$ matrix $D_{r}$ for $r=1,\ldots,T$,
\begin{equation}\label{equation_D_r}
D_{r}=\bigg[\begin{array}{ccccc}1&0& & &0\\ -1&1& & & \\ & & \ddots & \ddots & \\0& & & -1&1\end{array}\bigg] \in \mathbb R^{d} \times  \mathbb R^{d}.
\end{equation} 
Then, we remark that for all $\theta_{r\bullet} \in \mathbb R^{d}$,
\begin{eqnarray*}
\norm{\bm{\theta}}_{\TV,\lambda}&=&\sum_{r=1}^{T}\norm{\theta_{r\bullet}}_{\TV,\lambda}=\sum_{r=1}^{T}\sum_{j=2}^{d}\lambda_{j}|\theta_{r, j}-\theta_{r, (j-1)}|\\
&=&\sum_{r=1}^{T}\sum_{j=1}^{d}\lambda_{j}\norm{D_{r}\theta_{r\bullet}}_{1},
\end{eqnarray*}
Moreover, we define $\mathbf{V}$ as the inverse of matrix $\mathbf{D}$, i.e., $\mathbf{VD=I}$, where $\mathbf{V}={\rm diag}(V_{1},\ldots,V_{T})$ is the $Td \times Td$ matrix with the $(d \times d)$ lower triangular matrix matrix $V_{r}$, and the entries $ \Big(V_{r}\Big)_{s, j}=0$ if $ s<j $ and $ \Big(V_{r}\Big)_{s, j}=1$ otherwise. For $\lambda_{j} > 0$, we consider the event 
\begin{equation}\label{equation_event_TV}
\mathscr{U}_{T}^{\lambda_{j}}=\bigcap_{r=1}^{T} \bigcap_{j=1}^{d}\mathscr{U}_{r,j}^{\lambda_{j}},
 \text{ where }
\mathscr{U}_{r,j}^{\lambda_{j}} = \Big\{\frac{2}{T} \big|\varepsilon^{T}(K_{r\bullet}V_{r})_{j}\big| \leq \lambda_{j} \Big\}.
\end{equation}
Note on $\mathscr{U}_{T}^{\lambda_{j}}$, one has 
\begin{eqnarray*}
B_{2}&=&\frac{2}{T}\langle \bK(\hat{\bm{\theta}}- \bm{\theta}),\beps \rangle\\ 
&=&\frac{2}{T} \Big| \beps^{T} \bK V \cdot D(\hat{\bm{\theta}}- \bm{\theta}) \Big|\\
&=& \frac{2}{T} \Big|\sum_{r=1}^{T} \varepsilon^{T}K_{r\bullet}V_{r} (D_{r}(\hat{\theta}_{r\bullet}- \theta_{r\bullet}))\Big|\\ 
&=& \frac{2}{T} \Big|\sum_{r=1}^{T}\sum_{j=1}^{d} \varepsilon^{T}(K_{r\bullet}V_{r})_{j} (D_{r}(\hat{\theta}_{r\bullet}- \theta_{r\bullet}))_{j}\Big|\\ 
& \leq& \sum_{r=1}^{T} \sum_{j=1}^{d} \lambda_{j} |(D_{r}(\hat{\theta}_{r\bullet}- \theta_{r\bullet}))|\\
&\leq& \sum_{r=1}^{T}\norm{(\hat{\theta}_{r\bullet}- \theta_{r\bullet})}_{\TV,\lambda}\\
&=& \norm{\hat{\bm{\theta}}-\bm{\theta}}_{\TV,\lambda}.
\end{eqnarray*}
Similar as the Lemma 3.4.1 of \cite{alaya2016segmentation}, for all $\theta, \theta' \in \mathbb{R}^{dT}$, one has $\Omega(\theta + \theta') \leq \Omega(\theta) + \Omega(\theta')$ and $\Omega(-\theta) \leq \Omega(\theta)$,
putting things together, we have
\begin{eqnarray*}
\frac{1}{T}\norm{\bM^\star-\bK \hat{\bm{\theta}}}_{2}^{2}
&\leq& \frac{1}{T}\norm{\bM^\star-\bK \bm{\theta}}_{2}^{2}+ \norm{\bm{\theta}}_{\TV,\lambda}+ \norm{\hat{\bm{\theta}}}_{\TV,\lambda}+ \norm{\bm{\theta}}_{\TV,\lambda}- \norm{\hat{\bm{\theta}}}_{\TV,\lambda}\\
&=&\frac{1}{T}\norm{\bM^\star-\bK \bm{\theta}}_{2}^{2}+2\norm{\bm{\theta}}_{\TV,\lambda}.
\end{eqnarray*}

Consider the $\{\varepsilon_{t,T}\}_{t=1}^{T}$ follows $\beta$-mixing sub-Weibull distribution in Example~\ref{example3-(i)}, apply Proposition~\ref{proposition.Locally-stationary-sub-Weibull-distribution}, let $1/\eta=1/\eta_{1}+1/\eta_{2}$ and bandwith $h=\bigO(T^{-\xi}) \leq C_{h}T^{-\xi}$ with $0<\xi<\frac{1}{2}$ and constant $C_{h}>0$, for any $\lambda \geq  2C_{K}\sqrt{\log T / T} $ and $T>4$, we find that the probability of the complementary event ${\mathscr{U}^{\lambda_{j}}_{T}}$ is
\begin{align*} \mathbb P[({\mathscr{U}_{T}^{\lambda_{j}}})^{\complement}]
& \leq \sum_{r=1}^{T} \sum_{j=1}^{d} \mathbb P[ {(\mathscr{U}_{r,j}^{\lambda_{j}})}^\complement]\\
& \leq \sum_{r=1}^{T} \sum_{j=1}^{d} \mathbb{P}\big( \sum_{q=j}^{d} \frac{2}{T}\sum_{t=1}^{T}  |K_{h,1}(\frac{t}{T}-\frac{r}{T}) K_{h,2}(X_{t,T}^{q}-X_{r,T}^{q})\varepsilon_{t,T} | \leq \lambda_{j} \big) \\
&\leq \sum_{r=1}^{T} \sum_{j=1}^{d} \mathbb{P} \big((d-j+1)\frac{2}{T}\sum_{t=1}^{T}  |K_{h,1}(\frac{t}{T}-\frac{r}{T}) K_{h,2}(X_{t,T}^{j}-X_{r,T}^{j})\varepsilon_{t,T} | \leq \lambda_{j} \big),
\end{align*}
Refer to the proof of Theorem~\ref{theorem:least_sq_oracle_ineq_lasso_pen_subweibull}, we have that if the sample size satisfies $ T \geq c(\log d)^{\frac{2}{\eta}-1}$ with $1/2 \leq \eta<1$ and $c>1$, set $\lambda_{j} \geq (d-j+1)\sqrt{\frac{c\log d +\log T}{T^{1-2\xi}}}$, and the bandwidth $h=\bigO(T^{-\xi})$ with $0<\xi<1/2$, the probability of the complementary event $\mathscr{U}_{T}^{\lambda_{j}}$, i.e. $\mathbb P[(\mathscr{U}_{T}^{\lambda_{j}})^{\complement}]$, with a probability larger than $1 - d^{1 - c}$, we have  
\begin{equation*}
R(\hat m, m^\star) \leq \inf_{\bm{\theta} \in\R^{T d}} \big\{R(m_{\theta}, m^\star) + 2 \lambda\norm{\bm{\theta}}_{\TV,\lambda}\big\}.
\end{equation*}

\subsection{Proof of Theorem~\ref{theorem:least_sq_oracle_ineq_lasso_pen_ragularvarying}: oracle inequality for regular varying heavy-tailed distribution with Lasso}
Similar as the proof of Theorem~\ref{theorem:least_sq_oracle_ineq_lasso_pen_subweibull}, note that on $\mathscr{U}^\lambda_{T}$, one has 
\begin{align*}
B_{2}=\frac{1}{T}\langle \bK(\hat{\bm{\theta}}- \bm{\theta}),\beps \rangle
 \leq \sum_{r=1}^{T} \sum_{j=1}^{d} \lambda|\hat{\theta}_{r, j}| +  \sum_{r=1}^{T} \sum_{j=1}^{d} \lambda|\theta_{r, j}|= \lambda (\norm{\hat{\bm{\theta}}}_1 + \norm{\bm{\theta}}_1).
\end{align*}
Putting things together, on $\mathscr{U}^\lambda_{T} $, we get $\frac{1}{T}\norm{\bM^\star-\bK \hat{\bm{\theta}}}_{2}^{2}
\leq \frac{1}{T}\norm{\bM^\star-\bK \bm{\theta}}_{2}^{2}+2 \lambda \norm{\bm{\theta}}_1.$
It means as $R(\hat m, m^\star) \leq \inf_{\bm{\theta} \in\R^{T d}} \big\{R(m_{\theta}, m^\star) + 2 \lambda \norm{\bm{\theta}}_1\big\}.$

Consider the $\{\varepsilon_{t,T}\}_{t=1}^{T}$ follows regular varying heavy-tailed distribution in Definition~\ref{def:Regularvaryingtail}, apply Proposition~\ref{proposition.Locally-Stationary-regularly-varying-heavy-tailed}, let $h=\bigO(T^{-\xi})$ with $0<\xi<\vartheta$, for any $\gamma>\frac{2C_{K,L}(2T+1)}{T^{1+\vartheta}h}$, then we find that the probability of the complementary event ${\mathscr{U}^\lambda_{T}} $ is
\begin{align*}
\mathbb P[({\mathscr{U}_{T}^{\lambda}})^{\complement}]
&\leq \sum_{r=1}^{T} \sum_{j=1}^{d} \mathbb P \bigg(\frac{1}{T}\sum_{t=1}^{T}  |K_{h,1}(\frac{t}{T}-\frac{r}{T}) K_{h,2}(X_{t,T}^{j}-X_{r,T}^{j})\varepsilon_{t,T} | \leq \lambda \bigg)\\
& \leq Td \Big[(T\log T)^{-\eta_{2}}L(T\log T)\\
& \quad\qquad+\frac{12T^{(d_{1}-1/\eta_{1})(1-\eta_{2})} \lambda^{(d_{1}-1/\eta_{1})(1-\eta_{2})-1}}{ 2^{1-\eta_{2}}(4C_{K})^{(d_{1}-1/\eta_{1})(1-\eta_{2})-1}}L(\frac{(\lambda T/4C_{K})^{d_{1}-1/\eta_{1}}}{2})\\
&\quad\qquad + \frac{24C_{K} \exp(-\varphi \lambda T/4C_{K})}{\lambda } +2\exp\big({-\frac{1}{9T^{2d_{1}-1/\eta_{1}-1}(\lambda/4C_{K})^{2d_{1}-1/\eta_{1}-2}}}\big)\\
& \quad\qquad + \frac{12T^{2(d_{1}-1/\eta_{1})(1-\eta_{2})-1} (\lambda h)^{(d_{1}-1/\eta_{1})(1-\eta_{2})-1}}{ 2^{1-\eta_{2}}(4C_{K,L}(2T+1))^{(d_{1}-1/\eta_{1})(1-\eta_{2})-1}}L(\frac{( \lambda T^{2}h)^{d_{1}-1/\eta_{1}}}{2(4C_{K,L}(2T+1))^{d_{1}-1/\eta_{1}}})\\
&\quad\qquad + \frac{24C_{K,L}(2T+1)\exp(-\varphi (\frac{\lambda T^{2}h}{4 C_{K,L}(2T+1)}))}{\lambda Th}+2\exp\big({-\frac{(4C_{K,L}(2T+1))^{2d_{1}-1/\eta_{1}-2}}{9T^{4d_{1}-2/\eta_{1}-3}(\lambda h)^{2d_{1}-1/\eta_{1}-2}}}\big)\Big].
\end{align*}
Therefore, 
\begin{align*}
\mathbb P[({\mathscr{U}_{T}^{\lambda}})^{\complement}]
& \simeq Td \Big[\frac{L(T\log T)}{(T\log T)^{\eta_{2}}}\\
& \quad\qquad +\frac{C_{1}}{ T^{(d_{1}-1/\eta_{1})(\eta_{2}-1)} \lambda^{1-(d_{1}-1/\eta_{1})(\eta_{2}-1)} }L( (\lambda T )^{d_{1}-1/\eta_{1}})\\
&\quad\qquad + \frac{C_{2} \exp(-\varphi \lambda T)}{\lambda } +\exp\big(-C_{3}T^{1+1/\eta_{1}-2d_{1}}\lambda^{2+1/\eta_{1}-2d_{1}}\big)\\
& \quad\qquad +\frac{C_{4}}{ T^{(d_{1}-1/\eta_{1})(\eta_{2}-1)} (\lambda h)^{1+(d_{1}-1/\eta_{1})(\eta_{2}-1)} }L( (\lambda Th )^{d_{1}-1/\eta_{1}})\\
&\quad\qquad + \frac{C_{5} \exp(-\varphi \lambda Th)}{\lambda h} +\exp\big(-C_{6}T^{1+1/\eta_{1}-2d_{1}}(\lambda h)^{2+1/\eta_{1}-2d_{1}}\big)\Big]\\
& \simeq Td \Big[\frac{L(T\log T)}{(T\log T)^{\eta_{2}}} 
+\frac{C_{7}}{ T^{(d_{1}-1/\eta_{1})(\eta_{2}-1)} (\lambda h)^{1+(d_{1}-1/\eta_{1})(\eta_{2}-1)} }L( (\lambda T )^{d_{1}-1/\eta_{1}})\\
&\quad\qquad + \frac{C_{8} T \exp(-\varphi \lambda Th)}{\lambda h} +\exp\big(-C_{9}T^{1+1/\eta_{1}-2d_{1}}(\lambda h)^{2+1/\eta_{1}-2d_{1}}\big)\Big]\\
& \simeq d \Big[ \frac{C_{7}}{ T^{(d_{1}-1/\eta_{1})(\eta_{2}-1)-1} (\lambda h)^{1+(d_{1}-1/\eta_{1})(\eta_{2}-1)} }\\
&\quad\qquad + \frac{C_{8} T\exp(-\varphi \lambda Th)}{\lambda h} +T\exp\big(-C_{9}T^{1+1/\eta_{1}-2d_{1}}(\lambda h)^{2+1/\eta_{1}-2d_{1}}\big)\Big],
\end{align*}
with $\varphi>0$, $\eta_{1}>1$, $\eta_{2}\geq 2$, $0<\vartheta<\frac{(\eta_{1}-1)(\eta_{2}-1)}{1+(2\eta_{1}-1)\eta_{2}}$, $\frac{\vartheta}{(1-\vartheta)(\eta_{2}-1)}+\frac{1}{\eta_{1}} < d_{1}< \frac{1-2\vartheta}{2(1-\vartheta)}+\frac{1}{2\eta_{1}}$, then we have $1-\vartheta > 1+1/\eta_{1}-2d_{1}-\vartheta(2+1/\eta_{1}-2d_{1})$, which means that
\begin{equation*}
\frac{C_{8} T\exp(-\varphi \lambda Th)}{\lambda h} < T\exp\big(-C_{9}T^{1+1/\eta_{1}-2d_{1}}(\lambda h)^{2+1/\eta_{1}-2d_{1}}\big)\Big],
\end{equation*}
then we can have that
\begin{align*}
\mathbb P[({\mathscr{U}_{T}^{\lambda}})^{\complement}]
& \simeq d \Big[ \frac{C_{7}}{ T^{(d_{1}-1/\eta_{1})(\eta_{2}-1)-1} (\lambda h)^{1+(d_{1}-1/\eta_{1})(\eta_{2}-1)}}\\
&\quad\qquad + T\exp\big(-C_{10}T^{1+1/\eta_{1}-2d_{1}}(\lambda h)^{2+1/\eta_{1}-2d_{1}}\big)\Big],
\end{align*}
where $C_{K}=C_{K_{1}}C_{K_{2}}$, the constant $C_{K,L}$ depend on kernel bound and Lipschiz constant, the constants $C_{1}$, $C_{2}$ and $C_{3}$ depend on $C_{K}$, the constants $C_{4}$, $C_{5}$ and $C_{6}$ depend on $C_{K,L}$, the constants $C_{7}$ depends on $C_{K}$, $C_{K,L}$ and the bound of $L(\cdot)$, $C_{8}$, $C_{9}$ and $C_{10}$ depend on $C_{K}$ and $C_{K,L}$.

Note that we want each term above tends to $0$ when $T$ is very large. For the first term, we want $(d_{1}-1/\eta_{1})(\eta_{2}-1)-1-\vartheta(1+(d_{1}-1/\eta_{1})(\eta_{2}-1))>0$, which means that $\frac{1+\vartheta}{(1-\vartheta)(\eta_{2}-1)}+\frac{1}{\eta_{1}} < d_{1}< \frac{1-2\vartheta}{2(1-\vartheta)}+\frac{1}{2\eta_{1}}$ with $0<\vartheta< \min\{\frac{(\eta_{1}-1)(\eta_{2}-1)-2\eta_{1}}{1+(2\eta_{1}-1)\eta_{2}}, \frac{(\eta_{1}-1)(\eta_{2}-1)}{1+(2\eta_{1}-1)\eta_{2}}\}=\frac{(\eta_{1}-1)(\eta_{2}-1)-2\eta_{1}}{1+(2\eta_{1}-1)\eta_{2}}$ and $\eta_{2}>\frac{3\eta_{1}-1}{\eta_{1}-1}$. Obviously, the second term tends to $0$ when $T$ is very large.

~\\
If we set
\begin{equation*}
\lambda \geq \max \Big\{ \frac{d^{\frac{c}{(d_{1}-1/\eta_{1})(\eta_{2}-1)+1}}}{T^{\frac{(d_{1}-1/\eta_{1})(\eta_{2}-1)-1}{(d_{1}-1/\eta_{1})(\eta_{2}-1)+1}-\xi}}, \quad  \frac{(c\log d + \log T)^{\frac{1}{2+1/\eta_{1}-2d_{1}}}}{T^{\frac{1+1/\eta_{1}-2d_{1}}{2+1/\eta_{1}-2d_{1}}-\xi}} \Big\},
\end{equation*}
then the probability above is at most $d\exp(-c \log d )=d^{1-c}$.
Note that the constant $c>1$ can be made arbitrarily large but affects the constants $C_{7}$ and $C_{10}$ above. As $\frac{(d_{1}-1/\eta_{1})(\eta_{2}-1)-1}{(d_{1}-1/\eta_{1})(\eta_{2}-1)+1} > \frac{1+1/\eta_{1}-2d_{1}}{2+1/\eta_{1}-2d_{1}}$, we want
\begin{equation*}
\frac{d^{\frac{c}{(d_{1}-1/\eta_{1})(\eta_{2}-1)+1}}}{T^{\frac{(d_{1}-1/\eta_{1})(\eta_{2}-1)-1}{(d_{1}-1/\eta_{1})(\eta_{2}-1)+1}-\xi}} \leq \frac{(c\log d + \log T)^{\frac{1}{2+1/\eta_{1}-2d_{1}}}}{T^{\frac{1+1/\eta_{1}-2d_{1}}{2+1/\eta_{1}-2d_{1}}-\xi}},
\end{equation*}
which is implied by 
\begin{equation*}
T > \big( \frac{d^{c(2+1/\eta_{1}-2d_{1})}}{(c\log d+\log T)^{(d_{1}-1/\eta_{1})(\eta_{2}-1)+1}}\big)^{\frac{1}{(\eta_{2}+3)d_{1}-(\eta_{2}+1)/\eta_{1}-3}},
\end{equation*}
then we can get that
\begin{equation*}
\lambda \geq \frac{(c\log d + \log T)^{\frac{1}{2+1/\eta_{1}-2d_{1}}}}{T^{\frac{1+1/\eta_{1}-2d_{1}}{2+1/\eta_{1}-2d_{1}}-\xi}},
\end{equation*}
obviously, if 
$$T > \big( \frac{d^{c(2+1/\eta_{1}-2d_{1})}}{(c\log d)^{(d_{1}-1/\eta_{1})(\eta_{2}-1)+1}}\big)^{\frac{1}{(\eta_{2}+3)d_{1}-(\eta_{2}+1)/\eta_{1}-3}},$$
the condition $\lambda \geq \max\{\frac{2C_{K,L}(2T+1)}{T^{1+\vartheta}h},\frac{(c\log d + \log T)^{\frac{1}{2+1/\eta_{1}-2d_{1}}}}{T^{\frac{1+1/\eta_{1}-2d_{1}}{2+1/\eta_{1}-2d_{1}}-\xi}}\}=\frac{2C_{K,L}(2T+1)}{T^{1+\vartheta}h}$, then we have
\begin{equation*}
\frac{1}{T}\norm{\bM^\star-\bK \hat{\bm{\theta}}}_{2}^{2}
\leq \frac{1}{T}\norm{\bM^\star-\bK \bm{\theta}}_{2}^{2}+2 \lambda \norm{\bm{\theta}}_1.
\end{equation*}
It means that
\begin{equation*}
R(\hat m, m^\star) \leq \inf_{\bm{\theta} \in\R^{T d}} \big\{R(m_{\theta}, m^\star) + 2 \lambda \norm{\bm{\theta}}_1\big\}.
\end{equation*}

\subsection{Proof of Corollary~\ref{corol:pareto}: Pareto distribution with Lasso}
We consider $\{\varepsilon_{t,T}\}_{t=1}^{T}$ follows the Pareto distribution as Example~\ref{example4-(ii)} with $\eta_{2}=4$ and $L(v)=u^{4}$, the constant $u > 0$. The sequence $W_{t,r,T}^{j}$ be defined in (\ref{equation-W_t,T}). Assumption~\ref{Ass:1}-\ref{Ass:KB2} are satisfied with $\eta_{1}=4$. Let $h=\bigO(T^{-\xi})$ with $0<\xi<\vartheta$, for any $\lambda>\frac{2C_{K,L}(2T+1)}{T^{1+\vartheta}h}$, similar as the proof of Theorem~\ref{theorem:least_sq_oracle_ineq_lasso_pen_ragularvarying}, we have 
\begin{align*}
\mathbb P[({\mathscr{U}_{T}^{\lambda}})^{\complement}]
&\leq \sum_{r=1}^{T} \sum_{j=1}^{d} \mathbb P \bigg(\frac{1}{T}\sum_{t=1}^{T}  |K_{h,1}(\frac{t}{T}-\frac{r}{T}) K_{h,2}(X_{t,T}^{j}-X_{r,T}^{j})\varepsilon_{t,T} | \leq \lambda \bigg)\\
& \simeq d \Big[ \frac{C_{1}}{ T^{(d_{1}-1/\eta_{1})(\eta_{2}-1)-1} (\lambda h)^{1+(d_{1}-1/\eta_{1})(\eta_{2}-1)} }L( (\lambda T )^{d_{1}-1/\eta_{1}}) \\
&\quad\qquad + T\exp\big(-C_{2}T^{1+1/\eta_{1}-2d_{1}}(\lambda h)^{2+1/\eta_{1}-2d_{1}}\big)\Big],\\
& \simeq d \Big[ \frac{C_{1}u^{4}}{ T^{(3d_{1}-7/4)} (\lambda h)^{3d_{1}+1/4}} + T\exp\big(-C_{2}T^{5/4-2d_{1}}(\lambda h)^{9/4-2d_{1}}\big)\Big],
\end{align*}
where $0<\vartheta<1/29$, $d_{1} \in (\frac{1+\vartheta}{3(1-\vartheta)}+\frac{1}{4},\frac{1-2\vartheta}{2(1-\vartheta)}+\frac{1}{8})$, $C_{K}=C_{K_{1}}C_{K_{2}}$, $\varphi>0$ and the constant $C_{K,L}$ depend on kernel bound and Lipschiz constant, $C_{1}$ depends on $C_{K}$, $C_{K,L}$ and $u$, $C_{2}$ depends on $C_{K}$ and $C_{K,L}$. If we set
\begin{equation*}
\lambda \geq \max \Big\{ \frac{d^{\frac{c}{(3d_{1}+1/4)}}}{T^{\frac{3d_{1}-7/4}{3d_{1}+1/4}-\xi}}, \quad  \frac{(c\log d + \log T)^{\frac{1}{9/4-2d_{1}}}}{T^{\frac{5/4-2d_{1}}{9/4-2d_{1}}-\xi}} \Big\},
\end{equation*}
then the probability above is at most $d\exp(-c \log d )=d^{1-c}$.
Note that the constant $c>1$ can be made arbitrarily large but affects the constants $C_{h}$, $C_{5}$ and $C_{6}$ above. As $\frac{3d_{1}-7/4}{3d_{1}+1/4}>\frac{5/4-2d_{1}}{9/4-2d_{1}} $, we want
\begin{equation*}
\frac{d^{\frac{c}{(3d_{1}+1/4)}}}{T^{\frac{3d_{1}-7/4}{3d_{1}+1/4}-\xi}} \leq \frac{(c\log d + \log T)^{\frac{1}{9/4-2d_{1}}}}{T^{\frac{5/4-2d_{1}}{9/4-2d_{1}}-\xi}},
\end{equation*}
which is implied by 
\begin{equation*}
T > \big( \frac{d^{c(9/4-2d_{1})}}{(c\log d+\log T)^{3d_{1}+1/4}}\big)^{1/(7d_{1}-17/4)},
\end{equation*}
then we can get that
\begin{equation*}
\lambda \geq \frac{(c\log d + \log T)^{\frac{1}{9/4-2d_{1}}}}{T^{\frac{5/4-2d_{1}}{9/4-2d_{1}}-\xi}},
\end{equation*}
obviously, if $T > \big( \frac{d^{c(9/4-2d_{1})}}{(c\log d)^{3d_{1}+1/4}}\big)^{1/(7d_{1}-17/4)}$, the regularized parameter $\lambda$ satisfies 
\begin{align*}
    \lambda \geq \max\{\frac{2C_{K,L}(2T+1)}{T^{1+\vartheta}h},\frac{(c\log d + \log T)^{\frac{1}{9/4-2d_{1}}}}{T^{\frac{5/4-2d_{1}}{9/4-2d_{1}}-\xi}}\}=\frac{2C_{K,L}(2T+1)}{T^{1+\vartheta}h}
\end{align*}
then we have
\begin{equation*}
\frac{1}{T}\norm{\bM^\star-\bK \hat{\bm{\theta}}}_{2}^{2}
\leq \frac{1}{T}\norm{\bM^\star-\bK \bm{\theta}}_{2}^{2}+2 \lambda \norm{\bm{\theta}}_1.
\end{equation*}
It means that
\begin{equation*}
R(\hat m, m^\star) \leq \inf_{\bm{\theta} \in\R^{T d}} \big\{R(m_{\theta}, m^\star) + 2 \lambda \norm{\bm{\theta}}_1\big\}.
\end{equation*}

\subsection{Proof of Theorem~\ref{theorem:TV_Regularly_V_H}: oracle inequality for regular varying heavy-tailed distribution with weighted total variation penalization}
Consider $\{\varepsilon_{t, T}\}_{t=1}^{T}$ follows the regularly varying heavy-tailed with bounded slowly varying function $L(\cdot)$, we get the following result similar to the proof of Theorem~\ref{theorem:least_sq_oracle_ineq_TV_pen_subweibull} and Theorem~\ref{theorem:least_sq_oracle_ineq_lasso_pen_ragularvarying}, i,e, assume the sample size satisfies 
$ T > \big( \frac{d^{c(2+1/\eta_{1}-2d_{1})}}{(c\log d)^{(d_{1}-1/\eta_{1})(\eta_{2}-1)+1}}\big)^{\frac{1}{(\eta_{2}+3)d_{1}-(\eta_{2}+1)/\eta_{1}-3}},$
and $\lambda_{j} \geq (d-j+1) \frac{2C_{K,L}(2T+1)}{T^{1+\vartheta}h}$
with the bandwidth $h=\bigO(T^{-\xi})$ with $0<\xi<\vartheta$, the probability of the complementary event $\mathscr{U}_{T}^{\lambda_{j}}$, i.e. $\mathbb P[(\mathscr{U}_{T}^{\lambda_{j}})^{\complement}]$, with a probability larger than $1 - d^{1 - c}$, we have  
\begin{equation*}
R(\hat m, m^\star) \leq \inf_{\bm{\theta} \in\R^{T d}} \big\{R(m_{\theta}, m^\star) + 2 \lambda\norm{\bm{\theta}}_{\TV,\lambda}\big\}.
\end{equation*}

\subsection{Proof of Theorem~\ref{theorem:fast_lasso_pen_SubWeibull}: fast oracle inequality for sub-Weibull distribution with Lasso}
\textbf{Step 1.} 
From the definition of $\hat{\bm{\theta}}$, we have
\begin{equation*}
\frac{1}{T}\|\bm{Y}-\bm{K} \hat{\bm{\theta}}\|_{2}^{2}+\lambda \norm{\hat{\bm{\theta}}}_1 \leq \frac{1}{T}\|\bm{Y}-\bm{K}\bm{\theta}\|_{2}^{2}+\lambda \norm{\bm{\theta}}_1,
\end{equation*}
and
\begin{equation*}
\frac{1}{T}\|\bm{Y}-\bm{K} \hat{\bm{\theta}}\|_{2}^{2} -\frac{1}{T}\|\bm{Y}-\bm{K}\bm{\theta}\|_{2}^{2} \geq \frac{1}{T} \|\bm{K}(\bm{\theta} - \hat{\bm{\theta}})\|_2^2 -\frac{2}{T}\langle \varepsilon^{\top} \bm{K}, (\hat{\bm{\theta}} - \bm{\theta})\rangle,
\end{equation*}
it follows that
\begin{equation*}
\frac{1}{T} \|\bm{K}(\bm{\theta} - \hat{\bm{\theta}})\|_2^2 + \lambda \|\hat{\bm{\theta}}\|_1
\leq \lambda \|\bm{\theta} \|_1 + \frac{2}{T}\langle \varepsilon^{\top} \bm{K}, (\hat{\bm{\theta}} - \bm{\theta})\rangle,
\end{equation*}
Consider the event $\mathscr{U}_{T}^{\lambda}=\bigcap_{r=1}^{T} \bigcap_{j=1}^{d}\mathscr{U}_{r,j}^{\lambda},$where
\begin{equation*}
\mathscr{U}_{r,j}^{\lambda} = \Big\{\frac{2}{T}\sum_{t=1}^{T} \big|K_{h,1}\big(\frac{t}{T}-\frac{r}{T}\big) K_{h,2}(X_{t,T}^{j}-X_{r,T}^{j})\varepsilon_{t,T}\big| \leq \frac{\lambda}{2} \Big\},
\end{equation*}
we have $\frac{2}{T}\langle \varepsilon^{\top} \bm{K},(\hat{\bm{\theta}} - \bm{\theta})\rangle \leq \lambda/2 \|\hat{\bm{\theta}} - \bm{\theta}\|_1$, it follows that
\begin{equation*}
\frac{1}{T} \|\bm{K}(\bm{\theta} - \hat{\bm{\theta}})\|_2^2 
\leq \frac{\lambda}{2} \|\hat{\bm{\theta}} - \bm{\theta}\|_1+ \lambda \|\bm{\theta} \|_1 - \lambda \|\hat{\bm{\theta}} \|_1.
\end{equation*}
Adding $\frac{\lambda}{2} \|\hat{\bm{\theta}} - \bm{\theta}\|_1$ to both sides we get
\begin{align*}
\frac{1}{T} \|\bm{K}(\bm{\theta} - \hat{\bm{\theta}})\|_2^2 +\frac{\lambda}{2} \|\hat{\bm{\theta}} - \bm{\theta}\|_1
&\leq \lambda \big( \|\hat{\bm{\theta}} - \bm{\theta}\|_1+ \|\bm{\theta} \|_1 - \|\hat{\bm{\theta}} \|_1 \big)\\
&\leq \lambda \sum_{r=1}^{T} \sum_{j=1}^{d} (|\hat{\theta}_{r, j} - \theta_{r, j}| + |\theta_{r, j}| - |\hat{\theta}_{r, j}|)\\
&\leq \lambda \sum_{r \in J_{r}} (\|\hat{\theta}_{r\bullet} - \theta_{r\bullet}\| + \|\theta_{r\bullet}\| - \|\hat{\theta}_{r\bullet}\|)\\
&\leq 2\lambda \sum_{r \in J_{r}} \|\hat{\theta}_{r\bullet} - \theta_{r\bullet}\|\\
&= 2\lambda \| [\hat{\bm{\theta}}-\bm{\theta}]_{J} \|.
\end{align*}
It follows that $\frac{\lambda}{2} \|\hat{\bm{\theta}} - \bm{\theta}\|_1
\leq 2\lambda \| [\hat{\bm{\theta}}-\bm{\theta}]_{J} \|_{1}$, i.e.,
\begin{equation*}
\frac{\lambda}{2} \| [\hat{\bm{\theta}} - \bm{\theta}]_{J^{\complement}} \|_{1}
+\frac{\lambda}{2} \| [\hat{\bm{\theta}} - \bm{\theta}]_{J} \|_{1}
\leq 2\lambda \| [\hat{\bm{\theta}}-\bm{\theta}]_{J} \|_{1} ,  
\end{equation*}
then we have $\| [\hat{\bm{\theta}} - \bm{\theta}]_{J^{\complement}} \|_{1}
\leq 3 \| [\hat{\bm{\theta}}-\bm{\theta}]_{J} \|_{1},$
by Assumption~\ref{Assumption_RE_condition}-(i), $(\hat{\bm{\theta}} - \bm{\theta} ) \in S_{J}$. Let $\Delta=\hat{\bm{\theta}} - \bm{\theta}$, it also have that
\begin{equation*}
\frac{1}{T} \| \bm{K}\Delta \|_{2}^{2} \leq \frac{3}{2} \lambda \| \Delta_{J} \|_{1} \leq \frac{3}{2} \lambda \sqrt{J^{\star}} \| \Delta_{J} \|_{2}.
\end{equation*}
From the definition of $\kappa(\bm{K}, J(\bm{\theta}))$,
\begin{equation*}
\| \Delta_{J} \|_{2}^{2} \leq \frac{1}{\kappa^{2}(\bm{K}, J(\bm{\theta}))} \frac{\| \bm{K}\Delta \|_{2}^{2}}{T} \leq \frac{3 \lambda \sqrt{J^{\star}} \| \Delta_{J} \|_{2}}{2 \kappa^{2}(\bm{K}, J(\bm{\theta})) }. 
\end{equation*}
Therefore $\| \Delta_{J} \|_{2} \leq \frac{3 \lambda \sqrt{J^{\star}} }{2 \kappa^{2}(\bm{K}, J(\bm{\theta})) }$, then
\begin{equation*}
\| \Delta \|_{2} \leq \| \Delta_{J} \|_{2} +\| \Delta_{J^{\complement}} \|_{2} \leq \sqrt{\| \Delta_{J^{\complement}} \|_{1} \| \Delta_{J^{\complement}} \|_{\infty}} + \| \Delta_{J} \|_{2}.
\end{equation*}
From $\Delta \in S_{J}$, $\| \Delta_{J^{\complement}} \|_{1} \leq 3\| \Delta_{J} \|_{1}$. Since $\Delta_{J}$ spans the largest coordinates of $\Delta$ in absolute value, $\| \Delta_{J^{\complement}} \|_{\infty} \leq \| \Delta_{J} \|_{1} / J^{\star} $, we get
\begin{equation}\label{deltalasso}
\| \Delta \|_{2} \leq \sqrt{\frac{3}{J^{\star}}} \| \Delta_{J} \|_{1} + \| \Delta_{J} \|_{2} \leq (\sqrt{3}+1)\| \Delta_{J} \|_{2} \leq \frac{3 (\sqrt{3}+1) \lambda \sqrt{J^{\star}} }{2 \kappa^{2}(\bm{K}, J(\bm{\theta}))}
\end{equation}
and
\begin{equation}\label{Kdeltalasso}
\frac{1}{T} \| \bm{K}\Delta \|_{2}^{2} \leq \frac{9 \lambda^{2} J^{\star}} {4 \kappa^{2}(\bm{K}, J(\bm{\theta}))}.
\end{equation}

\textbf{Step 2.}
Recall that for all $\bm{\theta} \in \mathbb{R}^{Td}$,
\begin{equation*}
\hat{\bm{\theta}} = \mathop{\arg\min}\limits_{\bm{\theta} \in \mathbb{R}^{Td}} \big\{\frac{1}{T} \norm{\bY -\bK\bm{\theta}}_{2}^{2}+\norm{\bm{\theta}}_{1}\big\}.    
\end{equation*}
By Lemma~\ref{lemma_RE_subgradients_lasso}, there is a subgradient 
$\hat h = [\hat h _{r, \bullet}]_{r=1, \ldots, T} \in \partial \norm{\hat {\bm{\theta}}}_{ 1}$ such that 
\begin{equation*}
\langle \frac{2}{T} \bm{K}^{\top}\big(\bm{K} \hat{\bm{\theta}} - \bm{Y} \big) + \lambda \hat{h}, \hat{\bm{\theta}}-\bm{\theta} \rangle= 0, \text{ for all } \bm{\theta} \in \mathbb{R}^{Td},
\end{equation*}
it follows that
\begin{equation*}
\langle \frac{2}{T} \bm{K}^{\top}\big(\bm{K}\hat{\bm{\theta}} - M^{\star} \big) - \frac{2}{T} \bm{K}^{\top}\big(\bm{Y} - M^{\star} \big), \hat{\bm{\theta}}-\bm{\theta} \rangle + \lambda \langle \hat{h}, \hat{\bm{\theta}}-\bm{\theta} \rangle= 0.
\end{equation*}
Since the subdifferential mapping is monotone~\citep{rockafellar1997convex}, $\langle \hat{h}-h, \hat{\bm{\theta}}-\bm{\theta} \rangle \geq 0$, then we have
\begin{equation*}
\frac{2}{T} \langle \bm{K}\hat{\bm{\theta}} - M^{\star}, \bm{K}(\hat{\bm{\theta}}-\bm{\theta}) \rangle  - \frac{2}{T} \langle \bm{K}^{\top}\big(\bm{Y} - M^{\star} \big), \hat{\bm{\theta}}-\bm{\theta} \rangle + \lambda \langle h, \hat{\bm{\theta}}-\bm{\theta} \rangle \leq 0.
\end{equation*}
i.e.,
\begin{equation*}
\frac{2}{T} \langle \bm{K}\hat{\bm{\theta}} - M^{\star}, \bm{K}(\hat{\bm{\theta}}-\bm{\theta}) \rangle  \leq \frac{2}{T} \langle \bm{K}^{\top}\big(\bm{Y} - M^{\star} \big), \hat{\bm{\theta}}-\bm{\theta} \rangle - \lambda \langle h, \hat{\bm{\theta}}-\bm{\theta} \rangle .
\end{equation*}
For the left-hand side,
\begin{align*}
&\frac{2}{T} \langle \bm{K}\hat{\bm{\theta}} - M^{\star}, \bm{K}(\hat{\bm{\theta}}-\bm{\theta}) \rangle \\
&=\frac{1}{T}\norm{\bm{K}\hat{\bm{\theta}} - M^{\star}}_{2}^{2} + \frac{1}{T}\norm{\bm{K} (\hat{\bm{\theta}}-\bm{\theta})}_{2}^{2}-\frac{1}{T}\norm{\bm{K}\bm{\theta} - M^{\star}}_{2}^{2},
\end{align*}
then
\begin{align}\label{eqution_inequation-lasso}
&\frac{1}{T}\norm{\bm{K}\hat{\bm{\theta}} - M^{\star}}_{2}^{2} + \frac{1}{T}\norm{\bm{K} (\hat{\bm{\theta}}-\bm{\theta})}_{2}^{2} \nonumber \\
& \leq \frac{1}{T}\norm{\bm{K}\bm{\theta} - M^{\star}}_{2}^{2} + \frac{2}{T} \langle \bm{K}^{\top}\big(\bm{Y} - M^{\star} \big), \hat{\bm{\theta}}-\bm{\theta} \rangle - \lambda \langle h, \hat{\bm{\theta}}-\bm{\theta} \rangle.
\end{align}

If $\langle \bm{K}\hat{\bm{\theta}} - M^{\star}, \bm{K}(\hat{\bm{\theta}}-\bm{\theta}) \rangle < 0$, we have $\frac{1}{T}\norm{\bm{K}\hat{\bm{\theta}} - M^{\star}}_{2}^{2} < \frac{1}{T}\norm{\bm{K}\bm{\theta} - M^{\star}}_{2}^{2}$, which yiled the result. If $\langle \bm{K}\hat{\bm{\theta}} - M^{\star}, \bm{K}(\hat{\bm{\theta}}-\bm{\theta}) \geq 0$, it follows that 
\begin{equation*}
\frac{2}{T} \langle \bm{K}^{\top}\big(\bm{Y} - M^{\star} \big), \hat{\bm{\theta}}-\bm{\theta} \rangle - \lambda \langle h, \hat{\bm{\theta}}-\bm{\theta} \rangle \geq 0.
\end{equation*} 
Since
\begin{align*}
&\frac{2}{T} \langle \bm{K}^{\top}\big(\bm{Y} - M^{\star} \big), \hat{\bm{\theta}}-\bm{\theta} \rangle \\
&= \frac{2}{T} \sum_{r=1}^{T} \langle K_{r\bullet}^{\top} \big(\bm{Y} - M^{\star} \big), (\hat{\theta}_{r\bullet}-\theta_{r\bullet}) \rangle \\
& \leq \sum_{r=1}^{T} \sum_{j=1}^{d} \frac{2}{T} \sum_{t=1}^{T} \big|K_{h,1}\big(\frac{t}{T}-\frac{r}{T}\big) K_{h,2}(X_{t,T}^{j}-X_{r,T}^{j}) \varepsilon_{t,T}\big|\big|\hat{\theta}_{r, j}-\theta_{r, j}\big|..
\end{align*}
We consider the event $\mathscr{U}_{T}^{\lambda}=\bigcap_{r=1}^{T} \bigcap_{j=1}^{d}\mathscr{U}_{r,j}^{\lambda},$where
\begin{equation*}
\mathscr{U}_{r,j}^{\lambda} = \Big\{\frac{2}{T}\sum_{t=1}^{T} \big|K_{h,1}\big(\frac{t}{T}-\frac{r}{T}\big) K_{h,2}(X_{t,T}^{j}-X_{r,T}^{j})\varepsilon_{t,T}\big| \leq \frac{\lambda}{2} \Big\},
\end{equation*}
then we have
\begin{equation}\label{equ_interaction_lasso}
\frac{2}{T} \langle \bm{K}^{\top}\big(\bm{Y} - M^{\star} \big), \hat{\bm{\theta}}-\bm{\theta} \rangle \leq \frac{\lambda}{2} \sum_{r=1}^{T} \norm{\hat{\theta}_{r\bullet}-\theta_{r\bullet}}_{1}.
\end{equation}
From the definition of the subgradient $g = [g _{r, \bullet}]_{r=1, \ldots, T} \in \partial \norm{\bm{\theta}}_{1}$, see Lemma~\ref{lemma_RE_subgradients_lasso}, we can choose $g$ such that
\begin{align*}
h_{r, \bullet} &=  \sgn(\theta_{r \bullet})_{r \in \{1, \ldots, J_{r}(\bm{\theta})\}} \\
h_{r, \bullet} &= \sgn(
\hat{\theta}_{r \bullet} )_{r \in \{1, \ldots, 
J_{r}^{\complement}(\bm{\theta}) \}} = \sgn(\hat{\theta}_{r \bullet} - \theta_{r \bullet} )_{r \in \{1, \ldots, 
J_{r}^{\complement}(\bm{\theta}) \}}.
\end{align*}
This gives
\begin{align*}
&- \lambda \langle g, \hat{\bm{\theta}} - \bm{\theta} \rangle\\
&= - \lambda  \sum_{r=1}^{T} \langle h_{r, \bullet}, \hat{\theta}_{r \bullet} - \theta_{r \bullet} \rangle \\
&= \lambda  \sum_{r=1}^{T} \langle (-h_{r \bullet} )_{J_{r}(\bm{\theta})}, (\hat{\theta}_{r \bullet} - \theta_{r \bullet})_{J_{r}(\bm{\theta})} \rangle - \lambda  \sum_{r=1}^{T} \langle (h_{r, \bullet})_{J_{r}^{\complement}(\bm{\theta})}, (\hat{\theta}_{r \bullet} - \theta_{r  \bullet})_{J_{r}^{\complement}(\bm{\theta})} \rangle\\
&=\lambda  \sum_{r=1}^{T} \langle (- \sgn( \theta_{r \bullet}))_{J_{r}(\bm{\theta})},  (\hat{\theta}_{r \bullet} - \theta_{r \bullet})_{J_{r}(\bm{\theta})} \rangle - \lambda \sum_{r=1}^{T} \langle ( \sgn( \theta_{r \bullet}))_{J_{r}^{\complement}(\bm{\theta})}, (\hat{\theta}_{r \bullet} - \theta_{r \bullet})_{J_{r}^{\complement}(\bm{\theta})} \rangle.
\end{align*}

Using a triangle inequality and the fact that $\langle \sgn(x), x \rangle = \norm{x}_{1}$, imply that

\begin{equation}
\label{equ_interaction_subgradient_lasso}
-\lambda \langle g, \hat{\bm{\theta}} - \bm{\theta} \rangle
 \leq \lambda \sum_{r=1}^{T} \| (\hat{\theta}_{r \bullet} - \theta_{r \bullet})_{J_{r}(\bm{\theta})} \|_{1} - \lambda \sum_{r=1}^{T} \|  (\hat{\theta}_{r \bullet} - \theta_{r \bullet})_{J_{r}^{\complement}(\bm{\theta})} \|_{1}
\end{equation}

Note on $\mathscr{U}_{T}^{\lambda }$ with equation~(\ref{equ_interaction_lasso}) and~(\ref{equ_interaction_subgradient_lasso}), we have
\begin{equation*}
\frac{\lambda }{2} \sum_{r=1}^{T} \norm{\hat{\theta}_{r\bullet}-\theta_{r\bullet}}_{1} 
+ \lambda \sum_{r=1}^{T} \|(\hat{\theta}_{r \bullet} - \theta_{r \bullet})_{J_{r}(\bm{\theta})} \|_{1} 
- \lambda \sum_{r=1}^{T} \|(\hat{\theta}_{r \bullet} - \theta_{r \bullet})_{J_{r}^{\complement}(\bm{\theta})} \|_{1} \geq 0,    
\end{equation*}
i.e.,
\begin{equation*}
3\sum_{r=1}^{T} \|(\hat{\theta}_{r \bullet} - \theta_{r \bullet})_{J_{r}(\bm{\theta})} \|_{1} \geq \sum_{r=1}^{T} \|(\hat{\theta}_{r \bullet} - \theta_{r \bullet})_{J_{r}^{\complement}(\bm{\theta})} \|_{1}.
\end{equation*}
By Assumption~\ref{Assumption_RE_condition}-(i), we have $\hat{\bm{\theta}}-\bm{\theta} \in S_{J}$, then the Equation~(\ref{eqution_inequation-lasso}) follows
\begin{align*}
&\frac{1}{T}\norm{\bm{K}\hat{\bm{\theta}} - M^{\star}}_{2}^{2} + \frac{1}{T}\norm{\bm{K} (\hat{\bm{\theta}}-\bm{\theta})}_{2}^{2} \\
& \leq \frac{1}{T}\norm{\bm{K}\bm{\theta} - M^{\star}}_{2}^{2} + \frac{2}{T} \langle \bm{K}^{\top}\big(\bm{Y} - M^{\star} \big), \hat{\bm{\theta}}-\bm{\theta} \rangle - \lambda \langle g, \hat{\bm{\theta}}-\bm{\theta} \rangle\\
& \leq  \frac{1}{T}\norm{\bm{K}\bm{\theta} - M^{\star}}_{2}^{2} + \frac{\lambda}{2} \sum_{r=1}^{T} \norm{\hat{\theta}_{r\bullet}-\theta_{r\bullet}}_{1}\\
& \quad + \lambda \sum_{r=1}^{T} \|(\hat{\theta}_{r \bullet} - \theta_{r \bullet})_{J_{r}(\bm{\theta})} \|_{1} - \lambda \sum_{r=1}^{T} \|(\hat{\theta}_{r \bullet} - \theta_{r \bullet})_{J_{r}^{\complement}(\bm{\theta})} \|_{1}\\
& \leq  \frac{1}{T}\norm{\bm{K}\bm{\theta} - M^{\star}}_{2}^{2} + \frac{\lambda}{2} \sum_{r=1}^{T} \norm{(\hat{\theta}_{r\bullet}-\theta_{r\bullet})_{J_{r}(\bm{\theta})}}_{1}
+\frac{\lambda}{2} \sum_{r=1}^{T} \norm{(\hat{\theta}_{r\bullet}-\theta_{r\bullet})_{J_{r}^{\complement}(\bm{\theta})}}_{1}\\
& \quad + \lambda \sum_{r=1}^{T} \|(\hat{\theta}_{r \bullet} - \theta_{r \bullet})_{J_{r}(\bm{\theta})} \|_{1} - \lambda \sum_{r=1}^{T} \|(\hat{\theta}_{r \bullet} - \theta_{r \bullet})_{J_{r}^{\complement}(\bm{\theta})} \|_{1}\\
& \leq  \frac{1}{T}\norm{\bm{K}\bm{\theta} - M^{\star}}_{2}^{2} + \frac{3\lambda}{2} \sum_{r=1}^{T} \norm{(\hat{\theta}_{r\bullet}-\theta_{r\bullet})_{J_{r}(\bm{\theta})}}_{1}
-\frac{\lambda}{2} \sum_{r=1}^{T} \norm{(\hat{\theta}_{r\bullet}-\theta_{r\bullet})_{J_{r}^{\complement}(\bm{\theta})}}_{1}\\
& \leq \frac{1}{T}\norm{\bm{K}\bm{\theta} - M^{\star}}_{2}^{2} + \frac{3\lambda}{2} \sum_{r=1}^{T} \norm{(\hat{\theta}_{r\bullet}-\theta_{r\bullet})_{J_{r}(\bm{\theta})}}_{1}\\
& \leq \frac{1}{T}\norm{\bm{K}\bm{\theta} - M^{\star}}_{2}^{2} + \frac{3\lambda \sqrt{J^{\star}}}{2} \norm{(\hat{\bm{\theta}} -\bm{\theta})_{J_{r}(\bm{\theta})}}_{2},
\end{align*}
then we have
\begin{equation*}
\frac{1}{T}\norm{\bm{K}\hat{\bm{\theta}} - M^{\star}}_{2}^{2} + \frac{1}{T}\norm{\bm{K} (\hat{\bm{\theta}}-\bm{\theta})}_{2}^{2} 
\leq \frac{1}{T}\norm{\bm{K}\bm{\theta} - M^{\star}}_{2}^{2} + \frac{3\lambda \sqrt{J^{\star}}}{2} \frac{\norm{\bm{K}(\hat{\bm{\theta}}-\bm{\theta})}_{2}}{\sqrt{T} \kappa(\bm{K}, J(\bm{\theta}))}.
\end{equation*}
Since the fact $2uv \leq u^{2}+v^{2}$,
\begin{equation*}
\frac{1}{T}\norm{\bm{K}\hat{\bm{\theta}} - M^{\star}}_{2}^{2} + \frac{1}{T}\norm{\bm{K} (\hat{\bm{\theta}}-\bm{\theta})}_{2}^{2} 
\leq \frac{1}{T}\norm{\bm{K}\bm{\theta} - M^{\star}}_{2}^{2} + \frac{9 \lambda^{2} J^{\star}}{16 \kappa^{2}(\bm{K}, J(\bm{\theta}))} + \frac{1}{T}\norm{\bm{K} (\hat{\bm{\theta}}-\bm{\theta})}_{2}^{2} ,
\end{equation*}
i.e.,
\begin{equation*}
\frac{1}{T}\norm{\bm{K}\hat{\bm{\theta}} - M^{\star}}_{2}^{2} 
\leq \frac{1}{T}\norm{\bm{K}\bm{\theta} - M^{\star}}_{2}^{2} + \frac{9 \lambda^{2} J^{\star}}{16 \kappa^{2}(\bm{K}, J(\bm{\theta}))}. 
\end{equation*}
It means as
\begin{equation}\label{fastboundlasso}
R(\hat m, m^\star) \leq \inf_{\bm{\theta} \in\R^{T d}} \big\{R(m_{\theta}, m^\star) + \frac{9 \lambda^{2} J^{\star}}{16 \kappa^{2}(\bm{K}, J(\bm{\theta}))} \big\}. 
\end{equation}

From the proof of Theorem~\ref{theorem:least_sq_oracle_ineq_lasso_pen_subweibull}, if the sample size satisfies $ T \geq c(\log d)^{\frac{2}{\eta}-1}$ with $1/2 \leq \eta<1$ and $c>1$, set $\lambda = \bigO(\sqrt{\frac{c\log d +\log T}{T^{1-2\xi}}})$, and the bandwidth $g=\bigO(T^{-\xi})$ with $0<\xi<1/2$, we have $\frac{2}{T}\langle \bK(\hat{\bm{\theta}}- \bm{\theta}),\beps \rangle \leq \frac{\lambda}{2} \| \hat{\bm{\theta}}- \bm{\theta} \|_{1} $ with a probability larger than $1 - d^{1 - c}$. Combined with the (\ref{deltalasso}), (\ref{Kdeltalasso}) and (\ref{fastboundlasso}), we get the result.

\subsection{Proof of Theorem~\ref{theorem:fast_TV_pen_SubWeibull}: fast oracle inequality for sub-Weibull distribution with weighted total variation penalization}
\textbf{Step 1.}
Recall that for all $\bm{\theta} \in \mathbb{R}^{Td}$,
\begin{equation*}
\hat{\bm{\theta}} = \mathop{\arg\min}\limits_{\bm{\theta} \in \mathbb{R}^{Td}} \big\{\frac{1}{T} \norm{\bY -\bK\bm{\theta}}_{2}^{2}+\norm{\bm{\theta}}_{\TV,\lambda}\big\}.    
\end{equation*}
By Lemma~\ref{lemma_RE_subgradients_TV}, there is a subgradient 
$\hat g = [\hat g _{r, \bullet}]_{r=1, \ldots, T} \in \partial \norm{\hat {\bm{\theta}}}_{ \TV, \lambda}$ such that 
\begin{equation*}
\langle \frac{2}{T} \bm{K}^{\top}\big(\bm{K} \hat{\bm{\theta}} - \bm{Y} \big) + \hat{g}, \hat{\bm{\theta}}-\bm{\theta} \rangle= 0, \text{ for all } \bm{\theta} \in \mathbb{R}^{Td},
\end{equation*}
it follows that
\begin{equation*}
\langle \frac{2}{T} \bm{K}^{\top}\big(\bm{K}\hat{\bm{\theta}} - M^{\star} \big) - \frac{2}{T} \bm{K}^{\top}\big(\bm{Y} - M^{\star} \big), \hat{\bm{\theta}}-\bm{\theta} \rangle + \langle \hat{g}, \hat{\bm{\theta}}-\bm{\theta} \rangle= 0.
\end{equation*}
Since the subdifferential mapping is monotone~\citep{rockafellar1997convex}, $\langle \hat{g}-g, \hat{\bm{\theta}}-\bm{\theta} \rangle \geq 0$, then we have
\begin{equation*}
\frac{2}{T} \langle \bm{K}\hat{\bm{\theta}} - M^{\star}, \bm{K}(\hat{\bm{\theta}}-\bm{\theta}) \rangle  - \frac{2}{T} \langle \bm{K}^{\top}\big(\bm{Y} - M^{\star} \big), \hat{\bm{\theta}}-\bm{\theta} \rangle + \langle g, \hat{\bm{\theta}}-\bm{\theta} \rangle \leq 0.
\end{equation*}
i.e.,
\begin{equation*}
\frac{2}{T} \langle \bm{K}\hat{\bm{\theta}} - M^{\star}, \bm{K}(\hat{\bm{\theta}}-\bm{\theta}) \rangle  \leq \frac{2}{T} \langle \bm{K}^{\top}\big(\bm{Y} - M^{\star} \big), \hat{\bm{\theta}}-\bm{\theta} \rangle - \langle g, \hat{\bm{\theta}}-\bm{\theta} \rangle .
\end{equation*}
For the left-hand side,
\begin{align*}
&\frac{2}{T} \langle \bm{K}\hat{\bm{\theta}} - M^{\star}, \bm{K}(\hat{\bm{\theta}}-\bm{\theta}) \rangle \\
&=\frac{1}{T}\norm{\bm{K}\hat{\bm{\theta}} - M^{\star}}_{2}^{2} + \frac{1}{T}\norm{\bm{K} (\hat{\bm{\theta}}-\bm{\theta})}_{2}^{2}-\frac{1}{T}\norm{\bm{K}\bm{\theta} - M^{\star}}_{2}^{2},
\end{align*}
then
\begin{align}\label{eqution_inequation-TV}
&\frac{1}{T}\norm{\bm{K}\hat{\bm{\theta}} - M^{\star}}_{2}^{2} + \frac{1}{T}\norm{\bm{K} (\hat{\bm{\theta}}-\bm{\theta})}_{2}^{2} \nonumber \\
& \leq \frac{1}{T}\norm{\bm{K}\bm{\theta} - M^{\star}}_{2}^{2} + \frac{2}{T} \langle \bm{K}^{\top}\big(\bm{Y} - M^{\star} \big), \hat{\bm{\theta}}-\bm{\theta} \rangle - \langle g, \hat{\bm{\theta}}-\bm{\theta} \rangle.
\end{align}

If $\langle \bm{K}\hat{\bm{\theta}} - M^{\star}, \bm{K}(\hat{\bm{\theta}}-\bm{\theta}) \rangle < 0$, we have $\frac{1}{T}\norm{\bm{K}\hat{\bm{\theta}} - M^{\star}}_{2}^{2} < \frac{1}{T}\norm{\bm{K}\bm{\theta} - M^{\star}}_{2}^{2}$, which yiled the result. If $\langle \bm{K}\hat{\bm{\theta}} - M^{\star}, \bm{K}(\hat{\bm{\theta}}-\bm{\theta}) \geq 0$, it follows that 
\begin{equation*}
\frac{2}{T} \langle \bm{K}^{\top}\big(\bm{Y} - M^{\star} \big), \hat{\bm{\theta}}-\bm{\theta} \rangle - \langle g, \hat{\bm{\theta}}-\bm{\theta} \rangle \leq 0.
\end{equation*} 
Let $D^{-1}=V$, we have
\begin{align*}
\frac{2}{T} \langle \bm{K}^{\top}\big(\bm{Y} - M^{\star} \big), \hat{\bm{\theta}}-\bm{\theta} \rangle
&= \frac{2}{T} \langle (\bm{K} V)^{\top} \big(\bm{Y} - M^{\star} \big), D\hat{\bm{\theta}}-\bm{\theta} \rangle \\
&= \frac{2}{T} \sum_{r=1}^{T} \langle (K_{r\bullet}V_{r})^{\top} \big(\bm{Y} - M^{\star} \big), D_{r}(\hat{\theta}_{r\bullet}-\theta_{r\bullet}) \rangle \\
&= \frac{2}{T} \sum_{r=1}^{T} \sum_{j=1}^{d} \bigg( (K_{r\bullet}V_{r})^{\top} \big(\bm{Y} - M^{\star} \big)\bigg)_{j} \bigg( D_{r}(\hat{\theta}_{r\bullet}-\theta_{r\bullet})\bigg)_{j} \\
&\leq \frac{2}{T} \sum_{r=1}^{T} \sum_{j=1}^{d} \bigg| \varepsilon ^{\top} (K_{r\bullet}V_{r})_{j} \bigg| \bigg| \bigg( D_{r}(\hat{\theta}_{r\bullet}-\theta_{r\bullet})\bigg)_{j} \bigg|.
\end{align*}
We consider the event 
\begin{equation}\label{equation_event_RE_TV}
\mathscr{U}_{T}^{\lambda_{j}}=\bigcap_{r=1}^{T} \bigcap_{j=1}^{d}\mathscr{U}_{r,j}^{\omega},
 \text{ where }
\mathscr{U}_{r,j}^{\omega} = \Big\{\frac{1}{T} \big|\varepsilon^{T}(K_{r\bullet}V_{r})_{j}\big| \leq \frac{\lambda_{j}}{4} \Big\},
\end{equation}
then we have
\begin{align}\label{equ_interaction}
\frac{2}{T} \langle \bm{K}^{\top}\big(\bm{Y} - M^{\star} \big), \hat{\bm{\theta}}-\bm{\theta} \rangle &\leq \frac{1}{2} \sum_{r=1}^{T} \sum_{j=1}^{d} \lambda_{j} \bigg| \bigg( D_{r}(\hat{\theta}_{r\bullet}-\theta_{r\bullet})\bigg)_{j} \bigg| \nonumber\\
& =\sum_{r=1}^{T}\sum_{j=1}^{d}\lambda_{j}\norm{D_{r}(\hat{\theta}_{r\bullet}-\theta_{r\bullet})}_{1} \nonumber\\
& =\frac{1}{2} \sum_{r=1}^{T} \norm{\hat{\theta}_{r\bullet}-\theta_{r\bullet}}_{\TV,\lambda}.
\end{align}
From the definition of the subgradient $g = [g _{r, \bullet}]_{r=1, \ldots, T} \in \partial \norm{\bm{\theta}}_{\TV, \lambda}$, see Lemma~\ref{lemma_RE_subgradients_TV}, we can choose $g$ such that
\begin{align*}
h_{r, \bullet} &= \bigg( D_{r}^{\top} (\lambda_{j} \odot \sgn(D_{r} 
\theta_{r \bullet})) \bigg)_{r \in \{1, \ldots, J_{r}(\bm{\theta})\}} \\
h_{r, \bullet} &= \bigg( D_{r}^{\top} (\lambda_{j} \odot \sgn(D_{r} 
\hat{\theta}_{r \bullet} )) \bigg)_{r \in \{1, \ldots, 
J_{r}^{\complement}(\bm{\theta}) \}} \\
&= \bigg( D_{r}^{\top} (\hat{\lambda}_{j} \odot \sgn(D_{r} (\theta_{r \bullet}- 
\theta_{r \bullet}))) \bigg)_{r \in \{1, \ldots, J_{r}^{\complement}
(\bm{\theta}) \}}.
\end{align*}
This gives
\begin{align*}
&-\langle g, \hat{\bm{\theta}} - \bm{\theta} \rangle\\
&= -\sum_{r=1}^{T} \langle h_{r, \bullet}, \hat{\theta}_{r \bullet} - \theta_{r \bullet} \rangle \\
&= \sum_{r=1}^{T} \langle (-h_{r \bullet} )_{J_{r}(\bm{\theta})}, (\hat{\theta}_{r \bullet} - \theta_{r \bullet})_{J_{r}(\bm{\theta})} \rangle - \sum_{r=1}^{T} \langle (h_{r, \bullet})_{J_{r}^{\complement}(\bm{\theta})}, (\hat{\theta}_{r \bullet} - \theta_{r  \bullet})_{J_{r}^{\complement}(\bm{\theta})} \rangle\\
&=\sum_{r=1}^{T} \langle (-\lambda_{j} \odot \sgn(D_{r} \theta_{r \bullet}))_{J_{r}(\bm{\theta})}, D_{r} (\hat{\theta}_{r \bullet} - \theta_{r \bullet})_{J_{r}(\bm{\theta})} \rangle \\
& \qquad - \sum_{r=1}^{T} \langle (\lambda_{j} \odot \sgn(D_{r} \theta_{r \bullet}))_{J_{r}^{\complement}(\bm{\theta})}, D_{r} (\hat{\theta}_{r \bullet} - \theta_{r \bullet})_{J_{r}^{\complement}(\bm{\theta})} \rangle.
\end{align*}

Using a triangle inequality and the fact that $\langle \sgn(x), x \rangle = \norm{x}_{1}$, imply that
\begin{align}\label{equ_interaction_subgradient}
-\langle g, \hat{\bm{\theta}} - \bm{\theta} \rangle 
& \leq \sum_{r=1}^{T} \| ( \lambda_{j} \odot D_{r} (\hat{\theta}_{r \bullet} - \theta_{r \bullet}))_{J_{r}(\bm{\theta})} \|_{1} - \sum_{r=1}^{T} \| ( \lambda_{j} \odot D_{r} (\hat{\theta}_{r \bullet} - \theta_{r \bullet}))_{J_{r}^{\complement}(\bm{\theta})} \|_{1} \nonumber\\
& = \sum_{r=1}^{T} \|(\hat{\theta}_{r \bullet} - \theta_{r \bullet})_{J_{r}(\bm{\theta})} \|_{\TV,\lambda} - \sum_{r=1}^{T} \|(\hat{\theta}_{r \bullet} - \theta_{r \bullet})_{J_{r}^{\complement}(\bm{\theta})} \|_{\TV,\lambda}
\end{align}
Note on $\mathscr{U}_{T}^{\lambda_{j}}$ with equation~(\ref{equ_interaction}) and~(\ref{equ_interaction_subgradient}), we have
\begin{equation*}
\frac{1}{2} \sum_{r=1}^{T} \norm{\hat{\theta}_{r\bullet}-\theta_{r\bullet}}_{\TV,\lambda} 
+ \sum_{r=1}^{T} \|(\hat{\theta}_{r \bullet} - \theta_{r \bullet})_{J_{r}(\bm{\theta})} \|_{\TV,\lambda} 
- \sum_{r=1}^{T} \|(\hat{\theta}_{r \bullet} - \theta_{r \bullet})_{J_{r}^{\complement}(\bm{\theta})} \|_{\TV,\lambda} \geq 0,    
\end{equation*}
i.e.,
\begin{equation*}
3\sum_{r=1}^{T} \|(\hat{\theta}_{r \bullet} - \theta_{r \bullet})_{J_{r}(\bm{\theta})} \|_{\TV,\lambda} \geq \sum_{r=1}^{T} \|(\hat{\theta}_{r \bullet} - \theta_{r \bullet})_{J_{r}^{\complement}(\bm{\theta})} \|_{\TV,\lambda},
\end{equation*}
it also means that
\begin{equation*}
3\sum_{r=1}^{T}\sum_{j=1}^{d}\lambda_{j}\norm{D_{r}(\hat{\theta}_{r \bullet} - \theta_{r \bullet})_{J_{r}}}_{1} \geq \sum_{r=1}^{T}\sum_{j=1}^{d}\lambda_{j}\norm{D_{r}(\hat{\theta}_{r \bullet} - \theta_{r \bullet})_{J_{r}^{\complement}}}_{1}.
\end{equation*}
By Assumption~\ref{Assumption_RE_condition}-(ii) and Lemma~\ref{lemma-compatibility-1_lambda}, we have
\begin{equation*}
\hat{\bm{\theta}}-\bm{\theta} \in S_{\TV,J} \text{ and } D(\hat{\bm{\theta}}-\bm{\theta}) \in S_{1,J}.
\end{equation*}
The Equation~(\ref{eqution_inequation-TV}) follows
\begin{align*}
&\frac{1}{T}\norm{\bm{K}\hat{\bm{\theta}} - M^{\star}}_{2}^{2} + \frac{1}{T}\norm{\bm{K} (\hat{\bm{\theta}}-\bm{\theta})}_{2}^{2} \\
& \leq \frac{1}{T}\norm{\bm{K}\bm{\theta} - M^{\star}}_{2}^{2} + \frac{2}{T} \langle \bm{K}^{\top}\big(\bm{Y} - M^{\star} \big), \hat{\bm{\theta}}-\bm{\theta} \rangle - \langle g, \hat{\bm{\theta}}-\bm{\theta} \rangle\\
& \leq  \frac{1}{T}\norm{\bm{K}\bm{\theta} - M^{\star}}_{2}^{2} + \frac{1}{2} \sum_{r=1}^{T} \norm{\hat{\theta}_{r\bullet}-\theta_{r\bullet}}_{\TV,\lambda}\\
& \quad + \sum_{r=1}^{T} \|(\hat{\theta}_{r \bullet} - \theta_{r \bullet})_{J_{r}(\bm{\theta})} \|_{\TV,\lambda} - \sum_{r=1}^{T} \|(\hat{\theta}_{r \bullet} - \theta_{r \bullet})_{J_{r}^{\complement}(\bm{\theta})} \|_{\TV,\lambda}\\
& \leq  \frac{1}{T}\norm{\bm{K}\bm{\theta} - M^{\star}}_{2}^{2} + \frac{1}{2} \sum_{r=1}^{T} \norm{(\hat{\theta}_{r\bullet}-\theta_{r\bullet})_{J_{r}(\bm{\theta})}}_{\TV,\lambda}
+\frac{1}{2} \sum_{r=1}^{T} \norm{(\hat{\theta}_{r\bullet}-\theta_{r\bullet})_{J_{r}^{\complement}(\bm{\theta})}}_{\TV,\lambda}\\
& \quad + \sum_{r=1}^{T} \|(\hat{\theta}_{r \bullet} - \theta_{r \bullet})_{J_{r}(\bm{\theta})} \|_{\TV,\lambda} - \sum_{r=1}^{T} \|(\hat{\theta}_{r \bullet} - \theta_{r \bullet})_{J_{r}^{\complement}(\bm{\theta})} \|_{\TV,\lambda}\\
& \leq  \frac{1}{T}\norm{\bm{K}\bm{\theta} - M^{\star}}_{2}^{2} + \frac{3}{2} \sum_{r=1}^{T} \norm{(\hat{\theta}_{r\bullet}-\theta_{r\bullet})_{J_{r}(\bm{\theta})}}_{\TV,\lambda}
-\frac{1}{2} \sum_{r=1}^{T} \norm{(\hat{\theta}_{r\bullet}-\theta_{r\bullet})_{J_{r}^{\complement}(\bm{\theta})}}_{\TV,\lambda},
\end{align*}
it also means that
\begin{align*}
&\frac{1}{T}\norm{\bm{K}\hat{\bm{\theta}} - M^{\star}}_{2}^{2} + \frac{1}{T}\norm{\bm{K} (\hat{\bm{\theta}}-\bm{\theta})}_{2}^{2} \\
& \leq  \frac{1}{T}\norm{\bm{K}\bm{\theta} - M^{\star}}_{2}^{2} + \frac{3}{2} \sum_{r=1}^{T}\sum_{j=1}^{d}\lambda_{j}\norm{D_{r}(\hat{\theta}_{r\bullet}-\theta_{r\bullet})_{J_{r}(\bm{\theta})}}_{1}\\
& \quad -\frac{1}{2} \sum_{r=1}^{T}\sum_{j=1}^{d}\lambda_{j}\norm{D_{r}(\hat{\theta}_{r\bullet}-\theta_{r\bullet})_{J_{r}^{\complement}(\bm{\theta})}}_{1}\\
&\leq \frac{1}{T}\norm{\bm{K}\bm{\theta} - M^{\star}}_{2}^{2} + \frac{3}{2} \sum_{r=1}^{T}\sum_{j=1}^{d}\lambda_{j}\norm{D_{r}(\hat{\theta}_{r\bullet}-\theta_{r\bullet})_{J_{r}(\bm{\theta})}}_{1}.
\end{align*}
By using Lemma~\ref{lemma-compatibility-1_lambda}, we have
\begin{equation*}
\frac{1}{T}\norm{\bm{K}\hat{\bm{\theta}} - M^{\star}}_{2}^{2} + \frac{1}{T}\norm{\bm{K} (\hat{\bm{\theta}}-\bm{\theta})}_{2}^{2} 
\leq \frac{1}{T}\norm{\bm{K}\bm{\theta} - M^{\star}}_{2}^{2} + 2 \frac{\norm{\bm{K}(\hat{\bm{\theta}}-\bm{\theta})}_{2}}{\sqrt{T} \kappa_{V,\gamma}(J(\bm{\theta})) \kappa(\bm{K}, J(\bm{\theta}))},  
\end{equation*}
where $ \gamma= (\gamma_{1,1}, \dots, \gamma_{1,d}, \dots, \gamma_{T,1}, \dots, \gamma_{T,d})^{\top} \in \mathbb{R}^{Td}_{+}$ such that
\begin{equation*}
\forall r = 1, \ldots, T, \gamma_{r,j} = 
\begin{cases} 
\frac{3}{2} \lambda_{j}, & \text{if } j \in J_{r}(\bm{\theta}), \\
0, & \text{if } j \in J_{r}^{\complement}(\bm{\theta}),
\end{cases}    
\end{equation*}
and
\begin{equation*}
\kappa_{V,\gamma}(J) = \bigg\{ 32  \sum_{r=1}^{T}\sum_{j=1}^{d} | \gamma_{r,j} -\gamma_{r,j-1}|^{2}+  2|J_{r}|\norm{\gamma_{r,\bullet}}_\infty^{2}\Lambda_{\min, J_{r}}^{-1}\bigg\}^{-1/2}.
\end{equation*}
Since the fact $2uv \leq u^{2}+v^{2}$,
\begin{align*}
&\frac{1}{T}\norm{\bm{K}\hat{\bm{\theta}} - M^{\star}}_{2}^{2} + \frac{1}{T}\norm{\bm{K} (\hat{\bm{\theta}}-\bm{\theta})}_{2}^{2} \\
&\leq \frac{1}{T}\norm{\bm{K}\bm{\theta} - M^{\star}}_{2}^{2} + 2 \frac{\norm{\bm{K}(\hat{\bm{\theta}}-\bm{\theta})}_{2}}{\sqrt{T} \kappa_{V,\gamma}(J(\bm{\theta})) \kappa(\bm{K}, J(\bm{\theta}))}\\
&\leq \frac{1}{T}\norm{\bm{K}\bm{\theta} - M^{\star}}_{2}^{2} + \frac{1}{ \kappa_{V,\gamma}^{2}(J(\bm{\theta})) \kappa^{2}(\bm{K}, J(\bm{\theta}))}+\frac{1}{T}\norm{\bm{K}(\hat{\bm{\theta}}-\bm{\theta})}_{2}^{2},
\end{align*}
i.e.,
\begin{equation*}
\frac{1}{T}\norm{\bm{K}\hat{\bm{\theta}} - M^{\star}}_{2}^{2} 
\leq \frac{1}{T}\norm{\bm{K}\bm{\theta} - M^{\star}}_{2}^{2} + \frac{1}{ \kappa_{V,\gamma}^{2}(J(\bm{\theta})) \kappa^{2}(\bm{K}, J(\bm{\theta}))}. 
\end{equation*}
Obviously, 
\begin{equation*}
\frac{1}{ \kappa_{V,\gamma}^{2}(J(\bm{\theta}))} =32  \sum_{r=1}^{T}\sum_{j=1}^{d} | \gamma_{r,j} -\gamma_{r,j-1}|^{2}+  2|J_{r}|\norm{\gamma_{r,\bullet}}_\infty^{2}\Lambda_{\min, J_{r}}^{-1},
\end{equation*}
We write set $ J_{r}(\bm{\theta}) = \{j_{r}^{1}, \dots, j_{r}^{|J_{r}(\bm{\theta})|} \}$ and we set $B_{s} = \{ j_{r}^{s-1}+1, \dots, j_{r}^{s}\}$ for $ s \in \{1, \dots, |J_r(\bm{\theta})|\} $ with the convention that $ j_{r}^{0} = 0 $. Then
\begin{align*}
\sum_{j=1}^{d} |\gamma_{r,j} - \gamma_{r,j-1}|^2 
&= \sum_{s=1}^{|J_{r}(\bm{\theta})|} \sum_{j \in B_{s}} |\gamma_{r,j} - \gamma_{r,j-1}|^2\\
&= \sum_{s=1}^{|J_j(\theta)|} \bigg\{ |\gamma_{r,j_{r}^{s-1}+1} - \gamma_{r,j_{r}^{s-1}}|^2 + |\gamma_{r,j_{r}^{s}} - \gamma_{r,j_{r}^{s}-1}|^2 \bigg\}\\
&= \sum_{r=1}^{|J_j(\theta)|} \bigg\{ \gamma_{r,j_{r}^{s-1}}^{2} + \gamma_{r,j_{r}^{s}}^{2} \bigg\}\\
&= \sum_{r=1}^{|J_{r}(\bm{\theta})|} 2\gamma_{r,j_{r}^{s}}^{2}
\leq \frac{9}{2}|J_{r}(\bm{\theta})| \|(\lambda_{j})_{J_{r}(\bm{\theta})}\|_{\infty}^{2}.
\end{align*}
Therefore
\begin{align*}
&\frac{1}{ \kappa_{V,\gamma}^{2}(J(\bm{\theta}))} \\
& \leq 32  \sum_{r=1}^{T}\big(\big\{\frac{9}{2}|J_{r}(\bm{\theta})| \|(\lambda_{j})_{J_{r}(\bm{\theta})}\|_{\infty}^{2}\big\}+  \frac{9}{2}|J_{r}(\bm{\theta})| \|(\lambda_{j})_{J_{r}(\bm{\theta})}\|_{\infty}^{2}\Lambda_{\min, J_{r}}^{-1}\big)\\
&\leq 32  \sum_{r=1}^{T}\big(\big\{\frac{9}{2}+\frac{9}{2\Lambda_{\min, J_{r}}} \big\} |J_{r}(\bm{\theta})| \|(\lambda_{j})_{J_{r}(\bm{\theta})}\|_{\infty}^{2} \big)\\
& \leq 288 J^{\star} \max_{r=1,\dots,T}\|(\lambda_{j})_{J_{r}(\bm{\theta})}\|_{\infty}^{2}.
\end{align*}
Then we have
\begin{equation*}
\frac{1}{T}\norm{\bm{K}\hat{\bm{\theta}} - M^{\star}}_{2}^{2} 
\leq \frac{1}{T}\norm{\bm{K}\bm{\theta} - M^{\star}}_{2}^{2} + \frac{288 J^{\star}}{ \kappa^{2}(\bm{K}, J(\bm{\theta}))}\max_{r=1,\dots,T}\|(\lambda_{j})_{J_{r}(\bm{\theta})}\|_{\infty}^{2}. 
\end{equation*}
It means as
\begin{equation}\label{fastboundTV}
R(\hat m, m^\star) \leq \inf_{\bm{\theta} \in\R^{T d}} \big\{R(m_{\theta}, m^\star)  + \frac{288 J^{\star}}{ \kappa^{2}(\bm{K}, J(\bm{\theta}))}\max_{r=1,\dots,T}\|(\lambda_{j})_{J_{r}(\bm{\theta})}\|_{\infty}^{2} \big\}. 
\end{equation}

\textbf{Step 2.} 
From the definition of $\hat{\bm{\theta}}$, we have
\begin{equation*}
\frac{1}{T}\|\bm{Y}-\bm{K} \hat{\bm{\theta}}\|_{2}^{2}+ \norm{\hat{\bm{\theta}}}_{\TV,\lambda} \leq \frac{1}{T}\|\bm{Y}-\bm{K}\bm{\theta}\|_{2}^{2}+ \norm{\bm{\theta}}_{\TV,\lambda},
\end{equation*}
and
\begin{equation*}
\frac{1}{T}\|\bm{Y}-\bm{K} \hat{\bm{\theta}}\|_{2}^{2} -\frac{1}{T}\|\bm{Y}-\bm{K}\bm{\theta}\|_{2}^{2} \geq \frac{1}{T} \|\bm{K}(\bm{\theta} - \hat{\bm{\theta}})\|_2^2 -\frac{2}{T}\langle \varepsilon^{\top} \bm{K}, (\hat{\bm{\theta}} - \bm{\theta})\rangle,
\end{equation*}
it follows that
\begin{equation*}
\frac{1}{T} \|\bm{K}(\bm{\theta} - \hat{\bm{\theta}})\|_2^2 + \|\hat{\bm{\theta}}\|_{\TV,\lambda}
\leq \|\bm{\theta} \|_{\TV,\lambda} + \frac{2}{T}\langle \varepsilon^{\top} \bm{K}, (\hat{\bm{\theta}} - \bm{\theta})\rangle.
\end{equation*}
Let $D^{-1}=V$, we have
\begin{align*}
\frac{2}{T} \langle \bm{K}^{\top} \varepsilon , \hat{\bm{\theta}}-\bm{\theta} \rangle 
&= \frac{2}{T} \langle (\bm{K} V)^{\top} \varepsilon, D\hat{\bm{\theta}}-\bm{\theta} \rangle \\
&= \frac{2}{T} \sum_{r=1}^{T} \langle (K_{r\bullet}V_{r})^{\top} \varepsilon, D_{r}(\hat{\theta}_{r\bullet}-\theta_{r\bullet}) \rangle \\
&= \frac{2}{T} \sum_{r=1}^{T} \sum_{j=1}^{d} \bigg( (K_{r\bullet}V_{r})^{\top} \varepsilon\bigg)_{j} \bigg( D_{r}(\hat{\theta}_{r\bullet}-\theta_{r\bullet})\bigg)_{j} \\
&\leq \frac{2}{T} \sum_{r=1}^{T} \sum_{j=1}^{d} \bigg| \varepsilon ^{\top} (K_{r\bullet}V_{r})_{j} \bigg| \bigg| \bigg( D_{r}(\hat{\theta}_{r\bullet}-\theta_{r\bullet})\bigg)_{j} \bigg|.
\end{align*}
We consider the event (\ref{equation_event_RE_TV}),
then we have
\begin{equation*}
\frac{1}{T} \|\bm{K}(\bm{\theta} - \hat{\bm{\theta}})\|_2^2 
\leq \frac{1}{2} \|\hat{\bm{\theta}} - \bm{\theta}\|_{\TV,\lambda}+ \|\bm{\theta} \|_{\TV,\lambda} - \lambda \|\hat{\bm{\theta}} \|_{\TV,\lambda}.
\end{equation*}
Adding $\frac{1}{2} \|\hat{\bm{\theta}} - \bm{\theta}\|_{\TV,\lambda}$ to both sides we get
\begin{align*}
\frac{1}{T} \|\bm{K}(\bm{\theta} - \hat{\bm{\theta}})\|_2^2 +\frac{1}{2} \|\hat{\bm{\theta}} - \bm{\theta}\|_{\TV,\lambda}
&\leq \|\hat{\bm{\theta}} - \bm{\theta}\|_{\TV,\lambda}+ \|\bm{\theta} \|_{\TV,\lambda} - \|\hat{\bm{\theta}} \|_{\TV,\lambda}\\
&\leq (\|\hat{\bm{\theta}} - \bm{\theta}\|_{\TV,\lambda}+ \|\bm{\theta} \|_{\TV,\lambda} - \|\hat{\bm{\theta}} \|_{\TV,\lambda})\\
&\leq \sum_{r=1}^{T} (\|\hat{\theta}_{r \bullet} - \theta_{r \bullet}\|_{\TV,\lambda} + \|\theta_{r\bullet}\|_{\TV,\lambda} - \|\hat{\theta}_{r\bullet}\|_{\TV,\lambda})\\
&\leq \sum_{r \in J_{r}} (\|\hat{\theta}_{r\bullet} - \theta_{r\bullet}\|_{\TV,\lambda} + \|\theta_{r\bullet}\|_{\TV,\lambda} - \|\hat{\theta}_{r\bullet}\|_{\TV,\lambda})\\
&\leq 2 \sum_{r \in J_{r}} \|\hat{\theta}_{r\bullet} - \theta_{r\bullet}\|_{\TV,\lambda}\\
&= 2 \| [\hat{\bm{\theta}}-\bm{\theta}]_{J} \|_{\TV,\lambda},
\end{align*}
similarly,
\begin{equation*}
\frac{1}{T} \|\bm{K}(\bm{\theta} - \hat{\bm{\theta}})\|_2^2 +\frac{1}{2} \sum_{r=1}^{T}\sum_{j=1}^{d}\lambda_{j}\norm{D_{r}(\hat{\theta}_{r \bullet} - \theta_{r \bullet})}_{1} \leq 2 \sum_{r=1}^{T}\sum_{j=1}^{d}\lambda_{j}\norm{D_{r}(\hat{\theta}_{r \bullet} - \theta_{r \bullet})_{J_{r}}}_{1}.
\end{equation*}
It follows that $\frac{1}{2} \|\hat{\bm{\theta}} - \bm{\theta}\|_{\TV,\lambda}
\leq 2 \| [\hat{\bm{\theta}}-\bm{\theta}]_{J} \|_{\TV,\lambda}$, i.e.,
\begin{equation*}
\frac{1}{2} \| [\hat{\bm{\theta}} - \bm{\theta}]_{J^{\complement}} \|_{\TV,\lambda}
+\frac{1}{2} \| [\hat{\bm{\theta}} - \bm{\theta}]_{J} \|_{\TV,\lambda}
\leq 2 \| [\hat{\bm{\theta}}-\bm{\theta}]_{J} \|_{\TV,\lambda},  
\end{equation*}
then we have
\begin{equation*}
\| [\hat{\bm{\theta}} - \bm{\theta}]_{J^{\complement}} \|_{\TV,\lambda}
\leq 3 \| [\hat{\bm{\theta}}-\bm{\theta}]_{J} \|_{\TV,\lambda},  
\end{equation*}
similarly,
\begin{equation*}
\sum_{r=1}^{T}\sum_{j=1}^{d}\lambda_{j}\norm{D_{r}(\hat{\theta}_{r \bullet} - \theta_{r \bullet})_{J_{r}^{\complement}}}_{1} \leq 3\sum_{r=1}^{T}\sum_{j=1}^{d}\lambda_{j}\norm{D_{r}(\hat{\theta}_{r \bullet} - \theta_{r \bullet})_{J_{r}}}_{1}.
\end{equation*}
By Assumption~\ref{Assumption_RE_condition}-(ii), we have
\begin{equation*}
\hat{\bm{\theta}}-\bm{\theta} \in S_{\TV,J} \text{ and } D(\hat{\bm{\theta}}-\bm{\theta}) \in S_{1,J}.
\end{equation*}
Let $\Delta=\hat{\bm{\theta}} - \bm{\theta}$, it also have that
\begin{align*}
\frac{1}{T} \| \bm{K}\Delta \|_{2}^{2} &\leq \frac{3}{2} \sum_{r=1}^{T}\sum_{j=1}^{d}\lambda_{j}\norm{D_{r}(\hat{\theta}_{r \bullet} - \theta_{r \bullet})_{J_{r}}}_{1}\\
&\leq \frac{\norm{\bm{K} \Delta}_{2}}{\sqrt{T} \kappa_{V,\gamma}(J(\bm{\theta})) \kappa(\bm{K}, J(\bm{\theta}))},  
\end{align*}
where $ \gamma= (\gamma_{1,1}, \dots, \gamma_{1,d}, \dots, \gamma_{T,1}, \dots, \gamma_{T,d})^{\top} \in \mathbb{R}^{Td}_{+}$ such that
\begin{equation*}
\forall r = 1, \ldots, T, \gamma_{r,j} = 
\begin{cases} 
\frac{3}{2} \lambda_{j}, & \text{if } j \in J_{r}(\bm{\theta}), \\
0, & \text{if } j \in J_{r}^{\complement}(\bm{\theta}),
\end{cases}    
\end{equation*}
and
\begin{equation*}
\kappa_{V,\gamma}(J) = \bigg\{ 32  \sum_{r=1}^{T}\sum_{j=1}^{d} | \gamma_{r,j} -\gamma_{r,j-1}|^{2}+  2|J_{r}|\norm{\gamma_{r,\bullet}}_\infty^{2}\Lambda_{\min, J_{r}}^{-1}\bigg\}^{-1/2}.
\end{equation*}
It follows that
\begin{equation}\label{KdeltaTV}
\frac{1}{T} \| \bm{K}\Delta \|_{2}
\leq \frac{ 1 }{ \sqrt{T} \kappa_{V,\gamma}(J(\bm{\theta})) \kappa(\bm{K}, J(\bm{\theta}))} \leq \frac{ \sqrt{288 J^{\star} }\max\limits_{r=1,\dots,T}\|(\lambda_{j})_{J_{r}(\bm{\theta})}\|_{\infty}}{ \sqrt{T} \kappa(\bm{K}, J(\bm{\theta})) },  
\end{equation}
From the definition of $\kappa(\bm{K}, J(\bm{\theta}))$ in Assumption~\ref{Assumption_RE_condition}-(ii),
\begin{equation*}
\| \Delta_{J} \|_{2} \leq \frac{1}{\kappa(\bm{K}, J(\bm{\theta}))} \frac{\| \bm{K}\Delta \|_{2} }{\sqrt{T}} \leq \frac{1}{ \kappa^{2}(\bm{K}, J(\bm{\theta})) \kappa_{V,\gamma}(J(\bm{\theta}))},
\end{equation*}
then
\begin{equation*}
\| \Delta \|_{2} = \| \Delta_{J} \|_{2} +\| \Delta_{J^{\complement}} \|_{2} \leq \sqrt{\| \Delta_{J^{\complement}} \|_{1} \| \Delta_{J^{\complement}} \|_{\infty}} + \| \Delta_{J} \|_{2}.
\end{equation*}
From $\Delta \in S_{J}$, $\| \Delta_{J^{\complement}} \|_{1} \leq 3\| \Delta_{J} \|_{1}$. Since $\Delta_{J}$ spans the largest coordinates of $\Delta$ in absolute value, $\| \Delta_{J^{\complement}} \|_{\infty} \leq \| \Delta_{J} \|_{1} / J^{\star} $, we get
\begin{equation}\label{deltaTV}
\| \Delta \|_{2} \leq \sqrt{\frac{3}{J^{\star}}} \| \Delta_{J} \|_{1} + \| \Delta_{J} \|_{2} \leq (\sqrt{3}+1)\| \Delta_{J} \|_{2} \leq \frac{(\sqrt{3}+1)  \sqrt{288 J^{\star} }\max\limits_{r=1,\dots,T}\|(\lambda_{j})_{J_{r}(\bm{\theta})}\|_{\infty}}{ \kappa^{2}(\bm{K}, J(\bm{\theta})) }.
\end{equation}

From the proof of Theorem~\ref{theorem:least_sq_oracle_ineq_TV_pen_subweibull}, if the sample size satisfies $ T \geq c(\log d)^{\frac{2}{\eta}-1}$ with $1/2 \leq \eta<1$ and $c>1$, set $\lambda_{j} = (d-j+1)\sqrt{\frac{c\log d +\log T}{T^{1-2\xi}}}$, and the bandwidth $h=\bigO(T^{-\xi})$ with $0<\xi<1/2$, the event on Equation~(\ref{equation_event_RE_TV}) satisfied with a probability larger than $1 - d^{1 - c}$, $c>1$, the constant $C_{1}$, $C_{2}$ depend only on $c$. Combined with the Combined with Combined with the (\ref{deltaTV}), (\ref{KdeltaTV}) and (\ref{fastboundTV}), we get the result., we get the result.

\subsection{Proof of Theorem~\ref{theorem:fast_lasso_pen_ragularvarying}: fast oracle inequality for regular varying heavy-tailed distribution with Lasso penalization}
From the proof of Theorem~\ref{theorem:least_sq_oracle_ineq_lasso_pen_ragularvarying}, if the sample size satisfies 
$$ T > \big( \frac{d^{c(2+1/\eta_{1}-2d_{1})}}{(c\log d)^{(d_{1}-1/\eta_{1})(\eta_{2}-1)+1}}\big)^{\frac{1}{(\eta_{2}+3)d_{1}-(\eta_{2}+1)/\eta_{1}-3}},$$
set 
$$\lambda = \frac{2C_{K,L}(2T+1)}{T^{1+\vartheta-\xi}}$$
with the bandwidth $h=\bigO(T^{-\xi})$, $0<\xi<\vartheta$, $c>1$, $0<\vartheta< \frac{(\eta_{1}-1)(\eta_{2}-1)-2\eta_{1}}{1+(2\eta_{1}-1)\eta_{2}}$, $\varphi, \eta_{1}>1$, $\eta_{2}>\frac{3\eta_{1}-1}{\eta_{1}-1}$, $\frac{1+\vartheta}{(1-\vartheta)(\eta_{2}-1)}+\frac{1}{\eta_{1}} < d_{1}< \frac{1-2\vartheta}{2(1-\vartheta)}+\frac{1}{2\eta_{1}}$, the constant $C_{K,L}$ depend on kernel bound and Lipschiz constant, then we have $\frac{2}{T}\langle \bK(\hat{\bm{\theta}}- \bm{\theta}),\beps \rangle \leq \frac{\lambda}{2} \| \hat{\bm{\theta}}- \bm{\theta} \|_{1} $ with a probability larger than $1 - d^{1 - c}$. Similar to the Proof of Theorem~\ref{theorem:fast_lasso_pen_SubWeibull}, combined with the (\ref{deltalasso}), (\ref{Kdeltalasso}) and (\ref{fastboundlasso}), we get the result.

\subsection{Proof of Theorem~\ref{theorem:fast_TV_pen_ragularvarying}: fast oracle inequality for regular varying heavy-tailed distribution with weighted total variation penalization}
From the proof of Theorem~\ref{theorem:TV_Regularly_V_H}, if the sample size satisfies 
$$ T > \big( \frac{d^{c(2+1/\eta_{1}-2d_{1})}}{(c\log d)^{(d_{1}-1/\eta_{1})(\eta_{2}-1)+1}}\big)^{\frac{1}{(\eta_{2}+3)d_{1}-(\eta_{2}+1)/\eta_{1}-3}},$$
and 
$$\lambda_{j} \geq (d-j+1) \frac{2C_{K,L}(2T+1)}{T^{1+\vartheta-\xi}}$$
with the bandwidth $h=\bigO(T^{-\xi})$ with $0<\xi<\vartheta$, the event on Equation~(\ref{equation_event_RE_TV}) satisfied
with a probability larger than $1 - d^{1 - c}$, $c>1$, $0<\vartheta< \frac{(\eta_{1}-1)(\eta_{2}-1)-2\eta_{1}}{1+(2\eta_{1}-1)\eta_{2}}$, $\varphi, \eta_{1}>1$, $\eta_{2}>\frac{3\eta_{1}-1}{\eta_{1}-1}$, $\frac{1+\vartheta}{(1-\vartheta)(\eta_{2}-1)}+\frac{1}{\eta_{1}} < d_{1}< \frac{1-2\vartheta}{2(1-\vartheta)}+\frac{1}{2\eta_{1}}$, the constant $C_{K,L}$ depend on kernel bound and Lipschiz constant. Similar to the Proof of Theorem~\ref{theorem:fast_TV_pen_SubWeibull}, combined with the (\ref{deltaTV}), (\ref{KdeltaTV}) and (\ref{fastboundTV}), we get the result.

\section{Useful lemmas} \label{sec:useful_lemmas}
In this section, we give the lemmas that are useful for the proof of the technical lemmas in Section \ref{sec:Proofs_of_Proposition}. 
First, we illustrate the concentration inequality for sums of stationary $\beta$-mixing sub-Weibull random variables in Lemma 13 of \citep{Wong2020}. We will need this result to give the concentration inequality for locally stationary random variables.

\begin{lemma}[Stationary sub-Weibull distribution (Lemma 13 in \cite{Wong2020}]\label{lemma.sumofmixingsubweibull}
Let $\{Z_{t,T}\}_{t=1}^{T}$ be a strictly stationary $\beta$-mixing sequence of zero mean random variables with {\it $\beta$-mixing} coefficient $ \beta(k) \leq \exp \bigg(-\varphi k^{\eta_{1}}\bigg) $, for some $ \varphi, \eta_{1}>1$. If it follows the {\it sub-Weibull ($\eta_{2}$)} with sub-Weibull constant $C_{\varepsilon}$ and $\eta$ be a parameter given by
\begin{equation*}
\frac{1}{\eta} = \frac{1}{\eta_{1}} + \frac{1}{\eta_{2}}, \quad \eta < 1.
\end{equation*}
Then, for $T \geq 4$ and any $\varrho > 1/T$,
\begin{equation*}
\mathbb{P}\Big(\frac{1}{T}\Big|\sum_{t=1}^{T} Z_{t, T}\Big|>\varrho \Big) \leq T \exp \bigg( -\frac{(\varrho T)^{\eta}}{C_{\varepsilon}^{\eta}C_{1}} \bigg) + \exp \bigg( -\frac{\varrho^{2}T}{C_{\varepsilon}^{2}C_{2}} \bigg),  
\end{equation*}
where the constants $C_{1}$, $C_{2}$ depend only on $\eta_{1}$, $\eta_{2}$ and $\varphi$.
\end{lemma}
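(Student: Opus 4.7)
The plan is to combine a truncation argument tailored to the sub-Weibull tail with the classical Bernstein blocking scheme for $\beta$-mixing sequences, essentially in the spirit of Merlev\`ede--Peligrad--Rio for geometric $\beta$-mixing, but keeping track of the Weibull exponent throughout. I would first reduce to a bounded-summand problem via truncation at a level $M_T$, then decouple blocks via Berbee's lemma, and finally apply a Bernstein-type inequality to the resulting independent block sums, optimizing the block length $B$ and truncation level $M_T$ at the end.

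\textbf{Step 1: truncation.} Split $Z_{t,T}=Z_{t,T}^{(1)}+Z_{t,T}^{(2)}$ with $Z_{t,T}^{(1)}=Z_{t,T}\mathbf{1}\{|Z_{t,T}|\le M_T\}$. The sub-Weibull tail condition gives $\mathbb{P}(\max_{t\le T}|Z_{t,T}|>M_T)\le T\exp(-(M_T/C_\varepsilon)^{\eta_2})$ by a union bound, which will produce the ``$T\exp(-(\varrho T)^\eta/(C_\varepsilon^\eta C_1))$'' term once $M_T$ is calibrated. One also has to control the bias $\mathbb{E}[Z_{t,T}\mathbf{1}\{|Z_{t,T}|>M_T\}]$ by an integral of the tail, which by Karamata-type estimates decays like $M_T^{1-\eta_2}$ or faster; this bias is absorbed into a shift by at most $\varrho/2$ once $M_T$ is large enough.

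\textbf{Step 2: blocking and coupling.} Partition $\{1,\dots,T\}$ into $2\mu$ consecutive blocks $I_1,\dots,I_{2\mu}$ of common length $B$ (with a possible remainder of length $<2B$ absorbed into the truncation term). Set $S_j=\sum_{t\in I_j}Z_{t,T}^{(1)}$ and write $\sum_{t\le T}Z_{t,T}^{(1)}=\sum_{j\le\mu}S_{2j-1}+\sum_{j\le\mu}S_{2j}+\text{remainder}$. Apply Berbee's lemma to each parity class to construct independent copies $S_{2j}^\ast$ with the same marginal laws, at total coupling cost bounded by $\mu\beta(B)\le \mu\exp(-\varphi B^{\eta_1})$.

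\textbf{Step 3: Bernstein on independent blocks.} Each $S_{2j}^\ast$ is bounded by $BM_T$ and centered up to the bias controlled in Step~1. Applying Bennett--Bernstein to $\sum_j S_{2j}^\ast$ with variance proxy $\mu B \operatorname{Var}(Z_{t,T}^{(1)})\lesssim T C_\varepsilon^2$ yields a tail of the form
\begin{equation*}
\exp\Big(-c\,\frac{(\varrho T)^2}{T C_\varepsilon^2+\varrho T\, BM_T}\Big),
\end{equation*}
which splits into a sub-Gaussian regime (first denominator term dominant) and a ``Bernstein'' regime (second term dominant).

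\textbf{Step 4: optimization.} Choose $B\asymp(\log T)^{1/\eta_1}$ so the coupling cost $\mu\exp(-\varphi B^{\eta_1})$ is smaller than either target term, and calibrate $M_T\asymp (\varrho T)^{\eta/\eta_2}$ so that the truncation term $T\exp(-(M_T/C_\varepsilon)^{\eta_2})$ matches the Bernstein regime with exponent $(\varrho T)^\eta/(C_\varepsilon^\eta C_1)$. The identity $\tfrac{1}{\eta}=\tfrac{1}{\eta_1}+\tfrac{1}{\eta_2}$ is exactly what makes the product $BM_T$ scale correctly so that both contributions align. In the sub-Gaussian regime one gets the second term $\exp(-\varrho^2 T/(C_\varepsilon^2 C_2))$ directly from Bernstein.

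The main obstacle, as always with this type of inequality, is the simultaneous balancing in Step~4: $M_T$, $B$ and $\mu$ are coupled through the constraints that (i) the coupling cost is negligible, (ii) the truncation remainder matches the large-deviation exponent, and (iii) the constants $C_1,C_2$ depend only on $\eta_1,\eta_2,\varphi$ and not on $T$ or $\varrho$. The condition $\eta<1$ is crucial here: without it, the sub-Weibull exponent would not survive the convolution over blocks, and the large-deviation tail would degenerate. The restriction $\varrho>1/T$ appears because below this scale the Bernstein term of the form $\exp(-\varrho^2 T)$ becomes uninformative and the blocking argument no longer gives a meaningful improvement over the trivial bound.
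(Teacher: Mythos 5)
The paper does not actually prove this lemma: it is imported verbatim as Lemma 13 of \cite{Wong2020}, whose proof in turn rests on the Bernstein-type inequality of Merlev\`ede, Peligrad and Rio for geometrically $\beta$-mixing bounded sequences, combined with a truncation step. Measured against that argument, your sketch has a genuine gap in Step 4: the single-scale blocking scheme you describe cannot be optimized to give the stated bound. Write $x=\varrho T$. Your constraints are (i) the coupling cost $(T/B)\exp(-\varphi B^{\eta_1})$ must be dominated by $T\exp(-x^{\eta}/C_1)$, forcing $B\gtrsim x^{\eta/\eta_1}$; (ii) the truncation cost $T\exp(-(M_T/C_\varepsilon)^{\eta_2})$ must match the same term, forcing $M_T\gtrsim x^{\eta/\eta_2}$; and (iii) the linear regime of Bernstein, $\exp(-cx/(BM_T))$, must also be dominated by $T\exp(-x^{\eta}/C_1)$ in the large-deviation range, forcing $BM_T\lesssim x^{1-\eta}$. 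But (i) and (ii) together give $BM_T\gtrsim x^{\eta/\eta_1+\eta/\eta_2}=x^{\eta\cdot(1/\eta)}=x$, while (iii) demands $BM_T\lesssim x^{1-\eta}\ll x$ since $\eta>0$. The identity $1/\eta=1/\eta_1+1/\eta_2$ is therefore not ``exactly what makes the product $BM_T$ scale correctly''; it is exactly the scaling at which the naive alignment collapses, and this is precisely why the Merlev\`ede--Peligrad--Rio proof replaces the single blocking by a multi-scale, Cantor-like decomposition of the index set.

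Your concrete choice $B\asymp(\log T)^{1/\eta_1}$ exposes the failure a second way: the coupling cost is then of order $(T/B)\,T^{-\varphi}$, which is only polynomially small in $T$, whereas for $\varrho$ of constant order both terms of the target bound are exponentially small in a power of $T$; a polynomial remainder cannot be absorbed into either. The toolbox you name (truncation calibrated to the sub-Weibull tail, Berbee coupling, Bernstein on blocks) is the right one, and it is essentially how the paper reduces its \emph{other} concentration results to this lemma, but to prove the lemma itself you must either invoke the Merlev\`ede--Peligrad--Rio inequality as a black box for the truncated bounded part or reproduce its multi-scale construction; the one-shot optimization of $B$ and $M_T$ in your Step 4 does not close.
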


Next, we present the lemmas that are useful for the concentration inequality for stationary, regularly varying, heavy-tailed random variables.
\begin{lemma}[Karamata's theorem \citep{Bingham1989}]\label{Karamata_theorem}
Let $\textsc{L}$ is slowly varying function and locally bounded on $[a, \infty)$, $a \geq 0$, $\eta_{2}>1$. Then,
\begin{equation*}
\int_{r}^{\infty} v^{-\eta_{2}} \textsc{L}(v) \mathrm{d} v \sim -(1-\eta_{2})^{-1}r^{1-\eta_{2}} \textsc{L}(r), \quad r \rightarrow \infty.
\end{equation*}
\end{lemma}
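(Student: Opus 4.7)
The strategy is the classical one: reduce the tail integral to an integral over a fixed interval via a scaling change of variables, and then apply the Uniform Convergence Theorem for slowly varying functions together with Potter-type bounds to pass to the limit under the integral sign.

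First I would substitute $v = r u$, so that $dv = r\, du$ and the bounds become $u=1$ and $u=\infty$. This gives
\begin{equation*}
\int_{r}^{\infty} v^{-\eta_{2}} \textsc{L}(v)\, dv
= r^{1-\eta_{2}} \int_{1}^{\infty} u^{-\eta_{2}} \textsc{L}(r u)\, du
= r^{1-\eta_{2}}\, \textsc{L}(r) \int_{1}^{\infty} u^{-\eta_{2}} \frac{\textsc{L}(r u)}{\textsc{L}(r)}\, du.
\end{equation*}
Since $\int_{1}^{\infty} u^{-\eta_{2}}\, du = (\eta_{2}-1)^{-1} = -(1-\eta_{2})^{-1}$, it suffices to show that
\begin{equation*}
\int_{1}^{\infty} u^{-\eta_{2}} \frac{\textsc{L}(r u)}{\textsc{L}(r)}\, du \longrightarrow \int_{1}^{\infty} u^{-\eta_{2}}\, du \qquad \text{as } r\to\infty.
\end{equation*}

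The core of the argument is a dominated convergence step. Pointwise convergence of the integrand follows from the definition of slow variation, which gives $\textsc{L}(r u)/\textsc{L}(r)\to 1$ for each fixed $u>0$. The main obstacle, and the technically delicate point, is controlling the behavior uniformly in $u$, especially for large $u$, so that the tail of the integral does not misbehave as $r\to\infty$. For this I would invoke Potter's bounds: for any $\varepsilon\in(0,\eta_{2}-1)$ there exist $r_{0}$ and a constant $C_{\varepsilon}$ such that, for all $r\ge r_{0}$ and all $u\ge 1$,
\begin{equation*}
\frac{\textsc{L}(r u)}{\textsc{L}(r)} \le C_{\varepsilon}\, u^{\varepsilon}.
\end{equation*}
This provides the integrable majorant $u\mapsto C_{\varepsilon}\, u^{-(\eta_{2}-\varepsilon)}$, which is integrable on $[1,\infty)$ exactly because $\varepsilon < \eta_{2}-1$. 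The choice of $\varepsilon$ is what makes the assumption $\eta_{2}>1$ essential.

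Finally I would split $[1,\infty)=[1,A]\cup[A,\infty)$ for a large but fixed $A$. On $[1,A]$ the Uniform Convergence Theorem for slowly varying functions gives $\textsc{L}(ru)/\textsc{L}(r)\to 1$ uniformly, so the contribution converges to $\int_{1}^{A} u^{-\eta_{2}}\, du$. On $[A,\infty)$, Potter's bound ensures the contribution is $O(A^{-(\eta_{2}-1-\varepsilon)})$ uniformly in $r\ge r_{0}$, which can be made arbitrarily small by choosing $A$ large. Combining the two pieces and letting $A\to\infty$ yields the asymptotic equivalence, completing the proof. Local boundedness of $\textsc{L}$ on $[a,\infty)$ is used only to ensure that the integrals are well defined; no further regularity of $\textsc{L}$ is required beyond slow variation.
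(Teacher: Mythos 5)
Your proof is correct. The paper itself does not prove this lemma at all: it is imported as a black-box citation to \cite{Bingham1989} (it is the direct half of Karamata's theorem, Proposition~1.5.10 there), so there is no in-paper argument to compare against. Your argument is the standard textbook proof of that result: the substitution $v=ru$ reduces the claim to $\int_{1}^{\infty}u^{-\eta_{2}}\,\textsc{L}(ru)/\textsc{L}(r)\,du\to(\eta_{2}-1)^{-1}$, the Uniform Convergence Theorem handles compact $u$-ranges, and Potter's bounds with $\varepsilon<\eta_{2}-1$ supply the integrable majorant $C_{\varepsilon}u^{-(\eta_{2}-\varepsilon)}$ controlling the tail uniformly in $r\ge r_{0}$; the constant $\int_{1}^{\infty}u^{-\eta_{2}}\,du=-(1-\eta_{2})^{-1}$ matches the statement, and you correctly identify that $\eta_{2}>1$ is exactly what makes the choice of $\varepsilon$ possible and that local boundedness only guarantees the integrals are well defined. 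No gaps.
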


\begin{lemma}[Lemma 5 in \cite{Dedecker2004}]\label{lemma_lemma5_of_Dedecker}
Let $ ( \Omega, \mathscr{A} , \mathbb{P} ) $ be a probability space, $X$ an integrable real-valued random variable, and $\mathscr{M}$ a $\sigma$-algebra of $\mathscr{A}$. Assume that there exists a random variable $\delta$ uniformly distributed over [0,1], independent of the $\sigma$-algebra generated by $X$ and $\mathscr{M}$. Then there exists a random variable $X^{*}$, measurable with respect to $\mathscr{M}\lor\sigma(X)\lor\sigma(\delta)$, independent of $\mathscr{M}$ and distributed as $X$, such that
\begin{equation*}
\|X-X^{*}\|_1=\tau(\mathscr{M},X),
\end{equation*}
the coefficient $\tau$ is now defined by
\begin{equation*}
\tau(\mathscr{M},X)=\|W(P_{X|\mathscr{M}})\|_1,  
\end{equation*}
where
\begin{equation*}
W(P_{X\mid\mathscr{M}})=\sup\bigg\{\bigg|\int f(x)P_{X\mid\mathscr{M}}(dx)-\int f(x)P_{X}(dx)\bigg|,\:f\in\Lambda_{1}(\mathbb{R})\bigg\},
\end{equation*}
$\Lambda_{1}(\mathbb{R})$ is the class of $1$-Lipschitz functions from $\mathbb{R}$ to $\mathbb{R}$.
\end{lemma}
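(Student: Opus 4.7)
}

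The plan is to build $X^{*}$ by an explicit conditional quantile coupling that attains the Kantorovich--Rubinstein transport cost embedded in the definition of $\tau$. First, I would identify $W(P_{X\mid\mathscr{M}})$ with the Wasserstein-$1$ distance $W_{1}(P_{X\mid\mathscr{M}},P_{X})$: the test class $\Lambda_{1}(\mathbb{R})$ is precisely the set of $1$-Lipschitz functions on $\mathbb{R}$, so Kantorovich--Rubinstein duality identifies the supremum in the definition of $W$ with the infimum $\inf_{\pi}\int|x-y|\,d\pi(x,y)$ over couplings $\pi$ of $P_{X\mid\mathscr{M}}$ and $P_{X}$. On $\mathbb{R}$, this infimum is attained by the monotone (quantile) coupling, which is the object I would realise measurably.

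Second, I would realise this optimal coupling using the auxiliary uniform $\delta$ to smooth out the atoms of $P_{X\mid\mathscr{M}}$. Let $F_{X\mid\mathscr{M}}$ be a regular conditional CDF of $X$ given $\mathscr{M}$ (which exists because $\mathbb{R}$ is Polish) and let $F_{X}$ be the unconditional CDF. Define
\begin{equation*}
U \,:=\, F_{X\mid\mathscr{M}}(X^{-}) \,+\, \delta\bigl(F_{X\mid\mathscr{M}}(X) - F_{X\mid\mathscr{M}}(X^{-})\bigr), \qquad X^{*} \,:=\, F_{X}^{-1}(U).
\end{equation*}
A direct computation, splitting the jumps of $F_{X\mid\mathscr{M}}$, shows that conditionally on $\mathscr{M}$ the variable $U$ is uniform on $[0,1]$; hence $U$ is uniform unconditionally and independent of $\mathscr{M}$. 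By construction $X^{*}$ is measurable with respect to $\mathscr{M}\lor\sigma(X)\lor\sigma(\delta)$, has law $P_{X}$, and is independent of $\mathscr{M}$.

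Third, I would evaluate $\|X-X^{*}\|_{1}$. Conditioning on $\mathscr{M}$, using that $F_{X\mid\mathscr{M}}^{-1}(U)=X$ almost surely, and invoking the one-dimensional identity $W_{1}(\mu,\nu)=\int_{0}^{1}|\mu^{-1}(u)-\nu^{-1}(u)|\,du$, one obtains
\begin{equation*}
\|X-X^{*}\|_{1} \,=\, \mathbb{E}\!\left[\int_{0}^{1}\bigl|F_{X\mid\mathscr{M}}^{-1}(u)-F_{X}^{-1}(u)\bigr|\,du\right] \,=\, \mathbb{E}\bigl[W_{1}(P_{X\mid\mathscr{M}},P_{X})\bigr] \,=\, \tau(\mathscr{M},X),
\end{equation*}
where the last equality uses the Kantorovich--Rubinstein identification established in step one.

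The main obstacle will be measurability. Setting up $F_{X\mid\mathscr{M}}$ as a jointly measurable regular conditional law, and its generalised inverse as a measurable map in both the sample point and the quantile argument, is what legitimises the conditioning computations above and the verification that $U$ is unconditionally uniform and independent of $\mathscr{M}$. The role of $\delta$ is precisely to absorb the atoms of $P_{X\mid\mathscr{M}}$ so that the quantile transform produces an exact uniform variable independent of $\mathscr{M}$; this is why the independence hypothesis $\delta\perp \sigma(X)\lor\mathscr{M}$ appears in the statement, and it is the step I would treat most carefully in a full write-up.
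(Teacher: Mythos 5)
This lemma is imported by the paper as a known result (Lemma 5 of Dedecker and Prieur, 2004) and is stated without proof, so there is no internal argument to compare against. Your sketch is correct and is in fact the standard proof given in the cited source: the conditional distributional transform $U = F_{X\mid\mathscr{M}}(X^-) + \delta\,(F_{X\mid\mathscr{M}}(X)-F_{X\mid\mathscr{M}}(X^-))$ yields a uniform variable independent of $\mathscr{M}$, the quantile coupling $X^{*}=F_X^{-1}(U)$ realises the optimal $W_1$ transport conditionally on $\mathscr{M}$, and Kantorovich--Rubinstein duality identifies $W(P_{X\mid\mathscr{M}})$ with $W_1(P_{X\mid\mathscr{M}},P_X)$. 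You have also correctly isolated the two points that need care in a full write-up: joint measurability of the regular conditional CDF and its generalised inverse, and measurability of the supremum defining $W$ (which one handles by passing to a countable dense subfamily of $\Lambda_1(\mathbb{R})$).
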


\begin{lemma}[Hoeffding's inequality]\label{lemma_Hoeffding's inequality}
Let $ X_1, X_2, \ldots, X_n$ be independent random variables such that $a_i \leq X_i \leq b_i$ almost surely. Consider the sum of these variables by $S_n = \sum_{i=1}^n X_i$ and its expected value by $\mathbb{E}[S_n]$, then Hoeffding's inequality states that for any $t \geq 0$,
\begin{equation*}
\mathbb P(S_n - \mathbb{E}[S_n] \geq t) \leq \exp\bigg(-\frac{2t^2}{\sum_{i=1}^n (b_i - a_i)^2}\bigg).
\end{equation*}
\end{lemma}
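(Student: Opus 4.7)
The plan is to prove the bound via the classical Chernoff method, namely the combination of an exponential Markov inequality with a moment generating function estimate for bounded centered random variables. First I would fix $\lambda>0$, observe that $\{S_n-\mathbb{E}[S_n]\geq t\}=\{e^{\lambda(S_n-\mathbb{E}[S_n])}\geq e^{\lambda t}\}$, and apply Markov's inequality to obtain
\begin{equation*}
\mathbb{P}\bigl(S_n-\mathbb{E}[S_n]\geq t\bigr)\leq e^{-\lambda t}\,\mathbb{E}\bigl[e^{\lambda(S_n-\mathbb{E}[S_n])}\bigr].
\end{equation*}
Independence of the $X_i$ then factorizes the expectation, giving
$\mathbb{E}[e^{\lambda(S_n-\mathbb{E}[S_n])}]=\prod_{i=1}^n\mathbb{E}[e^{\lambda(X_i-\mathbb{E}[X_i])}]$.

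The central ingredient is Hoeffding's lemma: if $Y$ is a centered random variable with $\alpha\leq Y\leq\beta$ almost surely, then $\mathbb{E}[e^{\lambda Y}]\leq\exp(\lambda^2(\beta-\alpha)^2/8)$. I would prove this by using convexity of $y\mapsto e^{\lambda y}$ to bound it above on $[\alpha,\beta]$ by the chord $\tfrac{\beta-y}{\beta-\alpha}e^{\lambda\alpha}+\tfrac{y-\alpha}{\beta-\alpha}e^{\lambda\beta}$, taking expectations (noting $\mathbb{E}[Y]=0$), and then setting $\psi(u)=\log(pe^{-up}+(1-p)e^{u(1-p)})$ with $p=-\alpha/(\beta-\alpha)$ and $u=\lambda(\beta-\alpha)$. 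A direct computation shows $\psi(0)=\psi'(0)=0$ and $\psi''(u)\leq 1/4$ uniformly, so that Taylor's theorem yields $\psi(u)\leq u^2/8$.

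Applying Hoeffding's lemma to each $X_i-\mathbb{E}[X_i]$, which takes values in an interval of length $b_i-a_i$, the product bound becomes
\begin{equation*}
\mathbb{E}\bigl[e^{\lambda(S_n-\mathbb{E}[S_n])}\bigr]\leq\exp\Bigl(\tfrac{\lambda^2}{8}\sum_{i=1}^n(b_i-a_i)^2\Bigr),
\end{equation*}
hence $\mathbb{P}(S_n-\mathbb{E}[S_n]\geq t)\leq\exp(-\lambda t+\tfrac{\lambda^2}{8}\sum_i(b_i-a_i)^2)$. Finally I would optimize the right-hand side over $\lambda>0$: the minimum is attained at $\lambda^\star=4t/\sum_i(b_i-a_i)^2$, producing the stated exponent $-2t^2/\sum_{i=1}^n(b_i-a_i)^2$.

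The main obstacle is the sharp constant $1/8$ in Hoeffding's lemma; a naive bound such as $\mathbb{E}[e^{\lambda Y}]\leq e^{\lambda^2(\beta-\alpha)^2/2}$ from sub-Gaussianity would yield a weaker constant in the exponent. The clean second-derivative argument on $\psi(u)$ is what delivers the factor $2$ rather than $1/2$ in the final inequality, and getting this constant right is the only nontrivial step; the remaining steps are routine manipulations of the Chernoff bound.
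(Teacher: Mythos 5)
Your proposal is the standard Chernoff-bound proof of Hoeffding's inequality, and it is correct in substance; note that the paper itself states this lemma as a classical auxiliary result (Lemma~\ref{lemma_Hoeffding's inequality}) without providing any proof, so there is no in-paper argument to compare against. One small slip: with $p=-\alpha/(\beta-\alpha)$ the chord bound gives $\mathbb{E}[e^{\lambda Y}]\leq (1-p)e^{-up}+p\,e^{u(1-p)}$, not $p\,e^{-up}+(1-p)e^{u(1-p)}$ as you wrote; with your coefficients one gets $\psi'(0)=1-2p\neq 0$ in general, so the Taylor argument would fail as literally stated. Swapping the coefficients back (equivalently, writing $\psi(u)=-up+\log(1-p+pe^{u})$) restores $\psi(0)=\psi'(0)=0$ and $\psi''\leq 1/4$, and the rest of your optimization over $\lambda$ is exactly right.
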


\begin{lemma}\label{lemma_sub_beta_mixing}
If the joint sequence $\{(X_{t,T}, Y_{t,T})\}_{t=1}^{T}$ is $\beta$-mixing, then $\{Y_{t,T}\}_{t=1}^{T}$ is also $\beta$-mixing.
\end{lemma}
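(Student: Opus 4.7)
The plan is to exploit the monotonicity of the $\beta$-mixing coefficient with respect to the sub-$\sigma$-fields involved. The key observation is that for every index set $I\subseteq\{1,\dots,T\}$, the $\sigma$-field $\sigma(Y_{s,T}: s\in I)$ is a sub-$\sigma$-field of $\sigma((X_{s,T},Y_{s,T}): s\in I)$, since the former only involves the second coordinate of the joint process. Consequently, the past $\sigma$-field of $\{Y_{t,T}\}$ up to time $t$ and the future $\sigma$-field from time $t+k$ onward are both contained in the corresponding past and future $\sigma$-fields of the joint process.

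First, I would recall (or prove in one line using the total-variation representation $\beta(\mathcal A,\mathcal B)=\tfrac12\sup\sum_{i,j}|\mathbb P(A_i\cap B_j)-\mathbb P(A_i)\mathbb P(B_j)|$ over finite measurable partitions of $\Omega$) the elementary monotonicity property: if $\mathcal A'\subseteq\mathcal A$ and $\mathcal B'\subseteq\mathcal B$, then $\beta(\mathcal A',\mathcal B')\le\beta(\mathcal A,\mathcal B)$. With the conditional-expectation definition given in Definition~\ref{def:mixingcondition}, monotonicity in $\mathcal B$ is trivial since the supremum is taken over a smaller collection of events, while monotonicity in $\mathcal A$ follows from the conditional Jensen inequality
\begin{equation*}
|\mathbb P(B)-\mathbb P(B\mid\mathcal A')|=\bigl|\mathbb E[\mathbb P(B)-\mathbb P(B\mid\mathcal A)\mid\mathcal A']\bigr|\le\mathbb E\bigl[|\mathbb P(B)-\mathbb P(B\mid\mathcal A)|\,\bigm|\,\mathcal A'\bigr],
\end{equation*}
followed by taking the supremum inside the conditional expectation and then integrating.

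Applying this monotonicity to $\mathcal A=\sigma((X_{s,T},Y_{s,T}),1\le s\le t)$, $\mathcal A'=\sigma(Y_{s,T},1\le s\le t)$, and similarly on the future side, one obtains $\beta_{Y}(k)\le\beta_{(X,Y)}(k)$ for every $k\ge 1$, where $\beta_Y$ and $\beta_{(X,Y)}$ denote the $\beta$-mixing coefficients of $\{Y_{t,T}\}$ and $\{(X_{t,T},Y_{t,T})\}$ as defined in Definition~\ref{def:mixingcondition}. Since $\beta_{(X,Y)}(k)\to 0$ as $k\to\infty$ by assumption, it follows that $\beta_Y(k)\to 0$ and $\{Y_{t,T}\}$ is $\beta$-mixing. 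No step is substantially hard; the only point requiring minor care is the measurability of the supremum in the conditional-expectation version of the definition, which is why I would prefer to invoke the partition-based total-variation formulation where monotonicity in both arguments is completely transparent.
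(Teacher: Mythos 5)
Your proposal is correct and follows essentially the same route as the paper's own proof: both reduce the claim to the monotonicity of $\beta(\mathcal A,\mathcal B)$ under passage to sub-$\sigma$-fields, using that $\sigma(Y_{s,T}:s\in I)\subseteq\sigma((X_{s,T},Y_{s,T}):s\in I)$ on both the past and future sides, which gives $\beta_Y(k)\le\beta_{(X,Y)}(k)$. If anything, your treatment of monotonicity in the conditioning $\sigma$-field via conditional Jensen (or the partition-based total-variation formula) is more careful than the paper's pointwise comparison of $\mathbb P(B\mid\mathcal F_t^Y)$ with $\mathbb P(B\mid\mathcal F_t^{XY})$, which as written glosses over exactly that step.
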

\begin{proof}
A stochastic process $\{Z_t\}$ is called $\beta$-mixing if its mixing coefficients $\beta(k)$ approach zero as $k \to \infty$, where
\begin{equation*}
\beta(k) = \sup_t \mathbb{E}\bigg[\sup_{B \in \mathcal{G}_t^k} \bigg| \mathbb{P}(B|\mathcal{F}_t) - \mathbb{P}(B) \bigg|\bigg],
\end{equation*}
where $\mathcal{F}_t = \sigma(Z_1, \ldots, Z_t)$ and $\mathcal{G}_t^k = \sigma(Z_{t+k}, Z_{t+k+1}, \ldots)$.

We have that the sigma algebra generated by a subset of variables is a sub-algebra of the sigma algebra generated by the entire, see \cite{halmos2013measure}. As the sequence $\{(X_{t,T}, Y_{t,T})\}$ is $\beta$-mixing, which means
\begin{equation*}
\beta_{XY}(k) = \sup_t \mathbb{E}\bigg[\sup_{B \in \sigma(X_{t+k,T}, Y_{t+k,T}, \ldots)} \bigg|\mathbb{P}(B|\sigma(X_{1,T}, Y_{1,T}, \ldots, X_{t,T}, Y_{t,T})) - \mathbb{P}(B)\bigg|\bigg] \rightarrow 0 \text{ as } k \rightarrow \infty.
\end{equation*}
Applying the sub-$\sigma$-algebra property, the $\sigma$-algebra generated by $\{Y_{t,T}\}$, denoted $\mathcal{F}_t^Y = \sigma(Y_{1,T}, \ldots, Y_{t,T})$, is a sub-$\sigma$-algebra of $\mathcal{F}_t^{XY} = \sigma(X_{1,T}, Y_{1,T}, \ldots, X_{t,T}, Y_{t,T})$. For $\{Y_{t,T}\}$, we can show that:
\begin{equation*}
\beta_Y(k) = \sup_t \mathbb{E}\bigg[\sup_{B \in \sigma(Y_{t+k,T}, \ldots)} \bigg|\mathbb{P}(B|\mathcal{F}_t^Y) - \mathbb{P}(B)\bigg|\bigg] \text{ as } k \rightarrow \infty.
\end{equation*}
Given that $\mathcal{F}_t^Y \subseteq \mathcal{F}_t^{XY}$, any set $B$ in $\sigma(Y_{t+k,T}, \ldots)$ is also in $\sigma(X_{t+k,T}, Y_{t+k,T}, \ldots)$. Thus,
\begin{equation*}
\sup_{B \in \sigma(Y_{t+k,T}, \ldots)}\bigg|\mathbb{P}(B|\mathcal{F}_t^Y) - \mathbb{P}(B)\bigg| \leq \sup_{B \in \sigma(X_{t+k,T}, Y_{t+k,T}, \ldots)} \bigg|\mathbb{P}(B|\mathcal{F}_t^{XY}) - \mathbb{P}(B)\bigg|.
\end{equation*}
It follows that
\begin{equation*}
\beta_Y(k) \leq \beta_{XY}(k),
\end{equation*}
and since $\beta_{XY}(k) \to 0$ as $n \to \infty$, we have $\beta_Y(k) \to 0$ as well, then we have $\{Y_{t,T}\}$ is also $\beta$-mixing.
\end{proof}

\begin{lemma}[Matrix Inversion Lemma (Woodbury matrix identities, \cite{bach2021learning})]
\label{lemma_Matrix_Inversion}
Let $A$ and $D$ be invertible matrices, $B$ and $C$ are matrices of conformable size. The lemma states that the inverse of the matrix $A - B D^{-1} C$ can be expressed as:
\begin{equation*}
(A - B D^{-1} C)^{-1} = A^{-1} + A^{-1} B (D- C A^{-1} B)^{-1} C A^{-1}.  
\end{equation*}
Multiply $B$ on each side of the equation
\begin{align*}
(A - B D^{-1} C)^{-1}B&=A^{-1}B + A^{-1} B (D-C A^{-1} B)^{-1} C A^{-1}B\\
&=A^{-1}B (I+(D-C A^{-1} B)^{-1} C A^{-1}B),
\end{align*}
recognize that
\begin{align*}
I&=(D-C A^{-1} B)^{-1} (D-C A^{-1} B)\\
&= (D-C A^{-1} B)^{-1} D - (D-C A^{-1} B)^{-1}C A^{-1} B,
\end{align*}
Then, we have the classical formulation
\begin{equation*}
(A - BD^{-1}C)^{-1}B = A^{-1}B(D - CA^{-1}B)^{-1}D.
\end{equation*}
\end{lemma}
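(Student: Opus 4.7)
My plan is to verify the Woodbury identity by direct multiplication: show that the candidate inverse, when multiplied on the left by $A - BD^{-1}C$, yields the identity matrix. Concretely, I would compute
\begin{equation*}
(A - BD^{-1}C)\bigl[A^{-1} + A^{-1}B(D - CA^{-1}B)^{-1}CA^{-1}\bigr]
\end{equation*}
and show the result simplifies to $I$. Distributing, the first piece gives $I - BD^{-1}CA^{-1}$, while the second piece gives $B(D-CA^{-1}B)^{-1}CA^{-1} - BD^{-1}CA^{-1}B(D-CA^{-1}B)^{-1}CA^{-1}$. The key algebraic trick is to factor $BD^{-1}$ out of the remaining two middle terms, rewriting them as $BD^{-1}[D - CA^{-1}B](D - CA^{-1}B)^{-1}CA^{-1}$, which collapses to $BD^{-1}CA^{-1}$ and cancels the $-BD^{-1}CA^{-1}$ from the first piece. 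This leaves exactly $I$, confirming the identity.

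Once the main Woodbury identity is established, the second statement $(A - BD^{-1}C)^{-1}B = A^{-1}B(D - CA^{-1}B)^{-1}D$ follows by the manipulation already sketched in the lemma: right-multiply the Woodbury identity by $B$, factor $A^{-1}B$ out of the right-hand side to obtain $A^{-1}B\bigl[I + (D - CA^{-1}B)^{-1}CA^{-1}B\bigr]$, and then use the decomposition $I = (D - CA^{-1}B)^{-1}(D - CA^{-1}B) = (D - CA^{-1}B)^{-1}D - (D - CA^{-1}B)^{-1}CA^{-1}B$ to replace the bracketed quantity by $(D - CA^{-1}B)^{-1}D$. All steps require only invertibility of $A$, $D$, and the Schur complement $D - CA^{-1}B$, so the hypotheses are used minimally and transparently.

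Since this is a classical matrix identity, there is no real obstacle: the argument is purely algebraic and relies only on associativity, distributivity, and the cancellation $(D - CA^{-1}B)(D - CA^{-1}B)^{-1} = I$. The only subtlety worth flagging is that one must verify the Schur complement $D - CA^{-1}B$ is indeed invertible in the setting where the lemma is applied later in the paper; this is an implicit hypothesis of the statement and does not require a separate argument here. Because the proof is a short verification, I would present it as a two-line display rather than a multi-step chain, highlighting the factorization $B - BD^{-1}CA^{-1}B = BD^{-1}(D - CA^{-1}B)$ as the single non-obvious algebraic identity on which both parts rely.
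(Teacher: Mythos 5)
Your proposal is correct, and for the part of the lemma that the paper actually derives --- passing from the Woodbury identity to the formulation $(A - BD^{-1}C)^{-1}B = A^{-1}B(D - CA^{-1}B)^{-1}D$ --- you follow exactly the same route as the paper: right-multiply by $B$, factor out $A^{-1}B$, and substitute $I = (D - CA^{-1}B)^{-1}D - (D - CA^{-1}B)^{-1}CA^{-1}B$. The one genuine difference is that the paper simply cites the Woodbury identity from the literature, whereas you verify it from scratch by checking that $(A - BD^{-1}C)\bigl[A^{-1} + A^{-1}B(D - CA^{-1}B)^{-1}CA^{-1}\bigr] = I$ via the factorization $B - BD^{-1}CA^{-1}B = BD^{-1}(D - CA^{-1}B)$; this computation is correct and makes the lemma self-contained at essentially no extra cost (for square matrices a one-sided inverse is two-sided, so the single multiplication suffices). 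Your observation that invertibility of the Schur complement $D - CA^{-1}B$ is an implicit hypothesis is also apt, since the paper's statement only assumes $A$ and $D$ invertible.
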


Lemma~\ref{lemma_Matrix_Inversion} is often applied when $C = B^{\top}$, $A = I$, and $D = -I$, which lead to
\begin{equation*}
(I + BB^\top)^{-1} = I - B(I + B^\top B)^{-1}B^{\top}, 
\end{equation*}
and
\begin{equation*}
(I + BB^\top)^{-1}B = B(I + B^\top B)^{-1}.
\end{equation*}

\begin{lemma}[\cite{hastie2015statistical}]\label{lemma_RE_subgradients_lasso}
A vector $\hat{\bm{\theta}} = [{\hat{\theta}}_{1,\bullet}^{\top}, \cdots, {\hat{\theta}}_{T,\bullet}^{\top}]^{\top} \in \R^{Td}$ is 
an optimum of the objective function~(\ref{estimator}) with $\Omega(\bm{\theta})=\norm{\bm{\theta}}_{1}$ is the total variation penalization, if and only if there is a subgradient 
$\hat g = [\hat g _{r, \bullet}]_{r=1, \ldots, T} \in \partial \norm{\hat {\bm{\theta}}}_{1}$ such that 
\begin{equation*}
\nabla R_{T}(\hat{\theta}_{r,\bullet})+ \lambda{\hat{g}}_{r,\bullet} = \mathbf{0_{d}}, 
\end{equation*}
where
\begin{equation}
\label{subdifferential_of_lasso}
\bigg\{
\begin{array}{ll}
{\hat{g}}_{r,\bullet} = \sgn({\hat{\theta}}_{r,\bullet})  & \text{ if }  r \in  J(\hat{\bm{\theta}}),\\
{\hat{g}}_{r,\bullet} \in  {[-1,+1]}^{d} & \text{ if }  r \in J^{\complement}(\hat{\bm{\theta}}),
\end{array} 
\bigg.
\end{equation}
$J(\hat{\bm{\theta}})$ is the support set of $\hat{\bm{\theta}}$. 
For the problem~(\ref{estimator}), we have 
\begin{equation}
\label{optimality_condition_lasso}
\frac{2}{T} \big(K_{r\bullet}\big)^{\top}\big(K \hat{\bm{\theta}} - Y \big) + \lambda {\hat{g}}_{r,\bullet}= \mathbf{0_{d}}.
\end{equation}
\end{lemma}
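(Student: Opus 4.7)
The plan is to recognize this statement as the standard first-order (KKT) optimality condition for a convex composite problem, and derive it via Fermat's rule applied to the sum of a smooth convex loss and a non-smooth convex penalty. Concretely, the objective in~(\ref{estimator}) splits as $F(\bm{\theta}) = R_T(\bm{\theta}) + \lambda \|\bm{\theta}\|_1$, where $R_T(\bm{\theta}) = \frac{1}{T}\|\bY - \bK \bm{\theta}\|_2^2$ is a convex quadratic, hence differentiable with
\begin{equation*}
\nabla R_T(\bm{\theta}) \;=\; \frac{2}{T} \bK^\top\bigl(\bK\bm{\theta} - \bY\bigr) \in \R^{Td},
\end{equation*}
while $\bm{\theta}\mapsto \|\bm{\theta}\|_1$ is convex and non-smooth. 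Both summands being proper convex, $\hat{\bm{\theta}}$ is a global minimizer if and only if $\mathbf{0}\in \partial F(\hat{\bm{\theta}})$.

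The key step will be invoking the Moreau--Rockafellar sum rule from convex analysis~\citep{rockafellar1997convex}: since $R_T$ is everywhere differentiable, the subdifferential of $F$ decomposes as $\partial F(\hat{\bm{\theta}}) = \nabla R_T(\hat{\bm{\theta}}) + \lambda\,\partial \|\hat{\bm{\theta}}\|_1$ with no qualification constraint needed. Fermat's rule then yields the existence of a subgradient $\hat g \in \partial \|\hat{\bm{\theta}}\|_1$ such that
\begin{equation*}
\nabla R_T(\hat{\bm{\theta}}) + \lambda \hat g \;=\; \mathbf{0}_{Td},
\end{equation*}
which, read block by block in the $T$ groups indexed by $r=1,\ldots,T$, gives the first display $\nabla R_T(\hat{\theta}_{r,\bullet}) + \lambda \hat g_{r,\bullet} = \mathbf{0}_d$.

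Next I would identify the blocks of $\partial \|\hat{\bm{\theta}}\|_1$ componentwise: because $\|\bm{\theta}\|_1 = \sum_{r,j}|\theta_{r,j}|$ is separable, its subdifferential factorizes into the product of the scalar subdifferentials $\partial|\cdot|$. Recalling that $\partial |x| = \{\sgn(x)\}$ when $x\neq 0$ and $\partial |0| = [-1,1]$ (as declared in the Notation paragraph), the characterization~(\ref{subdifferential_of_lasso}) follows immediately from the definition of the support set $J(\hat{\bm{\theta}})$. Finally, substituting the explicit gradient $\frac{2}{T}(K_{r\bullet})^\top(\bK \hat{\bm{\theta}} - \bY)$ into the block optimality equation yields~(\ref{optimality_condition_lasso}). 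There is essentially no analytic obstacle here; the only care to take is writing the block decomposition of $\bK^\top(\bK\hat{\bm{\theta}}-\bY)$ into the $T$ sub-vectors of length $d$ indexed by $r$, so that the identification $(\nabla R_T(\hat{\bm{\theta}}))_{r,\bullet} = \frac{2}{T}(K_{r\bullet})^\top(\bK\hat{\bm{\theta}}-\bY)$ is unambiguous.
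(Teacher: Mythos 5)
Your proposal is correct. Note that the paper itself offers no proof of this lemma: it is stated as a cited result from \cite{hastie2015statistical}, so there is no in-paper argument to compare against. Your derivation --- Fermat's rule for the convex composite objective, the Moreau--Rockafellar sum rule (valid here because $R_T$ is everywhere differentiable), and the componentwise factorization of $\partial\|\cdot\|_1$ into scalar subdifferentials $\partial|x|$ --- is exactly the standard argument that justifies the citation, and your block-by-block identification of $(\nabla R_T(\hat{\bm{\theta}}))_{r,\bullet}$ with $\tfrac{2}{T}(K_{r\bullet})^\top(\bK\hat{\bm{\theta}}-\bY)$ resolves the only point where the lemma's statement is notationally loose.
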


\begin{lemma}[\cite{alaya2019binarsity}]\label{lemma_RE_subgradients_TV}
A vector $\hat{\bm{\theta}} = [{\hat{\theta}}_{1,\bullet}^{\top}, \cdots, {\hat{\theta}}_{T,\bullet}^{\top}]^{\top} \in \R^{Td}$ is 
an optimum of the objective function~(\ref{estimator}) with $\Omega_{\lambda}(\bm{\theta})=\norm{\bm{\theta}}_{\TV,\lambda}$ is the total variation penalization, if and only if there is a subgradient 
$\hat g = [\hat g _{r, \bullet}]_{r=1, \ldots, T} \in \partial \norm{\hat {\bm{\theta}}}_{ \TV, \lambda}$ such that 
\begin{equation*}
\nabla R_{T}(\hat{\theta}_{r,\bullet})+{\hat{g}}_{r,\bullet} = \mathbf{0_{d}}, 
\end{equation*}
where
\begin{equation}
\label{subdifferential_of_TVlambda}
\bigg\{
\begin{array}{ll}
{\hat{g}}_{r,\bullet} = D_{r}^{\top} \big(\hat{\lambda}_{j} \odot \sgn(D_{r}
{\hat{\theta}}_{r,\bullet})\big) & \text{ if }  r \in  J(\hat{\bm{\theta}}),\\
{\hat{g}}_{r,\bullet} \in  D_{r}^\top \big( \hat{\lambda}_{j} \odot 
{[-1,+1]}^{d}\big) & \text{ if }  r \in J^{\complement}(\hat{\bm{\theta}}),
\end{array} 
\bigg.
\end{equation}
and $D_{r}$ is defined by~(\ref{equation_D_r}), $J(\hat{\bm{\theta}})$ is the support set of $\hat{\bm{\theta}}$. 
For the problem~(\ref{estimator}), we have 
\begin{equation}
\label{optimality_condition}
\frac{2}{T} \big(K_{r\bullet}\big)^{\top}\big(K \hat{\bm{\theta}} - Y \big) + {\hat{g}}_{r,\bullet}= \mathbf{0_{d}}.
\end{equation}
\end{lemma}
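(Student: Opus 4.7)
The statement is a Fermat-type first-order optimality characterization for the convex composite problem~(\ref{estimator}) with weighted total variation penalty. My plan is to reduce it to a textbook application of subdifferential calculus. Since $R_T(\bm{\theta}) = T^{-1}\norm{\bY-\bK\bm{\theta}}_2^2$ is convex and everywhere differentiable with $\nabla R_T(\bm{\theta}) = (2/T)\bK^\top(\bK\bm{\theta}-\bY)$, and since $\norm{\cdot}_{\TV,\lambda}$ is a proper, convex, finite-valued lower semicontinuous function on $\R^{Td}$, Fermat's rule applied to the sum (no qualification issue because the TV norm is finite everywhere, so its domain is the whole space) gives that $\hat{\bm{\theta}}$ is a global minimizer if and only if $-\nabla R_T(\hat{\bm{\theta}}) \in \partial \norm{\hat{\bm{\theta}}}_{\TV,\lambda}$. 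Splitting this inclusion into the $T$ blocks of length $d$ and reading off the per-block gradient $(2/T)(K_{r\bullet})^\top(\bK\hat{\bm{\theta}}-\bY)$ yields~(\ref{optimality_condition}), provided the right-hand side of~(\ref{subdifferential_of_TVlambda}) is identified as the relevant subdifferential.

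The bulk of the argument is therefore to make $\partial\norm{\hat{\bm{\theta}}}_{\TV,\lambda}$ explicit. I would first exploit the block-separable structure $\norm{\bm{\theta}}_{\TV,\lambda} = \sum_{r=1}^T \phi(\theta_{r,\bullet})$, with $\phi(u) = \sum_{j=2}^d \lambda_j|(D_r u)_j|$, to decompose the subdifferential as the Cartesian product $\prod_{r=1}^T \partial\phi(\hat{\theta}_{r,\bullet})$. Writing $\phi = \psi \circ D_r$ with $\psi(v)=\sum_j \hat\lambda_j |v_j|$ (so that $\hat\lambda_1 = 0$ and $\hat\lambda_j = \lambda_j$ for $j\geq 2$), the chain rule for subdifferentials of a finite-valued convex function composed with a linear map gives $\partial\phi(\hat{\theta}_{r,\bullet}) = D_r^\top \partial\psi(D_r\hat{\theta}_{r,\bullet})$; the qualification condition $\mathrm{ri}(\mathrm{dom}\,\psi)\cap \mathrm{range}(D_r) \neq \emptyset$ is trivially satisfied since $\psi$ is finite on $\R^d$. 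Finally, coordinate-wise separability of $\psi$ together with the elementary identities $\partial|t| = \{\sgn(t)\}$ for $t\neq 0$ and $\partial|0| = [-1,1]$ produces $\partial \psi(v) = \{\hat\lambda \odot s : s_j = \sgn(v_j)\text{ if }v_j\neq 0,\ s_j\in[-1,1]\text{ if }v_j = 0\}$. Substituting $v = D_r\hat{\theta}_{r,\bullet}$ and distinguishing active rows ($r \in J(\hat{\bm{\theta}})$, where the sign of every relevant difference is fully determined) from inactive rows (where the corresponding coordinates of $s$ are free in $[-1,+1]$) reproduces exactly the two branches of~(\ref{subdifferential_of_TVlambda}).

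Combining these ingredients with Fermat's rule gives the direct implication, and the converse is automatic because the objective is convex, so any $\hat{\bm{\theta}}$ satisfying the blockwise stationarity identity is a global minimizer. There is no substantial obstacle: the argument is routine convex analysis, and the only delicate point is the bookkeeping involved in parsing the case distinction of~(\ref{subdifferential_of_TVlambda}), namely that, even for rows classified in $J^{\complement}(\hat{\bm{\theta}})$, any coordinate $j$ for which the difference $\hat{\theta}_{r,j}-\hat{\theta}_{r,j-1}$ happens to be nonzero must have its sign entry frozen at $\pm 1$ rather than allowed to range over $[-1,+1]$. I would flag this mixed-case reading as the only place where the compact notation of the lemma could mislead the reader, and would record it explicitly when assembling the subgradient $\hat g_{r,\bullet}$ coordinate by coordinate.
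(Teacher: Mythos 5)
The paper does not actually prove this lemma: it is imported verbatim from \cite{alaya2019binarsity} and stated in the appendix of useful lemmas without argument, so there is no internal proof to compare against. Your proposal supplies the standard and correct derivation: Fermat's rule for the sum of the smooth convex loss $R_T$ and the finite-valued convex penalty (no constraint qualification needed since $\mathrm{dom}\,\norm{\cdot}_{\TV,\lambda}=\R^{Td}$), block separability of the penalty, the exact chain rule $\partial(\psi\circ D_r)=D_r^\top\partial\psi(D_r\,\cdot)$ for a finite convex $\psi$ composed with a linear map, and the coordinate-wise subdifferential of the weighted $\ell_1$ norm. This is precisely the route taken in the cited reference, and the equivalence of the stationarity condition with global optimality follows from convexity as you say.

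Your closing remark about the case distinction is well taken and worth keeping: as written, the two branches of~(\ref{subdifferential_of_TVlambda}) partition by \emph{rows} $r$, whereas the genuine subdifferential is determined \emph{coordinate by coordinate} within each row according to whether $(D_r\hat\theta_{r,\bullet})_j$ vanishes. Under the paper's convention that $\sgn(0)=[-1,1]$ is set-valued, the first branch already subsumes the correct mixed behaviour for active rows containing some zero differences, and the second branch is only exact for rows whose difference vector vanishes entirely; recording the coordinate-wise assembly explicitly, as you propose, removes the ambiguity. No gap in the argument.
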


Let us recall the block diagonal matrix $D =diag(D_{1}, \ldots, 
D_{T})$ with $D_{r}$ defined in~(\ref{equation_D_r}). The matrix 
$V$ as the inverse of matrix $D$, i.e., $VD=I$, 
where $V={\rm diag}(V_{1},\ldots,V_{T})$ is the $Td \times Td$ matrix 
with the $(d \times d)$ lower triangular matrix $V_{r}$, and the entries 
$ \Big(V_{r}\Big)_{s, j}=0$ if $ s<j $ and $ \Big(V_{r}\Big)_{s, j}=1$ 
otherwise.
To prove Theorem~\ref{theorem:fast_TV_pen_SubWeibull} and \ref{theorem:fast_TV_pen_ragularvarying}, we need the following results, which give a compatibility property for the matrix $V$.
For any concatenation of subsets $ J=[J_{1}, \ldots, J_{T}],$ we set
\begin{equation}
\label{definition_J_concatenation}
J_{r} = \{\tau_{r}^{1}, \ldots, \tau_{r}^{b_{r}}\} \subset \{1, \ldots, d\}
\end{equation}
for all $r=1, \ldots, T$ with the convention that $\tau_{r}^{0} = 0$ and $\tau_{r}^{b_{r} +1} = d+1$.

\begin{lemma}[\cite{alaya2019binarsity}]
\label{lemma-compatibility-TV}
Let $ \gamma= (\gamma_{1,1}, \dots, \gamma_{1,d}, \dots, \gamma_{T,1}, \dots, \gamma_{T,d})^{\top} \in \mathbb{R}^{Td}_{+}$ be a given vector as the "weights", $\odot$ is the Hadamard product (elementwise) and $J = [J_{1}, \ldots, J_{T}]$ with $J_{r}$ given by~(\ref{definition_J_concatenation}) for all $r=1, \ldots, T$.
Then, for every $\Delta \in \mathbb{R}^{Td} \backslash \{\mathbf{0}\}$, we have 
\begin{equation*}
\frac{\norm{V \Delta}_2}{|\norm{\Delta_{J}\odot \gamma_{J}}_{1} - \norm{\Delta_{J^{\complement}} \odot \gamma_{J^{\complement}}}_1|}  \geq  \kappa_{V,\gamma}(J),
\end{equation*}
where 
\begin{equation*}
\kappa_{V,\gamma}(J) = \bigg\{ 32  \sum_{r=1}^{T}\sum_{j=1}^{d} | \gamma_{r,j} -\gamma_{r,j-1}|^{2}+  2|J_{r}|\norm{\gamma_{r,\bullet}}_\infty^{2}\Lambda_{\min, J_{r}}^{-1}\bigg\}^{-1/2},
\end{equation*}
and $\Lambda_{\min, J_{r}} = \min_{l=1, \ldots b^{r}}| \tau_{r}^{l_{r}} - \tau_{r}^{l_{r} -1 }|$.
\end{lemma}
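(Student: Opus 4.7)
The plan is to reduce the $T$-row inequality to a one-dimensional compatibility estimate via the block-diagonal structure of $V$, and then establish the $1$D estimate by Abel summation coupled with a local Poincar\'e-type argument that exploits the spacing $\Lambda_{\min,J_r}$ of jump points.

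\emph{Block reduction.} Because $V=\mathrm{diag}(V_1,\ldots,V_T)$ and both the weighted $\ell_1$ norms and the quantity $\kappa_{V,\gamma}(J)^{-2}$ split additively over rows, I would set $a_r=\|V_r\Delta_{r\bullet}\|_2$, $b_r=\bigl|\|\Delta_{J_r}\odot\gamma_{J_r}\|_1-\|\Delta_{J_r^\complement}\odot\gamma_{J_r^\complement}\|_1\bigr|$, and $c_r=32\sum_{j}|\gamma_{r,j}-\gamma_{r,j-1}|^2+2|J_r|\|\gamma_{r\bullet}\|_\infty^2/\Lambda_{\min,J_r}$. A per-row estimate $b_r\leq a_r\sqrt{c_r}$, followed by the triangle inequality on the left-hand side and Cauchy-Schwarz on $\sum_r a_r\sqrt{c_r}$, then delivers the claim:
\begin{equation*}
\bigl|\|\Delta_J\odot\gamma_J\|_1-\|\Delta_{J^\complement}\odot\gamma_{J^\complement}\|_1\bigr| \leq \sum_r b_r \leq \Bigl(\sum_r a_r^2\Bigr)^{1/2}\Bigl(\sum_r c_r\Bigr)^{1/2} = \|V\Delta\|_2\cdot \kappa_{V,\gamma}(J)^{-1}.
\end{equation*}
So everything reduces to proving the one-row inequality.

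\emph{One-dimensional inequality.} Fix a row, drop the subscript $r$, and introduce $x=V\Delta\in\mathbb{R}^d$, so that $\Delta_j=x_j-x_{j-1}$ with $x_0=0$. Setting $s_j:=\epsilon_j\gamma_j\,\mathrm{sgn}(\Delta_j)$, where $\epsilon_j=+1$ on $J$ and $\epsilon_j=-1$ on $J^\complement$ (with $s_j\in[-\gamma_j,\gamma_j]$ arbitrary when $\Delta_j=0$), one obtains $b=\langle s,\Delta\rangle=\langle D^\top s,x\rangle$, and Cauchy-Schwarz gives $b\leq\|D^\top s\|_2\|x\|_2$. It then suffices to show
\begin{equation*}
\|D^\top s\|_2^2=\sum_{j=1}^{d-1}(s_j-s_{j+1})^2+s_d^2\leq 32\sum_{j}|\gamma_j-\gamma_{j-1}|^2+\frac{2|J|\|\gamma\|_\infty^2}{\Lambda_{\min,J}}.
\end{equation*}

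\emph{Main obstacle.} Controlling $\|D^\top s\|_2$ requires distinguishing adjacent pairs $(j,j+1)$ lying on the same side of $J$ from those straddling its boundary. On the former, $|s_j-s_{j+1}|$ is controlled by the variation $|\gamma_j-\gamma_{j+1}|$, which feeds the first summand $32\sum|\gamma_j-\gamma_{j-1}|^2$ of the bound. On the latter, $|s_j-s_{j+1}|$ can be of order $\|\gamma\|_\infty$, and a direct count of boundary pairs yields only $O(|J|\|\gamma\|_\infty^2)$, missing the sharper $\Lambda_{\min,J}^{-1}$ factor. Recovering this factor is the substantive step: one should not apply Cauchy-Schwarz globally to $\langle D^\top s,x\rangle$ but instead localize the inner product onto windows of length $\sim\Lambda_{\min,J}$ centered at each jump point $\tau^l$, and invoke Jensen's inequality (equivalently, a local Poincar\'e estimate) to replace the point value of $x$ at $\tau^l$ by its local $\ell_2$ average, which is then absorbed into $\|x\|_2$ and produces the target term $2|J|\|\gamma\|_\infty^2/\Lambda_{\min,J}$. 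This localization, which is the heart of the argument in \cite{alaya2019binarsity}, is the hard part: it is what separates the naive $|J|$-dependence from the sharper $|J|/\Lambda_{\min,J}$ dependence appearing in $\kappa_{V,\gamma}(J)$, and the overall proof would follow that strategy.
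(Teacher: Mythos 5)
A preliminary remark: the paper itself gives no proof of this lemma — it is imported verbatim, with its constant, from \cite{alaya2019binarsity} — so the only meaningful comparison is with that reference's argument. Your outer reduction is sound: the block-diagonal structure of $V$, the additive splitting of $\kappa_{V,\gamma}(J)^{-2}$ over the rows $r$, and the chain $\sum_r b_r \le (\sum_r a_r^2)^{1/2}(\sum_r c_r)^{1/2} = \norm{V\Delta}_2\,\kappa_{V,\gamma}(J)^{-1}$ correctly reduce the claim to a one-row inequality, and passing to $x = V\Delta$ so that $b = \langle D^\top s, x\rangle \le \norm{D^\top s}_2\norm{x}_2$ is the right starting point.

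The genuine gap is in the one-dimensional step, and it lies in your choice of certificate. Fixing $s_j = \epsilon_j\gamma_j\,\sgn(\Delta_j)$ everywhere, you assert that on adjacent pairs lying on the same side of $J$ the increment $|s_j - s_{j+1}|$ is controlled by $|\gamma_j - \gamma_{j+1}|$. This is false: whenever $\sgn(\Delta_j) \neq \sgn(\Delta_{j+1})$ — which can occur at every index inside a block of $J^{\complement}$, since $\Delta$ is arbitrary there — one gets $|s_j - s_{j+1}| = \gamma_j + \gamma_{j+1}$, so $\norm{D^\top s}_2^2$ can be of order $d\,\norm{\gamma}_\infty^2$ and the target bound $32\sum_j|\gamma_j-\gamma_{j-1}|^2 + 2|J|\norm{\gamma}_\infty^2\Lambda_{\min,J}^{-1}$ fails outright. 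The localization/Jensen device you sketch addresses the boundary pairs, not this interior oscillation, so it does not repair the argument. The idea that actually makes the proof work — and the one used in \cite{alaya2019binarsity}, as in TV/fused-Lasso compatibility arguments generally — is that the certificate is only \emph{forced} on $J$: because the $J^{\complement}$ terms enter with a minus sign, one has $-\gamma_j|\Delta_j| \le s_j\Delta_j$ for \emph{any} $|s_j|\le\gamma_j$, so on each block between consecutive jump points $\tau^{l-1},\tau^{l}$ one is free to let $s$ interpolate linearly between the forced endpoint values. The resulting slope is $\bigO(\norm{\gamma}_\infty/\Lambda_{\min,J})$ over a block of length at least $\Lambda_{\min,J}$, which is exactly what produces the $|J|\,\norm{\gamma}_\infty^2\,\Lambda_{\min,J}^{-1}$ term (the $\gamma$-variation term absorbs the rest). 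A secondary point: your argument bounds $\norm{\Delta_J\odot\gamma_J}_1 - \norm{\Delta_{J^{\complement}}\odot\gamma_{J^{\complement}}}_1$ but not its absolute value as stated in the lemma; the opposite sign needs a separate remark (or the observation that only this direction is used downstream).
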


\begin{lemma}[\cite{alaya2019binarsity}]\label{lemma-compatibility-1_lambda}
Let $ \gamma= (\gamma_{1,1}, \dots, \gamma_{1,d}, \dots, \gamma_{T,1}, \dots, \gamma_{T,d})^{\top} \in \mathbb{R}^{Td}_{+}$ be a given vector as the "weights", $J = [J_{1}, \ldots, J_{T}]$ with $J_{r}$ given by~(\ref{definition_J_concatenation}) for all $r=1, \ldots, T$ and the integer $s$ is an upper bound on the sparsity $J(\bm{\theta})$ of a vector of coefficients $\bm{\theta}$. Then we have
\begin{equation*}
\inf_{\Delta \in S_{1, J_{0}}} \bigg\{ \frac{\|K V \Delta\|_2}{\sqrt{T} \| \Delta_{J} \circ \gamma_{J} \|_{1} - \| \Delta_{J^{\complement}} \circ \gamma_{J^{\complement}} \|_{1}} \bigg\} \geq \kappa_{V, \gamma}(J(\bm{\theta})) \kappa(\bm{K}, J(\bm{\theta})),
\end{equation*}
where
\begin{equation*}
S_{1, J_{0}} = \bigg\{ \Delta \in \mathbb{R}^{Td} \setminus \{0\} \mid \sum_{r=1}^{T} \|( \Delta_{r\bullet} )_{J_{0}^{\complement}}\|_{1,\gamma} \leq 3 \sum_{r=1}^{T} \|( \Delta_{r\bullet})_{J_{0}}\|_{1,\gamma} \bigg\}.
\end{equation*}
\end{lemma}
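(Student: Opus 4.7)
The plan is to prove the compatibility inequality by chaining two ingredients already available in the excerpt: the restricted eigenvalue (RE) bound on $\bK$ from Assumption~\ref{Assumption_RE_condition}-(ii), and the weight-compatibility estimate for the triangular matrix $V$ given by Lemma~\ref{lemma-compatibility-TV}. The natural change of variable $\tilde{\Delta}:=V\Delta$ turns $\|\bK V\Delta\|_2$ into $\|\bK\tilde{\Delta}\|_2$, and I would multiply and divide by $\sqrt{T}\,\|\tilde{\Delta}_{J_0}\|_2$ to factor the quotient as
\begin{equation*}
\frac{\|\bK V\Delta\|_2}{\sqrt T\bigl(\|\Delta_J\circ\gamma_J\|_1-\|\Delta_{J^\complement}\circ\gamma_{J^\complement}\|_1\bigr)}
=\frac{\|\bK\tilde{\Delta}\|_2}{\sqrt T\,\|\tilde{\Delta}_{J_0}\|_2}\cdot\frac{\|\tilde{\Delta}_{J_0}\|_2}{\|\Delta_J\circ\gamma_J\|_1-\|\Delta_{J^\complement}\circ\gamma_{J^\complement}\|_1}.
\end{equation*}
It then suffices to lower-bound the first factor by $\kappa(\bK,J(\bm{\theta}))$ and the second by $\kappa_{V,\gamma}(J(\bm{\theta}))$, and take the infimum.

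For the first factor, I would transfer the cone condition through $V$. Since $V=\mathrm{diag}(V_r)$ and $D=\mathrm{diag}(D_r)$ satisfy $V_rD_r=I$, one has $D_r\tilde{\Delta}_{r\bullet}=\Delta_{r\bullet}$, so the TV seminorm of $\tilde{\Delta}_{r\bullet}$ coincides (up to the unpenalized first coordinate) with the weighted $\ell_1$ seminorm of $\Delta_{r\bullet}$ that defines $S_{1,J_0}$. Consequently $\Delta\in S_{1,J_0}$ implies $\tilde{\Delta}\in S_{J_0}$, and Assumption~\ref{Assumption_RE_condition}-(ii) yields
\begin{equation*}
\|\bK\tilde{\Delta}\|_2\;\geq\;\sqrt T\,\kappa(\bK,J(\bm{\theta}))\,\|\tilde{\Delta}_{J_0}\|_2.
\end{equation*}

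For the second factor, I would apply Lemma~\ref{lemma-compatibility-TV}, whose underlying estimate targets precisely the coordinates of $V\Delta$ located at the jump positions of the weight vector $\gamma$; these positions all lie in $J_0$, so re-reading the proof delivers the sharper bound
\begin{equation*}
\|\tilde{\Delta}_{J_0}\|_2\;\geq\;\kappa_{V,\gamma}(J(\bm{\theta}))\,\bigl(\|\Delta_J\circ\gamma_J\|_1-\|\Delta_{J^\complement}\circ\gamma_{J^\complement}\|_1\bigr).
\end{equation*}
Multiplying the two inequalities closes the chain and gives the claim after infimum.

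The main obstacle will be the upgrade from the plain $\|V\Delta\|_2$ statement as written in Lemma~\ref{lemma-compatibility-TV} to the restricted $\|\tilde{\Delta}_{J_0}\|_2$ statement needed above. The resolution is either to re-open the proof of Lemma~\ref{lemma-compatibility-TV} and track that the coordinates controlling $\|V\Delta\|_2$ are supported on $J_0$, or, alternatively, to use the cone membership $\tilde{\Delta}\in S_{J_0}$ to produce a reverse inequality $\|\tilde{\Delta}\|_2\lesssim \|\tilde{\Delta}_{J_0}\|_2$ by a standard sparse-support argument; either route absorbs the gap into the constant $\kappa_{V,\gamma}(J(\bm{\theta}))$. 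A secondary bookkeeping point will be the mild discrepancy between the TV seminorm in $S_{J_0}$ and the weighted $\ell_1$ seminorm in $S_{1,J_0}$ coming from the first coordinate, which can be handled by taking $\gamma_{r,1}=0$ to align the two cone definitions exactly.
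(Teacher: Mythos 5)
This lemma is imported verbatim from \cite{alaya2019binarsity}; the paper gives no proof of it, so there is no in-paper argument to compare against. Judged on its own merits, your skeleton --- change variables to $\tilde{\Delta}=V\Delta$, split the quotient into an RE factor for $\bK$ and a compatibility factor for $V$, and multiply the two bounds --- is the right architecture and matches how the cited reference combines its two ingredients. The transfer of the cone condition ($\Delta\in S_{1,J_0}\Rightarrow V\Delta\in S_{J_0}$, since $D_rV_r=I$ turns the weighted-$\ell_1$ condition on $\Delta_{r\bullet}$ into the TV condition on $\tilde{\Delta}_{r\bullet}$) is also correct, up to the weight bookkeeping you flag.

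The genuine gap is the second factor, and you have correctly identified but not closed it. Lemma~\ref{lemma-compatibility-TV} as stated controls $\|V\Delta\|_2=\|\tilde{\Delta}\|_2$ from below, whereas your factorization needs a lower bound on the smaller quantity $\|\tilde{\Delta}_{J_0}\|_2$; since $\|\tilde{\Delta}_{J_0}\|_2\leq\|\tilde{\Delta}\|_2$, the stated lemma gives you nothing in the required direction. Of your two proposed repairs, the first (re-opening the proof of Lemma~\ref{lemma-compatibility-TV} and checking that the Abel-summation step only ever touches coordinates of $V\Delta$ at the jump positions of $\gamma$, which lie in $J_0$) is the one that actually works and is essentially what the reference does --- but it is precisely the content of the lemma, so asserting that ``re-reading the proof delivers the sharper bound'' leaves the key inequality unproven. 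The second repair does not work as described: the ``standard sparse-support argument'' yielding $\|\tilde{\Delta}\|_2\lesssim\|\tilde{\Delta}_{J_0}\|_2$ requires a plain $\ell_1$ cone condition on $\tilde{\Delta}$ itself together with $J_0$ spanning its largest coordinates, whereas here the cone condition constrains $D\tilde{\Delta}=\Delta$ (a TV-type condition on $\tilde{\Delta}$), so there is no direct control of $\|\tilde{\Delta}_{J_0^{\complement}}\|_2$ by $\|\tilde{\Delta}_{J_0}\|_2$. To make the proof complete you must either prove the $J_0$-restricted strengthening of Lemma~\ref{lemma-compatibility-TV} explicitly, or restructure the argument so that only $\|\tilde{\Delta}\|_2$ (not its restriction) is needed.
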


\vskip 0.2in


\begin{thebibliography}{57}
\providecommand{\natexlab}[1]{#1}
\providecommand{\url}[1]{\texttt{#1}}
\expandafter\ifx\csname urlstyle\endcsname\relax
  \providecommand{\doi}[1]{doi: #1}\else
  \providecommand{\doi}{doi: \begingroup \urlstyle{rm}\Url}\fi

\bibitem[Adler et~al.(1998)Adler, Feldman, and Taqqu]{Robert1998}
Robert~J. Adler, Raisa~E. Feldman, and Murad~S. Taqqu.
\newblock \emph{A practical guide to heavy tails}.
\newblock Birkh\"{a}user Boston, Inc., Boston, MA, 1998.
\newblock ISBN 0-8176-3951-9.
\newblock Statistical techniques and applications, Papers from the workshop
  held in Santa Barbara, CA, December 1995.

\bibitem[Alaya(2016)]{alaya2016segmentation}
Mokhtar~Z Alaya.
\newblock \emph{Segmentation of Counting Processes and Dynamical Models}.
\newblock PhD thesis, Universit{\'e} Pierre \& Marie Curie-Paris 6, 2016.

\bibitem[Alaya et~al.(2019)Alaya, Bussy, Ga{\"\i}ffas, and
  Guilloux]{alaya2019binarsity}
Mokhtar~Z Alaya, Simon Bussy, St{\'e}phane Ga{\"\i}ffas, and Agathe Guilloux.
\newblock Binarsity: a penalization for one-hot encoded features in linear
  supervised learning.
\newblock \emph{Journal of Machine Learning Research}, 20\penalty0
  (118):\penalty0 1--34, 2019.

\bibitem[Anatolyev(2020)]{Anatolyev2020}
Stanislav Anatolyev.
\newblock A ridge to homogeneity for linear models.
\newblock \emph{J. Stat. Comput. Simul.}, 90\penalty0 (13):\penalty0
  2455--2472, 2020.
\newblock ISSN 0094-9655,1563-5163.
\newblock \doi{10.1080/00949655.2020.1779722}.
\newblock URL \url{https://doi.org/10.1080/00949655.2020.1779722}.

\bibitem[Bach(2021)]{bach2021learning}
Francis Bach.
\newblock Learning theory from first principles.
\newblock \emph{Draft of a book, version of Sept}, 6:\penalty0 2021, 2021.

\bibitem[Bakhshizadeh et~al.(2023)Bakhshizadeh, Maleki, and de~la
  Pena]{Bakhshizadeh2023}
Milad Bakhshizadeh, Arian Maleki, and Victor~H. de~la Pena.
\newblock Sharp concentration results for heavy-tailed distributions.
\newblock \emph{Inf. Inference}, 12\penalty0 (3):\penalty0 Paper No. iaad011,
  31, 2023.
\newblock ISSN 2049-8764.
\newblock \doi{10.1093/imaiai/iaad011}.
\newblock URL
  \url{https://doi-org.accesdistant.sorbonne-universite.fr/10.1093/imaiai/iaad011}.

\bibitem[Baraud et~al.(2001)Baraud, Comte, and Viennet]{Baraud2001}
Y.~Baraud, F.~Comte, and G.~Viennet.
\newblock Adaptive estimation in autoregression or {$\beta$}-mixing regression
  via model selection.
\newblock \emph{Ann. Statist.}, 29\penalty0 (3):\penalty0 839--875, 2001.
\newblock ISSN 0090-5364,2168-8966.
\newblock \doi{10.1214/aos/1009210692}.
\newblock URL \url{https://doi.org/10.1214/aos/1009210692}.

\bibitem[Belilovsky et~al.(2015)Belilovsky, Argyriou, Varoquaux, and
  Blaschko]{Belilovsky2015}
Eugene Belilovsky, Andreas Argyriou, Ga\"{e}l Varoquaux, and Matthew Blaschko.
\newblock Convex relaxations of penalties for sparse correlated variables with
  bounded total variation.
\newblock \emph{Mach. Learn.}, 100\penalty0 (2-3):\penalty0 533--553, 2015.
\newblock ISSN 0885-6125,1573-0565.
\newblock \doi{10.1007/s10994-015-5511-2}.
\newblock URL \url{https://doi.org/10.1007/s10994-015-5511-2}.

\bibitem[Bickel et~al.(2009)Bickel, Ritov, and Tsybakov]{Bickel2009}
Peter~J. Bickel, Ya'acov Ritov, and Alexandre~B. Tsybakov.
\newblock Simultaneous analysis of lasso and {D}antzig selector.
\newblock \emph{Ann. Statist.}, 37\penalty0 (4):\penalty0 1705--1732, 2009.
\newblock ISSN 0090-5364,2168-8966.
\newblock \doi{10.1214/08-AOS620}.
\newblock URL \url{https://doi.org/10.1214/08-AOS620}.

\bibitem[Bingham et~al.(1989)Bingham, Goldie, and Teugels]{Bingham1989}
N.~H. Bingham, C.~M. Goldie, and J.~L. Teugels.
\newblock \emph{Regular variation}, volume~27 of \emph{Encyclopedia of
  Mathematics and its Applications}.
\newblock Cambridge University Press, Cambridge, 1989.
\newblock ISBN 0-521-37943-1.

\bibitem[Bradley(2005)]{bradley2005}
Richard~C. Bradley.
\newblock {Basic Properties of Strong Mixing Conditions. A Survey and Some Open
  Questions}.
\newblock \emph{Probability Surveys}, 2\penalty0 (none):\penalty0 107 -- 144,
  2005.
\newblock \doi{10.1214/154957805100000104}.
\newblock URL \url{https://doi.org/10.1214/154957805100000104}.

\bibitem[Chwialkowski and Gretton(2014)]{Chwialkowski2014}
Kacper Chwialkowski and Arthur Gretton.
\newblock A kernel independence test for random processes.
\newblock In Eric~P. Xing and Tony Jebara, editors, \emph{Proceedings of the
  31st International Conference on Machine Learning}, Proceedings of Machine
  Learning Research, pages 1422--1430, Bejing, China, 22--24 Jun 2014. PMLR.

\bibitem[Dahlhaus(1997)]{MR1429916}
R.~Dahlhaus.
\newblock Fitting time series models to nonstationary processes.
\newblock \emph{Ann. Statist.}, 25\penalty0 (1):\penalty0 1--37, 1997.
\newblock ISSN 0090-5364.

\bibitem[Dahlhaus et~al.(2019)Dahlhaus, Richter, and Wu]{Dahlhaus2019}
Rainer Dahlhaus, Stefan Richter, and Wei~Biao Wu.
\newblock {Towards a general theory for nonlinear locally stationary
  processes}.
\newblock \emph{Bernoulli}, 25\penalty0 (2):\penalty0 1013 -- 1044, 2019.
\newblock \doi{10.3150/17-BEJ1011}.
\newblock URL \url{https://doi.org/10.3150/17-BEJ1011}.

\bibitem[{de Lima e Silva} et~al.(2020){de Lima e Silva}, Severiano, Alves,
  Silva, Weiss Cohen, and Guimarães]{Petronio2020}
Petrônio~Cândido {de Lima e Silva}, Carlos~Alberto Severiano, Marcos~Antonio
  Alves, Rodrigo Silva, Miri Weiss Cohen, and Frederico~Gadelha Guimarães.
\newblock Forecasting in non-stationary environments with fuzzy time series.
\newblock \emph{Applied Soft Computing}, 97:\penalty0 106825, 2020.
\newblock ISSN 1568-4946.
\newblock \doi{https://doi.org/10.1016/j.asoc.2020.106825}.
\newblock URL
  \url{https://www.sciencedirect.com/science/article/pii/S1568494620307638}.

\bibitem[Dedecker and Prieur(2004)]{Dedecker2004}
J.~Dedecker and C.~Prieur.
\newblock Coupling for {$\tau$}-dependent sequences and applications.
\newblock \emph{J. Theoret. Probab.}, 17\penalty0 (4):\penalty0 861--885, 2004.
\newblock ISSN 0894-9840.
\newblock \doi{10.1007/s10959-004-0578-x}.
\newblock URL
  \url{https://doi-org.accesdistant.sorbonne-universite.fr/10.1007/s10959-004-0578-x}.

\bibitem[Eickenberg et~al.(2015)Eickenberg, Dohmatob, Thirion, and
  Varoquaux]{Eickenberg2015}
Michael Eickenberg, Elvis Dohmatob, Bertrand Thirion, and Ga{\"e}l Varoquaux.
\newblock {Total Variation meets Sparsity: statistical learning with segmenting
  penalties}.
\newblock In \emph{{Medical Image Computing and Computer Aided Intervention
  (MICCAI)}}, Proceedings of MICCAI 2015, M{\"u}nchen, Germany, 2015.
\newblock URL \url{https://inria.hal.science/hal-01170619}.

\bibitem[Fan and Li(2001)]{Fan2001}
Jianqing Fan and Runze Li.
\newblock Variable selection via nonconcave penalized likelihood and its oracle
  properties.
\newblock \emph{J. Amer. Statist. Assoc.}, 96\penalty0 (456):\penalty0
  1348--1360, 2001.
\newblock ISSN 0162-1459,1537-274X.
\newblock \doi{10.1198/016214501753382273}.
\newblock URL \url{https://doi.org/10.1198/016214501753382273}.

\bibitem[Halder and Michailidis(2022)]{Sagnik2022}
Sagnik Halder and George Michailidis.
\newblock Optimal sparse estimation of high dimensional heavy-tailed time
  series, 2022.

\bibitem[Halmos(2013)]{halmos2013measure}
Paul~R Halmos.
\newblock \emph{Measure theory}, volume~18.
\newblock Springer, 2013.
\newblock ISBN 978-1-4684-9440-2.
\newblock \doi{10.1007/978-1-4684-9440-2}.
\newblock URL \url{https://doi.org/10.1007/978-1-4684-9440-2}.

\bibitem[Hastie et~al.(2015)Hastie, Tibshirani, and
  Wainwright]{hastie2015statistical}
T.~Hastie, R.~Tibshirani, and M.~Wainwright.
\newblock \emph{Statistical Learning with Sparsity: The Lasso and
  Generalizations}.
\newblock ISSN. CRC Press, 2015.
\newblock ISBN 9781498712170.
\newblock URL \url{https://books.google.co.jp/books?id=f-A_CQAAQBAJ}.

\bibitem[Hsu and Sabato(2016)]{hsu2016loss}
Daniel Hsu and Sivan Sabato.
\newblock Loss minimization and parameter estimation with heavy tails.
\newblock \emph{Journal of Machine Learning Research}, 17\penalty0
  (18):\penalty0 1--40, 2016.

\bibitem[Hu et~al.(2019)Hu, Huang, and You]{Hu2019}
Lixia Hu, Tao Huang, and Jinhong You.
\newblock Two-step estimation of time-varying additive model for locally
  stationary time series.
\newblock \emph{Comput. Statist. Data Anal.}, 130:\penalty0 94--110, 2019.
\newblock ISSN 0167-9473,1872-7352.
\newblock \doi{10.1016/j.csda.2018.08.023}.
\newblock URL \url{https://doi.org/10.1016/j.csda.2018.08.023}.

\bibitem[Klopp et~al.(2017)Klopp, Lounici, and Tsybakov]{Klopp2017}
Olga Klopp, Karim Lounici, and Alexandre~B. Tsybakov.
\newblock Robust matrix completion.
\newblock \emph{Probab. Theory Related Fields}, 169\penalty0 (1-2):\penalty0
  523--564, 2017.
\newblock ISSN 0178-8051,1432-2064.
\newblock \doi{10.1007/s00440-016-0736-y}.
\newblock URL \url{https://doi.org/10.1007/s00440-016-0736-y}.

\bibitem[Koltchinskii et~al.(2011)Koltchinskii, Lounici, and
  Tsybakov]{Koltchinskii2011}
Vladimir Koltchinskii, Karim Lounici, and Alexandre~B. Tsybakov.
\newblock Nuclear-norm penalization and optimal rates for noisy low-rank matrix
  completion.
\newblock \emph{Ann. Statist.}, 39\penalty0 (5):\penalty0 2302--2329, 2011.
\newblock ISSN 0090-5364,2168-8966.
\newblock \doi{10.1214/11-AOS894}.
\newblock URL \url{https://doi.org/10.1214/11-AOS894}.

\bibitem[Kulik and Soulier(2020)]{MR4174389}
Rafa\l Kulik and Philippe Soulier.
\newblock \emph{Heavy-tailed time series}.
\newblock Springer Series in Operations Research and Financial Engineering.
  Springer, New York, 2020.
\newblock ISBN 978-1-0716-0735-0; 978-1-0716-0737-4.
\newblock \doi{10.1007/978-1-0716-0737-4}.
\newblock URL \url{https://doi.org/10.1007/978-1-0716-0737-4}.

\bibitem[Kurisu(2022)]{Daisuke2022}
Daisuke Kurisu.
\newblock Nonparametric regression for locally stationary functional time
  series.
\newblock \emph{Electron. J. Stat.}, 16\penalty0 (2):\penalty0 3973--3995,
  2022.
\newblock ISSN 1935-7524.
\newblock \doi{10.1214/22-ejs2041}.
\newblock URL \url{https://doi.org/10.1214/22-ejs2041}.

\bibitem[Kuznetsov and Mohri(2018)]{Kuznetsov2018}
Vitaly Kuznetsov and Mehryar Mohri.
\newblock Theory and algorithms for forecasting time series, 2018.

\bibitem[Lecu{\'e} and Mendelson(2012)]{mendelson2012general}
Guillaume Lecu{\'e} and Shahar Mendelson.
\newblock General nonexact oracle inequalities for classes with a
  subexponential envelope.
\newblock \emph{The Annals of Statistics}, 40\penalty0 (2):\penalty0 832 --
  860, 2012.
\newblock \doi{10.1214/11-AOS965}.
\newblock URL \url{https://doi.org/10.1214/11-AOS965}.

\bibitem[Li et~al.(2020)Li, Mark, Raskutti, Willett, Song, and Neiman]{Li2020}
Yuan Li, Benjamin Mark, Garvesh Raskutti, Rebecca Willett, Hyebin Song, and
  David Neiman.
\newblock Graph-based regularization for regression problems with alignment and
  highly correlated designs.
\newblock \emph{SIAM J. Math. Data Sci.}, 2\penalty0 (2):\penalty0 480--504,
  2020.
\newblock ISSN 2577-0187.
\newblock \doi{10.1137/19M1287365}.
\newblock URL \url{https://doi.org/10.1137/19M1287365}.

\bibitem[Malevergne and Sornette(2006)]{Malevergne2006}
Yannick Malevergne and Didier Sornette.
\newblock \emph{Extreme financial risks}.
\newblock Springer-Verlag, Berlin, 2006.
\newblock ISBN 978-3-540-27264-9; 3-540-27264-X.
\newblock From dependence to risk management.

\bibitem[Matsuda and Yajima(2018)]{Matsuda2018}
Yasumasa Matsuda and Yoshihiro Yajima.
\newblock Locally stationary spatio-temporal processes.
\newblock \emph{Jpn. J. Stat. Data Sci.}, 1\penalty0 (1):\penalty0 41--57,
  2018.
\newblock ISSN 2520-8756,2520-8764.
\newblock \doi{10.1007/s42081-018-0003-9}.
\newblock URL \url{https://doi.org/10.1007/s42081-018-0003-9}.

\bibitem[Miao and Yin(2023)]{Miao2023}
Yu~Miao and Qing Yin.
\newblock Limit behaviors for a heavy-tailed {$\beta$}-mixing random sequence.
\newblock \emph{Lith. Math. J.}, 63\penalty0 (1):\penalty0 92--103, 2023.
\newblock ISSN 0363-1672.
\newblock \doi{10.1007/s10986-022-09584-7}.
\newblock URL
  \url{https://doi-org.accesdistant.sorbonne-universite.fr/10.1007/s10986-022-09584-7}.

\bibitem[Nair et~al.(2022)Nair, Wierman, and Zwart]{Nair2022}
Jayakrishnan Nair, Adam Wierman, and Bert Zwart.
\newblock \emph{The fundamentals of heavy tails---properties, emergence, and
  estimation}.
\newblock Cambridge Series in Statistical and Probabilistic Mathematics.
  Cambridge University Press, Cambridge, 2022.
\newblock ISBN 978-1-316-51173-2.

\bibitem[Negahban and Wainwright(2011)]{Negahban2011}
Sahand Negahban and Martin~J. Wainwright.
\newblock Estimation of (near) low-rank matrices with noise and
  high-dimensional scaling.
\newblock \emph{Ann. Statist.}, 39\penalty0 (2):\penalty0 1069--1097, 2011.
\newblock ISSN 0090-5364,2168-8966.
\newblock \doi{10.1214/10-AOS850}.
\newblock URL \url{https://doi.org/10.1214/10-AOS850}.

\bibitem[Norouzirad et~al.(2018)Norouzirad, Hossain, and
  Arashi]{Norouzirad2018}
M.~Norouzirad, S.~Hossain, and M.~Arashi.
\newblock Shrinkage and penalized estimators in weighted least absolute
  deviations regression models.
\newblock \emph{J. Stat. Comput. Simul.}, 88\penalty0 (8):\penalty0 1557--1575,
  2018.
\newblock ISSN 0094-9655,1563-5163.
\newblock \doi{10.1080/00949655.2018.1441415}.
\newblock URL \url{https://doi.org/10.1080/00949655.2018.1441415}.

\bibitem[Paraschakis and Dahlhaus(2012)]{DAHLHAUS2012}
Konstantinos Paraschakis and Rainer Dahlhaus.
\newblock Frequency and phase estimation in time series with quasi periodic
  components.
\newblock \emph{J. Time Series Anal.}, 33\penalty0 (1):\penalty0 13--31, 2012.
\newblock ISSN 0143-9782,1467-9892.
\newblock \doi{10.1111/j.1467-9892.2011.00736.x}.
\newblock URL \url{https://doi.org/10.1111/j.1467-9892.2011.00736.x}.

\bibitem[Parikh and Boyd(2014)]{10.1561/2400000003}
Neal Parikh and Stephen Boyd.
\newblock Proximal algorithms.
\newblock \emph{Found. Trends Optim.}, 1\penalty0 (3):\penalty0 127–239,
  January 2014.
\newblock ISSN 2167-3888.
\newblock \doi{10.1561/2400000003}.
\newblock URL \url{https://doi.org/10.1561/2400000003}.

\bibitem[Reiss and Thomas(2001)]{Reiss2001}
R.-D. Reiss and M.~Thomas.
\newblock \emph{Statistical analysis of extreme values}.
\newblock Birkh\"{a}user Verlag, Basel, second edition, 2001.
\newblock ISBN 3-7643-6487-4.
\newblock From insurance, finance, hydrology and other fields, With 1 CD-ROM
  (Windows).

\bibitem[Roberts et~al.(2015)Roberts, Boonstra, and Breakspear]{James2015}
James~A Roberts, Tjeerd~W Boonstra, and Michael Breakspear.
\newblock The heavy tail of the human brain.
\newblock \emph{Current Opinion in Neurobiology}, 31:\penalty0 164--172, 2015.
\newblock ISSN 0959-4388.
\newblock \doi{https://doi.org/10.1016/j.conb.2014.10.014}.
\newblock URL
  \url{https://www.sciencedirect.com/science/article/pii/S0959438814002141}.
\newblock SI: Brain rhythms and dynamic coordination.

\bibitem[Rockafellar(1997)]{rockafellar1997convex}
R.T. Rockafellar.
\newblock \emph{Convex Analysis}.
\newblock Convex Analysis. Princeton University Press, 1997.
\newblock URL \url{https://books.google.com.tw/books?id=AHuOoAEACAAJ}.

\bibitem[Rosenblatt(1956)]{Rosenblatt1956}
M.~Rosenblatt.
\newblock A central limit theorem and a strong mixing condition.
\newblock \emph{Proc. Nat. Acad. Sci. U.S.A.}, 42:\penalty0 43--47, 1956.
\newblock ISSN 0027-8424.
\newblock \doi{10.1073/pnas.42.1.43}.
\newblock URL
  \url{https://doi-org.accesdistant.sorbonne-universite.fr/10.1073/pnas.42.1.43}.

\bibitem[Roy et~al.(2021)Roy, Balasubramanian, and Erdogdu]{Abhishek2019}
Abhishek Roy, Krishna Balasubramanian, and Murat~A Erdogdu.
\newblock On empirical risk minimization with dependent and heavy-tailed data.
\newblock In A.~Beygelzimer, Y.~Dauphin, P.~Liang, and J.~Wortman Vaughan,
  editors, \emph{Advances in Neural Information Processing Systems}, 2021.
\newblock URL \url{https://openreview.net/forum?id=Tzkev89HeLZ}.

\bibitem[Sasai(2022)]{Takeyuki2022}
Takeyuki Sasai.
\newblock Robust and sparse estimation of linear regression coefficients with
  heavy-tailed noises and covariates, 2022.

\bibitem[Silverman(1986)]{Silverman1986}
B.~W. Silverman.
\newblock \emph{Density estimation for statistics and data analysis}.
\newblock Monographs on Statistics and Applied Probability. Chapman \& Hall,
  London, 1986.
\newblock ISBN 0-412-24620-1.
\newblock \doi{10.1007/978-1-4899-3324-9}.
\newblock URL
  \url{https://doi-org.accesdistant.sorbonne-universite.fr/10.1007/978-1-4899-3324-9}.

\bibitem[Steinwart et~al.(2006)Steinwart, Hush, and
  Scovel]{steinwart2006oracle}
Ingo Steinwart, Don Hush, and Clint Scovel.
\newblock An oracle inequality for clipped regularized risk minimizers.
\newblock \emph{Advances in neural information processing systems}, 19, 2006.

\bibitem[Tanaka(2017)]{Tanaka2017}
Katsuto Tanaka.
\newblock \emph{Time series analysis}.
\newblock Wiley Series in Probability and Statistics. John Wiley \& Sons, Inc.,
  Hoboken, NJ, second edition, 2017.
\newblock ISBN 978-1-119-13209-7.
\newblock \doi{10.1002/9781119132165}.
\newblock URL \url{https://doi.org/10.1002/9781119132165}.
\newblock Nonstationary and noninvertible distribution theory.

\bibitem[Tibshirani(1996)]{Tibshirani1996}
Robert Tibshirani.
\newblock Regression shrinkage and selection via the lasso.
\newblock \emph{J. Roy. Statist. Soc. Ser. B}, 58\penalty0 (1):\penalty0
  267--288, 1996.
\newblock ISSN 0035-9246.
\newblock URL
  \url{http://links.jstor.org/sici?sici=0035-9246(1996)58:1<267:RSASVT>2.0.CO;2-G&origin=MSN}.

\bibitem[Vapnik(2000)]{Vapnik2000}
Vladimir~N. Vapnik.
\newblock \emph{The nature of statistical learning theory}.
\newblock Statistics for Engineering and Information Science. Springer-Verlag,
  New York, second edition, 2000.
\newblock ISBN 0-387-98780-0.
\newblock \doi{10.1007/978-1-4757-3264-1}.
\newblock URL
  \url{https://doi-org.accesdistant.sorbonne-universite.fr/10.1007/978-1-4757-3264-1}.

\bibitem[Vogt(2012)]{VOGT2012}
Michael Vogt.
\newblock Nonparametric regression for locally stationary time series.
\newblock \emph{Ann. Statist.}, 40\penalty0 (5):\penalty0 2601--2633, 2012.
\newblock ISSN 0090-5364.
\newblock \doi{10.1214/12-AOS1043}.
\newblock URL
  \url{https://doi-org.accesdistant.sorbonne-universite.fr/10.1214/12-AOS1043}.

\bibitem[Wang et~al.(2022)Wang, Zhong, He, Chen, and Tao]{Wang2022HuberAM}
Yingjie Wang, Xianrui Zhong, Fengxiang He, Hong Chen, and Dacheng Tao.
\newblock Huber additive models for non-stationary time series analysis.
\newblock In \emph{International Conference on Learning Representations}, 2022.
\newblock URL \url{https://openreview.net/forum?id=9kpuB2bgnim}.

\bibitem[Wong et~al.(2020{\natexlab{a}})Wong, Li, and Tewari]{MR4102690}
Kam~Chung Wong, Zifan Li, and Ambuj Tewari.
\newblock Lasso guarantees for {$\beta$}-mixing heavy-tailed time series.
\newblock \emph{Ann. Statist.}, 48\penalty0 (2):\penalty0 1124--1142,
  2020{\natexlab{a}}.
\newblock ISSN 0090-5364.
\newblock \doi{10.1214/19-AOS1840}.
\newblock URL
  \url{https://doi-org.accesdistant.sorbonne-universite.fr/10.1214/19-AOS1840}.

\bibitem[Wong et~al.(2020{\natexlab{b}})Wong, Li, and Tewari]{Wong2020}
Kam~Chung Wong, Zifan Li, and Ambuj Tewari.
\newblock Lasso guarantees for {$\beta$}-mixing heavy-tailed time series.
\newblock \emph{Ann. Statist.}, 48\penalty0 (2):\penalty0 1124--1142,
  2020{\natexlab{b}}.
\newblock ISSN 0090-5364.
\newblock \doi{10.1214/19-AOS1840}.
\newblock URL
  \url{https://doi-org.accesdistant.sorbonne-universite.fr/10.1214/19-AOS1840}.

\bibitem[Xia and McNicholas(2014)]{Xia2014}
Yu~Xia and Paul~D. McNicholas.
\newblock A gradient method for the monotone fused least absolute shrinkage and
  selection operator.
\newblock \emph{Optim. Methods Softw.}, 29\penalty0 (3):\penalty0 463--483,
  2014.
\newblock ISSN 1055-6788,1029-4937.
\newblock \doi{10.1080/10556788.2013.801970}.
\newblock URL \url{https://doi.org/10.1080/10556788.2013.801970}.

\bibitem[Xie et~al.(2017)Xie, Xu, and Yang]{Xie2017}
Fang Xie, Lihu Xu, and Youcai Yang.
\newblock Lasso for sparse linear regression with exponentially
  {$\beta$}-mixing errors.
\newblock \emph{Statist. Probab. Lett.}, 125:\penalty0 64--70, 2017.
\newblock ISSN 0167-7152.
\newblock \doi{10.1016/j.spl.2017.01.023}.
\newblock URL
  \url{https://doi-org.accesdistant.sorbonne-universite.fr/10.1016/j.spl.2017.01.023}.

\bibitem[Yuan and Lin(2006)]{Yuan2006}
Ming Yuan and Yi~Lin.
\newblock Model selection and estimation in regression with grouped variables.
\newblock \emph{J. R. Stat. Soc. Ser. B Stat. Methodol.}, 68\penalty0
  (1):\penalty0 49--67, 2006.
\newblock ISSN 1369-7412,1467-9868.
\newblock \doi{10.1111/j.1467-9868.2005.00532.x}.
\newblock URL \url{https://doi.org/10.1111/j.1467-9868.2005.00532.x}.

\bibitem[Zou and Hastie(2005)]{Zou2005}
Hui Zou and Trevor Hastie.
\newblock Regularization and variable selection via the elastic net.
\newblock \emph{J. R. Stat. Soc. Ser. B Stat. Methodol.}, 67\penalty0
  (2):\penalty0 301--320, 2005.
\newblock ISSN 1369-7412,1467-9868.
\newblock \doi{10.1111/j.1467-9868.2005.00503.x}.
\newblock URL \url{https://doi.org/10.1111/j.1467-9868.2005.00503.x}.

\end{thebibliography}
\end{document}